\documentclass{article}

\usepackage{microtype}
\usepackage{graphicx}
\usepackage{subcaption}
\usepackage{booktabs} %

\usepackage{hyperref}

\usepackage{wrapfig}

\usepackage{fancyvrb}
\usepackage[normalem]{ulem} %
\usepackage{tcolorbox}
\usepackage{subcaption}

\usepackage[accepted]{icml2026}

\newcommand{\attLogit}[2]{a_{#1,#2}}

\newcommand{\transp}{^T}

\usepackage{amsmath}
\usepackage{amssymb}
\usepackage{mathtools}
\usepackage{amsthm}
\usepackage{xcolor}
\usepackage{multirow}

\usepackage{xcolor}
\usepackage{fancyvrb}
\usepackage{tikz}
\newcommand{\circled}[1]{\tikz[baseline= (char.base)]{\node[shape=circle,draw,inner sep=2pt] (char) {#1};}}

\usepackage{listings}
\usepackage[capitalize,noabbrev]{cleveref}

\theoremstyle{plain}
\newtheorem{theorem}{Theorem}[section]

\theoremstyle{definition}
\newtheorem{definition}[theorem]{Definition}

\newtheorem{assumption}[theorem]{Assumption}
\theoremstyle{remark}
\newtheorem{remark}[theorem]{Remark}

\usepackage{multicol}

\usepackage{amsmath,amsfonts,bm}

\def\eqref#1{equation~\ref{#1}}

\def\1{\bm{1}}

\def\vb{{\bm{b}}}

\def\vy{{\bm{y}}}
\def\vY{{\bm{Y}}}

\def\mA{{\bm{A}}}

\def\mC{{\bm{C}}}

\def\mE{{\bm{E}}}

\def\mI{{\bm{I}}}

\def\mK{{\bm{K}}}
\def\mL{{\bm{L}}}

\def\mO{{\bm{O}}}
\def\mP{{\bm{P}}}
\def\mQ{{\bm{Q}}}
\def\mR{{\bm{R}}}

\def\mU{{\bm{U}}}
\def\mV{{\bm{V}}}

\DeclareMathAlphabet{\mathsfit}{\encodingdefault}{\sfdefault}{m}{sl}
\SetMathAlphabet{\mathsfit}{bold}{\encodingdefault}{\sfdefault}{bx}{n}

\usepackage{hyperref}
\usepackage{booktabs}
\usepackage{url}
\usepackage{capt-of}

\definecolor{sbblue}{HTML}{023a75}

\newcommand\vpos{\texttt{pos}} %
\newcommand\vtok{\texttt{token}} %

\usepackage{nicematrix}
\usepackage{tabularx}
\usepackage{amsthm}
\usepackage{mdframed}
\usepackage{makecell}

\usepackage{xcolor}
\usepackage{fancyvrb}
\definecolor{varcolor01202at2l1h256d4lr00droplogits1}{rgb}{0.17,0.63,0.17}
\definecolor{varcolor01202at2l1h256d4lr00droplogits2}{rgb}{0.84,0.15,0.16}
\definecolor{varcolor01202at2l1h256d4lr00droplogits3bias}{rgb}{0.58,0.40,0.74}
\definecolor{varcolor01202at2l1h256d4lr00drops1}{rgb}{0.12,0.47,0.71}
\definecolor{varcolor01202at2l1h256d4lr00drops2}{rgb}{1.00,0.50,0.05}
\definecolor{varcolorD12at2l4h64d3lr00droppromoteeostokenspecialpromotenothing}{rgb}{0.12,0.47,0.71}
\definecolor{varcolorD12at2l4h64d3lr00droppromoteitselfnewnewa1}{rgb}{1.00,0.50,0.05}
\definecolor{varcolorD2at4l4h64d3lr00droppromoteitselfnewa1}{rgb}{1.00,0.50,0.05}
\definecolor{varcolorD2at4l4h64d3lr00drops1}{rgb}{0.12,0.47,0.71}
\definecolor{varcolorD2caret4l4h64d3lr00droplogits10}{rgb}{0.60,0.20,0.80}
\definecolor{varcolorD2caret4l4h64d3lr00droplogits11bias}{rgb}{0.95,0.20,0.20}
\definecolor{varcolorD2caret4l4h64d3lr00droplogits1}{rgb}{0.84,0.15,0.16}
\definecolor{varcolorD2caret4l4h64d3lr00droplogits1}{rgb}{1.00,0.50,0.05}
\definecolor{varcolorD2caret4l4h64d3lr00droplogits2}{rgb}{0.17,0.63,0.17}
\definecolor{varcolorD2caret4l4h64d3lr00droplogits2}{rgb}{0.58,0.40,0.74}
\definecolor{varcolorD2caret4l4h64d3lr00droplogits3}{rgb}{0.55,0.34,0.29}
\definecolor{varcolorD2caret4l4h64d3lr00droplogits4}{rgb}{0.89,0.47,0.76}
\definecolor{varcolorD2caret4l4h64d3lr00droplogits5}{rgb}{0.50,0.50,0.50}
\definecolor{varcolorD2caret4l4h64d3lr00droplogits8}{rgb}{0.00,0.50,0.50}
\definecolor{varcolorD2caret4l4h64d3lr00droplogits9}{rgb}{0.90,0.60,0.00}
\definecolor{varcolorD2caret4l4h64d3lr00droplookdecreasinglytokentokenspeciallookthreestepsback}{rgb}{0.12,0.47,0.71}
\definecolor{varcolorD2caret4l4h64d3lr00droppromoteitselfnewa2}{rgb}{0.09,0.75,0.81}
\definecolor{varcolorD2caret4l4h64d3lr00droppromoteitselfnewispurea1}{rgb}{0.74,0.74,0.13}
\definecolor{varcolorD2caret4l4h64d3lr00drops1}{rgb}{0.12,0.47,0.71}
\definecolor{varcolorD2caret4l4h64d3lr00drops2}{rgb}{1.00,0.50,0.05}
\definecolor{varcolorD2caret4l4h64d3lr00drops3}{rgb}{0.17,0.63,0.17}
\definecolor{varcolorD3hash2l1h64d3lr00droplogits2}{rgb}{1.00,0.50,0.05}
\definecolor{varcolorD3hash2l1h64d3lr00droppromoteeostokenspecialpromotenothing}{rgb}{0.12,0.47,0.71}
\definecolor{varcolorD4at4l2h64d3lr00droplogits3bias}{rgb}{0.84,0.15,0.16}
\definecolor{varcolorD4at4l2h64d3lr00droppromoteeostokenspecialpromotenothing}{rgb}{1.00,0.50,0.05}
\definecolor{varcolorD4at4l2h64d3lr00droppromoteitselfnewa1}{rgb}{0.17,0.63,0.17}
\definecolor{varcolorD4at4l2h64d3lr00drops1}{rgb}{0.12,0.47,0.71}
\definecolor{varcoloraaaastarhash1l2h64d3lr01droplogits1}{rgb}{0.12,0.47,0.71}
\definecolor{varcoloraaaastarhash1l2h64d3lr01droppromoteitselfnewa2}{rgb}{1.00,0.50,0.05}
\definecolor{varcoloraaaastarpct4l4h64d3lr00droplogits1}{rgb}{0.58,0.40,0.74}
\definecolor{varcoloraaaastarpct4l4h64d3lr00droplogits2}{rgb}{0.55,0.34,0.29}
\definecolor{varcoloraaaastarpct4l4h64d3lr00droplookateveryotherpospos}{rgb}{0.84,0.15,0.16}
\definecolor{varcoloraaaastarpct4l4h64d3lr00drops1}{rgb}{0.12,0.47,0.71}
\definecolor{varcoloraaaastarpct4l4h64d3lr00drops2}{rgb}{1.00,0.50,0.05}
\definecolor{varcoloraaaastarpct4l4h64d3lr00drops3}{rgb}{0.17,0.63,0.17}
\definecolor{varcoloraastarhash1l2h64d3lr01droplogits1}{rgb}{0.84,0.15,0.16}
\definecolor{varcoloraastarhash1l2h64d3lr01droplogits2bias}{rgb}{0.58,0.40,0.74}
\definecolor{varcoloraastarhash1l2h64d3lr01drops1}{rgb}{0.12,0.47,0.71}
\definecolor{varcoloraastarhash1l2h64d3lr01drops2}{rgb}{1.00,0.50,0.05}
\definecolor{varcoloraastarhash1l2h64d3lr01drops3}{rgb}{0.17,0.63,0.17}
\definecolor{varcolorababstarat1l2h256d4lr01droplogits1}{rgb}{0.17,0.63,0.17}
\definecolor{varcolorababstarat1l2h256d4lr01droplookateveryotherpospos}{rgb}{1.00,0.50,0.05}
\definecolor{varcolorababstarat1l2h256d4lr01droplookatitselftokentokenspeciallookateveryother}{rgb}{0.12,0.47,0.71}
\definecolor{varcolorababstarpct2l2h256d4lr00droplogits1}{rgb}{0.84,0.15,0.16}
\definecolor{varcolorababstarpct2l2h256d4lr00droplogits2}{rgb}{0.58,0.40,0.74}
\definecolor{varcolorababstarpct2l2h256d4lr00droplogits3}{rgb}{0.55,0.34,0.29}
\definecolor{varcolorababstarpct2l2h256d4lr00droplogits4bias}{rgb}{0.89,0.47,0.76}
\definecolor{varcolorababstarpct2l2h256d4lr00drops1}{rgb}{0.12,0.47,0.71}
\definecolor{varcolorababstarpct2l2h256d4lr00drops2}{rgb}{1.00,0.50,0.05}
\definecolor{varcolorababstarpct2l2h256d4lr00drops3}{rgb}{0.17,0.63,0.17}
\definecolor{varcolorabcdeat2l2h64d4lr00droplogits1}{rgb}{1.00,0.50,0.05}
\definecolor{varcolorabcdeat2l2h64d4lr00droplogits2}{rgb}{0.17,0.63,0.17}
\definecolor{varcolorabcdeat2l2h64d4lr00drops1}{rgb}{0.12,0.47,0.71}
\definecolor{varcolorabdbcat2l1h16d3lr00droplogits1}{rgb}{1.00,0.50,0.05}
\definecolor{varcolorabdbcat2l1h16d3lr00droplogits3}{rgb}{0.84,0.15,0.16}
\definecolor{varcolorabdbcat2l1h16d3lr00droplooknowheretokentokenspeciallookdecreasingly}{rgb}{0.12,0.47,0.71}
\definecolor{varcolorabdbcat2l1h16d3lr00droppromoteeostokenspecialpromotenothing}{rgb}{0.17,0.63,0.17}
\definecolor{varcolorbinmajorityat1l1h16d3lr00droppromoteeostokenspecialpromotenothing}{rgb}{0.12,0.47,0.71}
\definecolor{varcolorbinmajorityat1l1h16d3lr00droppromoteitselfhardeneda1specialpromotenothing}{rgb}{1.00,0.50,0.05}
\definecolor{varcolorbinmajorityinterleaveat2l2h16d3lr00droplogits1}{rgb}{0.58,0.40,0.74}
\definecolor{varcolorbinmajorityinterleaveat2l2h16d3lr00droplogits2}{rgb}{0.55,0.34,0.29}
\definecolor{varcolorbinmajorityinterleaveat2l2h16d3lr00droplogits3}{rgb}{0.89,0.47,0.76}
\definecolor{varcolorbinmajorityinterleaveat2l2h16d3lr00droplogits4}{rgb}{0.50,0.50,0.50}
\definecolor{varcolorbinmajorityinterleaveat2l2h16d3lr00droplogits5}{rgb}{0.74,0.74,0.13}
\definecolor{varcolorbinmajorityinterleaveat2l2h16d3lr00droplogits6}{rgb}{0.09,0.75,0.81}
\definecolor{varcolorbinmajorityinterleaveat2l2h16d3lr00droplookatseppostoken}{rgb}{0.17,0.63,0.17}
\definecolor{varcolorbinmajorityinterleaveat2l2h16d3lr00droplookthreestepsbackpospos}{rgb}{0.84,0.15,0.16}
\definecolor{varcolorbinmajorityinterleaveat2l2h16d3lr00droppromoteeospos}{rgb}{0.00,0.50,0.50}
\definecolor{varcolorbinmajorityinterleaveat2l2h16d3lr00drops1}{rgb}{0.12,0.47,0.71}
\definecolor{varcolorbinmajorityinterleaveat2l2h16d3lr00drops2}{rgb}{1.00,0.50,0.05}
\definecolor{varcolorbinmajorityinterleavepct4l1h64d3lr01droplookateverythirdpospos}{rgb}{1.00,0.50,0.05}
\definecolor{varcolorbinmajorityinterleavepct4l1h64d3lr01droppromoteitselfnewa1}{rgb}{0.17,0.63,0.17}
\definecolor{varcolorbinmajorityinterleavepct4l1h64d3lr01drops1}{rgb}{0.12,0.47,0.71}
\definecolor{varcolorcountat1l4h256d4lr01droplogits1}{rgb}{0.84,0.15,0.16}
\definecolor{varcolorcountat1l4h256d4lr01droplogits2}{rgb}{0.58,0.40,0.74}
\definecolor{varcolorcountat1l4h256d4lr01droplogits4bias}{rgb}{0.89,0.47,0.76}
\definecolor{varcolorcountat1l4h256d4lr01droppromoteitselfnewa2}{rgb}{0.55,0.34,0.29}
\definecolor{varcolorcountat1l4h256d4lr01drops1}{rgb}{0.12,0.47,0.71}
\definecolor{varcolorcountat1l4h256d4lr01drops2}{rgb}{1.00,0.50,0.05}
\definecolor{varcolorcountat1l4h256d4lr01drops3}{rgb}{0.17,0.63,0.17}
\definecolor{varcolorcountpct2l4h256d4lr01droplogits2}{rgb}{0.58,0.40,0.74}
\definecolor{varcolorcountpct2l4h256d4lr01droplogits4}{rgb}{0.89,0.47,0.76}
\definecolor{varcolorcountpct2l4h256d4lr01droplogits6bias}{rgb}{0.74,0.74,0.13}
\definecolor{varcolorcountpct2l4h256d4lr01droppromoteitselfa2}{rgb}{0.84,0.15,0.16}
\definecolor{varcolorcountpct2l4h256d4lr01droppromoteitselfnewa1}{rgb}{0.55,0.34,0.29}
\definecolor{varcolorcountpct2l4h256d4lr01droppromoteitselfnewnewa1}{rgb}{0.50,0.50,0.50}
\definecolor{varcolorcountpct2l4h256d4lr01drops1}{rgb}{0.12,0.47,0.71}
\definecolor{varcolorcountpct2l4h256d4lr01drops2}{rgb}{1.00,0.50,0.05}
\definecolor{varcolorcountpct2l4h256d4lr01drops3}{rgb}{0.17,0.63,0.17}
\definecolor{varcolormajorityat1l1h256d3lr00droplogits1}{rgb}{0.17,0.63,0.17}
\definecolor{varcolormajorityat1l1h256d3lr00droplogits2bias}{rgb}{0.84,0.15,0.16}
\definecolor{varcolormajorityat1l1h256d3lr00drops1}{rgb}{0.12,0.47,0.71}
\definecolor{varcolormajorityat1l1h256d3lr00drops2}{rgb}{1.00,0.50,0.05}
\definecolor{varcolormajoritycaret1l1h256d3lr00droplogits1}{rgb}{0.12,0.47,0.71}
\definecolor{varcolormajoritycaret1l1h256d3lr00droppromoteitselfa1specialpromoteeos}{rgb}{1.00,0.50,0.05}
\definecolor{varcolormajoritypct4l1h256d3lr00droppromoteitselfnewa1}{rgb}{1.00,0.50,0.05}
\definecolor{varcolormajoritypct4l1h256d3lr00drops1}{rgb}{0.12,0.47,0.71}
\definecolor{varcolorsortat1l1h256d3lr01droppromoteitselfnewa1}{rgb}{1.00,0.50,0.05}
\definecolor{varcolorsortat1l1h256d3lr01drops1}{rgb}{0.12,0.47,0.71}
\definecolor{varcolortomita1at1l1h16d3lr00droplogits2}{rgb}{1.00,0.50,0.05}
\definecolor{varcolortomita1at1l1h16d3lr00droplogits3}{rgb}{0.17,0.63,0.17}
\definecolor{varcolortomita1at1l1h16d3lr00droppromoteitselfa1specialpromotenothing}{rgb}{0.12,0.47,0.71}
\definecolor{varcolortomita2at1l2h256d4lr00droppromoteitselfnewa1}{rgb}{1.00,0.50,0.05}
\definecolor{varcolortomita2at1l2h256d4lr00drops1}{rgb}{0.12,0.47,0.71}
\definecolor{varcolortomita4at2l1h256d4lr01droplogits1}{rgb}{0.17,0.63,0.17}
\definecolor{varcolortomita4at2l1h256d4lr01droplogits2}{rgb}{0.84,0.15,0.16}
\definecolor{varcolortomita4at2l1h256d4lr01droplogits3bias}{rgb}{0.58,0.40,0.74}
\definecolor{varcolortomita4at2l1h256d4lr01drops1}{rgb}{0.12,0.47,0.71}
\definecolor{varcolortomita4at2l1h256d4lr01drops2}{rgb}{1.00,0.50,0.05}
\definecolor{varcolortomita7at4l1h64d3lr00droplogits1}{rgb}{0.84,0.15,0.16}
\definecolor{varcolortomita7at4l1h64d3lr00droplogits2}{rgb}{0.58,0.40,0.74}
\definecolor{varcolortomita7at4l1h64d3lr00droplookatitselftokenxa1a2}{rgb}{1.00,0.50,0.05}
\definecolor{varcolortomita7at4l1h64d3lr00droplookdecreasinglytokenxa1a3}{rgb}{0.17,0.63,0.17}
\definecolor{varcolortomita7at4l1h64d3lr00drops1}{rgb}{0.12,0.47,0.71}
\definecolor{varcoloruniquebigramcopyat2l4h256d3lr01droplogits1}{rgb}{0.84,0.15,0.16}
\definecolor{varcoloruniquebigramcopyat2l4h256d3lr01droplogits2}{rgb}{0.58,0.40,0.74}
\definecolor{varcoloruniquebigramcopyat2l4h256d3lr01drops1}{rgb}{0.12,0.47,0.71}
\definecolor{varcoloruniquebigramcopyat2l4h256d3lr01drops2}{rgb}{1.00,0.50,0.05}
\definecolor{varcoloruniquebigramcopyat2l4h256d3lr01drops3}{rgb}{0.17,0.63,0.17}
\definecolor{varcoloruniquebigramcopycaret2l4h256d3lr01droplookatitselfa2a1speciallookatbos}{rgb}{0.84,0.15,0.16}
\definecolor{varcoloruniquebigramcopycaret2l4h256d3lr01droplookatitselftokena2speciallookatbos}{rgb}{0.17,0.63,0.17}
\definecolor{varcoloruniquebigramcopycaret2l4h256d3lr01droplookatpreviouspospos}{rgb}{1.00,0.50,0.05}
\definecolor{varcoloruniquebigramcopycaret2l4h256d3lr01droppromoteitselfa3specialpromoteeos}{rgb}{0.58,0.40,0.74}
\definecolor{varcoloruniquebigramcopycaret2l4h256d3lr01drops1}{rgb}{0.12,0.47,0.71}
\definecolor{varcoloruniquecopyat2l1h64d3lr01droplookatitselftokena1speciallookatbos}{rgb}{1.00,0.50,0.05}
\definecolor{varcoloruniquecopyat2l1h64d3lr01droplookatpreviouspospos}{rgb}{0.12,0.47,0.71}
\definecolor{varcoloruniquecopyat2l1h64d3lr01droppromoteeos}{rgb}{0.84,0.15,0.16}
\definecolor{varcoloruniquecopyat2l1h64d3lr01droppromoteitselfa2specialpromotenothing}{rgb}{0.17,0.63,0.17}
\definecolor{varcoloruniquereverseat2l1h64d3lr01droplookatitselftokentokenspeciallookatsep}{rgb}{0.17,0.63,0.17}
\definecolor{varcoloruniquereverseat2l1h64d3lr01droplookatpreviouspospos}{rgb}{1.00,0.50,0.05}
\definecolor{varcoloruniquereverseat2l1h64d3lr01droppromoteeos}{rgb}{0.58,0.40,0.74}
\definecolor{varcoloruniquereverseat2l1h64d3lr01droppromoteitselfa2specialpromotenothing}{rgb}{0.84,0.15,0.16}
\definecolor{varcoloruniquereverseat2l1h64d3lr01drops1}{rgb}{0.12,0.47,0.71}
\definecolor{varcoloruniquereversepct4l1h256d4lr01droplookatitselftokentokenspeciallooknowhere}{rgb}{1.00,0.50,0.05}
\definecolor{varcoloruniquereversepct4l1h256d4lr01droplookatpreviouspospos}{rgb}{0.12,0.47,0.71}
\definecolor{varcoloruniquereversepct4l1h256d4lr01droppromoteitselfnewa2}{rgb}{0.84,0.15,0.16}
\definecolor{varcoloruniquereversepct4l1h256d4lr01drops3}{rgb}{0.17,0.63,0.17}

\definecolor{varcoloruniquecopyat2l1h64d3lr01drops1}{rgb}{0.12,0.47,0.71}
\definecolor{varcoloruniquecopyat2l1h64d3lr01drops2}{rgb}{1.00,0.50,0.05}
\definecolor{varcoloruniquecopyat2l1h64d3lr01droplogits1}{rgb}{0.17,0.63,0.17}
\definecolor{varcoloruniquecopyat2l1h64d3lr01droplogits2}{rgb}{0.84,0.15,0.16}

\usepackage{float}

\definecolor{goodgreen}{RGB}{0, 128, 0}
\definecolor{badred}{RGB}{200, 0, 0}

\usepackage{subcaption} %
\usepackage{graphicx} %

\usepackage{xcolor} %

\icmltitlerunning{Discovering Interpretable Algorithms by Decompiling Transformers to RASP}

\begin{document}

\twocolumn[
  \icmltitle{Discovering Interpretable Algorithms by Decompiling Transformers to RASP}

  \icmlsetsymbol{equal}{*}

  \begin{icmlauthorlist}
    \icmlauthor{Xinting Huang}{equal,yyy}
    \icmlauthor{Aleksandra Bakalova}{equal,yyy}
    \icmlauthor{Satwik Bhattamishra}{comp}
    \icmlauthor{William Merrill}{sch}
    \icmlauthor{Michael Hahn}{yyy}
  \end{icmlauthorlist}

  \icmlaffiliation{yyy}{Saarland Informatics Campus, Saarland University}
  \icmlaffiliation{comp}{University of Oxford}
  \icmlaffiliation{sch}{Allen Institute for AI}

  \icmlcorrespondingauthor{Xinting Huang}{xhuang@lst.uni-saarland.de}
  \icmlcorrespondingauthor{Aleksandra Bakalova}{abakalov@lst.uni-saarland.de}

  \icmlkeywords{Machine Learning, ICML,  RASP, Transformers, Interpretability,}

  \vskip 0.3in
]

\printAffiliationsAndNotice{\icmlEqualContribution}

\begin{abstract}
Recent work has shown that the computations of Transformers can be simulated in the RASP family  of programming languages.
These findings have enabled improved understanding of the expressive capacity  and generalization abilities of Transformers.
In particular, Transformers have been suggested to length-generalize exactly on problems that have simple RASP programs.
However, it remains open whether trained models actually implement simple interpretable programs.
In this paper, we present a general method to extract such programs from trained Transformers.
The idea is to faithfully re-parameterize a Transformer as a RASP program and then apply causal interventions to discover a small sufficient sub-program.
In experiments on small Transformers trained on algorithmic and formal language tasks, we show that our method often recovers simple and interpretable RASP programs from length-generalizing transformers.
Our results provide the most direct evidence so far that Transformers internally implement simple RASP programs.\footnote{Code link: \url{https://github.com/lacoco-lab/decompiling_transformers}

Blog post: \url{https://lacoco-lab.github.io/home/decompiling_transformers}}
\end{abstract}

\section{Introduction}

Understanding the computations in trained Transformers is a central goal of mechanistic interpretability.
Over the past few years, a complementary theoretical line has sharpened this question by giving symbolic characterizations of Transformer computation.
In particular, the RASP language \citep{weiss2021thinking} and its dialects \citep[e.g.][]{strobl2025transformers, yang2024counting, zhou2023algorithms} simulate broad classes of Transformer architectures, and have been used to explain when and why models generalize on specific tasks.
A recurring thesis in this literature is that \emph{simple programs imply robust generalization}: if a task admits a short RASP program, then a Transformer trained on shorter inputs will generalize to longer inputs \citep[RASP length generalization conjecture, ][]{zhou2023algorithms, huang2025formal}.
Yet, this story has a missing piece:
Theoretical results show that Transformers \emph{can} realize RASP-like computations, but they do not show that trained models \emph{do} so in a human-interpretable way.
Interpretability methods such as circuit discovery \citep[e.g.][]{conmy2023towards} identify sparse computation graphs supporting particular behaviors, but the resulting objects are  \emph{circuits} rather than \emph{programs}: they are usually tied to specific input lengths and input templates, and do not directly yield an algorithm uniformly across input lengths.
\citet{lindner2023tracr} map RASP programs \emph{to} Transformers, but the inverse problem---recovering a compact symbolic program \emph{from} a trained Transformer---has remained open.
This gap is especially salient for the length-generalization conjecture: if length-generalizing models succeed because they internally implement short RASP programs, we would like to \emph{extract} those programs and inspect them.

In this work we introduce a general decompilation pipeline for recovering interpretable RASP-like programs from trained Transformers.
Our starting point is a faithful reparameterization: we define a target dialect, \textbf{Decompiled RASP (D-RASP)}, whose primitives mirror Transformer computation while exposing intermediate variables in interpretable spaces such as token and position bases.
Under a linearization assumption for layer normalization, we show that a GPT-2 style Transformer can be translated into a D-RASP program that exactly preserves its input--output behavior.
Faithful translation alone does not yield interpretability: the naive program is exponentially large in depth, reflecting the many paths through the residual stream.
Our key methodological contribution is thus a second step discovering a \emph{minimal sufficient sub-program}.
Using causal interventions, we prune program components while preserving behavioral match to the original model, and replace learned components with simple primitives when possible.
The result is often a short, readable program that implements the same algorithmic behavior as the trained Transformer.

We validate this \emph{reparameterize-and-simplify} approach on small GPT-2 style models trained on algorithmic and formal-language benchmarks.
On models that length-generalize, decompilation often recovers compact programs  aligning with known hypotheses from theory and interpretability---for example, histogram-based majority computation \citep[cf][]{weiss2021thinking}, induction-head-based copying  \cite{olsson2022context, zhou2023algorithms}, and bracket counting in bounded-depth Dyck languages \cite{yao2021self, wen2023transformers}.
In contrast, on non-length-generalizing models, decompilation does not succeed, suggesting that these models rely on more entangled, non-program-like mechanisms.

Our results provide direct evidence that, at least in the controlled setting of small models trained on formal problems, length-generalizing Transformers can internally implement simple RASP-like algorithms---and that these algorithms can be extracted automatically as symbolic programs.

\paragraph{The paper is organized as follows:}
Section~\ref{sec:D-RASP-intro} introduces D-RASP, our target RASP dialect.
Section~\ref{sec:methodology} describes the program extraction method, which includes a faithful translation (Step~1) as well as pruning and simplification (Step~2).
Section~\ref{sec:results} reports the experimental results and extracted programs, Section~\ref{sec:related-work} shows how our approach fits into the existing literature, Section~\ref{sec:discussion} discusses the implications of our research, and Section~\ref{sec:conclusion} concludes the paper.

\section{Decompiled RASP (D-RASP)}
\label{sec:D-RASP-intro}
Here, we define our target dialect of RASP,  \textbf{Decompiled RASP} (D-RASP).
It is centered around selectors, aggregation, and element-wise operations as the original RASP \cite{weiss2021thinking}, but with adapted definitions.
Let $\Sigma$ be the (finite) alphabet, $\Sigma = \{\sigma_1, \dots, \sigma_{|\Sigma|}\}$, and let $T \in \mathbb{N}$ be the maximum context size of the model.
We also use $N$ to denote the length of a given input ($1 \leq N \leq T$)

D-RASP has two types of variables:
    The  first type is \textbf{activation variables}, $v \in \mathbb{R}^{d\times N}$ where $d = d(v)$ is specific to this variable %
    and specified in the program. 
    For a variable $v$, we write $d(v)$ for its dimensionality, so that $v \in \mathbb{R}^{d(v) \times N}$.
    In our decompiled programs, typical values for $d(v)$ may be $|\Sigma|$ (for encoding token information) or $T$ (for encoding position information).
We write $v(i)$ for $v_{\cdot, i} \in \mathbb{R}^{d(v)}$.
The second type is \textbf{selector variables}, $\alpha \in \mathbb{R}^{N \times N}$; these determine the aggregation of information across positions.

\begin{figure*}
\centering
\begin{subfigure}[c]{0.52\textwidth}
\centering
\definecolor{varcolormajorityat1l4h256d3lr01droplogits1}{rgb}{0.12,0.47,0.71}

\begin{tcolorbox}[title={Decompiled program for MOST FREQUENT} ]
{\footnotesize
\begin{Verbatim}[commandchars=\\\{\}]
1. a1 = aggregate(s=[], v=token) 	
2. \textcolor{varcolormajorityat1l1h256d3lr00droplogits1}{logits1} = project(inp=a1, \textcolor{varcolormajorityat1l1h256d3lr00droplogits1}{op=(inp==out),}
         \textcolor{varcolormajorityat1l1h256d3lr00droplogits1}{special_op=(uniform selection)})
3. prediction = softmax(logits1)
\end{Verbatim}
}
\end{tcolorbox}
\caption{Code} %
\end{subfigure}
\begin{subfigure}[c]{0.2\textwidth}
    \centering
    \includegraphics[width=\linewidth]{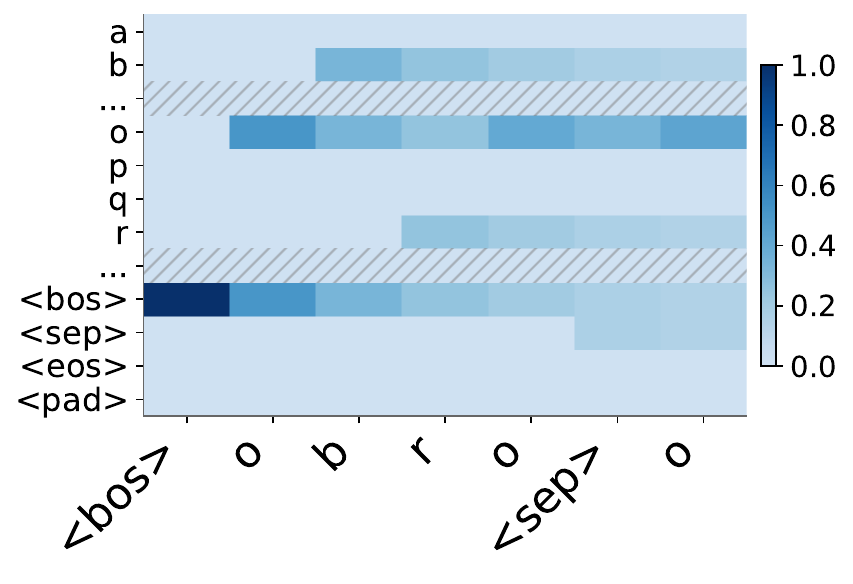}
    \caption{Line 1: {a1}}
\end{subfigure}
\begin{subfigure}[c]{0.2\textwidth}
    \centering
    \includegraphics[width=\linewidth]{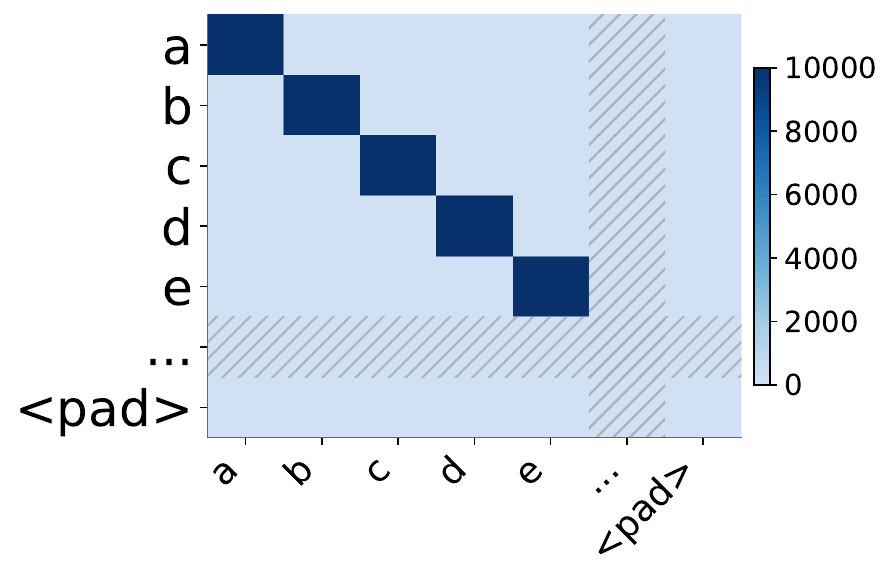}
    \caption{Line 2: {\textcolor{varcolormajorityat1l1h256d3lr00droplogits1}{\texttt{op=(inp==out), special\_op=(uniform selection)}}}}
\end{subfigure}

\caption{Program (a) for finding the most frequent character in a string, extracted from a real transformer. An example input is
\texttt{
\setlength{\tabcolsep}{3pt}
\small
\begin{NiceTabular}{*{7}{c}}[hvlines]
BOS & o & b & r & o & SEP & o
\end{NiceTabular}
}; the model is trained to predict the last token ``\texttt{o}''. 
Line 1 computes, at each position, the relative frequencies of symbols at preceding positions ($\texttt{a1} \in \mathbb{R}^{|\Sigma| \times N}$, in (b). x-axis = input string, y-axis = variable dims., by aggregating over  $\texttt{token} \in \mathbb{R}^{|\Sigma| \times N}$ (Figure~\ref{fig:initial-variables}b). 
In Line 1, \texttt{s=[]} indicates that \emph{no} selector is used, i.e., the weights $a_{i,j}$ in aggregation are constant.
Line 2 projects \texttt{a1} to output logits via the matrix in (c) (rows = input dims., columns = output dims.). For ordinary tokens (covered by \texttt{op=}), each token receives an output logit proportional to its frequency, corresponding to a temperature-scaled identity matrix. Logits for special tokens are excluded (\texttt{special\_op=(uniform selection)}), precluding outputting BOS/SEP on length-1 strings (App. Figure~\ref{fig:majorityat1l4h256d3lr01dropactivation}b has \texttt{logits1}).
Overall, the program assigns the highest output logit to the most frequent non-BOS/SEP token.
The program is extracted from a 1-layer 4-head transformer (App.~\ref{majorityat1l4h256d3lr01dropsection}); an equivalent program results from a 4-layer 4-head transformer  (App.~\ref{majorityat4l4h256d3lr00dropsection}).
}\label{fig:majority}
\end{figure*}

Each program has access to two initial variables holding one-hot vectors for positions and tokens (Figure~\ref{fig:initial-variables}):
\begin{align*}
    \vpos(i)_{j} =& \delta_{j,i} = \mathbb{I}[j = i]  & \vpos \in \mathbb{R}^{T \times N}\\
    \vtok(i)_{j} =& \delta_{\sigma_j,x_i} =\mathbb{I}[\sigma_j=x_i] & \vtok \in \mathbb{R}^{|\Sigma| \times N}
\end{align*}
where, for an input string $x \in \Sigma^N$, $x_i \in \Sigma$ indicates the token at position $i$.
Other variables are created by operations.

\textbf{Selectors and Aggregation} 
Self-attention operations are captured using \texttt{select} and \texttt{aggregate} operators.
Given activation variables $v_k, v_q$, we produce a \textbf{selector} $\alpha\in\mathbb{R}^{N\times N}$
\begin{align*}
\alpha =& \texttt{select}\Big(\texttt{k=}v_k, \texttt{q=}v_q, \texttt{op=}\mA \Big) & (\mA\in\mathbb{R}^{d(v_q)\times d(v_k)}) \\
\alpha =& \texttt{select}\Big(\texttt{k=}v_k, \texttt{op=}\vb\Big) & (\vb\in\mathbb{R}^{1 \times d(v_k)} ),
\end{align*}
evaluating to the tensor ($1 \leq s \leq i \leq N$):
\[
\label{eq:eq-behind-select}
\alpha(i,s)
= v_q(i)^\top \mA\, v_k(s) \text{\ \ \ \ \ \  or}
\]
\[
\alpha(i,s)
=
\vb \ v_k(s)
\]
The $\mA, \vb$ tensors are part of the program; they might be one of a set of hard-coded constructs from a library of primitives (e.g., the identity matrix), or  something arbitrary. Selectors can only be used as input to an \textbf{aggregation} operation: 
\begin{equation*}
    v = \texttt{aggregate}\texttt{(s=}\alpha_1+\dots+\alpha_p, \texttt{v=}w\texttt{)}
    \end{equation*}
    where $\alpha_i$ are selector variables, $w$ an activation variable; $v$ an activation variable; with output defined as
    \begin{align*}
        v(i) &= \sum_{j \leq i} a_{i,j} w(j) \text{\ \ \ \ \ \ \ where} \\
a_{i,s} &= \frac{\exp(\alpha(i,s)_1+ \dots + \alpha(i,s)_p)}{\sum_{s'\le i}\exp(\alpha(i,s')_1+ \dots + \alpha(i,s')_p)} 
\end{align*}

Next,    \textbf{elementwise operations}  capture MLPs. They take a set of variables and produce a a new variable:
\begin{equation}\label{eq:mlp-black-box}
    v = \texttt{element\_wise\_op}(v_1, \dots, v_s, \texttt{func=}f)
\end{equation}
where $f : \mathbb{R}^{d(v_1) \times \dots \times d(v_s)} \rightarrow \mathbb{R}^{d(v)}$ is an arbitrary function provided with the program; $v$ and the inputs $v_j$ are activation variables.
It evaluates to the tensor $v \in \mathbb{R}^{d(v) \times N}$:
\begin{equation}
    v(i) = f(v_1(i), \dots, v_s(i))
\end{equation}
At the end, we compute \textbf{next-token logits}, $p = \texttt{project}(\texttt{inp=}v, \texttt{op=}\mA)$ defined as
\begin{equation}
    p(j) = \mA \cdot v(j)
\end{equation}
where $\mA \in \mathbb{R}^{|\Sigma| \times d(v)}$, $d(p) = |\Sigma|$; or $p = \texttt{project(op}=\vb)$ defined as $p = \vb$ ($\vb \in \mathbb{R}^{|\Sigma|}$).
We obtain the output as
\begin{equation}
    \texttt{prediction}(j) := \Phi(p_1(j) + \dots + p_s(j))
\end{equation}
where $\Phi \in \{\texttt{softmax}, \texttt{sigmoid}\}$ depending on the task; each $d(p_i) = |\Sigma|$ and $d(output) = |\Sigma|$. %

 \begin{wrapfigure}[20]{l}{0.25\textwidth}
     \centering
     \begin{subfigure}[b]{0.23\textwidth}
         \centering
         \includegraphics[width=\textwidth]{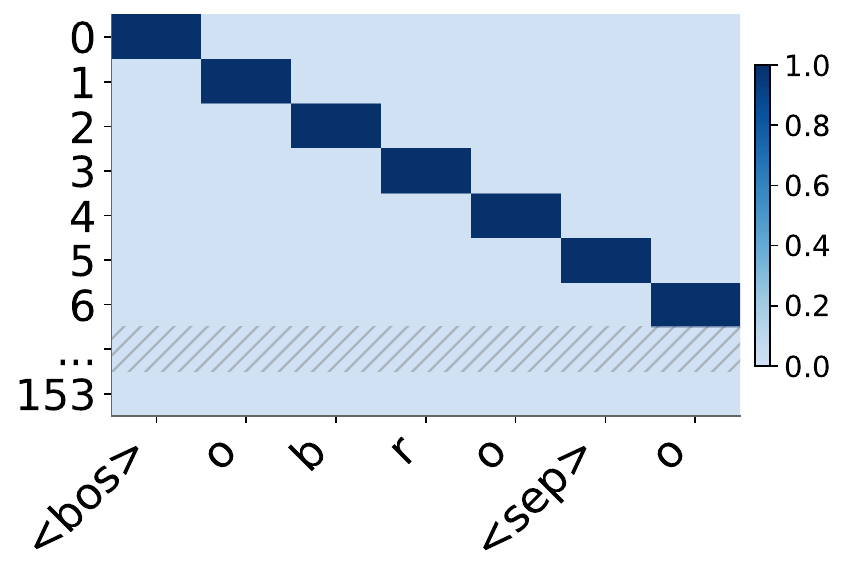}
         \caption{\texttt{pos}}
     \end{subfigure}
    
     \begin{subfigure}[b]{0.23\textwidth}
         \centering
         \includegraphics[width=\textwidth]{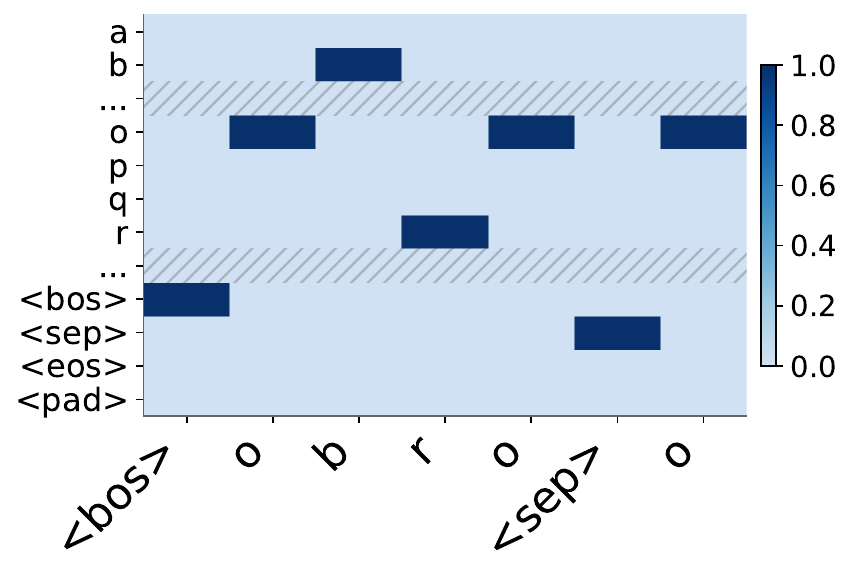}
         \caption{\texttt{token}}
     \end{subfigure}
     \caption{Initial variables on the example input used in Figure~\ref{fig:majority}. x-axis = input string $\in \Sigma^N$; y-axis = activation dimensions}
     \label{fig:initial-variables}
\end{wrapfigure}
\paragraph{Primitives}
D-RASP programs can use arbitrary tensors $A, b$ for selectors and next-token projections, and arbitrary maps $f$ for elementwise operations.
We provide a library of tensors $A, b$ for selector and next-token logits, and elementwise operations $f$.
We provide the full library in Appendices \ref{app:explaining-mlps} and \ref{sec:selector-primitives}.
Examples for $\mA$ in \texttt{select} include the identity matrix (``\texttt{k == q}'') and an off-by-one shifted identity matrix (``$\texttt{k == q-1}$'').
We find it useful to define separate primitives for normal tokens and special tokens (BOS, EOS, SEP, PAD), provided using separate \texttt{op=} (for normal tokens) and \texttt{special\_op=} (for special tokens) arguments.
We only do this as syntactic sugar when using primitives; for a general non-primitive matrix $\mA$, we just write \texttt{op=$\mA$}.
Examples for $f$ in \texttt{element\_wise\_op} include %
a ``hard-max'' operation ($f_{harden}(x) := e_{\arg\max_j x}$, i.e., creating a one-hot vector at the maximal component of the vector $x$).
In decompilation, where possible, we will attempt to use these primitives, as detailed below.

\paragraph{Comparison to other RASP dialects}
D-RASP differs from other RASP dialects \citep[e.g.][]{weiss2021thinking,yang2024counting,strobl2025transformers} in using softmax for aggregation, in keeping with Transformers. This is useful for enabling direct and faithful translation from Transformers; it enables both non-uniform attention (unlike C-RASP, \citet{yang2024counting}) and attention over an unbounded number of position (unlike B-RASP, \citet{yang2024masked}).
The most relevant comparison for us here is to C-RASP, which has been linked to length generalizability of transformers \citep{yang2024counting, yang2025kneedeep, huang2025formal, born2025, chen2025non, jiang2025softmax}.
Superficially, there is a major difference: C-RASP programs only produce counts and boolean outputs, whereas D-RASP produces general real-valued activations.
However, if we slightly alter the semantics of D-RASP so that the outcome of \texttt{aggregate} or \texttt{element\_wise\_op} is rounded to some $p$-bit precision (which in practice happens on real-world hardware), and also enforce a rationality constraint on the parameters, then we identify a fragment equivalent to C-RASP (Theorem~\ref{thm:c-rasp-no-pos}, proof in App.~\ref{app:c-rasp-expressivity}).
\citet{merrill2023logic} show that log-precision transformers can be expressed in the logic FO(M). As C-RASP defines a strict subset of FO(M) \citep{huang2025formal}, this fragment also defines a strict subset for FO(M).

\begin{theorem}[Correspondence of D-RASP and C-RASP]\label{thm:c-rasp-no-pos}
    Let $\mathcal{D}$ be the class of D-RASP programs not using $\texttt{pos}$, where all tensor entries $A_{ij}, b_i$ (for the parameters of \texttt{select}, \texttt{project}) are in $\{-\infty, \} \cup \{\log q : q \in \mathbb{Q}_+\}$. %
Then the programs in $\mathcal{D}$, using the rounded semantics, define exactly the same functions as C-RASP.
\end{theorem}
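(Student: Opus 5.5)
We prove the two inclusions separately. C-RASP is taken in its standard form (Yang \& Chiang): boolean vectors built from token predicates $Q_\sigma(i)$, boolean connectives, prefix counts $\#_{j\le i}[P(j)]$, comparisons of linear combinations of counts with integer coefficients and constants, and a final boolean output per position.

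\emph{C-RASP $\subseteq \mathcal{D}$.} We encode every C-RASP boolean vector $P$ as a D-RASP activation variable with values in $\{0,1\}$. The key gadget is counting. We compute $\texttt{aggregate}(\texttt{s=}[], \texttt{v=}P)$; since the selector is empty, the attention weights are uniform and the result at position $i$ is $\#_{j\le i}[P(j)]/i$. This fraction can be represented exactly only if $p$ is at least logarithmic in $T$, so we take the rounded semantics with $p=O(\log T)$, as in the usual log-precision convention. The comparison $\sum_k c_k \#[P_k] \ge c_0$ needs the absolute length $i$, not just ratios, so we also produce $1/i$. We obtain it by aggregating an indicator of BOS (or of position one, which is available from \texttt{token} when BOS is present) under uniform attention. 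A single \texttt{element\_wise\_op} then recovers the integer counts from the rounded ratios and evaluates the comparison, which is possible because $f$ is arbitrary. Boolean connectives are also element-wise. The final output is produced by a \texttt{project} whose entries lie in $\{-\infty\}\cup\{\log q\}$; choosing entries $\log 1=0$ and $-\infty$ makes the sigmoid or softmax reproduce the boolean. All selector and projection parameters used are in the allowed set, and \texttt{pos} is never used.

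\emph{$\mathcal{D} \subseteq$ C-RASP.} We argue by induction over the program that, under $p$-bit rounding, every activation variable takes values in a finite set $V$, and that each predicate $[v(i)=c]$ for $c\in V$ is definable in C-RASP. The induction steps are as follows.
\begin{itemize}
\item Initially \texttt{token} is one-hot, so its values are given directly by token predicates.
\item An \texttt{element\_wise\_op} sends finitely valued inputs to finitely many outputs. Each output value is a finite boolean combination of the input-value predicates, which C-RASP can express.
\item For \texttt{select}, the logit $\alpha(i,s)$ depends only on the finite values $v_q(i)$ and $v_k(s)$. Because every parameter is $-\infty$ or $\log q$, we have $\exp(\alpha(i,s)) = \prod q \in \mathbb{Q}_{\ge 0}$, and summed selectors multiply.
\item For \texttt{aggregate}, it follows that
\[
v(i)=\frac{\sum_{s\le i} w_{i,s}\, w(s)}{\sum_{s\le i} w_{i,s}},
\]
where $w_{i,s}$ is a rational number determined by the pair of values $(v_q(i),v_k(s))$. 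Case-splitting on the value of $v_q(i)$, which C-RASP handles with a finite disjunction, both numerator and denominator become rational linear combinations of counts $\#_{s\le i}[v_k(s)=a \wedge w(s)=b]$. After multiplying through by a common denominator, the coefficients are integers.
\item The rounding of $v(i)$ to the value $c$ holds exactly when the ratio lies in an interval $[c^-,c^+)$ with rational endpoints. Clearing denominators turns this into linear comparisons among counts, which C-RASP expresses.
\item The final \texttt{project} and $\Phi$ act on finitely many rational-log values, so the prediction is again a finite case analysis.
\end{itemize}

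\emph{Main obstacle.} The delicate step is rounding for aggregation under the $-\infty$ case. When every weight at position $i$ is $-\infty$, the denominator is zero, so we must adopt a convention, such as uniform attention or a zero output, and express it by a count-equals-zero test. We also have to check that the rounded value sets stay finite, with sizes depending on $p$ but independent of $N$. For the converse direction, we must fix the precision regime consistently so that the counting gadget is exact on all lengths the statement covers. If the theorem is read with fixed $p$, we would instead route counts through C-RASP-style comparisons realized in D-RASP itself. Concretely, each comparison would use one aggregation whose selector weights $\log q$ encode the coefficients $c_k$ as ratios, followed by thresholding the average against $1/2$, which avoids ever materializing raw counts.
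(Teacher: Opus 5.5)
Your direction $\mathcal{D}\subseteq$ C-RASP follows the paper's argument, in more detail than the paper gives. It uses finite value sets under fixed $p$, one predicate per value, and hard-coded element-wise ops, and it reduces aggregation to integer linear inequalities over counts of value classes.

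\textbf{The gap is in C-RASP $\subseteq\mathcal{D}$.} Your main construction recovers $\#P$ and $i$ from the rounded ratios $\#P/i$ and $1/i$. This forces you to switch to $p=O(\log T)$, as you partly acknowledge. That is not the theorem's semantics: $p$ is a fixed constant, and \texttt{pos}-free programs run at unbounded length. It also contradicts your other direction, which needs value sets independent of $N$. At fixed $p$, once $i>2^{p}$, the value $1/i$ rounds to $0$ and $\#P/i$ no longer determines $\#P$. So no element-wise op can evaluate $\sum_k c_k\#[P_k]\ge c_0$ from these ratios.

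The missing idea, which is the paper's route, is that only the \emph{sign} of one linear combination is needed:
\begin{itemize}
\item Form the bounded integer $u(i)=\sum_k\lambda_k\mathbb{I}[P_k(i)]$ with an element-wise op. A constant can be folded in as $-c_0\,\mathbb{I}[x_i=\mathrm{BOS}]$.
\item Aggregate $u$ with the empty selector to get $\frac{1}{i}\sum_{j\le i}u(j)$.
\item Test whether the rounded result is $\ge 0$.
\end{itemize}
Since the semantics rounds \emph{downward} and $0$ is representable, this test is exact at every length, for any fixed $p$.

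\textbf{Your fixed-$p$ fallback} (coefficients as selector weights, threshold at $\frac{1}{2}$) can be completed. Negative coefficients must move into the value, since the weights $q$ are positive, and the empty-support case needs handling. But its exactness rests on the same fact you never invoke. Under round-to-nearest, $\frac{n}{2n-1}$ and $\frac{n-1}{2n-1}$ both collapse to $\frac{1}{2}$ for large $n$, so the test cannot separate a difference of $+1$ from $-1$.

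\textbf{A further issue, shared with the paper's sketch.} Your claim $\exp(\alpha(i,s))=\prod q\in\mathbb{Q}$ holds only when the \texttt{q}/\texttt{k} arguments of \texttt{select} are integer-valued, e.g.\ one-hot. An aggregated variable can have an entry $\frac{1}{2}$, and $\frac{1}{2}\log 2$ yields the weight $\sqrt{2}$. The rounding test can then become one like $\#b\le\sqrt{2}\,\#a$, which is not an integer linear inequality. The fragment needs a restriction of this kind on \texttt{select} inputs for this step to go through.
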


\section{Decompilation Method} %
\label{sec:methodology}

Our pipeline applies to GPT-2 style models with absolute positional encodings.
Although modern LLMs typically use different positional encoding strategies, we focus on this architecture because the theory of length generalization is especially well developed for it \cite{zhou2023algorithms, huang2025formal, izzo2025quantitative}; we further discuss this in FAQ~\ref{faq:ape}.

Here, we describe our decompilation method.
Our method consists of two steps: reparamaterization and simplification.
The first part translates the transformer into a D-RASP program.
The second one simplifies the program while empirically aiming to preserve faithfulness.

\subsection{Step 1: From Transformers to Programs}

We view a transformer as a length-preserving function that maps a sequence $x \in \Sigma^*$ with $|x|=N$ ($N \leq T$) to a sequence of next-token logits in $\in \mathbb{R}^{|\Sigma| \times N}$. %
Our reparameterization applies to transformers satisfying:
\begin{assumption}[Linear Layer Norm Assumption]
    A transformer $T$ satisfies the \emph{Linear Layer Norm Assumption} (LLNA) iff each layer norm operation can be replaced with a linear operation for some  constant parameter $\gamma'$ with negligible change to the input-output behavior:
\begin{equation}\label{eq:llna}
    \frac{x - \overline{x}}{\sqrt{\sigma^2(x) + \epsilon} }\gamma + \beta \ \ \ \ \Rightarrow\ \ \ \ 
    \left(x - \overline{x}\right) \gamma' + \beta
\end{equation}
While we generally allow this assumption to hold approximately, for Theorem~\ref{thm:faithfulness} we assume exact equality between the left- and right-hand sides of~(\ref{eq:llna}).
\end{assumption}

The motivation for this assumption comes from \citet{baroni2025transformers}, who empirically show that trained language models can be fine-tuned to satisfy LLNA with little performance drop. We also find empirically that LLNA tends to hold for length-generalizable models (Tables~\ref{tab:task-acc-algorithmic},~\ref{tab:task-acc-formal-lang}), which are precisely the class of models for which prior work suggests that short RASP programs should exist \citep{zhou2023algorithms, huang2025formal}. We refer the reader to FAQ~\ref{faq:llna} for further discussion of LLNA.

We then show:
\begin{theorem}\label{thm:faithfulness}
Consider a GPT-2-style transformer satisfying LLNA with exact equality between left- and right-hand sides of~(\ref{eq:llna}).
There is a D-RASP program  defining the same input-output map; it can be explicitly obtained from the transformer's parameters. 
Indeed, the residual stream at each layer can be recovered linearly from a set of variables in the D-RASP program.

\end{theorem}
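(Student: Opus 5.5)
We would prove Theorem~\ref{thm:faithfulness} by induction over layers, maintaining the invariant that the residual stream after each sublayer is a linear image of a finite set of D-RASP activation variables. Concretely, we aim for a representation
\[
h^{(\ell)}(i) = \sum_{v \in \mathcal{V}_\ell} \mathbf{W}_v\, v(i),
\]
where $\mathcal{V}_\ell$ is a set of program variables and each $\mathbf{W}_v \in \mathbb{R}^{d_{model}\times d(v)}$ is a fixed matrix computed from the transformer's parameters.

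\textbf{Base case.} The embedding layer gives $h^{(0)}(i) = \mathbf{E}\,\vtok(i) + \mathbf{P}\,\vpos(i)$, with $\mathbf{E}$ the token embedding matrix and $\mathbf{P}$ the positional embedding matrix (restricted to the first $N$ columns). So we take $\mathcal{V}_0 = \{\vtok, \vpos\}$.

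\textbf{Layer norm.} Under exact LLNA, each layer norm acts as $x \mapsto \mathbf{M}x + \beta$, where $\mathbf{M} = \mathrm{diag}(\gamma')(\mathbf{I} - \tfrac{1}{d}\mathbf{1}\mathbf{1}^\top)$. The centering is linear, and composing with $\mathbf{W}_v$ keeps each term linear in $v$. The bias $\beta$ we absorb by augmenting a variable: note that $\mathbf{1}^\top \vtok(i) = 1$, so any constant vector $c$ can be written as $c = c\mathbf{1}^\top \vtok(i)$. Hence every layer-normed residual stream is again $\sum_v \mathbf{W}'_v v(i)$.

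\textbf{Attention head.} Take a head with query matrix $\mathbf{W}_Q$, key matrix $\mathbf{W}_K$ and value/output matrix $\mathbf{W}_{OV}$, including biases absorbed as above. Its logit
\[
x_i^\top \mathbf{W}_Q^\top \mathbf{W}_K\, x_s
\]
expands bilinearly into $\sum_{v,w} v(i)^\top \mathbf{A}_{v,w}\, w(s)$, with $\mathbf{A}_{v,w} = \mathbf{W}'^\top_v \mathbf{W}_Q^\top \mathbf{W}_K \mathbf{W}'_w$. Each term is a \texttt{select}$(\texttt{k}=w, \texttt{q}=v, \texttt{op}=\mathbf{A}_{v,w})$. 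Terms that depend only on the key become \texttt{select} with \texttt{op}$=\vb$. Terms that depend only on the query are constant across $s$ and cancel in the softmax, so they can be dropped; this is why the one-argument form suffices.

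The aggregate sum in D-RASP is exactly the sum of the head's logits, so the softmax weights agree. The head output is then
\[
\sum_s a_{i,s}\, \mathbf{W}_{OV}\, x_s = \sum_w \mathbf{W}_{OV}\mathbf{W}'_w \sum_s a_{i,s}\, w(s).
\]
We therefore introduce one new variable per $w$, namely \texttt{aggregate}$(\texttt{s}=\sum\alpha, \texttt{v}=w)$, and add it to $\mathcal{V}_{\ell+1}$ with matrix $\mathbf{W}_{OV}\mathbf{W}'_w$. The residual connection keeps the old variables.

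\textbf{MLP.} The MLP's input is a linear function of $\{v(i)\}_{v \in \mathcal{V}}$, so the whole MLP, after the layer norm, is a function $f$ of these variables at position $i$. We introduce an \texttt{element\_wise\_op} with output dimension $d_{model}$ and $\mathbf{W}=\mathbf{I}$.

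\textbf{Output.} The final layer norm and unembedding are linear, so they yield one \texttt{project} per variable, plus a bias \texttt{project(op=}$\vb$\texttt{)}. The final softmax is the output $\Phi$. The statement that the residual stream is linearly recoverable is exactly the invariant.

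\textbf{Main obstacle.} We expect the hardest step to be bookkeeping in the attention step rather than any single calculation. One must verify that query-only and constant terms genuinely cancel under causal softmax normalization, with the sum taken over $s \le i$. One must also check that the bias terms fit the restricted \texttt{select}/\texttt{project} syntax. Finally, one must confirm that the number of variables stays finite, though it may grow exponentially with depth, since each layer's aggregations range over all earlier variables.

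If the constant-vector trick via $\vtok$ conflicts with how special tokens are treated, we would fall back on an \texttt{element\_wise\_op} producing a constant variable.
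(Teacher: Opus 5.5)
Your proposal is correct and takes essentially the same approach as the paper. Both argue by induction that the residual stream is a linear image of path-indexed variables, expand each head's logit bilinearly into one \texttt{select} per (query variable, key variable) pair, create one \texttt{aggregate} per value variable, represent each MLP as a single \texttt{element\_wise\_op} with identity coefficient, and emit one \texttt{project} per final variable. Your explicit handling of biases via $\mathbf{1}^\top\vtok(i)=1$, key-only selectors, and a bias \texttt{project} fills in details that the paper suppresses by omitting biases from its notation.
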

We provide the full formal proof in Appendix~\ref{sec:detailed-translation}.
Here, we give an intuitive discussion of the construction.
Each path in the transformer leading up to a given layer is represented by a variable.
Here, we provide an example.
The variables $\vpos$, $\vtok$ simulate the input layer, which can be recovered as $\mE\vtok + \mP\vpos$, where $\mE \in \mathbb{R}^{d \times |\Sigma|}$ is the token embedding matrix, and $\mP \in \mathbb{R}^{d \times T}$ is the position embedding matrix. 
We simulate the first layer as follows.
For each attention head $h=1, \dots, H$, we define four selectors, describing the four possible key-query interactions between the two variables from the input layer:
\[
\resizebox{0.98\linewidth}{!}{$
\begin{aligned}
    \alpha_{1,h} = \texttt{select}(k=\vpos, q=\vtok, op=\mE^T\mQ_{1,h}^T\mK_{1,h}\mP) \\
    \alpha_{1,h}^{'} = \texttt{select}(k=\vtok, q=\vtok, op=\mE^T\mQ_{1,h}^T\mK_{1,h}\mE) \\
    \alpha_{1,h}^{''} = \texttt{select}(k=\vpos, q=\vpos, op=\mP^T\mQ_{1,h}^T\mK_{1,h}\mP) \\
    \alpha_{1,h}^{'''} = \texttt{select}(k=\vtok, q=\vpos, op=\mP^T\mQ_{1,h}^T\mK_{1,h}\mE) 
\end{aligned}
$}
\]
where $\mK_{l,h}, \mQ_{l,h} \in \mathbb{R}^{d_{h} \times d}$ %
project the residual stream onto the key or query activation of the $h$-th head in the $l$-th layer.

We further define two output variables, reflecting aggregation of token or position counts:
\[
\resizebox{0.98\linewidth}{!}{$
\begin{aligned}
v_{\langle pos, \langle 1,h\rangle \rangle} =& \texttt{aggregate}(\alpha_{1,h} + \alpha_{1,h}^{'} + \alpha_{1,h}^{''} + \alpha_{1,h}^{'''}, \vpos) \\
v_{\langle tok, \langle 1,h\rangle \rangle} =& \texttt{aggregate}(\alpha_{1,h} + \alpha_{1,h}^{'} + \alpha_{1,h}^{''} + \alpha_{1,h}^{'''}, \vtok)
\end{aligned}
$}
\]
By construction, the output of the original attention head can be recovered as\footnote{For simplicity of notation, we subsume the $\mV$ and $\mO$ matrices of GPT-2 into a single $\mV \in \mathbb{R}^{d\times d}$ matrix (App.~\ref{sec:model-transformers}).}
\begin{equation}
     \mV_{1,h}\mP v_{\langle pos, \langle 1,h\rangle \rangle} + \mV_{1,h} \mE v_{\langle tok, \langle 1,h\rangle \rangle}
\end{equation}
The pre-MLP residual stream in layer 1 is recovered as the sum of these terms for all attention heads, plus $\mE\vtok + \mP\vpos$.
The $2H$ variables defined in layer 1, plus $\vtok$, $\vpos$ jointly serve as input to an \texttt{element\_wise\_op}  representing the MLP, mapping to $\mathbb{R}^{d}$.
We thus  have represented the residual stream in layer 1 in terms of $2+2H+1$ variables.
This construction can be repeated recursively layer by layer.
Now, by induction, any layer can be described as a linear combination of  variables $v_1, \dots, v_k$, with some coefficient matrices $\mC_1, \dots, \mC_k, (\mC_i \in \mathbb{R}^{d \times d(v_i)})$ obtained from the transformer parameters: $\sum_{i=1}^k\mC_i v_i$. For each of these variables in the top layer, we add a \texttt{project} statement, with $A$ defined in terms of the original model's unembedding matrix $\mU \in \mathbb{R}^{|\Sigma| \times d}$:
    $p_i = \texttt{project}(v_i, \texttt{op=}\mU \mC_i)$ %
for $i=1, \dots, k$.
By construction, the original next-token logits equal $\sum_{i=1}^k p_i \in \mathbb{R}^{|\Sigma| \times N}$.
We show a full example (one-layer transformer) in Appendix Figure~\ref{fig:pruning}.
The discussion above disregards layer norm. Under LLNA, the weights $\gamma'$ and $\beta$ can be absorbed into the \texttt{op=} tensors.

Notably, the size of the program guaranteed by Theorem~\ref{thm:faithfulness} is exponential in the original transformer's size (FAQ~\ref{faq:exponential-size}), necessitating a simplification step to make it tractable, which we turn to next.

\subsection{Step 2: Discovering Minimal Sufficient Programs}

In Step 2, we aim to simplify the program by (i) pruning components that are causally irrelevant, (ii) simplifying the matrices and vectors that appear in the program by referring, where possible, to primitives from the library.
We measure faithfulness by quantifying the \emph{match accuracy}, i.e., the fraction of input instances where the program gives the same response as the original model. 
Formally, the match accuracy is the proportion of inputs where the pruned model (with MLP replaced) makes the same prediction (next token with highest probability) as the original model in \textit{all} token positions on which the model receives training signal.
Throughout, we deem decompilation to be \emph{causally faithful} when the match accuracy is $\geq 90\%$.

\textbf{Step 2.1: Causal Pruning \& Linearizing Layer Norm}
For each  \texttt{aggregate} operator, we try to replace selector inputs with 0 or with a key-only selector if such replacements have negligible causal effect. 
For each \texttt{element\_wise\_op} operator, we try to remove $v_s$ inputs; each removed input is replaced with a constant absorbed into the transformation $f$. Meanwhile, we also try to remove inputs from the final softmax, replacing them with a constant absorbed into the $bias$. 
Variables that do not enter into downstream components are removed.
We show an example in Appendix, Figure~\ref{fig:pruning}.

Removing parts of the program that are not causally relevant to the final output can be viewed as removing edges in the model's computation graph. This corresponds to a problem addressed by causal ablation methods, which are standard in interpretability research, particularly in circuit discovery \citep{conmy2023towards, li2024optimal, bhaskar2024finding}. We base our implementation on \citet{li2024optimal}, pruning edges using trainable gates with an objective that combines the KL divergence between the pruned and original models with a sparsity penalty.

However, the exponential size of the program guaranteed by Theorem~\ref{thm:faithfulness} makes causal pruning nontrivial in our setting. To avoid unfolding the full program and to make pruning scalable, we instead unfold and prune the program in a multi-stage procedure. Full details are provided in Appendix~\ref{ap:causal-pruning-details}.

Meanwhile, we also train $\gamma'$ (\ref{eq:llna}) for each layer norm to minimize the KL divergence. %
We then replace all layer norms in a model with their linearized form:
\begin{equation*}
    \text{LayerNorm}(x) \approx \mL x + \beta,\qquad\mL = \left(\mI - \frac{\mathbf{1}\mathbf{1}^T}{d(x)}\right) \gamma'
\end{equation*}
where $\mathbf{1}\in \mathbb{R}^{d(x)}$ is a vector with all 1s. 
$\mL$ parameters are absorbed into the attention and MLP parameters (App. \ref{ap:LLNA-and-LN-absorbed}).

\textbf{Step 2.2: Explaining Per-Position Transformations}
We try to explain \texttt{element\_wise\_op} by replacing them with known primitives. Before that, we try to simplify per-position transformations $v = \texttt{element\_wise\_op}(v_1, \dots, v_s, \texttt{func=}f)$ as a sum of single-input operations $w_j = \texttt{element\_wise\_op}(v_j, \texttt{func=}f_j)$ ($j=1, \dots, s$), where $f_1, \dots, f_s$ are re-fitted as MLPs to recover the original downstream model outputs as $v(i) \approx \sum_j w_j(i)$. 
The motivation for this is that single-input operations are likely to be, in general, easier to interpret.
Any downstream appearance of $v$ is replaced by $w_1, \dots, w_s$; these can then be individually pruned using causal pruning.
In practice, we integrate this process with the causal pruning stage.

We then try to replace \texttt{element\_wise\_op} operations with primitives in our library shown in Appendix~\ref{app:explaining-mlps}. Given $v = \texttt{element\_wise\_op}(x, \texttt{func=}f)$, we replace it with new $w = \texttt{element\_wise\_op}(x, \texttt{func=}f_{primitive})$. $w$ has dimension determined by the selected $f_{primitive}$. $w$ stays interpretable since $f_{primitive}$ and input $x$ are interpretable. For example, if it is $f_{no-op}(x)=x$ and $x$ is $\vtok$, then the output dimension is $|\Sigma|$ instead of $d$.
We fit a matrix $\mC \in \mathbb{R}^{d \times d(w)}$ to minimize $\|v- \mC w\|_F^2$, where $\mC$ has a closed-form solution. We then replace all downstream occurrences of $v$ by $w$, and absorb $\mC$ into the matrices associated with those downstream occurrences.  For example, when $v$ appears in $\texttt{project}(v,\mA)$, we replace it by $\texttt{project}(w,\mA\mC)$; similar change applies when $v$ appears in \texttt{select} or \texttt{element\_wise\_op}. In sum, whereas the nonlinearity originally applies to activations $\in \mathbb{R}^{d}$, it now maps between interpretable variables. %

If no primitive achieves high enough match accuracy, we interpret the \texttt{element\_wise\_op} by inspecting its inputs and outputs as follows. On the input side, we store the interpretable activation variables that are fed into the \texttt{element\_wise\_op}. On the output side, if the operation feeds into unembedding, then we can linearly transform it to have output dimension $|\Sigma|$ and see which output tokens are promoted given different activation variable as input; if it feeds as $k$ (similarly $q$) into \texttt{select}, we check what activation variables of $q$ (similarly $k$) are associated with different inputs; in other words, we inspect pairs of activation variables that result in high attention logit. This technique can be viewed as a principled adaptation of LogitLens \cite{logitlens}, but unlike that technique also ensures that  downstream effects through paths other than the residual stream are correctly accounted for (App.~\ref{sec:logit-lens}).

\textbf{Step 2.3: Explaining tensors in \texttt{select} and \texttt{project}} 
While the $\mA$ and $\vb$ tensors in the \texttt{select} and \texttt{project} operators, and the bias in the final softmax yielded by Theorem~\ref{thm:faithfulness} tend to be interpretable (Figs.~\ref{fig:counting}, \ref{fig:sorting}, \ref{fig:local-language}) due to their interpretable dimensions (FAQ~\ref{faq:interpretable-dimension}), we aim to automatically explain them further to reduce the burden of human interpretation.

We replace these tensors with a matrix from the library, whenever this is causally faithful. 
When this is not possible, we optimize the matrices to be sparse and have integer values.
See Appendix~\ref{sec:selector-primitives} for the library of primitives and details of the optimization process.

\begin{figure}

\definecolor{varcoloruniquecopyat2l1h64d3lr01drops1}{rgb}{0.12,0.47,0.71}
\definecolor{varcoloruniquecopyat2l1h64d3lr01drops2}{rgb}{1.00,0.50,0.05}
\definecolor{varcoloruniquecopyat2l1h64d3lr01droplogits1}{rgb}{0.17,0.63,0.17}
\definecolor{varcoloruniquecopyat2l1h64d3lr01droplogits2}{rgb}{0.84,0.15,0.16}

{\footnotesize

\begin{tcolorbox}[title={Decompiled program for UNIQUE COPY} ]
{\footnotesize
\begin{Verbatim}[commandchars=\\\{\}]
1. \textcolor{varcoloruniquecopyat2l1h64d3lr01drops1}{s1} = select(q=pos, k=pos, \textcolor{varcoloruniquecopyat2l1h64d3lr01drops1}{op=(k==q-1)})	
2. a1 = aggregate(s=s1, v=token) 	
3. \textcolor{varcoloruniquecopyat2l1h64d3lr01drops2}{s2} = select(q=token, k=a1, \textcolor{varcoloruniquecopyat2l1h64d3lr01drops2}{op=(k==q),}
		\textcolor{varcoloruniquecopyat2l1h64d3lr01drops2}{special_op=(k==BOS)})	
4. a2 = aggregate(s=s2, v=token) 
5. \textcolor{varcoloruniquecopyat2l1h64d3lr01droplogits1}{logits1} = project(inp=a2, 
            \textcolor{varcoloruniquecopyat2l1h64d3lr01droplogits1}{op=(inp==out)},
        special_op=(uniform selection))
6. \textcolor{varcoloruniquecopyat2l1h64d3lr01droplogits2}{logits2} = project(op=\circled{i})
7. prediction = softmax(logits1+
			logits2)
\end{Verbatim}
}
\end{tcolorbox}}

\begin{subfigure}[b]{0.15\textwidth}
        \centering
\includegraphics[width=\linewidth]{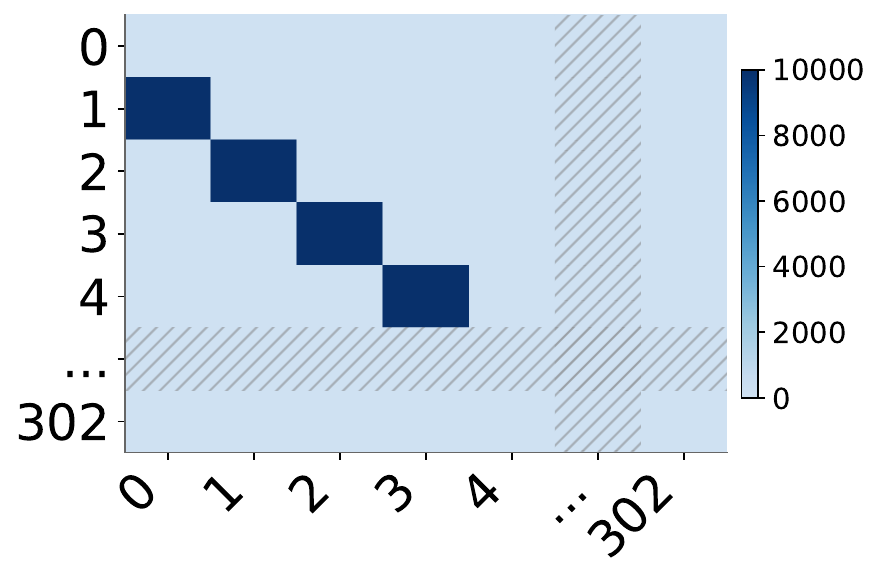}
        \caption{\texttt{\textcolor{varcoloruniquecopyat2l1h64d3lr01drops1}{k==q-1}}}
    \end{subfigure}
\begin{subfigure}[b]{0.15\textwidth}
        \centering
\includegraphics[width=\linewidth]{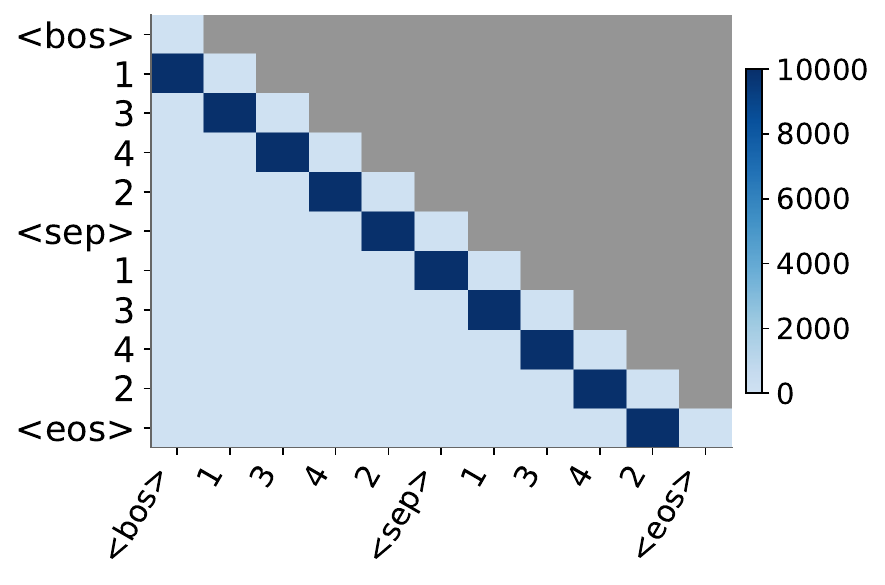}
        \caption{Line 1: \texttt{\textcolor{varcoloruniquecopyat2l1h64d3lr01drops1}{s1}}}
    \end{subfigure}
    \begin{subfigure}[b]{0.15\textwidth}
        \centering
\includegraphics[width=\linewidth]{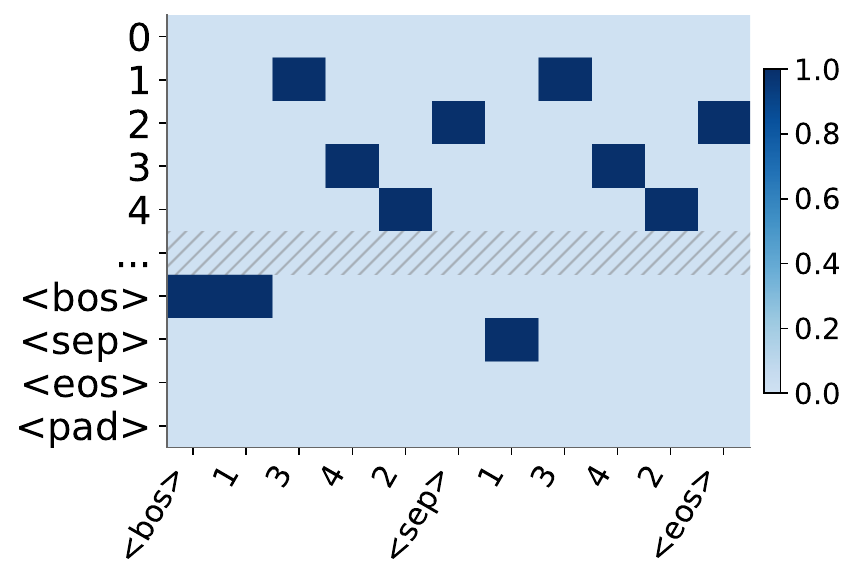}
        \caption{\texttt{a1}}
    \end{subfigure}

    \ 

    \begin{subfigure}[b]{0.15\textwidth}
        \centering
\includegraphics[width=\linewidth]{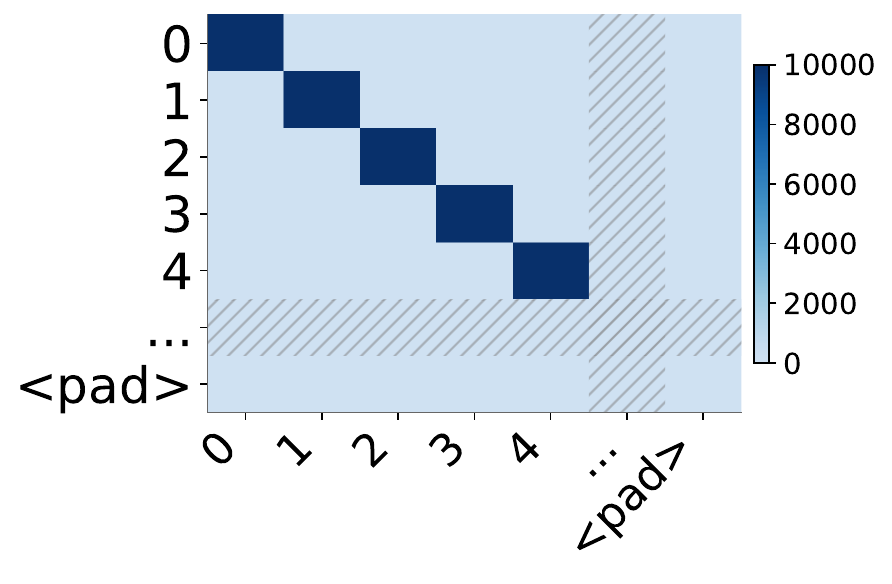}
        \caption{\texttt{\textcolor{varcoloruniquecopyat2l1h64d3lr01drops2}{k==q}}}
    \end{subfigure}
    \begin{subfigure}[b]{0.15\textwidth}
        \centering
        \includegraphics[width=\linewidth]{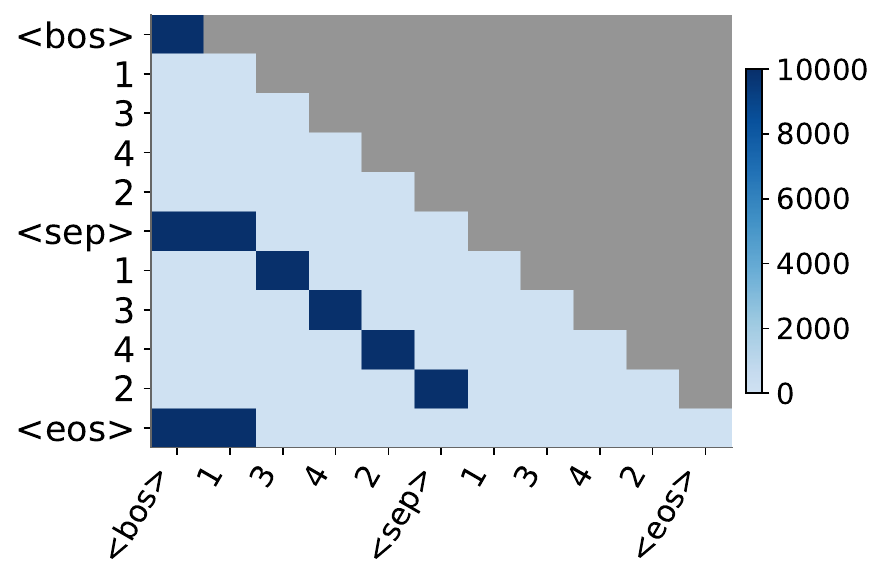}
        \caption{{Line 3: \textcolor{varcoloruniquecopyat2l1h64d3lr01drops2}{s2}}}
    \end{subfigure}
    \begin{subfigure}[b]{0.15\textwidth}
        \centering
 \includegraphics[width=\linewidth]{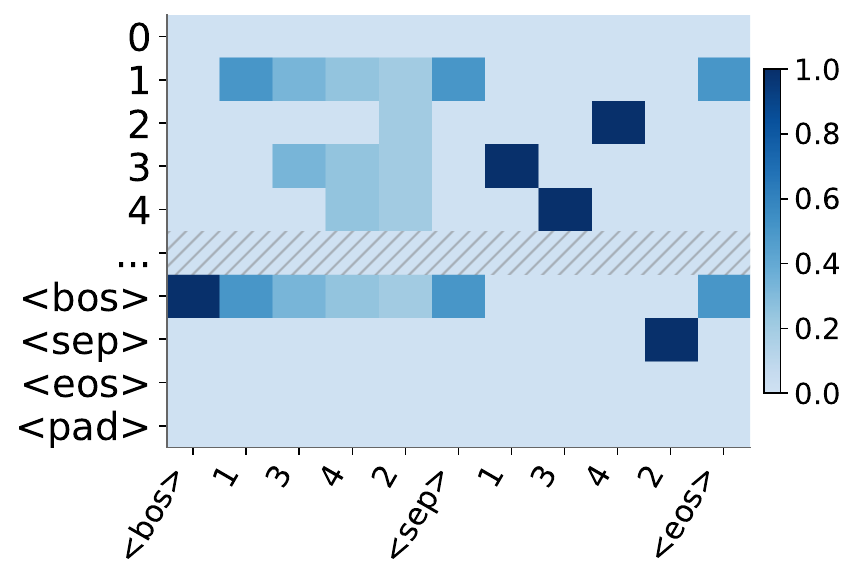}
        \caption{\texttt{a2}}
    \end{subfigure}

    \ 

\begin{subfigure}[b]{0.15\textwidth}
        \centering
\includegraphics[width=\linewidth]{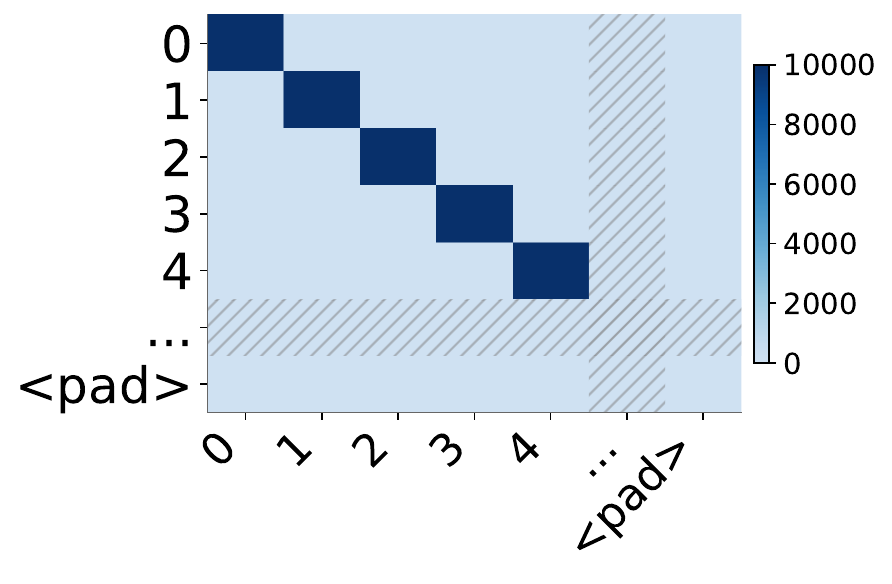}
\caption{\texttt{\textcolor{varcoloruniquecopyat2l1h64d3lr01droplogits1}{inp==out}}}
\end{subfigure} 
\begin{subfigure}[b]{0.15\textwidth}
        \centering
        \includegraphics[width=\linewidth]{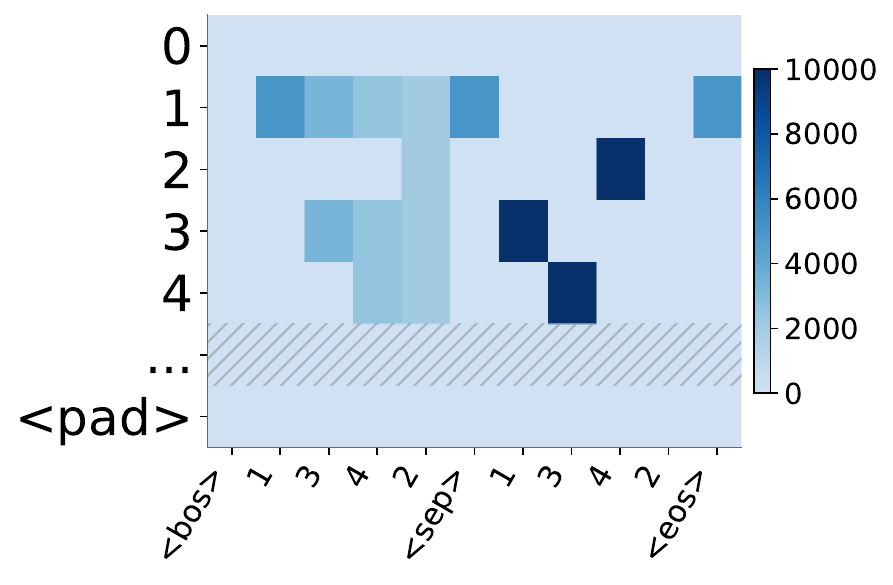}
        \caption{{Line 5: \textcolor{varcoloruniquecopyat2l1h64d3lr01droplogits1}{logits1}}}
    \end{subfigure}
    \begin{subfigure}[b]{0.15\textwidth}
        \centering
        \includegraphics[width=\linewidth]{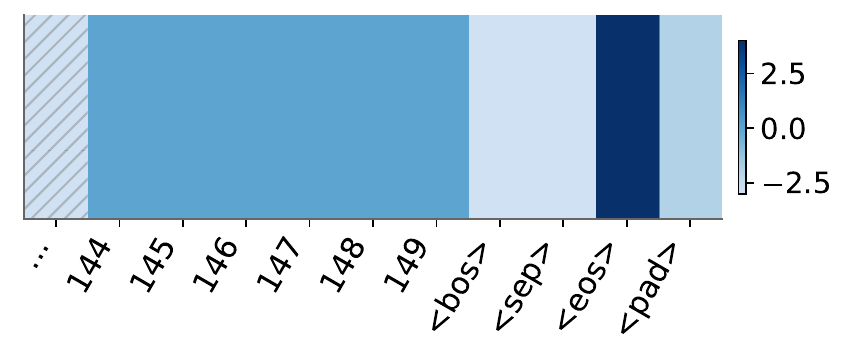}
        \caption{{Line 6: \textcolor{varcoloruniquecopyat2l1h64d3lr01droplogits2}{logits2}}}
    \end{subfigure}

    \caption{
    An extracted program for copying a string without repetitions:
    The input
\begin{NiceTabular}{*{7}{c}}[hvlines]
BOS & 1 & 3 & 4 & 2 & SEP 
\end{NiceTabular} is completed with the string  \begin{NiceTabular}{*{7}{c}}[hvlines]
 1 & 3 & 4 & 2 & EOS 
\end{NiceTabular}.
The primitive ``\texttt{k == q-1}'' (line 1) selects the position immediately preceding the query position; it is given by the matrix in (a) (rows = query dimensions; columns = key dimensions) and yields the selector in (b) (x-axis = key positions; y-axis = query positions).
The variable \texttt{a1} (c) (x-axis = positions in the input; y-axis = dimensions of the variable) stores the previous token at each position, as a one-hot vector.
The variables \texttt{token} and \texttt{a1} jointly encode the bigrams appearing in the string, which is sufficient for copying a string without repetitions.
The variable \texttt{a2} (f) then stores the token to be output next. It is obtained by finding the position $j$ where \texttt{a1(j)} equals \texttt{token(i)}, and retrieving \texttt{token(j)} into \texttt{a2(i)}.
Here, the primitive ``\texttt{k==q}'' selects a matching token; it is given by the matrix in (d). In these cases, there is special behavior on the special tokens, ensuring that the transformer produces the first token in the given sequence after $\langle \text{sep}\rangle$.
\texttt{a2} is directly forwarded into \texttt{logits1} (h) by a scaled identity matrix (g).
The bias in line 6 (i) favors EOS in the absence of other outputs, ensuring EOS is output at the end.
The program recapitulates the ``induction head'' motif; importantly, our decompilation pipeline provides it fully automatically.
    }\label{fig:unique-copy}
\end{figure}

\begin{figure}
    \centering
    \includegraphics[width=0.9\linewidth]{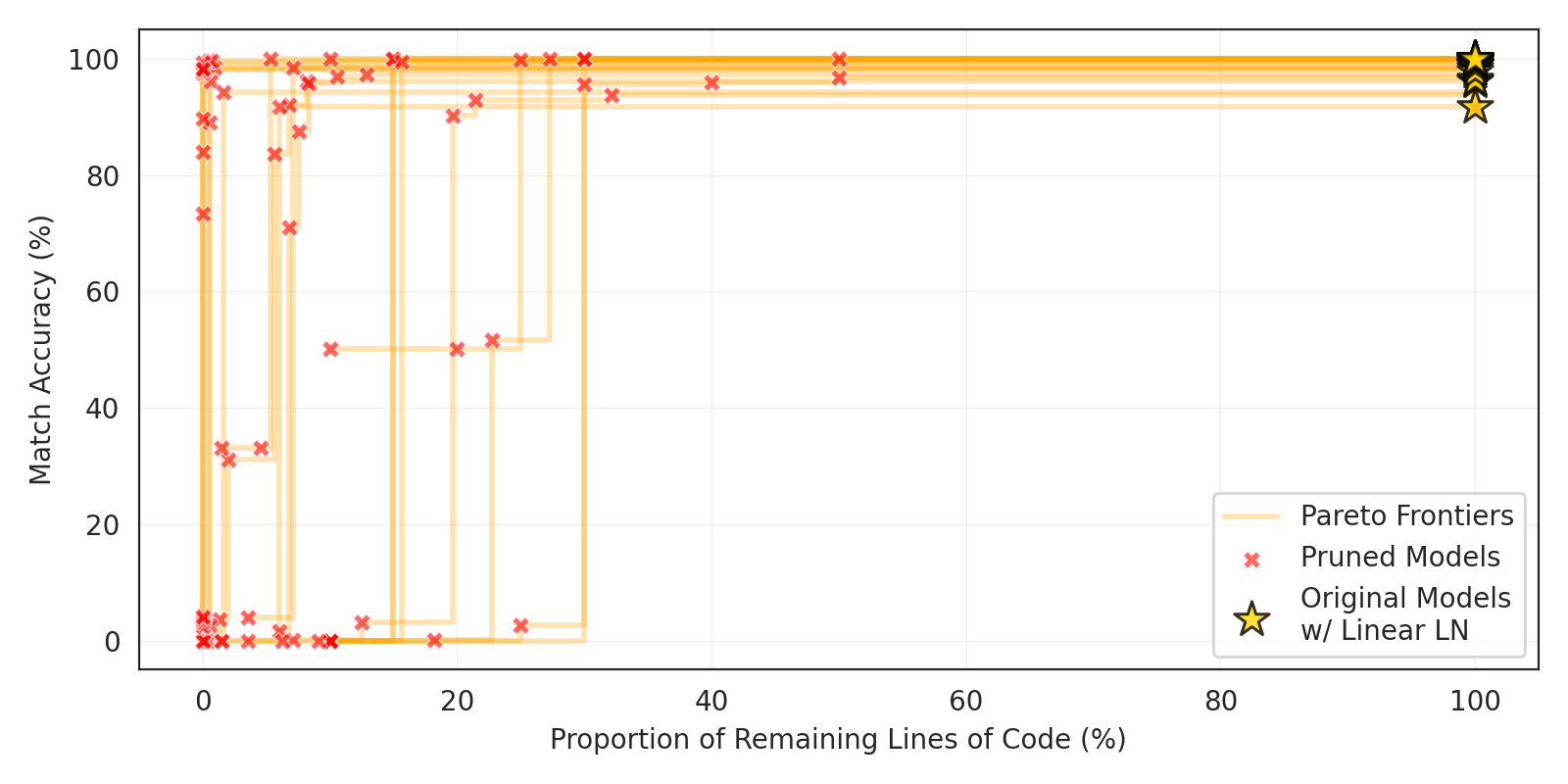}
    \caption{Causal Pruning (Step 2.1): Pareto frontiers over lines of code vs. match accuracy for models satisfying LLNA. %
    }
    \label{fig:frontier-when-LLNA}
\end{figure}
\begin{figure}
\centering
\definecolor{varcolorsortat1l1h256d3lr01drops1}{rgb}{0.12,0.47,0.71}
\begin{tcolorbox}[title={Decompiled program for SORT} ]
{\footnotesize
\begin{Verbatim}[commandchars=\\\{\}]
1. \textcolor{varcolorsortat1l1h256d3lr01drops1}{s1} = select(q=token, k=token, \textcolor{varcolorsortat1l1h256d3lr01drops1}{op=\circled{a}})	
2. a1 = aggregate(s=s1, v=token) 	  
3. new_a1 = element_wise_op(a1) 	  
4. logits1 = project(inp=new_a1)
5. prediction = softmax(logits1)
\end{Verbatim}
}
\end{tcolorbox}
 \begin{subfigure}[b]{0.22\textwidth}
        \centering
        \includegraphics[width=\linewidth]{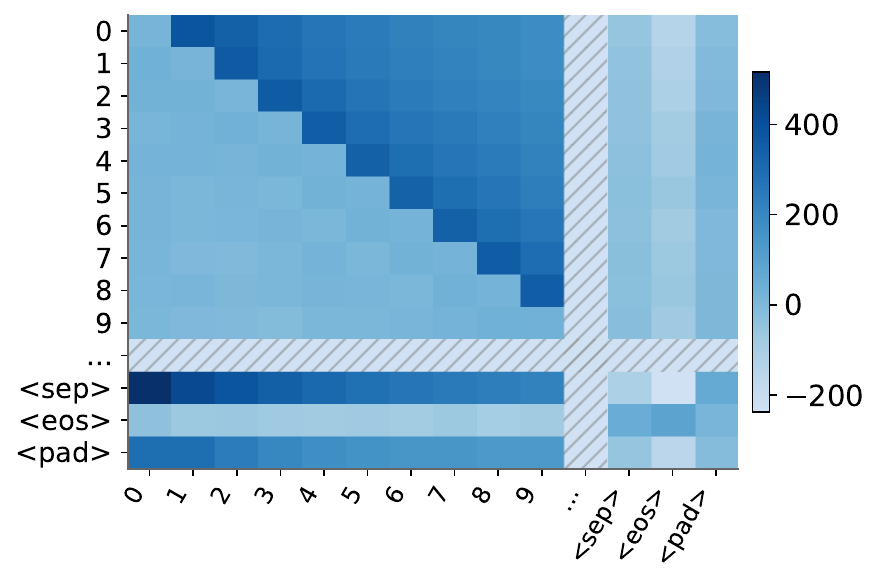}
        \caption{{Line 1: \textcolor{varcolorsortat1l1h256d3lr01drops1}{op=\circled{a}} in \textcolor{varcolorsortat1l1h256d3lr01drops1}{s1}}}
    \end{subfigure}
\begin{subfigure}[b]{0.22\textwidth}
        \centering
        \includegraphics[width=\linewidth]{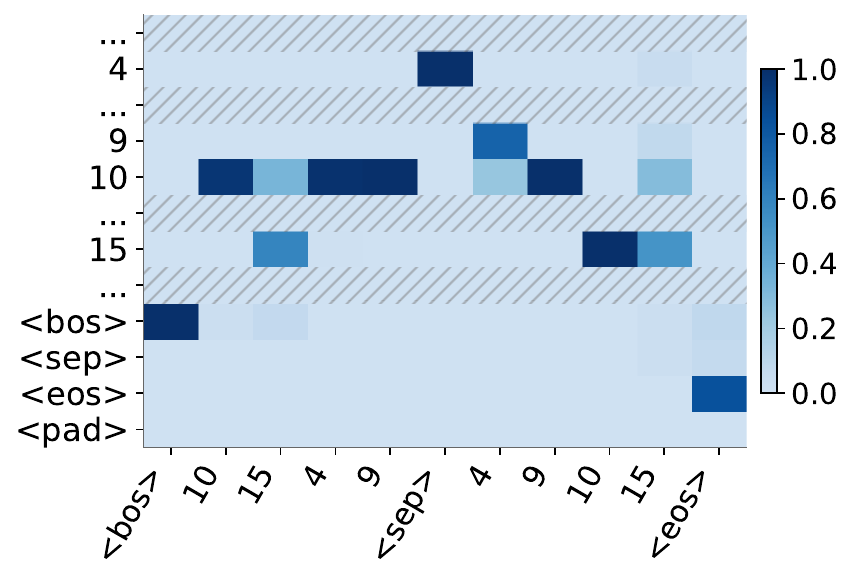}
        \caption{{Line 1: a1}}
    \end{subfigure}

\begin{subfigure}[b]{0.48\textwidth}
        \centering
        \includegraphics[width=\linewidth]{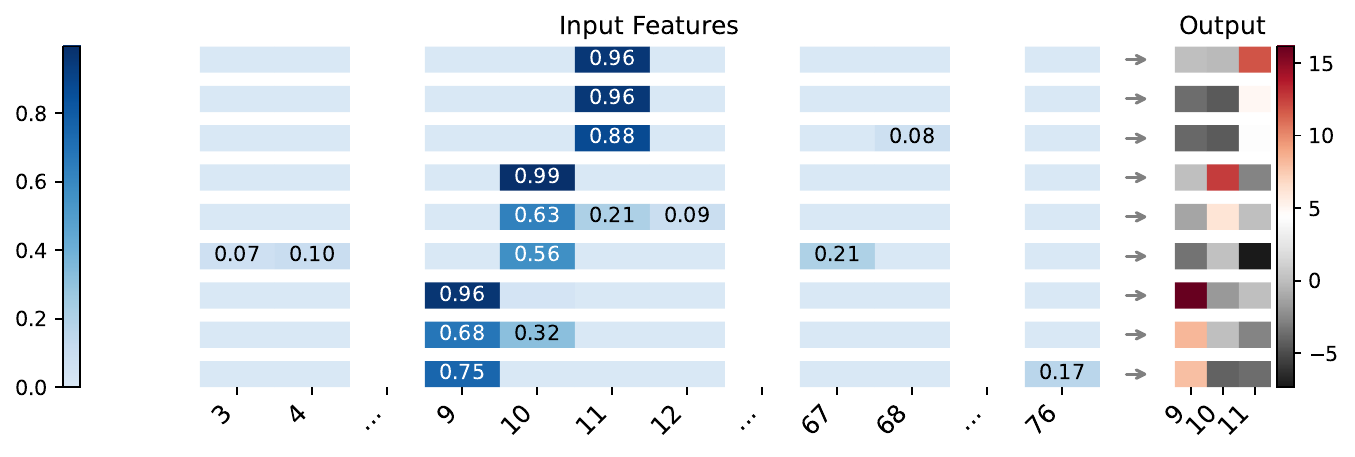}
        \caption{Example for Per-Position Transformation}
    \end{subfigure}

    \caption{An extracted program for sorting a list of integers:
    The input
\begin{NiceTabular}{*{6}{c}}[hvlines]
BOS & 10 & 15 & 4 & 9 & SEP 
\end{NiceTabular} is completed with the string  \begin{NiceTabular}{*{5}{c}}[hvlines]
 4 & 9 & 10 & 15 & EOS 
\end{NiceTabular}.
    Line 1 uses a \texttt{select} operation given by the matrix in (a). Entries are largest when $k$ is slightly larger than $q$, and then fall off. As a result, \texttt{a1} collects tokens bigger than the current one, and among those favoring the smallest one (b). SEP always collects the smallest number to start counting. 
    Line 3 performs a nonlinear operation $f : \mathbb{R}^{|\Sigma|}\rightarrow\mathbb{R}^{d}$, not replaced by a primitive. 
We inspect behavior on twelve sample inputs $\texttt{a1}(i) \in \mathbb{R}^{|\Sigma|}$, each denoted by a row in (c, left), together with the associated output vector $\texttt{new\_a1}(i) \in \mathbb{R}^{|\Sigma|}$ (c, right). The operation ``hardens'' the input, selecting the dimension for which entries are largest in the input (e.g., 9, 10, 11 in the examples), effectively making the vector one-hot. %
When there is no number bigger than the current one, the attention is diffuse (as shown in the matrix in (a), entries are very similar when k is $\leq$ q), making the input \texttt{a1} also diffuse. The same operation in Line 3 then maps this type of inputs to EOS (Fig. \ref{fig:sortat1l1h256d3lr01drop-EOS} shows that EOS is promoted when none of the non-BOS entry contains decisively high value).
App.~\ref{sortat1l1h256d3lr01dropsection} has more heatmaps.
    }\label{fig:sorting}
\end{figure}

\section{Results}\label{sec:results}

We train small GPT2-like models on a suite of algorithmic problems based on \citet{huang2025formal, zhou2023algorithms}, and on a battery of finite-state languages from \citet{bhattamishra2020ability} (list in Appendix \ref{ap:tasks-training-details}).
We view  the algorithmic problems as sequence prediction, predicting a single token or a sequence ending in EOS. 
For the formal languages, we instead have the model at each step output the set of allowed next tokens \citep{bhattamishra2020ability, sarrof2024expressive, huang2025formal}, which we implement by feeding the logits into sigmoid to obtain a validity label $\in [0,1]$ for each symbol. As formal languages have no inherent probability distribution over strings, this is a simpler strategy than language modeling.

Following the setup of \citet{huang2025formal}, we train models at input lengths $\leq 50$ and evaluate length generalization at input lengths in $[51, 150]$, because we are interested in how length-generalization relates to interpretability of models. Random offsets ensure that all position embeddings are trained. See details in Appendix \ref{ap:model-training-details}.
The decompilation pipeline (Step 2 in Sec. \ref{sec:methodology}) is run on inputs of length $\leq 150$, the match accuracy is measured at length $\leq 150$ on random i.i.d. samples. For each task, we train models with different hyperparameters (e.g., number of layers and heads $\in \{1, 2, 4\}$, model dimensions $\in \{16, 64, 256\}$, etc.). We keep at least one representative model for each task, and for some algorithmic tasks also have additional models with different architecture and different length-generalization performance.
For each task, we keep the model with best length generalization performance across all models trained with different hyperparameters. %
For each model, we trace out a Pareto frontier (match accuracy vs. program length) by varying the sparsity coefficients in Step 2.1.
We declare decompilation to be successful if some programs have  match accuracy $\geq 0.9$, and select the shortest one.

\paragraph{When does Decompilation succeed?}
We expect decompilation to be successful only for models that actually internally implement short D-RASP programs (FAQ~\ref{faq:applied-to-any-model}). Prior work \citep{zhou2023algorithms, huang2025formal} conjectured a connection between the existence of (short) RASP programs and length-generalization; accordingly, we expect only length-generalizable models to be decompilable.
Eight \textbf{algorithmic tasks} showed length generalization at least at some hyperparameters, three did not. Results (App. Table \ref{tab:task-acc-algorithmic}) show that the length-generalizing models can be decompiled, and  models that do not length-generalize could not be decompiled.
A similar trend is also observed for \textbf{formal languages} (App. Table~\ref{tab:task-acc-formal-lang}).
14 languages showed length generalization;  decompilation succeeded on 9 of them.
3 languages did not show length generalization, and decompilation failed.
Thus, on the formal languages, decompilation succeeds in the majority of length-generalizing models.

Moreover, we find that, when decompilation succeeds, we can usually vastly cut down the size of programs compared to the initial D-RASP reparameterization (Figure~\ref{fig:frontier-when-LLNA}).
For instance, for finding the most frequent character in a string, we trained 1-layer 4-head and 4-layer 4-head models, with D-RASP translations of 56 and  16,201,616 lines, respectively.
Causal pruning results in equivalent 3-line programs in both cases.
In contrast, for non-length-generalizing models, even if keeping layer norm on, pruning usually has only very limited success (App. Figure~\ref{fig:llna-not-hold-froniers}).
Thus, interestingly, even though all models are transformers of similar sizes, generalizable models tend to be more interpretable. %

The choice of architecture (Figure~\ref{fig:frontier-diff-archs}), random seed (Table~\ref{tab:diff-random-seeds}), and training checkpoint (Figure~\ref{fig:decompilability-checkpoints}) all affect a model's length-generalization performance, and thus its decompilability. While models trained with different random seeds (Appendix~\ref{appE:random-seeds}) and pruned with different sparsity penalties (Appendix~\ref{appE:pruning-runs}) can generally yield different programs, the extracted programs are often similar in practice.
We discuss the decompilability of other trained models in Appendix~\ref{app:generalization-and-decompilability}.

\subsection{Decompilation recovers interpretable algorithms}

We show all decompiled programs in App.~\ref{app:decompiled-algorithmic}--\ref{app:decompiled-formal-language}.
We show a program extracted for \textbf{finding the most frequent} character in a string in Figure~\ref{fig:majority}: a \texttt{select}+\texttt{aggregate} operation aggregates a histogram of tokens, which directly determines the output logits (Figure~\ref{fig:majority}(c)), so that the most frequent symbol has the highest output logit.
We show a program extracted from a transformer trained to \textbf{copy a string in which each token is unique} (Unique Copy) in Figure~\ref{fig:unique-copy}.
In this case, all operations can be expressed in terms of predefined primitives.
The first \texttt{select}+\texttt{aggregate} annotates each position with the directly preceding symbol; the second one then corresponds to an ``induction head'' \cite{olsson2022context}: it retrieves whichever prior symbol was preceded by the current symbol.
This indeed is the RASP mechanism hypothesized to perform copying without repetition in \citet{zhou2023algorithms, huang2025formal}.
For a transformer trained to copy \emph{backwards}, we find a corresponding program implementing an ``anti-induction'' head operation (App.~\ref{uniquereverseat2l1h64d3lr01dropsection}).

For transformers trained to \textbf{copy strings in which each bigram is unique}, we find (i) a program that encodes trigram statistics into a variable of dimension $|\Sigma|^3$ via a per-position operation, and (ii) a different program that only uses selection/aggregation operations (App. \ref{uniquebigramcopyat2l4h256d3lr01dropsection}, \ref{uniquebigramcopycaret2l4h256d3lr01dropsection}).

We show an extracted program for \textbf{sorting a list of integers} in Figure~\ref{fig:sorting}. Here, operations are not expressed in terms of primitives, but the output of decompilation is nonetheless interpretable: there is a \texttt{select} operation favoring the smallest keys larger than the query, and a per-position operation making the aggregated histogram vector one-hot.

\begin{wrapfigure}[12]{l}{0.25\textwidth}
    \centering
\includegraphics[width=\linewidth]{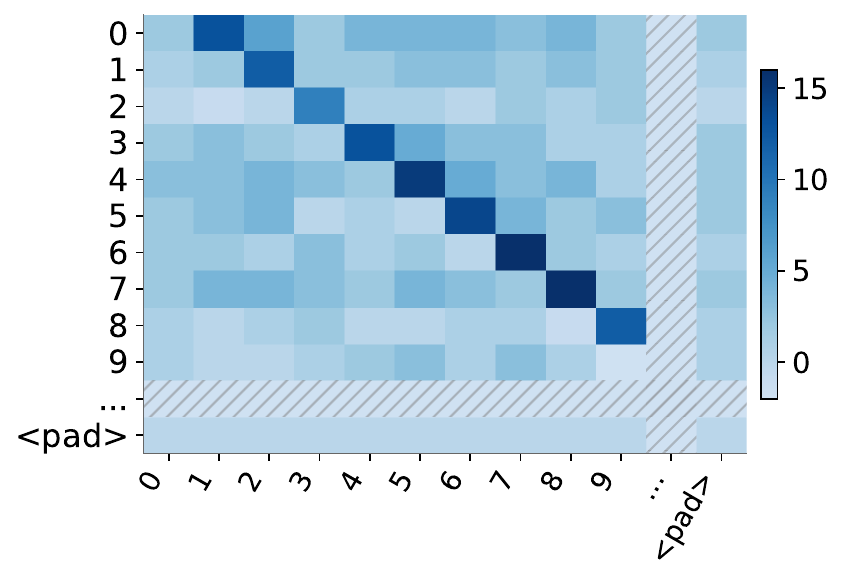}
\caption{See text. Rows denote input dimensions; Columns denote output dimensions.}\label{fig:counting}
\end{wrapfigure}
In a \textbf{counting} task  \cite{zhou2023algorithms}, the model completes e.g. $\langle\text{bos}\rangle\ 12\ 16\ \langle\text{sep}\rangle$ with $12\ 13 \ 14 \ 15\ 16 \langle\text{eos}\rangle$.
Incrementing counts while counting is done by a projection matrix acting directly on \texttt{token}: $p_1 =$ \texttt{project(token), op=} Figure~\ref{fig:counting}\texttt{)}); it maps ``0'' to ``1'', ``1'' to ``2'', etc. Simultaneously, \texttt{aggregate} operations control behavior at the beginning and end of counting (App. \ref{countat1l4h256d4lr01dropsection}).

We next discuss formal languages.
For a few \textbf{strictly local languages}, where each symbol determines the next possible symbols, no \texttt{aggregate} is needed. For instance, for $a^+b^+c^+d^+e^+$, the program obtains the output logits as \texttt{project(token)} where the \texttt{op=} matrix (Figure~\ref{fig:local-language}) maps each symbol to the possible next symbols: ``a'' and ``b'' can follow ``a''; ``b'' and ``c'' can follow ``b''; ``e'' and EOS can follow ``e''.
\begin{wrapfigure}[13]{l}{0.3\textwidth}
    \centering
\includegraphics[width=\linewidth]{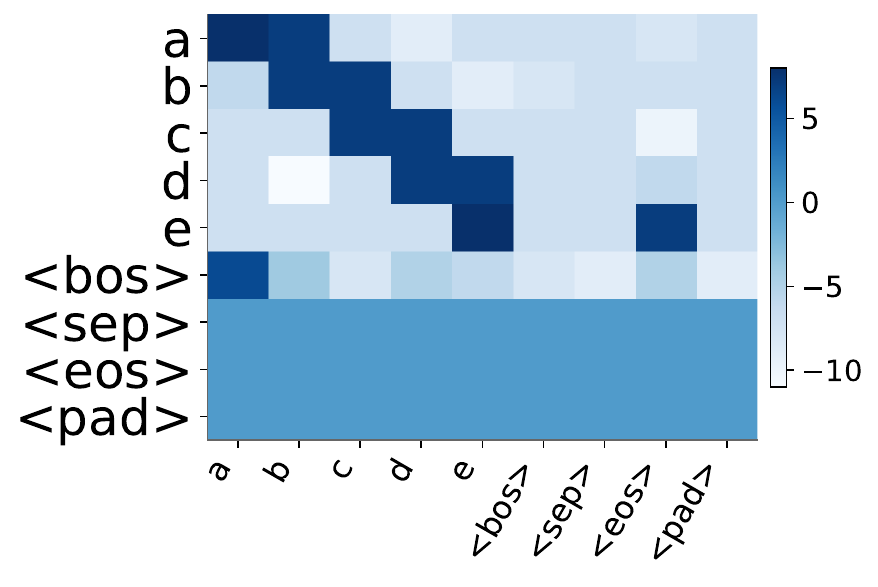}
\caption{See text. Rows denote input dimensions; Columns denote output dimensions}\label{fig:local-language}
\end{wrapfigure}

The benchmark includes \textbf{bounded-depth Dyck languages} at various depths.
The models for D2, D4, D12 were decompiled; the model for D3 failed in the layer norm linearization.
The algorithms are similar, but the shortest program (Figure~\ref{fig:dyck4}) is found for D4, i.e., the language of well-nested strings over one pair of parentheses (opening ``a'' and closing ``b'')  with depth at most four (equivalently, the formal language $(a(a(a(ab)^*b)^*)b)^*$): uniform aggregation creates the histogram of BOS, opening, and closing brackets; a per-position activation then performs a threshold calculation: e.g., EOS is allowed only when a and b are balanced. At two other depths (D2, D12), the aggegation intriguingly assigns different \texttt{select} weights to ``a'' and ``b''; this nonuniformity cascades into somewhat more complicated variants of the same algorithm (App. \ref{bceD12at4l2h256d3lr00dropsection},\ref{bceD2at1l1h16d3lr00dropsection}).

\begin{figure}[t]
\centering
\begin{tcolorbox}[title={Decompiled program for D4} ]
{\footnotesize
\begin{Verbatim}[commandchars=\\\{\}]
1. a1 = aggregate(s=[], v=token) 	
2. new_a1 = element_wise_op(a1) 	
3. logits1 = project(inp=new_a1, 
                     op=(inp==out))
4. prediction = sigmoid(logits1)
\end{Verbatim}
}
\end{tcolorbox}

    \begin{subfigure}[b]{0.3\textwidth}
        \centering
        \includegraphics[width=\linewidth]{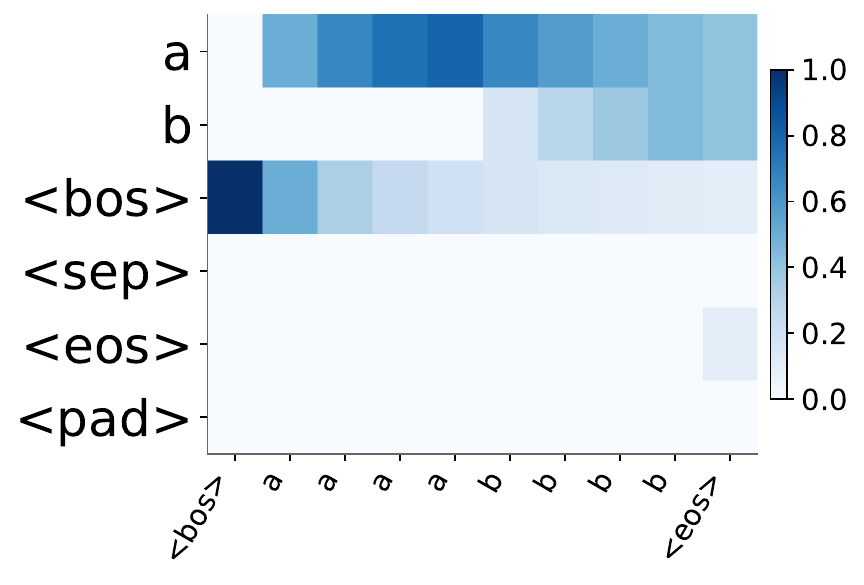}
        \caption{{Line 1: \texttt{a1}}}
    \end{subfigure}

\begin{subfigure}[b]{0.46\textwidth}
        \centering
    \includegraphics[width=0.95\linewidth]{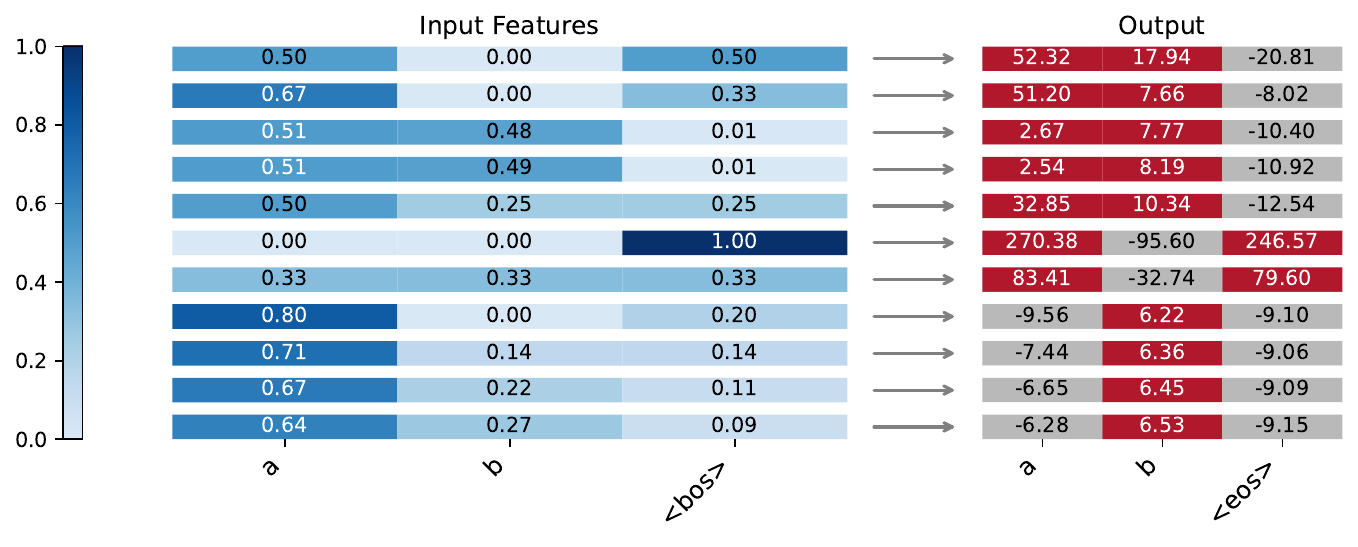}
    \caption{{Line 2: \texttt{element\_wise\_op}}}
    \end{subfigure}

\caption{Decompiled program for depth-4 Dyck language, on sample input
\begin{NiceTabular}{*{10}{c}}[hvlines]
BOS & a & a & a & a & b & b & b & b & EOS 
\end{NiceTabular}
. The program at each step outputs the set of allowed next characters. \texttt{a1} contains the relative frequencies of a, b, BOS. Line 2 applies an element-wise operation, whose output is directly forwarded to next-token predictions in Line 3 via the identity matrix (\texttt{project(...,op=(inp==out))}). We show the input-output behavior in line 2 on 11 sample inputs $\texttt{a1} \in \mathbb{R}^{|\Sigma|}$ in (b). 
``a'' is allowed (i.e., receives positive logit) only when \#a-\#b $<$ 4~$\cdot$~\#BOS; ``b'' is allowed when \#a $>$ \#b, and ``EOS'' is allowed when the string is exactly balanced (\#a = \#b).
More in App.~\ref{bceD4at1l2h256d4lr00dropsection}.
}
\label{fig:dyck4}
\end{figure}

\section{Related Work}\label{sec:related-work}

\paragraph{Symbolic Representations of Transformers}
Translations from transformers to logics from the theoretical literature on transformers, such as logic with majority quantifiers \cite{merrill2023logic},  C-RASP \cite{yang2025kneedeep}, and first-order logic  \cite{li2025characterizing}, usually create one formula for each dimension of the residual stream, which means that the size of the program will grow with the hidden dimension, and the meaning of the individual formulas is not directly interpretable, as the dimensions of the residual stream are not in general individually interpretable.
A key advantage of our D-RASP translation over these prior theoretical translations is that the size of the resulting program is independent of the transformer's hidden dimension, with embedding, query, key, value, unembedding transformations all absorbed into matrices; in many cases, these have interpretable dimensions such as $|\Sigma|$.  This is a powerful property especially when models are overparameterized for optimization reasons.

Tracr \citep{lindner2023tracr} \emph{compiles} RASP programs to transformers.
Multiple works aim to  recover RASP programs from Tracr-generated models \cite{baker2023inverse, Thurnherr_2024, langosco2024towards}.
Our work differs by decompiling models \emph{trained} with standard methods, rather than models specifically constructed to mirror RASP algorithms.
\citet{Friedman2023learning,lai2025adaptive, zhang2026weightscodeextractinginterpretable} train transformer variants amenable to symbolic interpretation, e.g. with discrete representations and attention weights. Our work instead applies to normal transformers trained with standard methods.

\paragraph{Comparison to Circuit Discovery}
The most prominent approach to understanding the internal computations of Transformers is \textbf{circuit discovery} \citep[e.g.][]{elhage2021mathematical, wang2023interpretability, conmy2023towards}. This approach views the transformer's computation as a computation graph and finds a subgraph faithfully representing the computation of the full transformer. 
The major difference to our work is that we discover \emph{programs} rather than \emph{computation graphs}.
With this, we address a major challenge to circuit discovery, namely providing abstractions of model behavior that are valid across a broad range of input strings. The reason is that computation graphs discovered by circuit discovery are tightly linked to specific input templates; generalizing beyond individual templates is nontrivial.
Programs over sequences are more natural representations of algorithms across input lengths, whereas circuits are generally specific to a given input length. %
Decompilation addresses this challenge, at least in the setting of small transformers trained on algorithmic problems: programs apply to the full input space.
An example is the unique copy task (Figure~\ref{fig:unique-copy}), where standard circuit discovery would lead to a circuit where the final prediction is connected to all positions with redundant connections, as the bigram relevant to the next prediction might be located anywhere in the input. In contrast, programs abstract these connections into a single aggregation operation.
This difference is even more pronounced in the context of length generalization: a circuit is tied to a specific input length, whereas our method allows us to find length-generalizing RASP programs.

\section{Discussion}\label{sec:discussion}

Our results support and strengthen the RASP length generalization conjecture \cite{zhou2023algorithms, huang2025formal}: Prior work suggests that the presence of length generalization tracks (C-)RASP expressiveness \citep{huang2025formal}.
Our work strengthens this link, confirming that, when transformers show length generalization on synthetic tasks, they in fact often internally implement simple RASP algorithms.

Concerning the applicability of our method, we emphasize that we do not claim that it can be successfully applied to any trained model. Some models may internally implement heuristic, non-length-generalizable solutions, for which no short D-RASP program exists and thus none can be extracted. We argue that one should not expect non-length-generalizing models to be decompilable into RASP in the first place, since prior work suggests that such models are unlikely to implement interpretable and concise RASP algorithms \citep{zhou2023algorithms, huang2025formal}. We view the inability to decompile such models not as a failure of the method, but as an interesting scientific insight about transformers.

It is, however, possible that the method may sometimes fail even when a model implements a short D-RASP algorithm, for example if LLNA does not hold or if the MLPs are not splittable. Empirically, this appears to be uncommon, as LLNA tends to hold for length-generalizing models. Developing more general solutions to these limitations would be a promising direction for future work.

An intriguing question is whether our method can also be used for interpreting LLMs as a complement to circuit discovery.
It appears unlikely that language models as a whole would be expressible as simple RASP programs due to the extremely rich variation of their training data.
However, they might develop interpretable sub-programs for individual algorithmic tasks. The presence of induction heads in real-world language models \citep[e.g.][]{olsson2022context, wang2023interpretability, yin2025attention}, effectively a subprogram isomorphic to the program in Figure~\ref{fig:unique-copy}, suggests this might be the case. Testing this is a nontrivial and interesting question for future work, for which the present work provides foundations.

\section{Conclusion}\label{sec:conclusion}

We introduce a method for extracting simple programs from transformers trained on individual synthetic tasks. 
Our results constitute the most direct evidence so far that length-generalizing transformers trained on algorithmic and formal language problems often implement interpretable RASP programs.
It is an interesting question if this also applies to language models, which are effectively trained to perform a large variety of tasks simultaneously.
It is plausible that language models might implement modular sub-programs for different subtasks; testing this is an interesting problem for future research.

\section*{Impact Statement}
This paper presents work whose goal is to advance the field
of Machine Learning. There are many potential societal
consequences of our work, none of which we feel must be
specifically highlighted here.

\section*{Acknowledgments}
This research is funded by the Deutsche Forschungsgemeinschaft (DFG, German Research Foundation) – Project-ID 232722074 – SFB 1102 “Information Density and Linguistic Encoding”; Project-ID 471607914 – GRK 2853/1 “Neuroexplicit Models of Language, Vision, and Action”. 
MH thanks the participants of the Dagstuhl Seminar 
``Theory of Neural Language Models'' for useful discussion.

\bibliography{iclr2025_conference}
\bibliographystyle{icml2026}

\appendix
\onecolumn

\clearpage

\section{FAQ}

\begin{enumerate}

    \item\label{faq:ape} \emph{Why use APE, rather than positional encodings more popular in modern LLMs?}

    We use APE mainly because theoretical understanding of length generalization in terms of RASP is best-developed for APE \cite{zhou2023algorithms, huang2025formal, izzo2025quantitative}. We also note that a lot of interpretability research has taken GPT-2 as a reference \citep[e.g.][]{wang2023interpretability, hanna2023does, baroni2025transformers}, and that APE  subsumes NoPE; in fact our programs in App.~\ref{app:decompiled-algorithmic}, \ref{app:decompiled-formal-language} often don't include \texttt{pos}.

    \item\label{faq:llna} \emph{What is the need for having LLNA? Isn't it limiting the applicability of the method?}

    We believe LLNA is reasonable for the following reasons:

    \begin{enumerate}
        \item When LLNA holds, the decompilation is faithful to the original model. We preserve the input-output behavior of the underlying network, making sure that if a program exists, it is \emph{causally faithful} to the \emph{original} model, even though its layer norm (LN) is linearized.
        \item LLNA tends to hold for length-generalizable models, which are exactly the class of models for which the RASP programs are suggested to exist. Thus, while theoretically LLNA could lead to inability of extracting a program, this would empirically concern cases where a short D-RASP program is anyways not implemented by a model. Therefore, we believe that our method is \emph{not limited by} this assumption.
    \end{enumerate}

    \item\label{faq:llms} \emph{Why isn't the method applied to LLMs?}

    The goal of this paper is to answer a foundational question about transformers, i.e., whether they learn interpretable RASP-like algorithms. While this has been suggested by theoretical studies of transformer's learning abilities \citep{zhou2023algorithms, huang2025formal}, our work provides the most direct empirical test of this idea so far.
    Our work thus targets transformers specifically trained on individual algorithmic problems.
    Language models in contrast are trained on highly varied data, and effectively are exposed to many different kinds of algorithmic tasks.
    It appears unlikely that language models as a whole would be expressible as simple RASP programs due to the extremely rich variation of their training data (e.g., grammar, world knowledge).
    It is an interesting question if language models also develop interpretable sub-programs for individual algorithmic tasks. The presence of induction heads in real-world language models \citep[e.g.][]{olsson2022context, wang2023interpretability, yin2025attention}, effectively a subprogram isomorphic to the program in Figure~\ref{fig:unique-copy}, suggests this might be case. Testing this is a nontrivial and interesting question for future work, for which the present work provides foundations.

    \item\label{faq:comparison-to-circuit-discovery} \emph{Why can't one just use existing circuit discovery methods?}

    While circuit discovery and D-RASP decompilation are methodologically closely connected, their outputs differ in a few meaningful and important ways. A D-RASP program by design assigns an interpretable operation to each of the attention heads and MLPs that stay part of the program, while circuit discovery methods only show which pathways are important in a computation graph and do not address a question of interpreting the operations. Moreover, operations in a D-RASP program act across all tokens in the input sequence, treating tokens and positions as variables, while circuit discovery methods are highly template-dependent and aligned to specific positions in the input \citep[e.g.][]{haklay-etal-2025-position}.
    Importantly, existing circuit discovery methods define circuits usually on component level, which is not aligned with variables in D-RASP. A vertex in component-level circuits can be an aggregation of many D-RASP variables, making it harder to perform interpretation.
    \item\label{faq:translations-to-logic} \emph{Why not build on existing translations from transformers to logic?}

    As explained in the Discussion section, existing translations typically create one formula for each dimension in the residual stream, which in general are not individually interpretable. While it is conceivable that pruning might cut this down, our strategy circumvents such a blow-up. For instance, it directly results in isomorphic programs for the ``Most Frequent'' task (Figure~\ref{fig:majority}, 256 dimensions in residual stream of original transformer) and the binary variant ``Majority'' (App.~\ref{binmajorityat1l1h16d3lr01dropsection}, 16 dimensions in residual stream of original transformer).

    \item\label{faq:applied-to-any-model} \emph{Can decompilation be applied to any model? When is decompilation expected to work?}

    We emphasize that \emph{we do not claim that the method can be successfully applied to any trained model}. Some models might be internally implementing a heuristic-based non-length-generalizable solution, and a short D-RASP program might not exist for those models, and thus cannot be extracted. We argue that one \emph{should not} hope to decompile non-generalizing models into RASP in the first place, since these would not be expected to implement interpretable and concise RASP algorithms. We view the inability to decompile non-generalizing models not as a failure of a method, but as an interesting \emph{scientific insight} about transformers.

    Decompilation requires LLNA to hold. In practice, it also requires the model to allow pruning of a large fraction of the program lines (i.e., to have a sparse sufficient sub-program), as small programs ease interpretation.
    Our findings suggest that these properties are often satisfied in length-generalizing models, but not in models that fail to length-generalize.

    \item\label{faq:interpretable-dimension} \emph{Is there a guarantee that all variables will end up with interpretable dimensionalities such as $|\Sigma|$?}

    Theoretically, there can be cases where the decompilation will result in variables of dimension equal to that of the original model, e.g., when the output of an MLP feeds into attention and then again into an MLP. We do not observe such cases in our experiments, where decompiled variables all have interpretable dimensions such as $|\Sigma|, T$.

\item\label{faq:are-all-steps-needed} \emph{Isn't the decompilation pipeline complicated? Are all steps of the decompilation pipeline needed?}

The pipeline has three steps: (i) causal pruning and linearizing layer norm, (ii) explaining per-position transformations, (iii) explaining tensors in \texttt{select} and \texttt{project}.
The first steps is needed in order to obtain problems of manageable size.
The second and third steps are helpful in automating part of the interpretation. Whenever no primitive is found, interpretation via manual inspection is still feasible.

\item\label{faq:hyperparameters} \emph{How are the hyperparameters justified?}

In most cases, we observed in preliminary experiments that the choice of hyperparameters does not significantly affect the result. This is supported by the fact that we use the same set of hyperparameters for all models. For sparsity coefficient, we always sweep over various values to trace out a Pareto frontier (Figure~\ref{fig:frontier-when-LLNA}). 
An important hyperparameter is the accuracy threshold, which defines whether the program is decompiled or not, which we chose based on common sense. 

\item\label{faq:why-new-rasp-dialect} \emph{Why do we need a new RASP dialect? Why not decompile into an existing dialect?}

The most important feature of D-RASP is that it uses softmax (unlike RASP, B-RASP, C-RASP, S-RASP), which makes it suited to faithfully representing transformer computations.

\item\label{faq:exponential-size} \emph{Isn't the exponential size of the D-RASP programs from Theorem~\ref{thm:faithfulness} a big problem?}

The D-RASP program has exponential number of lines only after the \textit{reparametrization} step. This is not a problem for interpreting final programs as we perform \textit{simplification}, which significantly reduces the number of lines (Figure. \ref{fig:frontier-when-LLNA}) This is also not a big problem for pruning, because we do not unroll all lines of D-RASP to perform pruning. rather, pruning is done in a efficient manner, where we use scalable gradient-based approach, and in initial stages prune over groups of lines instead of single line. This is outlined in the main paper, and described in detail in App.~\ref{ap:causal-pruning-details}.

\item\label{faq:comparison-to-prior-work} \emph{How does D-RASP compare to Transformer Programs \cite{Friedman2023learning, lai2025adaptive} and the Discrete Transformer \cite{zhang2026weightscodeextractinginterpretable}?}

On a technical level, there are some similarities; in particular, our definition of separate variables for each path through the residual stream is similar to the disentangled residual stream where the output of the attention heads is concatenated rather than added to the residual stream.
That said, there are key differences: first, our decompilation procedure is designed to apply to normal transformers trained using standard methods; second, causal pruning enables small programs despite a theoretically exponential number of paths through the residual stream.

\item\label{faq:train-and-decompile-on-same-length} \emph{Why don't we train and decompile the model on the same length? Would that make more models decompilable?} 

We also considered decompiling models to programs that match the original model only on training lengths. Indeed for some non-length-generalizing models, this enables decompilation (though the program might not match the model's behavior beyond training lengths). But there are some other models that are still not decompilable, meaning that their inner computational mechanism may be complex and uninterpretable even if we only consider model behavior on a limited length range on which the model does perform well.
An important consideration is that programs obtained by decompiling on only the training length so would not help us on understanding why generalization fails, as they do not match model behavior on longer length. See details in Appendix \ref{ap:range50-discussion}.

\end{enumerate}

\section{D-RASP Construction}

\subsection{Notation for Transformers}\label{sec:model-transformers}

Here, we introduce our notation for GPT-2-style transformers, closely following \citet{huang2025formal}.
By LLNA, we assume layer norm has been linearized and absorbed into the parameters (Section~\ref{ap:LLNA-and-LN-absorbed}).
For simplicity of notation, we omit biases in attention, and merge value and output matrices $\mV$, $\mO$ into a single $\mV$ matrix per head.
Our implementation, however, takes all such details into account faithfully.

\paragraph{Computation of Activations and Outputs}
If $L$ is the number of layers, then we write the output of layer $l=1, \dots, L$ at position $i =1, \dots, N(T)$ as $\vy_i^{(l)} \in \mathbb{R}^d$.
We set 
\begin{equation}
    \vy_i^{(0)} = \mE_{x_i} + \mP_{i}\ \ \ \ \ i=1, \dots, |x|
\end{equation}
where $x_i \in \Sigma$ is the input symbol at position $i$, $\mE$ is the word embedding matrix, and $\mP$ is the position embedding matrix.
Attention logits, at query position $i$ and key position $j$ are computed as
\begin{equation}\label{eq:logits}
    \attLogit{i}{j}^{(l,h)} = (\vy_i^{(l-1)})\transp \mQ_{l,h}\transp \mK_{l,h} \vy_j^{(l-1)} \ \ \text{for} \ 1 \leq j \leq i \leq |x|;\ l = 1, \dots, L;\ h=1, \dots, H
\end{equation}
We assume standard softmax attention:
\begin{equation}
    \vY_i^{(l)} := \vy_i^{(l-1)} + \sum_{h=1}^H \frac{\sum_{j=1}^i \exp\left( a^{(l,h)}_{i,j}\right) \mV_{l, h} \vy_j^{(l-1)}}{\sum_{j=1}^i \exp\left(a^{(l,h)}_{i,j}\right)}
\end{equation}
After each attention block, the activations are passed through a one-layer MLP:
\begin{equation}\label{eq:mlp}
\vy_i^{(l)} := \vY_i^{(l)} + \psi_l(\vY_i^{(l)}) 
\end{equation}
where  $\psi_l$ is an MLP.

A transformer maps strings $x \in \Sigma^*$ ($|x| \leq T$) to vectors of next-token prediction logits, $\in \mathbb{R}^{|\Sigma| \times |x|}$, obtained as $\mU \vy_i^{(L)}$ ($i=1, \dots, |x|$) for the unembedding matrix $\mU \in \mathbb{R}^{|\Sigma| \times d}$.

\subsection{Reparameterizing Layer Norm when LLNA is Satisfied}
\label{ap:LLNA-and-LN-absorbed}
When LLNA is satisfied (for some constant parameter $\gamma'$):
\begin{equation*}
    \frac{x - \overline{x}}{\sqrt{\sigma^2(x) + \epsilon} }\gamma + \beta\ \ \ \  \Rightarrow\ \ \ \ 
    \left(x - \overline{x}\right) \gamma' + \beta
\end{equation*}
we can replace all layer norm operations in a model with their linearized form: 
\begin{equation*}
    \text{LayerNorm}(x) \approx \mL x + \beta,\qquad\mL = \left(\mI - \frac{\mathbf{1}\mathbf{1}^T}{d(x)}\right) \gamma'
\end{equation*}
where $\mathbf{1}$ is a vector of all ones.
$\mL$ parameters can be absorbed into the attention parameters:
\begin{equation*}
    \mQ_{l, h} \leftarrow \mQ_{l, h}\mL_{l, h} ; \qquad \mK_{l, h} \leftarrow  \mK_{l, h}\mL_{l, h}; \qquad \mV_{l, h} \leftarrow \mV_{l, h}\mL_{l, h} %
\end{equation*}
and similarly for the MLP.

\subsection{Detailed Definition of D-RASP Translation}\label{sec:detailed-translation}

Here, we prove Theorem~\ref{thm:faithfulness}, using the notation from Appendix~\ref{sec:model-transformers}.

Let $L$ be the number of layers of the transformer. Let $H$ be the number of heads.

By a ``path'' $p$, we refer to a sequence over $\{1,\dots,L\} \times \{1, \dotsm H\}$ that is strictly decreasing in the first component.
We set $layer(p) := \max\left(\{0\} \cup \{i : (i,h) \in p\}\right)$; that is, the maximum layer involved in the path.

We use $\langle \dots \rangle$ to indicate ordered tuples.

We will accumulate a set of variables defined in each layer. Some variables will be in token space, others will be in position space, and (when necessary) others will be left in the model's original activation space.

\begin{definition}
\label{path-variable-defin}
We define a set $\mathcal{V}$ indexing the variables of the D-RASP program as follows.
Let $start = \{tok, pos, mlp_1, \dots, mlp_L\}$.
Then $\mathcal{V}$ is the smallest set such that:
\begin{enumerate}
    \item 
For each $x \in start$, $\langle x, \langle\rangle\rangle \in \mathcal{V}$.
\item For any $l \in [L], h \in [H]$, whenever $layer(\langle x, p\rangle) < l$ and $\langle x, p\rangle \in \mathcal{V}$, then $\langle x, \langle (l,h) \rangle \oplus p\rangle \in \mathcal{V}$.

where

\item $layer(\langle tok, p\rangle) := layer(p)$
\item $layer(\langle pos, p\rangle) := layer(p)$
\item $layer(\langle mlp_k, p\rangle) := \operatorname{max}(k,layer(p))$
\end{enumerate}
In particular, each $v \in \mathcal{V}$ is a tuple consisting of $x \in start$ and a path $p$.
\end{definition}

For each $\langle i, p\rangle \in \mathcal{V}$, our D-RASP program will have a variable $v_{\langle i, p\rangle}$. 
We will instantiate the variables so that, for each $p$, 
\begin{align*}
d(v_{\langle tok,p\rangle}) &= |\Sigma| \\
d(v_{\langle pos,p\rangle}) &= N \\ 
d(v_{\langle mlp_k,p\rangle}) &= d
\end{align*}
Using the variables $v_{\langle i, p\rangle}$, we define the following program:
First, $v_{\langle tok, \langle\rangle\rangle} = \vtok\texttt{(en)}$ and $v_{\langle pos, \langle\rangle\rangle} = \vpos$ are already given in the program.
We need to define the other variables.
We first define as auxiliary quantities:
\begin{align*}
    \mathcal{T}(token) = \mE \in \mathbb{R}^{d \times |\Sigma|} \\
    \mathcal{T}(position) = \mP[:,:N]  \in \mathbb{R}^{d \times N} \\
    \mathcal{T}(mlp_l) = \operatorname{I}_{d\times d} \in \mathbb{R}^{d \times d}
\end{align*}
\paragraph{Defining Selector Matrices}
Selectors are defined as follows.
For layer $l$, head $h$ and each $r, s \in \mathcal{V}, layer(r), layer(s) < l$, we add the line
\begin{tcolorbox}[title=Line defining a selector]
    $\alpha_{l,h,r,s}$ = \texttt{select}(\texttt{k=}$v_r$, \texttt{q=}$v_s$, \texttt{op=}$\mA_{l,h,r,s}$)
    \end{tcolorbox}
Intuitively, we have a different term in the selection for each interaction of existing variables.
Here, for the tuple $\langle i_k, p_k\rangle, \langle i_q, p_q\rangle \in \mathcal{V}$ we define the selector matrices:
\begin{equation}\label{eq:selector-translation-A}
    \mA_{l,h,\langle i_q, p_q\rangle, \langle i_k, p_k\rangle} := \mathcal{T}(i_q)^T \cdot \left[\prod_{\langle l',h'\rangle  \in p_q} \mV_{l',h'} \right]^T \cdot \mQ_{l,h}^T \cdot \mK_{l,h} \cdot \left[\prod_{\langle l',h'\rangle  \in p_k} \mV_{l',h'} \right] \cdot \mathcal{T}(i_k)
\end{equation}
where the product $\prod_{\cdot \in p}$ is computed in decreasing order along the entries of the path.
For each $i \in start, layer(p) < l$, $h$ we add the line:
\begin{tcolorbox}[title=Line performing aggregation]
    $v_{\langle i, \langle\langle l,h\rangle\rangle \oplus p \rangle}$ = \texttt{aggregate}($\sum_{r,s} \alpha_{l,h,r,s}$, $v_{\langle i,  p \rangle}$)
\end{tcolorbox}
where the sum runs over all $r,s$ for which we above defined $\alpha_{l,h,r,s}$.

\paragraph{Running Example: One-Layer One-Head Model}
For a one-layer one-head model, we obtain the following set of variables indexed by tuples $\langle x, p\rangle \in \mathcal{V}$:
\begin{align*}
   & v_{\langle pos, \langle\rangle\rangle} & \text{Associated layer: }0\\
   & v_{\langle tok, \langle\rangle\rangle} & \text{Associated layer: }0 \\
   & v_{\langle pos, \langle\langle 1,1\rangle\rangle\rangle} & \text{Associated layer: }1\\
   & v_{\langle tok, \langle\langle 1,1\rangle\rangle\rangle} & \text{Associated layer: }1 \\
   & v_{\langle mlp_1, \langle\rangle\rangle}  & \text{Associated layer: }1\\
\end{align*}

We first obtain the following selectors and aggregations -- recall \texttt{pos} = $v_{\langle pos, \langle\rangle\rangle}$ and \texttt{token} = $v_{\langle tok, \langle\rangle\rangle}$:
\begin{tcolorbox}[title=Line of generated program]
$
1. \alpha_{1,1,v_{\langle tok, \langle\rangle\rangle},v_{\langle tok, \langle\rangle\rangle}} = \texttt{select(q=}v_{\langle tok, \langle\rangle\rangle}\texttt{, k=}v_{\langle tok, \langle\rangle\rangle}\texttt{,} \texttt{op=}\mA_{1,1,\langle tok, \langle \rangle \rangle, \langle tok, \langle\rangle\rangle}\texttt{)} 
$

$2. \alpha_{1,1,v_{\langle tok, \langle\rangle\rangle},v_{\langle pos, \langle\rangle\rangle}} = \texttt{select(q=}v_{\langle tok, \langle\rangle\rangle}\texttt{, k=}v_{\langle pos, \langle\rangle\rangle}\texttt{,} \texttt{op=}\mA_{1,1,\langle tok, \langle \rangle \rangle, \langle pos, \langle\rangle\rangle}\texttt{)} 
$

$3. \alpha_{1,1,v_{\langle pos, \langle\rangle\rangle},v_{\langle tok, \langle\rangle\rangle}} = \texttt{select(q=}v_{\langle pos, \langle\rangle\rangle}\texttt{, k=}v_{\langle tok, \langle\rangle\rangle}\texttt{,} \texttt{op=}\mA_{1,1,\langle pos, \langle \rangle \rangle, \langle tok, \langle\rangle\rangle}\texttt{)} \textbf{}
$

$4. \alpha_{1,1,v_{\langle pos, \langle\rangle\rangle},v_{\langle pos, \langle\rangle\rangle}} = \texttt{select(q=}v_{\langle pos, \langle\rangle\rangle}\texttt{, k=}v_{\langle pos, \langle\rangle\rangle}\texttt{,} \texttt{op=}\mA_{1,1,\langle pos, \langle \rangle \rangle, \langle pos, \langle\rangle\rangle}\texttt{)} 
$

$5. v_{\langle tok, \langle\langle 1,1\rangle\rangle\rangle} = \texttt{aggregate(s=}\alpha_{1,1,v_{\langle tok, \langle\rangle\rangle},v_{\langle tok, \langle\rangle\rangle}}+\alpha_{1,1,v_{\langle tok, \langle\rangle\rangle},v_{\langle pos, \langle\rangle\rangle}}+\alpha_{1,1,v_{\langle pos, \langle\rangle\rangle},v_{\langle tok, \langle\rangle\rangle}}+\alpha_{1,1,v_{\langle pos, \langle\rangle\rangle},v_{\langle pos, \langle\rangle\rangle}}, \texttt{v=}v_{\langle tok, \langle\rangle\rangle})	
$

$6. v_{\langle pos, \langle\langle 1,1\rangle\rangle\rangle} = \texttt{aggregate(s=}\alpha_{1,1,v_{\langle tok, \langle\rangle\rangle},v_{\langle tok, \langle\rangle\rangle}}+\alpha_{1,1,v_{\langle tok, \langle\rangle\rangle},v_{\langle pos, \langle\rangle\rangle}}+\alpha_{1,1,v_{\langle pos, \langle\rangle\rangle},v_{\langle tok, \langle\rangle\rangle}}+\alpha_{1,1,v_{\langle pos, \langle\rangle\rangle},v_{\langle pos, \langle\rangle\rangle}}, \texttt{v=}v_{\langle pos, \langle\rangle\rangle})
$
\end{tcolorbox}
With simplified variable naming, as used in our decompiled programs, we get the following equivalent program  (suppressing the \texttt{op=} arguments):
\begin{tcolorbox}[title=Line of generated program]
\begin{verbatim}
1. s1 = select(q=token, k=token)
2. s2 = select(q=token, k=pos)
3. s3 = select(q=pos, k=token)
4. s4 = select(q=pos, k=pos)
5. a1 = aggregate(s=s1+s2+s3+s4, v=token) 	
6. a2 = aggregate(s=s1+s2+s3+s4, v=pos)
\end{verbatim}
\end{tcolorbox}

\paragraph{Defining elementwise operations}
We set
\begin{tcolorbox}[title=Line performing per-position operation]
\[
    v_{\langle\text{mlp}_l, \langle\rangle\rangle} = \texttt{element\_wise\_op}( v_s(i) : s \in \mathcal{V} - \{v_{mlp,l}\}, layer(s) \leq l, \texttt{func=}f)
    \]
\end{tcolorbox}
where $f$ outputs the output vector of the original MLP $f_{MLP,l}$ of the $l$-th layer:
\begin{equation}
    f\left(v_{(s,p)}(i) : (s,p) \in \mathcal{V}, s \neq mlp_l, layer((s,p)) \leq l\right) := f_{MLP,l}\left(\sum_{(s,p) \in \mathcal{V}, s \neq mlp_l, layer((s,p)) \leq l} s\right)
\end{equation}

\paragraph{Running Example: One-Layer One-Head Model}

In our running example, we have the line:
\begin{tcolorbox}[title=Line of generated program]
7. $v_{\langle mlp_1, \langle\rangle\rangle} = \texttt{element\_wise\_op}(
   v_{\langle pos, \langle\rangle\rangle},
    v_{\langle tok, \langle\rangle\rangle}, 
    v_{\langle pos, \langle\langle 1,1\rangle\rangle\rangle},
    v_{\langle tok, \langle\langle 1,1\rangle\rangle\rangle}, \texttt{func=}f_{MLP,1}
)$
\end{tcolorbox}
or in the simplified notation (we generally suppress the $f$ argument):
\begin{tcolorbox}[title=Line of generated program]
\begin{verbatim}
7. m = element_wise_op(token+pos+a1+a2)
\end{verbatim}
\end{tcolorbox}

\paragraph{Treating Unembedding Matrices}
We define the unembedding projections as
\begin{equation}\label{eq:def-unembed-proj}
    \mR_{\langle i, p\rangle} = \mU \left[\prod_{\langle l',h'\rangle  \in p} V_{h',l'}\right] \mathcal{T}(i)
\end{equation}
for each $\langle i, p \rangle$ with $layer(p) \leq L$, and set
\begin{tcolorbox}[title=Line performing per-position operation]
    $p_{\langle i, p\rangle} = \texttt{project}(v_{\langle i, p\rangle}) \ \ \ \ \text{ for each } layer(\langle i, p\rangle) \leq L, \texttt{op=}\mR_{\langle i, p\rangle})$
\end{tcolorbox} 
The output is obtained as the softmax of all $p_{\langle i, p\rangle}$ variables, without bias.

\paragraph{Running Example: One-Layer One-Head Model}
In our running example:
\begin{tcolorbox}[title=Lines of generated program  (simplified naming)]
8. $p_{\langle tok, \langle\rangle\rangle} \texttt{ = project(}v_{\langle tok, \langle\rangle\rangle}\texttt{, op=}\mU \mathcal{T}(token)\texttt{)}$

9. $p_{\langle pos, \langle\rangle\rangle}\texttt{ = project}(v_{\langle pos, \langle\rangle\rangle}\texttt{, op=}\mU \mathcal{T}(pos)\texttt{)}$

10. $p_{\langle tok, \langle\langle 1, 1\rangle\rangle\rangle} = \texttt{project(}v_{\langle tok, \langle\langle 1, 1\rangle\rangle\rangle}\texttt{, op=}\mU \mathcal{T}(token)\texttt{)}$

11. $p_{\langle pos, \langle\langle 1, 1\rangle\rangle\rangle} = \texttt{project(}v_{\langle pos, \langle\langle 1, 1\rangle\rangle\rangle}\texttt{, op=}\mU \mathcal{T}(pos)\texttt{)}$

12. $p_{\langle mlp_1, \langle\rangle\rangle} = \texttt{project(}v_{\langle mlp_1, \langle\rangle\rangle}\texttt{, op=}\mU \mathcal{T}(mlp_1)\texttt{)}$

13. $\texttt{prediction} = \texttt{softmax}(p_{\langle tok, \langle\rangle\rangle}+p_{\langle pos, \langle\rangle\rangle}+p_{\langle tok, \langle\langle 1, 1\rangle\rangle\rangle}+p_{\langle pos, \langle\langle 1, 1\rangle\rangle\rangle}+p_{\langle mlp_1, \langle\rangle\rangle})$
\end{tcolorbox}
or, with simplified variable naming (suppressing the \texttt{op=} arguments):
\begin{tcolorbox}[title=Lines of generated program (simplified naming)]
\begin{verbatim}
8. logits1 = project(token)
9. logits2 = project(pos)
10. logits3 = project(a1)
11. logits4 = project(a2)
12. logits5 = project(m)
13. prediction = softmax(logits1+logits2+logits3+logits4+logits5)
\end{verbatim}
\end{tcolorbox}

\subsection{Proof of Theorem~\ref{thm:faithfulness}}

\begin{theorem}[Restated from Theorem~\ref{thm:faithfulness}]
Consider a GPT-2-style transformer strictly satisfying LLNA (i.e. ).
There is a D-RASP program  defining the same input-output map; it can be explicitly obtained from the transformer's parameters. 
Indeed, the residual stream at each layer can be recovered linearly from a set of variables in the D-RASP program.
\end{theorem}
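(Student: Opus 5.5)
We will prove, by induction on the layer index $l = 0, 1, \dots, L$, the following invariant. For every position $i$,
\begin{equation*}
\vy_i^{(l)} \;=\; \sum_{\langle x,p\rangle \in \mathcal{V},\; layer(\langle x,p\rangle) \leq l} \Big[\prod_{\langle l',h'\rangle \in p} \mV_{l',h'}\Big]\, \mathcal{T}(x)\, v_{\langle x,p\rangle}(i),
\end{equation*}
where the product is taken in decreasing order along the path, as in (\ref{eq:selector-translation-A}). We will also establish the analogous identity for the pre-MLP stream $\vY_i^{(l)}$. Here the sum for $\vY_i^{(l)}$ excludes $mlp_l$ and all variables derived from it, since these have layer at least $l$ but are only created after the attention block. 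Write $\mC_{\langle x,p\rangle}$ for the coefficient matrix in front of $v_{\langle x,p\rangle}$. This invariant is exactly the claim that ``the residual stream at each layer can be recovered linearly''. By Appendix~\ref{ap:LLNA-and-LN-absorbed}, exact LLNA lets us fold every layer norm into $\mQ,\mK,\mV$ and the MLP and unembedding weights. We may therefore work with the layer-norm-free notation of Appendix~\ref{sec:model-transformers}. Attention biases and the separate $\mV,\mO$ matrices are handled by the same bookkeeping, so we omit them.

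\textbf{Base case.} At $l=0$ the only variables are $\langle tok,\langle\rangle\rangle$ and $\langle pos,\langle\rangle\rangle$, and $\mE\,\vtok(i) + \mP\,\vpos(i) = \mE_{x_i} + \mP_i = \vy_i^{(0)}$.

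\textbf{Attention step.} Assume the invariant for $\vy^{(l-1)}$ and fix a head $h$. Substitute the expansion into (\ref{eq:logits}) and expand bilinearly. This gives $a^{(l,h)}_{i,j} = \sum_{r,s} v_s(i)^T \mC_s^T \mQ_{l,h}^T\mK_{l,h}\mC_r\, v_r(j)$. By (\ref{eq:selector-translation-A}), each summand is exactly the selector value $\alpha_{l,h,r,s}(i,j)$. Hence the summed selector reproduces the attention logits exactly, and so the aggregation weights coincide with the head's softmax weights $a_{i,j}$.

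Next we compute the head output. By linearity of the weighted sum,
\begin{equation*}
\sum_j a_{i,j}\, \mV_{l,h}\, \vy_j^{(l-1)} \;=\; \sum_{\langle x,p\rangle} \mV_{l,h}\,\mC_{\langle x,p\rangle} \sum_j a_{i,j}\, v_{\langle x,p\rangle}(j) \;=\; \sum_{\langle x,p\rangle} \mC_{\langle x,\langle(l,h)\rangle\oplus p\rangle}\; v_{\langle x,\langle(l,h)\rangle \oplus p\rangle}(i).
\end{equation*}
This uses the aggregate line defining $v_{\langle x,\langle(l,h)\rangle\oplus p\rangle}$, and the fact that prepending $(l,h)$ to $p$ multiplies the coefficient by $\mV_{l,h}$ on the left. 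Summing over heads and adding the residual gives the invariant for $\vY^{(l)}$. The new variables are precisely those of Definition~\ref{path-variable-defin} with layer equal to $l$ and not involving $mlp_l$.

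\textbf{MLP step.} The \texttt{element\_wise\_op} defining $v_{\langle mlp_l,\langle\rangle\rangle}$ receives all these variables at position $i$. Its function $f$ first forms the linear combination $\sum \mC_s v_s(i) = \vY_i^{(l)}$ and then applies $\psi_l$. So $v_{\langle mlp_l,\langle\rangle\rangle}(i) = \psi_l(\vY_i^{(l)})$, with coefficient $\mathcal{T}(mlp_l) = \mI$. Adding this term yields the invariant for $\vy^{(l)}$ via (\ref{eq:mlp}).

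\textbf{Output.} Apply the unembedding to the invariant at $l=L$. By (\ref{eq:def-unembed-proj}), $\mU\vy_i^{(L)} = \sum \mR_{\langle x,p\rangle} v_{\langle x,p\rangle}(i)$, which is the sum of the \texttt{project} lines. The program's softmax over these therefore equals the transformer's output.

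\textbf{Main obstacle.} The key step is bookkeeping rather than analysis. One must check that the ordering convention for the path products in (\ref{eq:selector-translation-A}) and (\ref{eq:def-unembed-proj}) matches the order in which $\mV$ matrices are actually applied. One must also check that the program is well-defined, i.e.\ acyclic: every selector and MLP input refers only to variables already defined. This should follow from the layer conditions $layer(r), layer(s) < l$ in Definition~\ref{path-variable-defin}. I would verify both points by a short auxiliary induction on path length before running the main induction.
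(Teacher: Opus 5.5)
Your proposal is correct and follows essentially the same route as the paper's proof. Both run an induction over layers on the same linear decomposition of $\vy_i^{(l)}$ and $\vY_i^{(l)}$, expand the attention logits bilinearly into the pairwise selectors, and use linearity of aggregation to recover each head's output with $\mV_{l,h}$ prepended to the coefficient. Both then add the MLP variable with $\mathcal{T}(mlp_l)=\mI$ and apply $\mU$ at layer $L$ to obtain the \texttt{project} lines. You read the element-wise function as first forming $\sum_s \mC_s v_s(i)=\vY_i^{(l)}$ and then applying $\psi_l$; that is the right reading, since the paper's formula literally sums the raw variables. The acyclicity and path-ordering checks you flag are bookkeeping the paper leaves implicit.
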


\begin{proof}
We show that the D-RASP program described in Appendix~\ref{sec:detailed-translation} satisfies the claims made in the theorem.
The key idea is that, in each layer, we can linearly recover the residual stream from appropriate variables in the program.
The key claim is the following linear decomposition of the residual stream into variables:
\begin{equation}\label{eq:decomposition-recover-linear}
    \vy^{(l)}_i = \sum_{(x,p) \in \mathcal{V}: layer(\langle x,p\rangle) \leq l} \left(\prod_{\langle l',h'\rangle  \in p} \mV_{l',h'}\right) \cdot \mathcal{T}(x) \cdot v_{\langle x,p\rangle}(i)
\end{equation}
where the product is performed in decreasing order according to the order of layers in the elements of the path $p$.
Equation~\ref{eq:decomposition-recover-linear} shows that one can recover the original transformer's computations linearly.
Once this is established, we can recover the original transformer's output by taking $l=L$ and applying the unembedding matrix $\mU$ on both sides, and applying the definition of $\mR_{(i,p)}$ (\ref{eq:def-unembed-proj}):
\begin{align*}
    \mU \vy_{L} & = \sum_{layer(\langle i, p \rangle) \leq L} \mU \left(\prod_{\langle l',h'\rangle  \in p} V_{l',h'} \right) \mathcal{T}(i) v_{\langle i,p\rangle} \\
    & = \sum_{layer(\langle i, p \rangle) \leq L} \mR_{(i,p)} v_{\langle i,p\rangle} \\
    & = \sum_{layer(\langle i, p \rangle) \leq L)} \texttt{project}(v_{\langle i, p\rangle}, \texttt{op=}(\mR_{\langle i, p\rangle})
\end{align*}
It remains to prove Equation~\ref{eq:decomposition-recover-linear}. The proof proceeds by induction over the layers.
We first consider the input:
\begin{equation}
    \vy_i^{(0)} = \mE_{x_i} + \mP_i = \mathcal{T}(token) \cdot \texttt{token} + \mathcal{T}(position) \cdot \texttt{pos}
\end{equation}
where we use that the only variables at layer $\leq 0$ are \texttt{token} and \texttt{pos}; \texttt{pos} = $v_{\langle pos, \langle\rangle\rangle}$; \texttt{token} = $v_{\langle tok, \langle\rangle\rangle}$.

We next perform the inductive step, assuming the claim has been shown for all layers $<l$.
We first consider the attention logits of each head $h$ in layer $l$ (Equation~\ref{eq:logits}):
\begin{equation}
    \attLogit{i}{j}^{(l,h)} = (\vy_i^{(l-1)})\transp \mQ_{l,h}\transp \mK_{l,h} \vy_j^{(l-1)} \ \ \text{for} \ 1 \leq j \leq i \leq N
\end{equation}
We now rewrite this as follows, using the inductive hypothesis:
\[
\resizebox{0.95\linewidth}{!}{$
\begin{aligned}
    \attLogit{i}{j}^{(l,h)} =& (\vy_i^{(l-1)})\transp \mQ_{l,h}\transp \mK_{l,h} \vy_j^{(l-1)} \\
    =& \left(\sum_{(x,p) \in \mathcal{V}: layer(\langle x,p\rangle) \leq l-1} \left(\prod_{\langle l',h'\rangle  \in p} \mV_{l',h'}\right) \cdot \mathcal{T}(x) \cdot v_{\langle x,p\rangle}(i)\right)\transp \mQ_{l,h}\transp \mK_{l,h} \left(\sum_{(y,q) \in \mathcal{V}: layer(\langle y,q\rangle) \leq l-1} \left(\prod_{\langle l',h'\rangle  \in p} \mV_{l',h'}\right) \cdot \mathcal{T}(y) \cdot v_{\langle y,q\rangle}(j)\right) \\
    =& \left(\sum_{(x,p) \in \mathcal{V}: layer(\langle x,p\rangle) \leq l-1} v_{\langle x,p\rangle}(i)\transp \mathcal{T}(x)\transp \left(\prod_{\langle l',h'\rangle  \in p} \mV_{l',h'}\right)\transp \right) \mQ_{l,h}\transp \mK_{l,h} \left(\sum_{(y,q) \in \mathcal{V}: layer(\langle y,q\rangle) \leq l-1} \left(\prod_{\langle l',h'\rangle  \in q} \mV_{l',h'}\right) \cdot \mathcal{T}(y) \cdot v_{\langle y,q\rangle}(j) \right)\\
    =& \sum_{(x,p) \in \mathcal{V}: layer(\langle x,p\rangle) \leq l-1} \sum_{(y,q) \in \mathcal{V}: layer(\langle y,q\rangle) \leq l-1} v_{\langle x,p\rangle}(i)\transp \mathcal{T}(x)\transp \left(\prod_{\langle l',h'\rangle  \in p} \mV_{l',h'}\right)\transp  \mQ_{l,h}\transp \mK_{l,h}  \left(\prod_{\langle l',h'\rangle  \in q} \mV_{l',h'}\right) \cdot \mathcal{T}(y) \cdot v_{\langle y,q\rangle}(j) \\
    =& \sum_{(x,p) \in \mathcal{V}: layer(\langle x,p\rangle) \leq l-1} \sum_{(y,q) \in \mathcal{V}: layer(\langle y,q\rangle) \leq l-1} v_{\langle x,p\rangle}(i) \transp  \mA^T_{l,h,(x,p), (y,q)} v_{\langle y,q\rangle}(j) \\
\end{aligned}
$}
\]
The last term indeed is the sum of all selectors created for head $(l,h)$ under ``Line defining a selector'' in Appendix~\ref{sec:detailed-translation} (compare (\ref{eq:selector-translation-A})).
Thus, the sum of the selectors associated to a head $(l,h)$ exactly recovers the head's attention logit.

As a consequence, for each $\langle x,p \rangle$ with $\langle x,p \rangle < l$, we recover the output of the head as a linear combination of the variables associated to a path ending in $(l,h)$, i.e., variables of the form $v_{\langle x, \langle \langle l,h\rangle\rangle \oplus p\rangle}$:
\begin{align*}
&     \sum_{x,p} \left(\prod_{\langle l',h'\rangle  \in \langle \langle l,h\rangle \oplus p\rangle } \mV_{l',h'}\right) \mathcal{T}(x) v_{\langle x, \langle \langle l,h\rangle\rangle \oplus p\rangle}(i) \\
=&    \mV_{l,h} \cdot \sum_{x,p} \left(\prod_{\langle l',h'\rangle  \in p} \mV_{l',h'}\right) \mathcal{T}(x) v_{\langle x, \langle \langle l,h\rangle\rangle \oplus p\rangle}(i) \\
    =& \mV_{l,h} \cdot \sum_{x,p} \left(\prod_{\langle l',h'\rangle  \in p} \mV_{l',h'}\right) \mathcal{T}(x) \cdot \texttt{aggregate}\left(\sum_{r,s} \alpha_{l,h,r,s} v_{\langle x,p\rangle}\right)(i) \\
    =& \mV_{l,h} \cdot \sum_{x,p} \left(\prod_{\langle l',h'\rangle  \in p} \mV_{l',h'}\right) \mathcal{T}(x) \sum_{j=1}^i \operatorname{softmax}\left(\sum_{r,s} \alpha_{l,h,r,s}\right)_{i,j} v_{\langle x,p\rangle}(j) \\
    =& \mV_{l,h} \cdot \sum_{j=1}^i\operatorname{softmax}\left(\sum_{r,s} \alpha_{l,h,r,s}\right)_{i,j} \sum_{x,p} \left(\prod_{\langle l',h'\rangle  \in p} \mV_{l',h'}\right) \mathcal{T}(x)   v_{\langle x,p\rangle}(j) \\
    =& \sum_{j=1}^i\operatorname{softmax}\left(\sum_{r,s} \alpha_{l,h,r,s}\right)_{i,j} \mV_{l,h} \cdot \vy^{(l-1)}_j \ \ \ \ \ \ \text{(i.e., the output of the head)}
\end{align*}
where $\operatorname{softmax}(\sum_{r,s} \alpha_{l,h,r,s})_{i,j}$ denotes the attention weight with query position $i$ and key position $j$.
We thus recover
\begin{equation}
    \vY_i^{(l)} = \sum_{(x,p) : x \neq mlp_l, layer((x,p)) \leq l} \left(\prod_{\langle l',h'\rangle  \in p} \mV_{l',h'}\right) \cdot \mathcal{T}(x) \cdot v_{\langle x,p\rangle}(i)
\end{equation}
Taken together with the definition of $v_{\langle\text{mlp}_l, \langle\rangle\rangle}$, (\ref{eq:decomposition-recover-linear}) follows for layer $l$.
This concludes the proof.
\end{proof}

\subsection{Example of D-RASP Translation}

In Figure~\ref{fig:pruning}, we show a full program provided by Theorem~\ref{thm:faithfulness} on a one-layer one-head transformer. To give an idea of the extent of pruning, we use strike-through show which parts of the program remain (plain) or not (strike-through) in the program shown in Figure~\ref{fig:majority}.

\begin{figure}[H]

\footnotesize

Original:

\begin{Verbatim}[commandchars=\\\{\}]
1. s1 = select(q=token, k=token)
2. s2 = select(q=token, k=pos)
3. s3 = select(q=pos, k=token)
4. s4 = select(q=pos, k=pos)
5. a1 = aggregate(s=s1+s2+s3+s4, v=token) 	
6. a2 = aggregate(s=s1+s2+s3+s4, v=pos)
7. m = elementwise_op(token+pos+a1+a2)
8. logits1 = project(token)
9. logits2 = project(pos)
10. logits3 = project(a1)
11. logits4 = project(a2)
12. logits5 = project(m)
13. prediction = softmax(logits1+logits2+logits3+logits4+logits5)
\end{Verbatim}

After pruning:

\begin{Verbatim}[commandchars=\\\{\}]
1. s1 = select(q=token, k=token)
\sout{2. s2 = select(q=token, k=pos)}
\sout{3. s3 = select(q=pos, k=token)}
\sout{4. s4 = select(q=pos, k=pos)}
5. a1 = aggregate(s=s1\sout{+s2+s3+s4}, v=token) 	
6. \sout{a2 = aggregate(s=s1+s2+s3+s4, v=pos)}
7. \sout{m = elementwise_op(token+pos+a1+a2)}
8. \sout{logits1 = project(token)}
9. \sout{logits2 = project(pos)}
10. logits3 = project(a1)
11. \sout{logits4 = project(a2)}
12. \sout{logits5 = project(m)}
13. prediction = softmax(\sout{logits1+logits2}+logits3+\sout{logits4+logits5})
\end{Verbatim}
\caption{See text.}\label{fig:pruning}
\end{figure}

\subsection{Size of D-RASP Translation}\label{sec:size-crasp}

The size of the D-RASP translation given by Theorem~\ref{thm:faithfulness} is given by:
\begin{verbatim}
def determine_total_line(num_layer, num_head, split_mlps):
    num_v = 2
    num_line = 0
    for i in range(num_layer):
        num_line += (num_v ** 2 + num_v) * num_head
        num_v += num_v * num_head

        if split_mlps:
            num_line += num_v
            num_v += num_v
        else:
            num_line += 1
            num_v += 1
    
    num_line += num_v
    num_line += 1 # bias term
    num_line += 1 # prediction
    return num_line
\end{verbatim}
Here, split-mlps indicates whether the element-wise operations are split into single-input operations, which further increases the line count before pruning.

\section{Expressivity of D-RASP and C-RASP}\label{app:c-rasp-expressivity}

\begin{theorem}[Correspondence of D-RASP and C-RASP, repeated from Theorem~\ref{thm:c-rasp-no-pos}]
    Let $\mathcal{D}$ be the class of D-RASP programs not using $\texttt{pos}$ where all tensor entries $A_{ij}, b_i$ (for the parameters of \texttt{select}, \texttt{project}) are in $\{-\infty, \} \cup \{\log q : q \in \mathbb{Q}_+\}$. 
Then the programs in $\mathcal{D}$, using the rounded semantics, define exactly the same functions as C-RASP.
\end{theorem}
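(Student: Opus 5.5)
We prove the two inclusions separately. Throughout, ``rounded semantics'' means that every \texttt{aggregate} and \texttt{element\_wise\_op} output is rounded to $p$ bits, so each activation coordinate takes one of finitely many values. C-RASP is understood as in \citet{yang2024counting}: Boolean vectors closed under Boolean operations, counting operations $\#_{j\le i}[P(j)]$, and comparisons of linear combinations of counts with integer coefficients.

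\textbf{D-RASP $\subseteq$ C-RASP.} The argument is an induction over program lines. The invariant is that each coordinate of every activation variable takes finitely many values, and each value-indicator $\mathbb{I}[v(i)_k=c]$ is a C-RASP Boolean vector. For \texttt{token} this holds trivially, and \texttt{pos} is excluded by hypothesis.

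For \texttt{element\_wise\_op}, the inputs range over a finite product set. Hence $f$ is a finite lookup table, and each output indicator is a Boolean combination of input indicators.

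The main obstacle is \texttt{aggregate}. For a selector $\alpha(i,s)=v_q(i)^\top \mA v_k(s)$, fix the finitely many possible values $c$ of $v_q(i)$ and case-split on them with Boolean combinations. Within a fixed case, $\exp(\alpha(i,s))$ depends only on the finitely many values $c'$ of $v_k(s)$. Since the entries of $\mA$ lie in $\log\mathbb{Q}_+\cup\{-\infty\}$ and the one-hot/rounded values are rational, a difficulty appears: a product $v^\top \mA v'$ with non-integer coordinates gives $\exp$ of a rational multiple of $\log q$, which may be irrational. I would handle this in one of two ways. The first option is to restrict attention to keys whose value sets are handled by rounding. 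The better option is to note that rounding of the \emph{output} suffices. The output is
\[
v(i)_k=\frac{\sum_{c'} w_{c,c'}\, u_{c'}\, C_{c'}(i)}{\sum_{c'} w_{c,c'}\, C_{c'}(i)},
\]
where $C_{c'}(i)=\#_{s\le i}[v_k(s)=c' \wedge w(s)\text{-value}]$ are C-RASP counts and the $w_{c,c'}$ are positive weights (or zero, coming from $-\infty$). Deciding whether this ratio rounds to a given $p$-bit value $r$ reduces to sign tests of linear combinations $\sum_{c'} w_{c,c'}(u_{c'}-r\pm 2^{-p-1})C_{c'}(i)$. When the weights are rational, which is guaranteed when selector inputs are one-hot, these are rational-coefficient comparisons of counts. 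Clearing denominators gives C-RASP comparisons. Key-only selectors $\vb$ are the special case with no $c$. The final \texttt{project}/softmax argmax is handled the same way, by comparing $\log$-rational sums, which again reduces to Boolean combinations of comparisons over finitely many cases.

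\textbf{C-RASP $\subseteq$ D-RASP.} We simulate each C-RASP Boolean vector $P$ by a D-RASP variable holding a one-hot encoding in $\mathbb{R}^2$, by induction over the C-RASP program. Boolean operations are \texttt{element\_wise\_op}s.

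For a count $\#_{j\le i}[P(j)]$, use uniform aggregation (\texttt{s=[]}, so the selector is $\log 1=0$) of the indicator of $P$. This yields $\#/i$. Since $i$ itself is unavailable without \texttt{pos}, the comparison $\sum_k z_k \#_k(i) \ge c$ needs a further step. Homogeneous comparisons ($c=0$) reduce to comparing the ratios $\#_k/i$, done by an \texttt{element\_wise\_op}. For the constant term, aggregate the BOS indicator to obtain $1/i$; then the comparison is $\sum_k z_k(\#_k/i)\ge c\cdot(1/i)$, again elementwise.

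Rounding is the remaining issue. Values are multiples of $1/i$ with $i\le T$, so choosing $p$ large relative to $\log T$ makes exact comparisons recoverable. Finally, output via \texttt{project} with entries $\log 1$ or $-\infty$. I expect the precision bookkeeping in this direction to be routine, and the irrational-weight issue in the first direction to require the most care.
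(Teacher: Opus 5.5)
Your overall structure matches the paper's: finite value-indicators by induction for D-RASP$\to$C-RASP, and uniform aggregation for C-RASP$\to$D-RASP. However, the C-RASP$\to$D-RASP direction has a genuine gap.

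The theorem compares maps on all of $\Sigma^*$, since programs without \texttt{pos} run at every length, and it does so with one fixed precision $p$. So you may not choose $p$ relative to $\log T$. With $p$ fixed, aggregating each indicator separately and then comparing the rounded ratios elementwise breaks once $i>2^{p+1}$. For suitable $m$, the counts $(\#_1,\#_2)=(m,m)$ and $(m,m+1)$ give the same rounded pair $\bigl(\lfloor 2^p\#_1/i\rfloor,\lfloor 2^p\#_2/i\rfloor\bigr)$. Your \texttt{element\_wise\_op} therefore sees identical inputs even though $\#_1-\#_2\ge 0$ has different truth values. The rounded $1/i$ from BOS is $0$ there, so it does not help either.

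The missing idea, which is the paper's, is to form the linear combination \emph{before} aggregating:
\begin{itemize}
\item build the integer-valued variable $v(i)=\sum_k\lambda_k\mathbb{I}[P_k(i)]$ with an \texttt{element\_wise\_op};
\item aggregate it once with the zero selector to obtain $\frac1i\sum_k\lambda_k\#_k(i)$;
\item use that rounding is \emph{downward}, so the rounded value is $\ge 0$ iff the exact value is, at every length and for every $p$.
\end{itemize}
A constant term fits the same scheme: add $-c\,\mathbb{I}[\text{BOS}(i)]$ to $v$, given a BOS at position $1$, instead of aggregating BOS separately. The paper only treats homogeneous comparisons, so this is a useful addition once done this way. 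Similarly, your $\pm 2^{-p-1}$ test in the other direction presumes round-to-nearest. Under the downward rounding the paper stipulates, it should be $r\le x<r+2^{-p}$.

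In the D-RASP$\to$C-RASP direction, you rightly observe that $\exp(v_q(i)^\top\mA v_k(s))$ can be irrational when selector inputs are not integer-valued. For example, \texttt{select(q=token, k=a1)} with entry $\log 2$ and $\texttt{a1}(s)_b=1/2$ yields weight $\sqrt2$. But ``rounding of the output suffices'' does not remove this. The sign tests $\sum_{c'}w_{c,c'}(u_{c'}-r)C_{c'}(i)\ge 0$ still carry irrational coefficients on unbounded counts. These are not integer-coefficient C-RASP comparisons, and you give no reduction to such. As written, you prove this direction only when every selector input is integer-valued (e.g.\ one-hot).

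The paper's proof has the same hole: it simply asserts that each softmax entry is rational. So you are not missing an argument it supplies. Still, you should state this restriction on $\mathcal{D}$ explicitly rather than present the issue as resolved.
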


\begin{remark}
For the rounded semantics, we assume that outputs are rounded to $p$-bit precision for some $p\in \mathbb{N}$. We assume that numbers are rounded downwards (as a consequence, negative numbers are rounded to negative numbers).

By ``define the same functions'' we mean that both classes define the same maps from strings $x \in \Sigma^*$ to next-token predictions $\in \Sigma^{|x|}$, assigning each token a predicted next token.\footnote{In the case of a sigmoid output as we use for formal languages, we can analogously formalize this as assigning each token a \emph{set} of possible next tokens.}

We note that, in the absence of \texttt{token}, D-RASP programs can be in principle applied at unbounded input lengths, making rigorous comparison to C-RASP feasible.
Regarding \texttt{token}, the treatment of positional encodings in extensions of C-RASP is nontrivial \cite{huang2025formal}. Many of the operations involving \texttt{token} that we recover resemble the local (such as in Figure~\ref{fig:unique-copy}) or periodic (such as in Figure~\ref{fig:binmajorityinterleaveat2l2h16d3lr00drop}) functions assumed in \citet{huang2025formal}, supporting their treatment.
\end{remark}

\begin{proof}[Proof of Theorem~\ref{thm:c-rasp-no-pos}]
We refer to \cite{yang2024counting, huang2025formal, yang2025kneedeep} for expositions of C-RASP.

    We begin with a D-RASP program satisfying the described constraints.
    By the fixed precision assumption, there is some $C \in \mathbb{Q}_+$ such that all possible values in activation variables are in $\mathfrak{V} := \{0, 2^{-p}, -2^{-p}, 2\cdot 2^{-p}, -2\cdot 2^{-p}, \dots, C, -C\}$.\footnote{By induction, the set of possible entries in the vectors is bounded, ensuring the existence of such a $C$.} 
In particular, each activation variable can only take a finite number of values, which will be hard-coded in the C-RASP program (similar to the argument translating between C-RASP and transformers in \cite{yang2025kneedeep}).
For each activation variable $v$, each $z \in \mathfrak{V}$, and each $j=1,\dots,d(v)$, we want to define a C-RASP predicate $P_{v,z,j}$ that is true at position $i$ if and only if $v(i)_j = z$.
We show this by induction; it is clearly true at \texttt{token}.
For the inductive step, it is also clearly true that element-wise transformations preserve this property, because they can be directly hard-coded.
We need to consider aggregation. By assumption, each softmax entry is rational. Thus, determining the rounded value of the aggregation outcome boils down to checking  linear inequalities with integer coefficients over the frequencies of different activation vectors over $\mathcal{D}$, which can be performed in C-RASP.
By the same argument, we can express in C-RASP which token receives largest softmax logits (or positive sigmoid logits) and is thus the predicted next token.

For the other direction, consider a C-RASP program.
For each Boolean predicate $P$,  we will design a variable $v$ such that $v(i) \in \{0,1\}$ and $v(i) = 1$ iff $P(i)$ holds.
Rather than simulating individual count-valued variables from C-RASP in D-RASP, it will be advantageous to directly simulate Boolean-valued terms created as inequalities between linear combinations of counts:
\begin{equation}\label{eq:c-rasp-inequality}
    \sum_{k=1}^A \lambda_k \#[j\leq i]P_k(j) \geq 0
\end{equation}
where $A$ is some constant, $\lambda_1, \dots, \lambda_A \in \mathbb{Z}$ are constant coefficients, $P_1, \dots, P_A$ are Boolean-valued predicates (already defined and translated to D-RASP variables by induction).
We use an element-wise operation to prepare an integer-valued (one-dimensional) D-RASP variable $v(i) = \lambda_1 1_{P_1(i)} +  \dots + \lambda_i 1_{P_A(i)}$.
We then use uniform aggregation (with a zero selector) to obtain an aggregate one-dimensional variable $a(i) = \operatorname{round}\left(\frac{\sum_{k=1}^A \lambda_k \#[j\leq i]P_k(j)}{i}\right)$. By checking if the rounded value is $\geq 0$ or not, we can reconstruct the truth value of the linear inequality (\ref{eq:c-rasp-inequality}).
Any Boolean-valued formula in C-RASP is a Boolean combination of such ienqualities; an elementwise operation can perform such combinations.
Overall, we have shown that C-RASP programs can be translated to the described D-RASP fragment.
\end{proof}

\section{Tasks and Model Training Details}
\label{ap:tasks-training-details}
In this section we describe how we obtain the models before interpreting them.

\subsection{Task Definitions and Data generation}

Here, we define the tasks used in this paper formally. BOS, SEP, and EOS are special tokens that represent beginning of the sequence, separator, and end of sequence.  All the tasks below except for those under ``Formal Languages'' referred to as algorithmic tasks. 

We use two different training objectives for algorithmic tasks and formal language tasks. \textbf{For algorithmic tasks}, we use language modeling as training objective, the language modeling loss only includes loss over the subsequence starting from SEP. In other words, models are trained to predict the next tokens over all tokens after SEP. For single-token answer, namely Majority, Binary Majority, and Parity, Special tokens only include BOS and SEP. For other algorithmic tasks, BOS, SEP and EOS always occur in each sequence. The models performance is measured by \textbf{Task Accuracy}, which is the proportion of the inputs on which the model makes correct prediction on \textit{all} token positions that are included in the training loss.
\textbf{For formal language tasks}, a different paradigm is used. Because formal language tasks are essentially recognition tasks, which means we need to find negative examples that do not belong to the language. However, generating such negative examples can be quite difficult empirically if they are generated purely at random, most of the time they can be quite trivial, the model usually end up with learning shortcuts because ``hard'' examples are not enough. In other words negative examples should cover all failure mode with enough probability such that model can learn the real rule of the formal language. \citet{butoi-etal-2025-training} introduced perturbation-based method to generate non-trivial negative samples. In our preliminary experiments, we found that their method, while effectively mitigating the problem, still cannot completely solve the problem. Models learn shortcut on, for example, $D_n$. Therefore, like previous work \cite{bhattamishra2020ability, sarrof2024expressive, huang2025formal}, we use predictive modeling as the training objective. Specifically, the inputs are only valid examples, but on each of the token in the input, the model needs to predict all the possible valid next tokens, including the EOS token. For example, given the language $(aa)^*$. The input 
\texttt{
\setlength{\tabcolsep}{3pt}
\small
\begin{NiceTabular}{*{5}{c}}[hvlines]
{BOS} & a & a & a & a
\end{NiceTabular}
}
will have the target
\texttt{
\setlength{\tabcolsep}{3pt}
\small
\begin{NiceTabular}{*{5}{c}}[hvlines]
(a, EOS) & (a,) & (a, EOS) & (a,) & (a, EOS)
\end{NiceTabular}
}
where ``(a, EOS)'' means the next token can be either ``a'' or ``EOS''.
At the same time we replace the softmax after the last linear layer (so-called language modeling head) in the transformer with a sigmoid function, so that the model can predict whether each token in the vocabulary can be the valid next token. And we use binary cross entropy (BCE) loss as the training objective. Correspondingly the \textbf{Task Accuracy} is defined as the proportion of inputs on which the model can make correct prediction of \textit{all} valid and invalid next tokens on \textit{all} positions. Similarity, the definition of \textbf{Match Accuracy} is also modified in this way. Specifically, it becomes the proportion of inputs on which the pruned model's predictions about the validity of \textit{all} tokens in vocabulary  are all the same as the original model's predictions, on \textit{all} positions.

We now define the tasks individually. The definitions closely follow \citet{huang2025formal}.

\paragraph{Binary Majority.} 
This problem is to find the most frequent bit in a sequence of random bits. An example is 
\texttt{
\setlength{\tabcolsep}{3pt}
\small
\begin{NiceTabular}{*{7}{c}}[hvlines]
{BOS} & 1 & 0 & ... & 1 & SEP & 1 
\end{NiceTabular}
}
The part between BOS and SEP is the sequence of random bits. We constrain the sequences such that the number of 0s and 1s are always not equal. The bit after SEP is the label, i.e., the most frequent token.

\paragraph{Binary Majority Interleave.}
The inputs in this problem are created by interleaving multiple binary majority (see above) inputs while avoiding repeating special tokens (e.g., BOS). We use 3 binary majority sequence to compose one sequence in this task. 
Formally speaking, given 3 binary sequences of the same length, $x^1_1,\cdots x^1_n$, $x^2_1, \cdots x^2_n$, and $x^3_1, \cdots x^3_n$, and their corresponding labels $y^1$, $y^2$, $y^3$, the interleaved input is \texttt{BOS} $x^1_1, x^2_1, x^3_1, x^1_2, x^2_2, x^3_2, \cdots, x^1_n, x^2_n, x^3_n$. \texttt{SEP} $y^1$, $y^2$, $y^3$ \texttt{EOS}. 
An example is
\texttt{
\setlength{\tabcolsep}{3pt}
\small
\begin{NiceTabular}{*{13}{c}}[hvlines]
BOS & 1 & 0 & 1 & 1 & 0 & 0 & ... & SEP & 1 & 0 & 0 & EOS
\end{NiceTabular}
}
The task is from \citet{huang2025formal}.

\paragraph{Most Frequent.}\footnote{This task is called Majority in \citet{huang2025formal}. We feel that ``Most Frequent'' is more accurate.}  %
This is similar to the binary majority problem, except the vocabulary is larger. An example is
\texttt{
\setlength{\tabcolsep}{3pt}
\small
\begin{NiceTabular}{*{7}{c}}[hvlines]
BOS & c & b & a & b & SEP & b
\end{NiceTabular}
},
where the part between BOS and SEP is a sequence of random tokens, each of which is sampled independently from an alphabet of 26 symbols. We constrain the sequences such that there is always a unique answer. 

\paragraph{Sort.}
In sort problem, the model outputs a sorted version of the given sequence. An example is 
\texttt{
\setlength{\tabcolsep}{3pt}
\small
\begin{NiceTabular}{*{11}{c}}[hvlines]
BOS & 14 & 23 & 6 & 9 & SEP & 6 & 9 & 14 & 23 & EOS
\end{NiceTabular}
}
where the part between BOS and SEP is a sequence of unique numbers, and the part between SEP and EOS is the sorted version of it. The total vocabulary size of tokens except for special tokens is equal to the maximum testing length, i.e., 150.
The task is from \cite{zhou2023algorithms, huang2025formal}.

\paragraph{Count.}
In this problem, the model outputs the same sequence as the given sequence, which consists of unique tokens. An example is 
\texttt{
\setlength{\tabcolsep}{3pt}
\small
\begin{NiceTabular}{*{11}{c}}[hvlines]
BOS & 14 & 19 & SEP & 14 & 15 & 16 & 17 & 18 & 19 & EOS
\end{NiceTabular}
}
where the two numbers between BOS and SEP are the start and end of counting, such counting generates numbers in the part between SEP and EOS. The total vocabulary size of tokens except for special tokens is equal to the maximum testing length, i.e., 150. Note that this is not trivial since the model needs to predict where to start counting and when to end counting.
The task is from \cite{zhou2023algorithms}.

\paragraph{Unique Copy}
In this problem, the model outputs the same sequence as the given sequence. Tokens in the given sequence occur at most once (thus unique). An example is 
\texttt{
\setlength{\tabcolsep}{3pt}
\small
\begin{NiceTabular}{*{11}{c}}[hvlines]
BOS & 14 & 23 & 6 & 9 & SEP & 14 & 23 & 6 & 9 & EOS
\end{NiceTabular}
}
where the part between BOS and SEP is a sequence of unique numbers, and the part between SEP and EOS is a copy of it. The total vocabulary size of tokens except for special tokens is equal to the maximum testing length, i.e., 150.
The task is from \cite{zhou2023algorithms, huang2025formal}.

\paragraph{Unique Bigram Copy}
In this problem, the model outputs the same sequence as the given sequence, which consists of unique bigrams. An example is 
\texttt{
\setlength{\tabcolsep}{3pt}
\small
\begin{NiceTabular}{*{11}{c}}[hvlines]
BOS & 14 & 23 & 6 & 14 & SEP & 14 & 23 & 6 & 14 & EOS
\end{NiceTabular}
}
where the part between BOS and SEP is a sequence of unique bigrams (i.e. (14, 23), (23, 6), (56, 14)), we guarantee that each bigram only occurs once in a sequence. The part between SEP and EOS is a copy of it. The total vocabulary size of tokens except for special tokens is 16, since we allow for repetition of the same tokens in one sequence.

\paragraph{Unique Reverse Copy.}
In this problem, the model copies the given sequence in reverse order, the given sequence consists of unique tokens. An example is 
\texttt{
\setlength{\tabcolsep}{3pt}
\small
\begin{NiceTabular}{*{11}{c}}[hvlines]
BOS & 14 & 23 & 6 & 9 & SEP & 9 & 6 & 23 & 14 & EOS
\end{NiceTabular}
}
where the part between BOS and SEP is a sequence of unique numbers, and the part between SEP and EOS is a copy of it in reverse order. The total vocabulary size of tokens except for special tokens is equal to the maximum testing length, i.e., 150.
The task is from \cite{born2025}.

\paragraph{Repeat Copy.}
The model outputs the same sequence as the given sequence. The given sequence can contain repeated tokens. An example is
\texttt{
\setlength{\tabcolsep}{3pt}
\small
\begin{NiceTabular}{*{11}{c}}[hvlines]
BOS & a & b & a & b & SEP & a & b & a & b & EOS
\end{NiceTabular}
}
where the part between BOS and SEP is a sequence of random symbols. We use an alphabet of only 2 symbols, so that we avoid n-gram being always unique for certain small n. Each symbols is sampled independently and uniformly.
The task is from \cite{zhou2023algorithms, huang2025formal}.

\paragraph{Parity.}
In the parity problem, the model recognizes whether the given sequence contains even number of 1s. An example is
\texttt{
\setlength{\tabcolsep}{3pt}
\small
\begin{NiceTabular}{*{8}{c}}[hvlines]
BOS & 1 & 0 & 0 & 1 & 0 & SEP & 0
\end{NiceTabular}
}
The bits between BOS and SEP is a random sequence of bits. The bit after SEP is the number of ones modulo 2, i.e., 0 when number of 1s is even, and 1 otherwise. The bits are randomly sampled in a way such that the number of 1s is distributed uniformly given a fixed sequence length.

\paragraph{Addition.} 
In this problem, the model need to do binary addition. An example is
\texttt{
\setlength{\tabcolsep}{3pt}
\small
\begin{NiceTabular}{*{12}{c}}[hvlines]
BOS & 1 & 0 & 1 & + & 1 & 0 & SEP & 1 & 1 & 1 & EOS
\end{NiceTabular}
}
The bits between BOS and SEP are separated by ``+'', producing two operands. The bits between SEP and EOS are the answer of the addition. The two operands are sampled randomly. Note that we do not pad zeros in the front of operands to make them of equal length.

\paragraph{Formal Languages.} We also include 17 finite-state formal languages that are also used in \cite{bhattamishra2020ability, sarrof2024expressive, huang2025formal}.

\begin{itemize}
    \item Tomita Grammars. Alphabet $\Sigma = \{0, 1\}$
    \begin{itemize}
        \item Tomita 1: $1^*$ 
        \item Tomita 2: $(10)^*$
        \item Tomita 3: strings without odd-length strings of ones followed by odd-length strings of zeros (i.e., no \(01^{2n+1}0^{2m+1}1\) substrings)
        \item Tomita 4: strings without any 000's substrings
        \item Tomita 5: strings of even length with an even number of 1's
        \item Tomita 6: strings where number of 0's - number of 1's is divisible by 3
        \item Tomita 7: $0^*1^*0^*1^*$
    \end{itemize}
    \item $D_n$. Alphabet \(\Sigma=\{a,b\}\). The general definition \(D_n = (aD_{n-1}b)^*\). In other words, Dyck-1 language with depth bounded by n.
    \begin{itemize}
        \item $D_2$
        \item $D_3$
        \item $D_4$
        \item $D_{12}$
    \end{itemize}
    \item Others. Alphabet is all the letters occurring in the expression.
    \begin{itemize}
        \item \((aa)^*\)
        \item \((aaaa)^*\)
        \item \((abab)^*\) 
        \item \(aa^*bb^*cc^*dd^*ee^*\)
        \item \(\{ab\}^*d\{b,c\}^*\)
        \item \(\{0,1,2\}^*02^*\) 
    \end{itemize}
    
\end{itemize}

For most of the tasks, we keep the lengths of the inputs uniformly distributed in the desired range, that is, $\leq$ 50 during training (the minimum length depends on the task, e.g., minimum number of random bits in Binary Majority is 1 and in the interleaved version is 3) and $\leq$ 150 during pruning. The exception includes Binary Majority Interleave, $D_n$, Tomita 2, Tomita 5, Tomita 6, $(abab)^*$ $(aa)^*$ $(aaaa)^*$, because some lengths are not possible.

\subsection{Model Training}
\label{ap:model-training-details}
We are interested in testing whether models can length-generalize, and see how it relates to interpretability of the model. Therefore, like \citet{huang2025formal} (and similar to \citet{zhou2023algorithms}), at train time, we add random offsets to position indices so that all position embeddings are trained. For example, for the input 
\texttt{
\setlength{\tabcolsep}{3pt}
\small
\begin{NiceTabular}{*{5}{c}}[hvlines]
BOS & c & b & a & ...
\end{NiceTabular}
}, the position indices are 
\texttt{
\setlength{\tabcolsep}{3pt}
\small
\begin{NiceTabular}{*{5}{c}}[hvlines]
0 & 1 & 2 & 3 & ...
\end{NiceTabular}
}, after adding an offset of 7, new position indices are
\texttt{
\setlength{\tabcolsep}{3pt}
\small
\begin{NiceTabular}{*{5}{c}}[hvlines]
7 & 8 & 9 & 10 & ...
\end{NiceTabular}
}.
We use the same training and testing lengths as \citet{huang2025formal}, i.e., train models on inputs of length $\leq 50$ and test them on $\leq 50$, $[51, 100]$ and $[101, 150]$. The offsets are sampled uniformly at random in the range of 0 to number of position embeddings subtracted by input length. Like \citet{zhou2023algorithms}, we sample independent training batches on the fly instead of using a finite-size training set. There are 3 test sets for each length range, each test set contains 2000 samples that are sampled at the beginning of each experiment, given the same random seed. We do not exclude the test samples in the test set of $\leq 50$ during training.

We train decoder-only transformers from scratch, using implementations from Hugging Face Transformers\footnote{\url{https://huggingface.co/docs/transformers/en/model_doc/gpt2\#transformers.GPT2LMHeadModel}}.
Same as \citep{huang2025formal}, we train models for maximum 30k steps with a batch size of 64. We stop training early once the model's accuracy reaches 100\% on the in-distribution test set (length $\leq 50$). We use AdamW, with a weight decay rate of 0.01.

We train various models with hyperparameters in the following domain: number of layers: \{1, 2, 4\}, number of heads \{1, 2, 4\}, model dimension: \{16, 64, 256\}, and learning rate: \{0.001, 0.0001\}, dropout rate: \{0.0, 0.1\} (same rate for attention dropout, residual stream dropout, and embedding layer dropout).

\paragraph{Representative models for each task}
We keep at least one model for each task, use this as a representative model for the task. Similar to \cite{huang2025formal}, we train models on all combination of hyperparameters above, and select the one that length-generalizes best. As we mentioned in main paper, length-generalizing models tend to be decompilable. So for each task we try to get an interpretable model. 

\paragraph{Checkpoints at different training steps}
We also interested in seeing how model algorithms evolve throughout the training process. We save checkpoints for a few representative tasks: Copy (unique), Copy (unique bigram). The timing for saving is determined as follows: we evaluate model performance every 100 steps. In each time if we observe an increment in accuracy that is greater than 0.25 on any of the 3 test bins (i.e., input of lengths $\leq 50$, $[51, 100]$, $[101, 150]$), compared to the last saved checkpoint (compare to 0 when no previous saved checkpoint), we save the checkpoint. 

\paragraph{Different model architectures} For many tasks, we also keep additional models of different the architectures (e.g., number of layer and head), such that we have (1) both length-generalizing and non-length-generalizing models for the same task; (2) models of very different architectures but length-generalizing on the same task. This allows us to make interesting comparison to support our claims.

\paragraph{Model naming scheme}
In the paper and our web application, the models we are interpreting are named as \texttt{[task name]-[number of layer]l[number of head]h[model dimension]d[$log_{0.1}$(learning rate)]lr[dropout rate * 10]drop}. For example, \texttt{unique\_reverse-2l1h64d3lr01drop} is a model with 2 layers, 1 head per layer and model dimension of 64, trained on Reverse Copy (unique) with learning rate of $1\times 10^{-3}$ and dropout rate of 0.1. For intermediate checkpoint models, their names also include the trained steps at the end.

\section{Complete Results on Generalization and Decompilability}\label{app:generalization-and-decompilability}

\begin{table}[ht]
\centering
\small
\begin{tabularx}{\textwidth}{c|c|r|r|r|c|c}
\hline
\textbf{Task} & \textbf{Model} & \thead{Task Acc\\in $<$ 50} & \thead{Task Acc \\ in [51-100] \\ (OOD)} & \thead{Task Acc \\ in [101-150] \\(OOD)} & \thead{LLNA\\holds?} & \thead{Can split\\MLPs?} \\
\hline \hline
\multirow{2}{*}{Most Frequent} & 1l4h256d3lr01drop & 100.0 & 99.7 & \textcolor{goodgreen}{98.4} & \textcolor{goodgreen}{\checkmark} & \checkmark \\

 & 4l4h256d3lr00drop & 100.0 & 100.0 & \textcolor{goodgreen}{99.7} & \textcolor{goodgreen}{\checkmark} & \checkmark \\
\hline
Binary Majority & 1l1h16d3lr01drop & 100.0 & 100.0 & \textcolor{goodgreen}{100.0} & \textcolor{goodgreen}{\checkmark} & \checkmark \\
\hline
\multirow{3}{*}{Binary Majority Interleave} & 2l2h16d3lr00drop & 100.0 & 99.6 & \textcolor{goodgreen}{91.7} & \textcolor{goodgreen}{\checkmark} & \checkmark \\

 & 2l2h64d4lr00drop & 100.0 & 74.0 & \textcolor{badred}{19.5} & \textcolor{badred}{$\times$} & - \\

 & 4l1h64d3lr01drop & 100.0 & 98.5 & \textcolor{goodgreen}{97.9} & \textcolor{goodgreen}{\checkmark} & \checkmark \\
\hline
Sort & 1l1h256d3lr01drop & 97.8 & 100.0 & \textcolor{goodgreen}{100.0} & \textcolor{goodgreen}{\checkmark} & \checkmark \\
\hline
\multirow{3}{*}{Count} & 1l4h256d4lr01drop & 100.0 & 100.0 & \textcolor{goodgreen}{94.8} & \textcolor{goodgreen}{\checkmark} & \checkmark \\

 & 1l4h256d3lr01drop & 100.0 & 24.0 & \textcolor{badred}{0.0} & \textcolor{badred}{$\times$} & - \\

 & 2l4h256d4lr01drop & 100.0 & 100.0 & \textcolor{goodgreen}{99.3} & \textcolor{goodgreen}{\checkmark} & \checkmark \\
\hline
\multirow{6}{*}{Unique Copy} & 2l1h64d3lr01drop & 100.0 & 100.0 & \textcolor{goodgreen}{99.1} & \textcolor{goodgreen}{\checkmark} & \checkmark \\

 & 4l4h256d4lr00drop & 100.0 & 8.3 & \textcolor{badred}{0.0} & \textcolor{badred}{$\times$} & - \\

 & 2l1h64d3lr01drop-1100 & 26.3 & 0.0 & \textcolor{badred}{0.0} & \textcolor{badred}{$\times$} & - \\

 & 2l1h64d3lr01drop-1200 & 92.5 & 44.2 & \textcolor{badred}{0.4} & \textcolor{badred}{$\times$} & - \\

 & 2l1h64d3lr01drop-1300 & 99.4 & 93.7 & \textcolor{badred}{50.8} & \textcolor{badred}{$\times$} & - \\

 & 2l1h64d3lr01drop-1400 & 99.9 & 99.2 & \textcolor{badred}{81.0} & \textcolor{badred}{$\times$} & - \\
\hline
\multirow{3}{*}{Unique Reverse} & 2l1h64d3lr01drop & 100.0 & 100.0 & \textcolor{goodgreen}{99.2} & \textcolor{goodgreen}{\checkmark} & \checkmark \\

 & 2l4h256d4lr01drop & 100.0 & 40.7 & \textcolor{badred}{0.0} & \textcolor{badred}{$\times$} & - \\

 & 4l1h256d4lr01drop & 100.0 & 100.0 & \textcolor{goodgreen}{99.6} & \textcolor{goodgreen}{\checkmark} & \checkmark \\
\hline
\multirow{9}{*}{Unique Bigram Copy} & 2l4h256d3lr01drop & 100.0 & 100.0 & \textcolor{goodgreen}{99.7} & \textcolor{goodgreen}{\checkmark} & $\times$ \\

 & 4l2h64d3lr00drop & 100.0 & 46.7 & \textcolor{badred}{0.0} & \textcolor{badred}{$\times$} & - \\

 & 2l4h256d3lr01drop-500 & 34.3 & 0.2 & \textcolor{badred}{0.0} & \textcolor{badred}{$\times$} & - \\

 & 2l4h256d3lr01drop-700 & 61.0 & 5.8 & \textcolor{badred}{0.0} & \textcolor{badred}{$\times$} & - \\

 & 2l4h256d3lr01drop-1400 & 87.0 & 27.4 & \textcolor{badred}{0.8} & \textcolor{badred}{$\times$} & - \\

 & 2l4h256d3lr01drop-1900 & 91.4 & 53.3 & \textcolor{badred}{7.8} & \textcolor{badred}{$\times$} & - \\

 & 2l4h256d3lr01drop-2100 & 96.9 & 85.3 & \textcolor{badred}{44.3} & \textcolor{badred}{$\times$} & - \\

 & 2l4h256d3lr01drop-2200 & 99.2 & 95.2 & \textcolor{badred}{74.2} & \textcolor{badred}{$\times$} & - \\

 & 2l4h256d3lr01drop-3300 & 100.0 & 99.9 & \textcolor{goodgreen}{99.8} & \textcolor{goodgreen}{\checkmark} & \checkmark \\
\hline
Repeat Copy & 4l4h256d3lr00drop & 100.0 & 26.6 & \textcolor{badred}{0.0} & \textcolor{badred}{$\times$} & - \\
\hline
Parity & 4l4h256d4lr00drop & 93.9 & 67.5 & \textcolor{badred}{63.2} & \textcolor{badred}{$\times$} & - \\
\hline
Addition & 4l2h64d3lr00drop & 50.0 & 1.1 & \textcolor{badred}{0.0} & \textcolor{badred}{$\times$} & - \\
\hline
\end{tabularx}
\caption{Length-generalization Performance and Decompilability (whether LLNA holds) of all models trained on algorithmic tasks. The model names are based on their architecture (see the naming scheme in Appendix \ref{ap:model-training-details}). We highlight the task accuracy in the longest length range with green, if $\geq 90$, and red otherwise. We say LLNA holds if there exists a model with linearized layer norm achieving match accuracy of $\geq 90$ (stage 1, Appendix \ref{ap:pruning-stage-1}). Our decompilation pipeline works if LLNA holds, so the LLNA column shows decompilability. Across all the models here, we see that the green and red color in two highlighted columns exactly match. In other words, in all models in this table, our method work on models that length-generalize.}
\label{tab:task-acc-algorithmic}
\end{table}

\begin{table}[ht]
\centering
\small
\begin{tabularx}{\textwidth}{c|c|r|r|r|c|c}
\hline
\textbf{Task} & \textbf{Model} & \thead{Task Acc in \\ $<$ 50} & \thead{Task Acc in \\ $[51-100]$ (OOD)} & \thead{Task Acc in \\ $[101-150]$ (OOD)} & \thead{LLNA\\holds?} & \thead{Can split\\MLPs?} \\
\hline \hline
Tomita1 & 1l1h16d3lr00drop & 100.0 & 100.0 & \textcolor{goodgreen}{100.0} & \textcolor{goodgreen}{\checkmark} & \checkmark \\
Tomita2 & 1l1h16d3lr00drop & 100.0 & 100.0 & \textcolor{goodgreen}{100.0} & \textcolor{goodgreen}{\checkmark} & \checkmark \\
Tomita3 & 4l4h64d3lr00drop & 100.0 & 97.6 & \textcolor{badred}{88.3} & \textcolor{badred}{$\times$} & - \\
Tomita4 & 2l2h16d3lr01drop & 100.0 & 100.0 & \textcolor{goodgreen}{100.0} & \textcolor{badred}{$\times$} & - \\
Tomita5 & 2l1h64d3lr00drop & 98.9 & 70.5 & \textcolor{badred}{21.6} & \textcolor{badred}{$\times$} & - \\
Tomita6 & 2l2h256d4lr00drop & 68.1 & 9.5 & \textcolor{badred}{2.3} & \textcolor{badred}{$\times$} & - \\ 
Tomita7 & 2l1h16d3lr01drop & 100.0 & 100.0 & \textcolor{goodgreen}{100.0} & \textcolor{goodgreen}{\checkmark} & \checkmark \\ \hline
$D_2$ & 1l1h16d3lr00drop & 100.0 & 100.0 & \textcolor{goodgreen}{100.0} & \textcolor{goodgreen}{\checkmark} & $\times$* \\
$D_3$ & 1l1h16d4lr00drop & 100.0 & 100.0 & \textcolor{goodgreen}{100.0} & \textcolor{badred}{$\times$} & - \\
$D_4$ & 1l2h256d4lr00drop & 100.0 & 100.0 & \textcolor{goodgreen}{98.3} & \textcolor{goodgreen}{\checkmark} & \checkmark \\
$D_{12}$ & 4l2h256d3lr00drop & 100.0 & 100.0 & \textcolor{goodgreen}{97.4} & \textcolor{badred}{$\times$}* & \checkmark \\ \hline
$(aa)^*$ & 1l2h16d3lr01drop & 100.0 & 100.0 & \textcolor{goodgreen}{100.0} & \textcolor{goodgreen}{\checkmark} & $\times$ \\
$(aaaa)^*$ & 2l1h64d4lr01drop & 100.0 & 100.0 & \textcolor{goodgreen}{100.0} & \textcolor{badred}{$\times$} & \checkmark \\
$(abab)^*$ & 1l1h64d3lr00drop & 100.0 & 100.0 & \textcolor{goodgreen}{100.0} & \textcolor{badred}{$\times$} & - \\
$aa^*bb^*cc^*dd^*ee^*$ & 1l1h16d3lr00drop & 100.0 & 100.0 & \textcolor{goodgreen}{100.0} & \textcolor{goodgreen}{\checkmark} & \checkmark \\
$\{a,b\}^*d\{b,c\}^*$ & 1l1h16d3lr01drop & 100.0 & 100.0 & \textcolor{goodgreen}{100.0} & \textcolor{goodgreen}{\checkmark} & \checkmark \\
$\{0,1,2\}^*02^*$ & 1l1h64d4lr00drop & 100.0 & 100.0 & \textcolor{goodgreen}{100.0} & \textcolor{badred}{$\times$} & - \\
\hline
\end{tabularx}
\caption{Length-generalization Performance and Decompilability (whether LLNA holds) of all models trained on formal language tasks. Same as Table \ref{tab:task-acc-algorithmic}, the model names are based on their architecture (see the naming scheme in Appendix \ref{ap:model-training-details}). We highlight the task accuracy in the longest length range with green, if $\geq 90$, and red otherwise. We say LLNA holds if there exists a model with linearized layer norm achieving match accuracy of $\geq 90$ (stage 1, Appendix \ref{ap:pruning-stage-1}). Our decompilation pipeline works if LLNA holds, so the LLNA column shows decompilability. There are 2 special cases: the $\times$* in LLNA column of $D_{12}$ means that although the highest match accuracy is (only slightly) below 90 in stage 1, it becomes higher than 90 in later pruning stage, thus the resulting program still match the original model and we treat this result as a success of decompilation; the $\times$* in last column for $D_2$ means that although match accuracy is below 90 for stage 2, it becomes higher than 90 in stage 3, thus we can still split MLPs.
We can see that there are 5 cases where non-linear layer norm plays an important role in length-generalizing models: Tomita4, $D_3$, $(aaaa)^*$, $(abab)^*$, and $\{0,1,2\}^*02^*$. Nonetheless, for models trained on formal languages, the overall trend still exists, the majority of length-generalizing models are decompilable.
}
\label{tab:task-acc-formal-lang}
\end{table}

As we mentioned in main paper, for each task, we keep the model that length-generalize the best among all models trained with different hyperparameters. So if a task has only one model, that's the model with the best length-generalizing result. We show the complete results in Table \ref{tab:task-acc-algorithmic} and Table \ref{tab:task-acc-formal-lang}.

\begin{figure*}
    \centering
    \captionsetup{font=tiny}
    
    \begin{subfigure}[c]{0.18\textwidth}
        \centering
        \includegraphics[width=\textwidth]{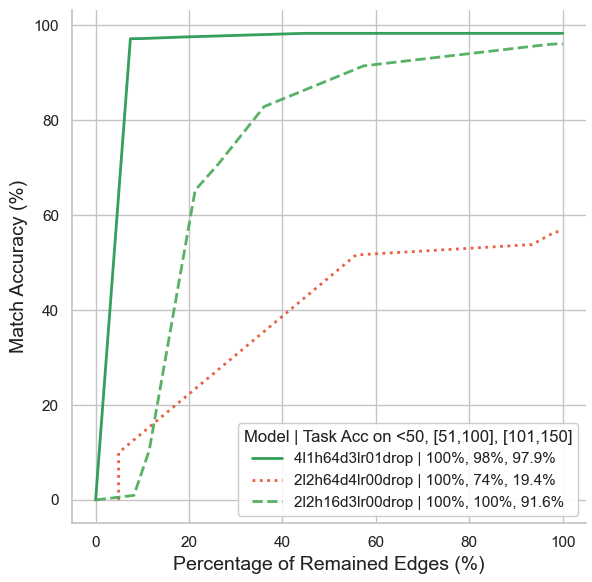}
        \caption{ Binary Majority Interleave}
    \end{subfigure}
    \begin{subfigure}[c]{0.18\textwidth}
        \centering
        \includegraphics[width=\textwidth]{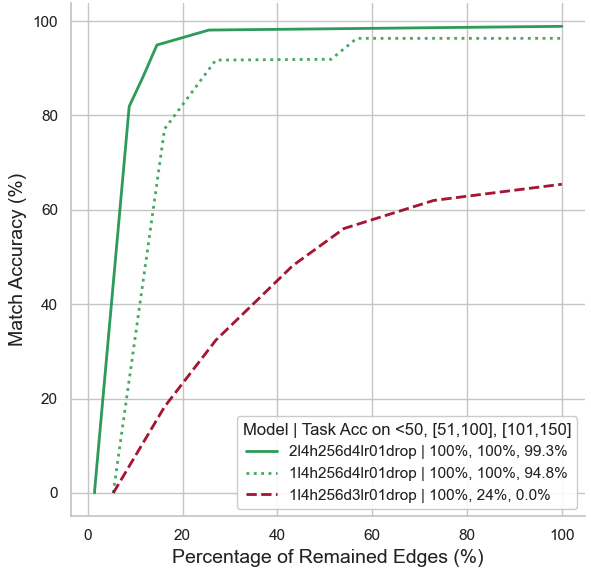}
        \caption{Count}
    \end{subfigure}
    \begin{subfigure}[c]{0.18\textwidth}
        \centering
        \includegraphics[width=\textwidth]{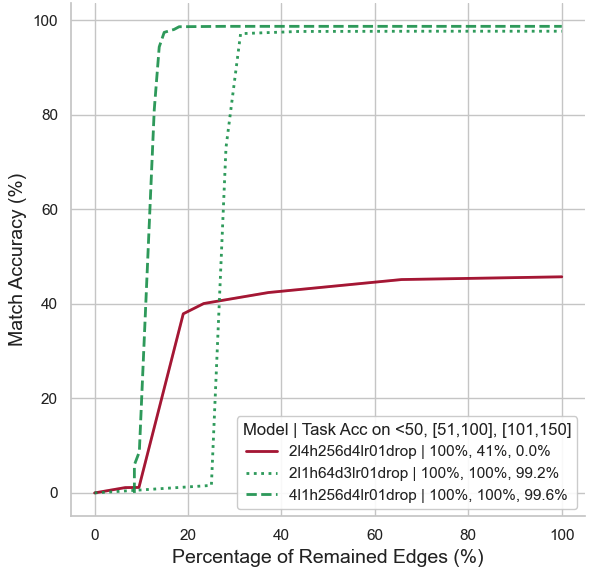}
        \caption{Unique Reverse}
    \end{subfigure}
    \begin{subfigure}[c]{0.18\textwidth}
        \centering
        \includegraphics[width=\textwidth]{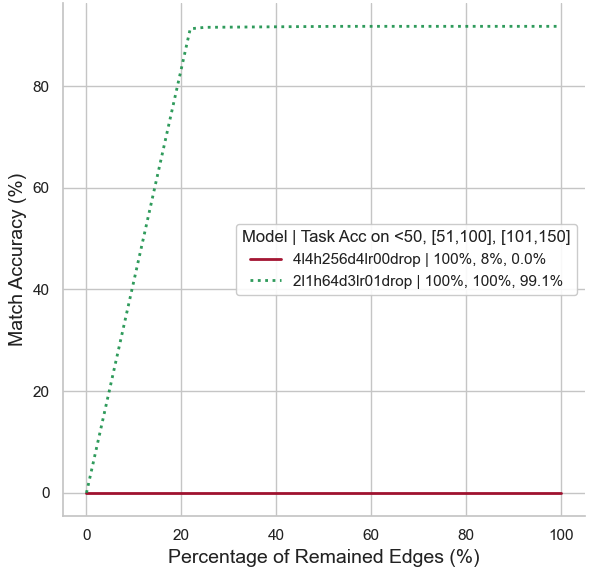}
        \caption{Unique Copy}
    \end{subfigure}
    \begin{subfigure}[c]{0.21\textwidth}
        \centering
        \includegraphics[width=\textwidth]{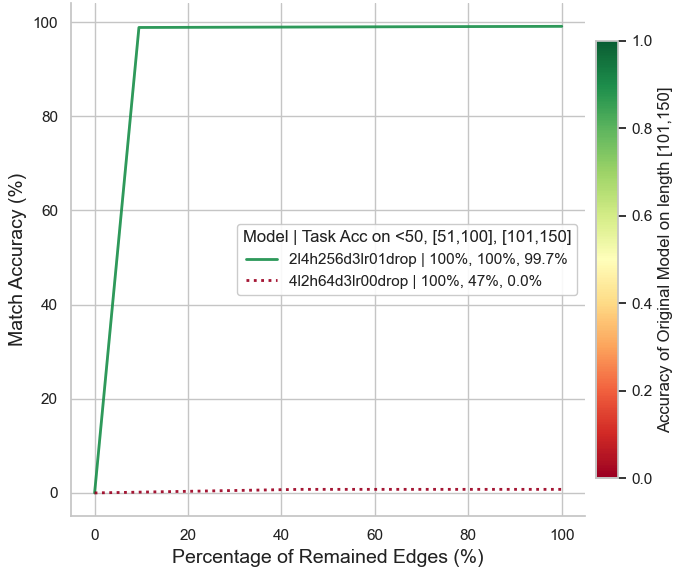}
        \caption{Unique Bigram Copy}
    \end{subfigure}
    \caption{\small Each subplot shows pareto frontiers for multiple models trained on the same task.  They are trained with different hyperparameters and architectures, resulting in different length-generalization and decompilability. Results are from stage 1 of causal pruning, where layer norm is linearized. In the tasks shown here, we see in LLNA does not hold on models that do not generalize, so the decompilability correlates strongly with length-generalization, and much stronger than with model architectures (compare models of the same layers and different layers in the figure). Moreover, when LLNA holds, there are a large proportion of edges can be pruned.}
    \label{fig:frontier-diff-archs}
\end{figure*}

\paragraph{Does architecture choice affect decompilability?}
We show pareto frontiers when layer norm is linearized in Figure \ref{fig:frontier-diff-archs}. For each task, we show 2-3 models different architectures. We can see that very different architectures can all be decompilable (e.g., two green lines in Figure \ref{fig:frontier-diff-archs}(a) correspond to 4 layer vs. 2 layer models), while similar architectures can exhibit different decompilability (e.g., two 2-layer models in Figure \ref{fig:frontier-diff-archs}(a)).

\begin{figure}
    \centering
    \begin{subfigure}{0.4\textwidth}
    \centering
    \includegraphics[width=\textwidth]{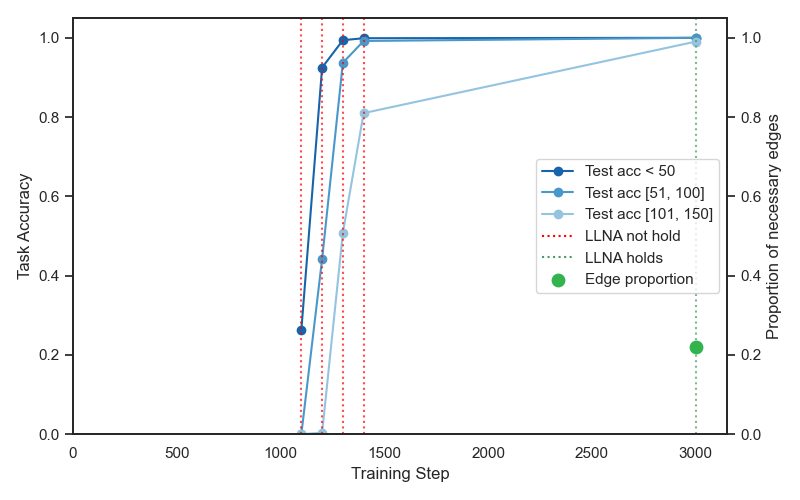}

    \end{subfigure}
    \begin{subfigure}{0.4\textwidth}
    \centering
    \includegraphics[width=\textwidth]{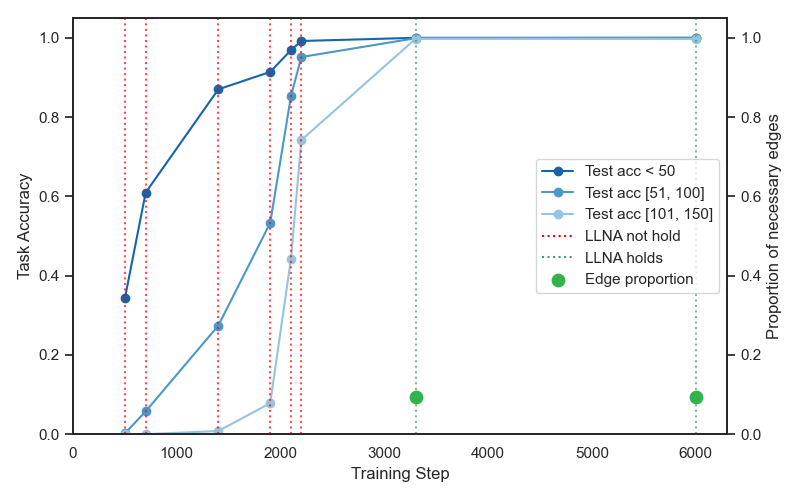}
    \end{subfigure}
    \caption{The checkpoints saved when training model for Unique Copy task (Left) and Unique Bigram Copy task (right). We show both task accuracy on 3 input ranges and decompilability. LLNA holds if the model can achieve $>$ 90\% match accuracy after linearizing Layer Norm. Green dot shows the minimum proportion of edges needed to achieve $>$ 90\% match accuracy.}
    \label{fig:decompilability-checkpoints}
\end{figure}

\paragraph{How does decompilability evolve throughout training?}
We save checkpoints during training for two models: the length-generalizing one for Unique Copy (2l1h64d3lr01drop), and for Unique Bigram Copy (2l4h256d3lr01drop). We would like to see how the algorithm evolves during training. But we find that most checkpoints before the final one cannot be decompiled, as shown in Figure \ref{fig:decompilability-checkpoints}. We can only conclude that for these two models, layer norm plays an non-trivial role until the model finally reaches perfect performance on training length ($\leq 50$) (recall that we evaluate every 3k step and stop training once accuracy is 100\% on training length).

\begin{figure}
    \centering
    \includegraphics[width=0.5\linewidth]{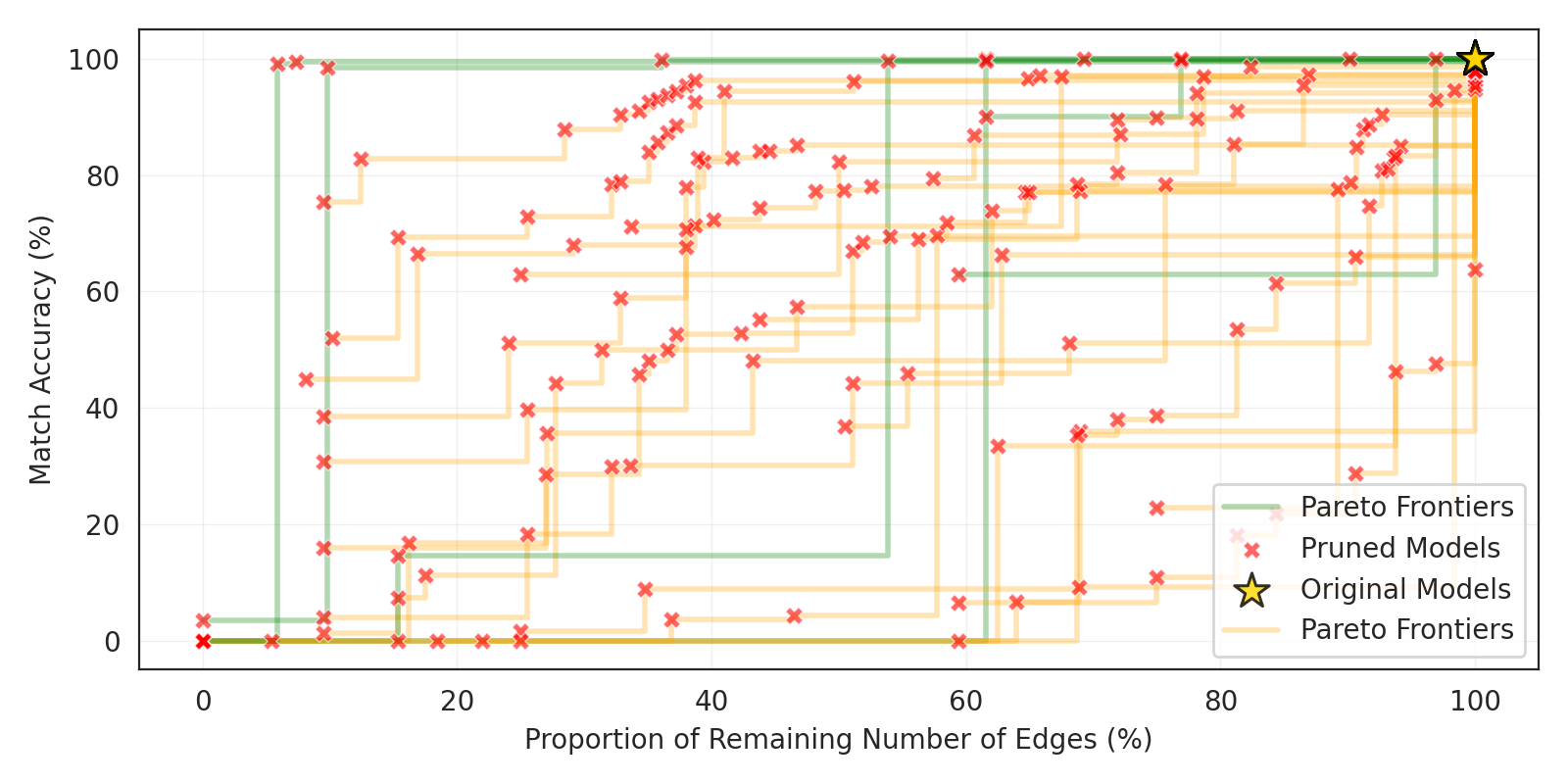}
    \caption{We show the pareto frontiers in stage 0 of causal pruning (i.e., component-level circuits with original layer norm enabled, see \ref{ap:coef-search} ) for all models where LLNA does not hold. The green lines correspond to the 6 models that length-generalize but requires original layer norm (see Table \ref{tab:task-acc-formal-lang}). We can see that most models cannot be pruned to be very sparse (consider how many edges they need in order to make match accuracy $\geq 90$.)}
    \label{fig:llna-not-hold-froniers}
\end{figure}

\paragraph{Can we get sparse graphs if original layer norm is allowed?}
In Figure \ref{fig:frontier-when-LLNA} we see decompilable models can usually be pruned to a sparse form, resulting simple program. One might wonder if this is true in general for all models trained on these synthetic tasks. Thus in Figure \ref{fig:llna-not-hold-froniers} we allow models to use original layer norm and do pruning over component-level circuit. We can see that for those models where LLNA does not hold, their inner computational graph is usually relatively dense.

\begin{figure}
    \centering
    \includegraphics[width=0.85\linewidth]{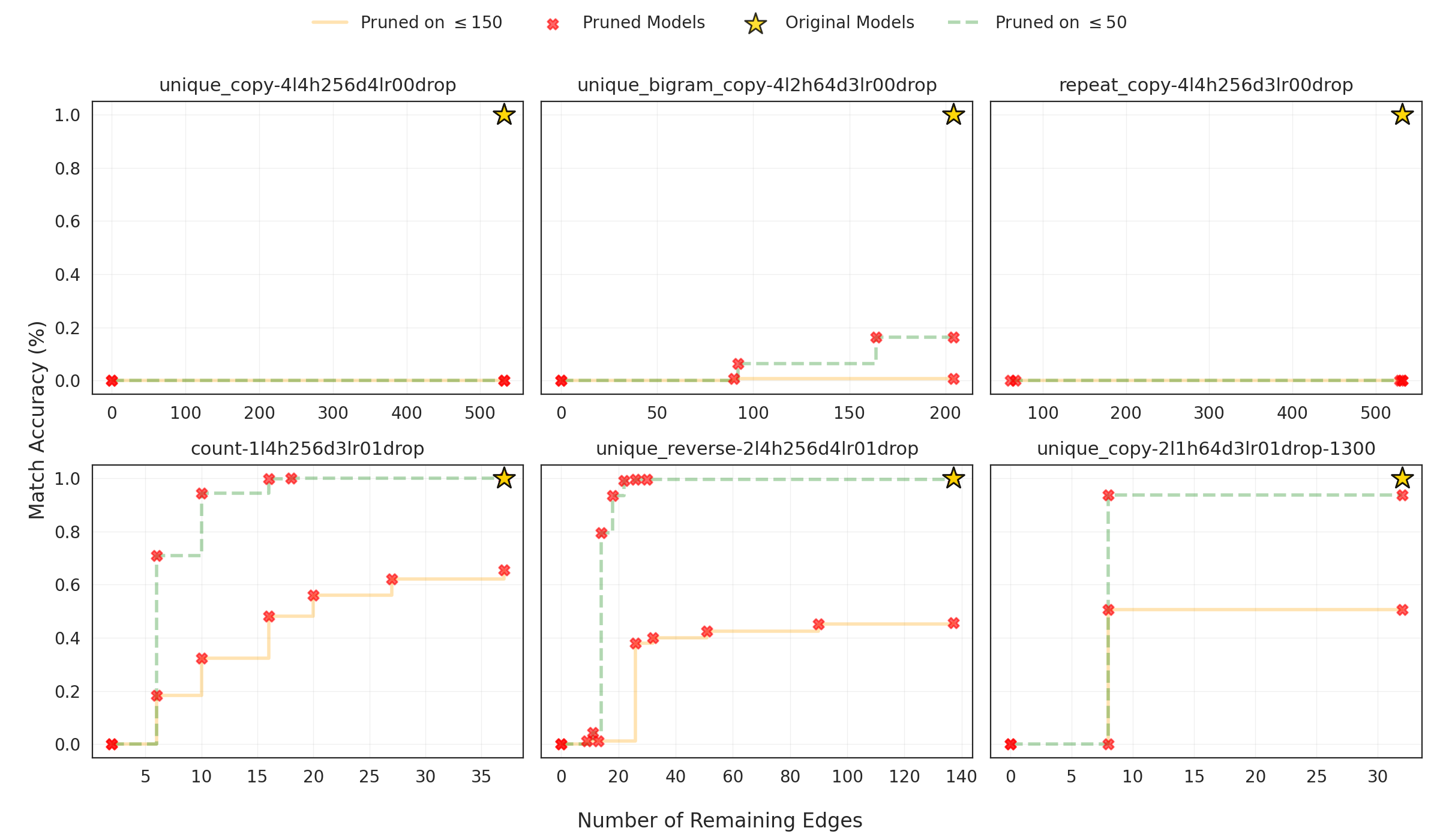}
    \caption{Pareto frontiers in stage 1 of causal pruning (i.e., component-level circuits with linearized layer norm.) We find all models that achieves nearly perfect task accuracy in training length (specifically, all are $\geq 99.4$), and very low accuracy in test length of $[101-150]$ (only one is 50.8, others are 0.0). For each model we show the frontier when pruning is done and match accuracy is measured on $\leq 150$ and $\leq 50$. The purpose is to see whether we can decompile the models by restricting output matching only on inputs where model's performance is perfect. We can see positive results on bottom row, but still negative results on the top row.}
    \label{fig:frontiers-range50}
\end{figure}

\paragraph{What if we do decompilation on training length?}
\label{ap:range50-discussion}
As mentioned before, we run decompilation using all input lengths, i.e., $\leq 150$, in order to match models' behavior on both in- and out-of-distribution data. One might wonder what would happen if we remove the all the settings about length-generalization, so models are both trained and decompiled on the same length (e.g, $\leq 50$)? The reason that LLNA does not hold and circuits are not sparse might be because the pruned models are matching the original models' messy and unpredictable behavior on OOD data, they might be decompilable if we only care about behavior on training data. To investigate this, we select all models whose performance is nearly perfect on training length, but bad on OOD testing length. There are 6 this kind of models. The lowest task accuracy on training length is 99.4\%, and on test length of $[101-150]$, 5 models' accuracy is 0.0\% accuracy and one model's is 50.8\%.

From the result we can see that although for 3 of them, focusing on lengths where models perform perfectly would enable decompilation, but for the other 3, it does not help at all. Linearizing layer norm would still result in nearly 0 match accuracy. For these models, their circuits that produce perfect in-distribution prediction are still complex. 

Since we can decompile some of them by restricting length to $\leq 50$, one might wonder if we can get interpretable programs and thus know why they fail on longer length. The answer is no, the layer norm starts to play an important role at longer length. We can probably get interpretable programs but the output of the programs do not match the models' output on longer length, thus cannot be used to explain model behavior in that scenario.

\paragraph{Does training with different random seeds affect decompilability?}\label{appE:random-seeds}

\begin{table}[ht]
\centering
\small
\begin{tabularx}{\textwidth}{l|c|c|c|c|c|c}
\toprule
\textbf{Task} & \textbf{Architecture} & \thead{Successfully \\ Decompiled} & \thead{Length-\\Generalized} & \thead{Unique\\Graphs} & \thead{Unique\\Programs} & \thead{Avg. Corr.}\\
\hline \hline
Most Frequent & 1l4h256d3lr01drop & 10 & 10 & 7 & 1 &  \\
Binary Majority & 1l1h16d3lr01drop & 10 & 10 & 6 & 1 & 1.00 \\
Binary Majority Interleave & 2l2h16d3lr00drop & 2 & 0 & 2 & 2 &  \\
Count & 1l4h256d4lr01drop & 1 & 1 & 1 & 1 &  \\
Sort & 1l1h256d3lr01drop & 10 & 10 & 1 & 2 & 0.96 \\
Unique Bigram Copy & 2l4h256d3lr01drop & 7 & 10 & 7 & 4 &  \\
Unique Copy & 2l1h64d3lr01drop & 7 & 7 & 3 & 3 & 0.94 \\
Unique Reverse & 2l1h64d3lr01drop & 9 & 8 & 5 & 4 & 0.67 \\
\bottomrule
\end{tabularx}
\caption{Similarity of programs extracted from models trained with different random seeds. For each length-generalizable algorithmic task, we train 10 models with the same architecture and hyperparameters but different random seeds. \{A\}l\{B\}h\{C\}d\{D\}lr\{E\}drop denotes a model with A layers, B heads, hidden dimension C, learning rate $10^{-D}$, and dropout rate either $0.1$ or $0$. Out of the 10 trained models, we report the number that were successfully decompiled, i.e., those that satisfy LLNA and were successfully pruned while retaining program-to-model match accuracy above 0.9. We also report the number of models that length-generalized, defined as achieving accuracy above 0.9 on inputs of lengths [101, 150].For each decompiled model, we extract its pruned computation graph and the program, and report the number of unique pruned graphs and programs. For each pair of models with the same computation graph, we compute the Pearson correlation between the corresponding $A$ and $b$ tensors in their \texttt{select} and \texttt{project} operations, as well as the biases in the final softmax, averaging across all such operations and model pairs. Correlations are not reported for tasks with no pairs of decompiled models sharing the same pruned computation graph. The Most Frequent task is also excluded, as it does not contain tensors in \texttt{select} or \texttt{project}.}
\label{tab:diff-random-seeds}
\end{table}

For each length-generalizable algorithmic task, we train 10 models with the same architecture and hyperparameters but different random seeds. As shown in Table~\ref{tab:diff-random-seeds}, the choice of random seed affects the model’s length-generalization performance and, consequently, its decompilability. However, we observe that, for decompilable models, our method tends to recover similar programs.
Interestingly, in 4 out of 8 tasks, our method discovers fewer unique programs than unique pruned computation graphs. This indicates that the method can sometimes recover the same program for two models even when their circuits after the pruning step differ. This is possible because the extracted programs are agnostic to the specific locations in the model where a mechanism is implemented (e.g., particular layers or heads), and because the additional primitive simplification steps in our pipeline can further increase similarity between programs.

\paragraph{How stable are the programs resulting from different pruning runs?}\label{appE:pruning-runs}
For each model, we run the pruning algorithm multiple times with different sparsity penalties to obtain Pareto frontiers of pruning runs (e.g., Figure~\ref{fig:frontier-when-LLNA}). Since there is a tradeoff between faithfulness to the original model and program length, different pruning runs are expected to yield different programs. A natural question, however, is whether these programs remain similar across runs.
Empirically, the extracted programs are quite stable. For all 21 models in our evaluation set (excluding Tomita-1, which has an empty computation graph), we consider all pruning runs that achieve more than 90\% match accuracy. Pruning is structurally consistent: for 17/21 models, all selected pruning runs produce a computation graph that is either a subgraph or a supergraph of the computation graph selected for the paper. In many cases, pruning recovers exactly the same computation graph regardless of hyperparameters: for 10/21 models, all runs resulted in the same programs as those shown in the paper.

\paragraph{Ablation of pipeline stages and statistics on the use of human-designed primitives}
We look at the gains of each stage of the pipeline to match accuracy and task accuracy. After linearizing layer norm and the first stage of pruning, the average match accuracy across all decompilable models in our evaluation set is 93\%; after splitting MLPs and path-based pruning, it remains 93\%; after pruning QK products, it increases to 95\%; and after replacing operations with primitives, it remains at 95\% (95\% task accuracy). None of the pipeline stages significantly reduce faithfulness.

We also report statistics on the use of human-designed primitives. Among the programs provided in Appendices~\ref{app:decompiled-algorithmic} and~\ref{app:decompiled-formal-language}, 34\% of all \texttt{select} operations, 30\% of \texttt{project} operations, and 19\% of MLP operations are replaced with primitives. Additionally, 88\% of all MLP operations have a single path as input, indicating that they were successfully split.

\paragraph{Remark on FlipFlop}
We remark that $\{0,1,2\}^*02^*$, equivalent to FlipFlop and challenging for transformers to generalize perfectly on \citep{liu2023exposing, born2025}, shows strong behavior in Table~\ref{tab:task-acc-formal-lang}, potentially because we did not specifically design a hard test set testing out-of-distribution generalization (beyond length generalization) as done by \citet{liu2023exposing}. Nonetheless, the model cannot be decompiled, aligning with theoretical predictions about the inability of C-RASP to represent the language \cite{huang2025formal}.

\section{Details of Causal Pruning \& Linearizing Layer Norm}
\label{ap:causal-pruning-details}

Here, we describe the step 2.1 in Sec. \ref{sec:methodology} in detail.
Conceptually, it corresponds to pruning variables and lines from the D-RASP program -- equivalently, to computation graph edges between transformer components. 
To make it scalable, we draw on ideas from the  circuit discovery literature.
For the convenience of readers familiar with that literature, and to enable comparison with it, we describe the pruning not on the level of D-RASP, but rather in the language of computation graphs representing transformer calculations.

As we mentioned in main paper, because the size of D-RASP program grows exponentially with the original transformer's size, we do not unfold the transformer into full program and then do pruning. Instead, we perform multi-stage pruning, with different definition of computational graph in each stage. Simply speaking, an edge in early stage correspond to many edges in later stage, so in early stage we prune the model in a coarse-grained but efficient way, resulting in much smaller graphs for later stage. Therefore, multi-stage pruning avoids memory explosion and improves efficiency.

\subsection{Stage 1: Pruning Components}
\label{ap:pruning-stage-1}
\paragraph{Component-level Circuit}

In this stage, we define the same computational graph as circuit discovering research \citep{conmy2023towards, li2024optimal, bhaskar2024finding}. As we described in Sec. \ref{sec:model-transformers}, due to residual connection, the output of each transformer layer is accumulated across layers. When a component (e.g. an attention head) takes the residual stream $\vy_i^{(l)}$ as input, all previous components whose output adds to the residual stream are connected to this component, as this component depends on their outputs. Because this dependency is directed, we refer the component that receives other components' output as \emph{receiving} vertex and refer the other side of the connection as \emph{sending} vertex. Importantly, we treat the query, key and value of an attention head as three separate receiving vertices, while we treat the attention head as one sending node. MLP layer are both receiving vertices and sending vertices; token and position embedding are only sending nodes, final unembedding layer is only a receiving vertex. 

If the connection between attention head $(l_i, h_j)$ and $(l_k, h_l)$'s value is pruned, it means all $\langle i, p\rangle \in \mathcal{V}$ where $p$ includes $\langle ... (l_i, h_j), (l_k, h_l) ... \rangle$ are pruned together. In this way, we first perform coarse-grained but efficient pruning.

\paragraph{Pruning Algorithm}
\label{ap:pruning-algo}
We apply Uniform Gradient Sampling (UGS) and Optimal Ablation from \citet{li2024optimal}.
We describe the overall idea of the algorithm and emphasize the parts in our implementation that differ from the original paper. For details of the method, please refer to \citet{li2024optimal}.

The method learns masks for edges. The masks are trained for an objective balancing two pressures: on one hand, (i) the masks are encouraged to be zero to completely mask out connections by replacing them with learnable vectors, known as Optimal Ablation; on the other hand, (ii) the KL divergence between the output distribution of pruned model and of original model is minimized during pruning, thus encouraging important connections to be kept. The method optimizes all masks simultaneously using gradient signal, and is thus very scalable. 

 Specifically, if an edge is pruned, the sending vertex (component $A$)'s output $A(x) \in \mathbb{R}^{N\times d}$ ($x$ is the input) is replaced by a learned vector $a^* \in \mathbb{R}^d$ when computing the input of the receiving vertex. In other words, the replacement is specific to the receiving vertex; other receiving nodes can still receive the original $A(x)$. Because we replace with a constant, the sending vertex's output does not depend on the input, thus carrying no information. As in \citet{li2024optimal}, the constant $a^*$ is specific to each vertex $A$, instead of each edge --- this means that, if multiple out-edges from $A$ are pruned, the same constant is used across different receiving vertices. Importantly, unlike \citet{li2024optimal}, we do not learn different $a^*$ for different sequence positions, we learn a single $a^*$ that is applied to the whole sequence uniformly, so that each row of $A(x) \in \mathbb{R}^{N\times d}$ is replaced by the same $a^*$. The loss is defined as the mean Kullback-Leibler (KL) divergence between the output distribution of the original model and that of the pruned model. Therefore, these learned optimal ablations are optimized to help the pruned model faithfully preserve the behavior of the original model.
 
Masks are learned simultaneously with the ablation vectors. For each edge, we learn a parameter $\tilde{\theta}$, such that $\theta = \text{sigmoid}(\tilde{\theta})$ indicates the probability of the edge remaining in the computation graph (as opposed to being replaced with the learned ablation vector). The gradient of $\theta$ comes from two terms in the loss function. One is (A) the number of edges in the circuit, represented by its continuous relaxation $\sum \theta$, where the sum runs over all edges\footnote{Unlike \cite{li2024optimal}, we do not include a pressure towards vertex sparsity because, in some preliminary experiments, we do not find it important.}. This term is differentiable. The other term is (B) the KL divergence compared to the original model, as used for learning optimal ablations. The gradient of this term with respect to $\theta$ is estimated as follows: the ablation coefficient $\alpha$ is sampled from uniform distribution $\text{Unif}(0, 1)$; %
this coefficient is used to mix the original $A(x)$ and the learned replacement $a^*$ as a convex combination, which is fed into the receiving vertex. The gradient with respect to $\alpha$ is thus differentiable. The average gradient with respect to $\alpha$ is used as the estimated gradient for $\theta$. Importantly, we do not always do sampling for $\alpha$. When not sampling from uniform distribution, %
 $\alpha$ is either 0 or 1. Whether or not to do sampling is controlled by a sampling frequency, which is low when $\theta$ is close to 0 or 1 (high confidence). Optimal ablation is learned at the same time, but only on examples where $\alpha=0$ (not sampled). 

In summary, %
the training objective we used is as follows:
\begin{equation}
\label{eq:pruning-objective}
    \min_{\boldsymbol{a}^*, \boldsymbol{\tilde{\theta}}} \left[ \mathbb{E}_{X \sim \mathcal{D}, \boldsymbol{\alpha} \sim  \mathcal{P}_{\tilde{\boldsymbol{\theta}}}} D_{KL}(\mathcal{M}(X) || \mathcal{M}^{\boldsymbol{a}^*, \boldsymbol{\alpha}}(X)) + \lambda \mathcal{R}(\boldsymbol{\tilde{\theta}}) \right] 
\end{equation}
where $\boldsymbol{a}^*$ represents the collection of $a^*$ for all edges, $\boldsymbol{\tilde{\theta}}$ denotes the collection of $\tilde{\theta}$ for all edges, and similarly $\boldsymbol{\alpha}$ denotes all $\alpha$. $\mathcal{P}_{\tilde{\boldsymbol{\theta}}}$ denotes the distribution of ablation coefficients specified by $\tilde{\boldsymbol{\theta}}$. $X$ and $\mathcal{D}$ denote input data and its distribution. $\mathcal{M}$ and $\mathcal{M}^{\boldsymbol{a}^*, \boldsymbol{\alpha}}$ represent the original model's and the pruned model's output distribution respectively. $\mathcal{R}(\boldsymbol{\tilde{\theta}})$ is the sparsity regularizer representing the expected number of edges. $\lambda$ is a hyperparameter used to balance between the two terms in the training objective.

\paragraph{Notes on Layer Normalization}
Our translation assumes LLNA; we thus replace Layer Normalization (LN) with a linear operation.
Since the gradient signal is available during pruning, we also learn scalar constants to replace the variance term in LN in the same way as we learn optimal ablation. 
For each receiving vertex, a scalar constant is learned.

\paragraph{Hyperparameters}

We use batch size of 120 with only 10 unique inputs (each copied 12 times), which is same as \cite{li2024optimal}. We use learning rate of 0.002, 0.1 and 0.1 for optimal ablation $\boldsymbol{a}^*$, masking parameter $\boldsymbol{\tilde{\theta}}$, and scalar constants in linearized LN respectively. We clip the gradient for $\boldsymbol{\tilde{\theta}}$ to maximum norm of 5. We stop training once no $\tilde{\theta}$ is between -1 and 1 (no ambivalent masks) for 1000 steps, or the maximum step of 5000 is reached.

\subsection{Stage 2: Pruning Paths}
\label{ap:pruning-stage-2}
\paragraph{Path-level Circuit}
We first convert the resulting graph from step 1 to a new graph where the sending vertices are paths (multiple edges from previous stage connected). 
Suppose the \textbf{Value} of attention head $(l,h)$ receives inputs from remaining vertices $p_1, ..., p_n$, and the same head $(l,h)$ is sending to downstream vertices $c_1, ..., c_m$. For each downstream vertices, we replace its parent, $(l, h)$, with multiple new parents $p_1 \oplus (l, h), .., p_n \oplus (l,h)$. We do this from earlier layer to later layers. The process implements the definition \ref{path-variable-defin}. The new graph is then used as the initial base graph for further pruning in this step.

For example, suppose the following graph is remained after stage 1. There is a list for each vertex, showing the upstream vertices it connects to (it receives inputs from them). 
\begin{lstlisting}
layer 0:
  head 0:
    q: [pos]
    k: [pos]
    v: [token]
  mlp: [token, $\mathrm{head}_{00}$]

layer 1:
  head 0:
    q: [$\mathrm{mlp}_0$]
    k: [$\mathrm{mlp}_0$]
    v: [$\mathrm{head}_{00}$, $\mathrm{mlp}_0$]
  mlp: []

unembedding:
  [$\mathrm{head}_{10}$]
\end{lstlisting}

At the beginning of stage 2, it is converted to the following new graph:
\begin{lstlisting}
layer 0:
  head 0:
    q: [pos]
    k: [pos]
    v: [token]
  mlp: [token, $\mathrm{head}_{00}$-token]

layer 1:
  head 0:
    q: [$\mathrm{mlp}_0$]
    k: [$\mathrm{mlp}_0$]
    v: [$\mathrm{head}_{00}$-token, $\mathrm{mlp}_0$]
  mlp: []

unembedding:
  [$\mathrm{head}_{10}$-$\mathrm{head}_{00}$-token, $\mathrm{head}_{10}$-$\mathrm{mlp}_0$]
\end{lstlisting}

\paragraph{Pruning Details}
In this step, new optimal ablations are learned for each sending vertex. We need to capture the output of the new vertices, which are ``paths''. In one forward pass of the model, we run forward pass of each attention head multiple times, iterating over each of the inputs received by its \textbf{Value}. Because of LLNA, running an attention head with its \textit{Value} taking as input the sum of multiple terms is equivalent to the sum of individual outputs when running it with its \textit{Value} taking each of one these terms as input (if bias terms are not considered). 
\begin{align*}
   & \textbf{AttnHead}(\text{key}= \sum_i p_i, \text{query}=\sum_j p^\prime_j, \text{value}=\sum_k p^{\prime\prime}_k) = \\
    = & \sum_k\textbf{AttnHead}(\text{key}=\sum_i p_i, \text{query}=\sum_j p^\prime_j, \text{value}= p^{\prime\prime}_k)
\end{align*}
We simply capture each of these individual output of the attention head, which is the output of each path.

\paragraph{Learned Bias Term for Receiving Vertices}
Previously, we mentioned that an optimal ablation vector is learned for each sending vertex. We also learn constants (bias term) for each receiving vertex. 
We note that this should not be viewed as an ablation because these constants are not used to replace real activations. For each receiving vertex, its bias term is always added to its input, so it represents the sum of all previous constant terms. There are many constant terms excluded from the graph, e.g., the bias term of the linear layer computing values in attention heads, it is set to zero when running attention heads over single input source because we do not want to add the bias term repeatedly. Therefore, each receiving vertex learns a bias term to account for all bias terms in previous layer normalization ($\beta$), in linear layers computing value, in linear layers computing the attention output, and most importantly, optimal ablation vectors of vertices pruned in step 1.

\paragraph{MLP Splitting}
As mentioned in Sec. \ref{sec:methodology}, we try to split each MLP layer into a sum of new MLPs, so that each new MLP has only a single input. Equivalently, in terms of D-RASP programs, we try to replace multi-input per-position operations with multiple single-input operations.

If MLP splitting is enabled, when converting the resulting graph from step 1, we do the same for MLP layers as we do for attention head values. More specifically, suppose an MLP layer $mlp_k$ receives values from vertices $p_1, ..., p_n$, and it sends values to vertices $c_1, ..., c_m$. For each $c_i$, we replace its connection to $mlp_k$, with multiple new connections $p_1 \oplus mlp_k, .., p_n \oplus mlp_k$. We do this from earlier layer to later layers, together with attention heads. %

When replacing with new MLP layers, we keep the output dimension of the MLPs the same as the original model dimension.

For example, suppose the following graph is remained after stage 1 (Same as the previous one). 
\begin{lstlisting}
layer 0:
  head 0:
    q: [pos]
    k: [pos]
    v: [token]
  mlp: [token, $\mathrm{head}_{00}$]

layer 1:
  head 0:
    q: [$\mathrm{mlp}_0$]
    k: [$\mathrm{mlp}_0$]
    v: [$\mathrm{head}_{00}$, $\mathrm{mlp}_0$]
  mlp: []

unembedding:
  [$\mathrm{head}_{10}$]
\end{lstlisting}

At the beginning of stage 2, if MLP splitting is enabled, it is converted to the following new graph:
\begin{lstlisting}
layer 0:
  head 0:
    q: [pos]
    k: [pos]
    v: [token]
  mlp: [token, $\mathrm{head}_{00}$-token]

layer 1:
  head 0:
    q: [$\mathrm{mlp}_0$-token, $\mathrm{mlp}_0$-$\mathrm{head}_{00}$-token]
    k: [$\mathrm{mlp}_0$-token, $\mathrm{mlp}_0$-$\mathrm{head}_{00}$-token]
    v: [$\mathrm{head}_{00}$-token, $\mathrm{mlp}_0$-$\mathrm{head}_{00}$-token]
  mlp: []

unembedding:
  [$\mathrm{head}_{10}$-$\mathrm{head}_{00}$-token, $\mathrm{head}_{10}$-$\mathrm{mlp}_0$-$\mathrm{head}_{00}$-token]
\end{lstlisting}

Empirically we found MLP splitting is usually possible with little effect on match accuracy.

\paragraph{Hyperparameters}

We use the same hyperparameters as the previous stage, except for the following: we stop training once no $\tilde{\theta}$ is between -1 and 1 (no ambivalent masks) for 500 steps, or the maximum step of 5000 is reached.

\subsection{Stage 3: Pruning QK products}
\label{ap:pruning-stage-3}

\paragraph{Product-level Attention Computation}
This stage focuses on the Query and Key vertices of attention heads -- equivalently, on the \texttt{select} operations. We first convert the resulting graph from stage 2 into a new graph. For each head, we take the Cartesian product of vertices sending to its Query and its Key. The results are pairs of paths, which sending to a new vertex of the head, denoted as \emph{QK vertex}. In previous stages, the attention logits had been represented as the product of sums, for example:
\begin{equation}
    \mathrm{AttnLogit} = \left(\mE\vtok + \mP\vpos\right)^\top Q^\top K \left(\mE\vtok + \mP\vpos\right),
\end{equation}
while in this stage attention logits are rewritten as sum of products:
\begin{equation}
    \mathrm{AttnLogit} = \vtok^T\mE^T Q^\top K \mE\vtok + \vtok^T\mE^T Q^\top K \mP\vpos + \vpos^T\mP^T Q^\top K \mE\vtok + \vpos^T\mP^T Q^\top K \mP\vpos
\end{equation}
Each pair of paths represents a product. The Query vertex and its edges are thus deleted, while the Key vertex remains. The incoming edges of Key vertex represent the unary (key-only) \texttt{select} operations; this accounts for the fact that a constant term in Query might play a role for attention computation, i.e., some keys are always attended more regardless of the query.
This conversion process follows the Equation \ref{eq:eq-behind-select}.

Use our running example from last stage, and suppose MLP splitting is enabled. Suppose after the last stage's pruning, the following graph is remained:
\begin{lstlisting}
layer 0:
  head 0:
    q: [pos]
    k: [pos]
    v: [token]
  mlp: [token, $\mathrm{head}_{00}$-token]

layer 1:
  head 0:
    q: [$\mathrm{mlp}_0$-token, $\mathrm{mlp}_0$-$\mathrm{head}_{00}$-token]
    k: [$\mathrm{mlp}_0$-token, $\mathrm{mlp}_0$-$\mathrm{head}_{00}$-token]
    v: [$\mathrm{mlp}_0$-$\mathrm{head}_{00}$-token]
  mlp: []

unembedding:
  [$\mathrm{head}_{10}$-$\mathrm{mlp}_0$-$\mathrm{head}_{00}$-token]
\end{lstlisting}

the beginning of this stage, the graph is converted to the following:
\begin{lstlisting}
layer 0:
  head 0:
    qk: [(pos, pos)]
    k: [pos]
    v: [token]
  mlp: [token, $\mathrm{head}_{00}$-token]

layer 1:
  head 0:
    qk: [($\mathrm{mlp}_0$-token, $\mathrm{mlp}_0$-token), ($\mathrm{mlp}_0$-token, $\mathrm{mlp}_0$-$\mathrm{head}_{00}$-token), 
        ($\mathrm{mlp}_0$-$\mathrm{head}_{00}$-token, $\mathrm{mlp}_0$-token), ($\mathrm{mlp}_0$-$\mathrm{head}_{00}$-token, $\mathrm{mlp}_0$-$\mathrm{head}_{00}$-token)]
    k: [$\mathrm{mlp}_0$-token, $\mathrm{mlp}_0$-$\mathrm{head}_{00}$-token]
    v: [$\mathrm{mlp}_0$-$\mathrm{head}_{00}$-token]
  mlp: []

unembedding:
  [$\mathrm{head}_{10}$-$\mathrm{mlp}_0$-$\mathrm{head}_{00}$-token]
\end{lstlisting}

Pruning in this stage is only applied to terms in QK vertices and K vertices, others remain unchanged. In other words, we are pruning away unimportant \texttt{select} operations.

\paragraph{Pruning Details}
In forward passes, we compute individual products of each pair, which are then multiplied with their corresponding mask (recall the $\theta$ in previous section \ref{ap:pruning-algo}) before summed together. The same mask is broadcasted to all token position pairs, i.e., the mask varies for different edges, but stay the same for different token positions in the sequence, just like previous stages. However, unlike previous stages, the optimal ablation is zero here. Because ablating a product means an element of the pair is replaced by a constant. When this element is key, any constant is equivalent to zero because of the softmax later. When this element is query, this product becomes essentially a key-only term, which is the reason for why we introduce key-only terms. For each key-only term, a bias term is learned to act as the query. The product of the learned bias and the key is also multiplied with mask and added to the attention logits.

\[
\mathrm{AttnLogit}(i,s)
=\underbrace{\sum_{r=1}^{t} \theta_r Q_r(i)^\top K_r(s)}_{\text{binary (key+query) terms}}
+\underbrace{\sum_{u=1}^{w} \theta_u \tilde{b}_u^\top K_{t+u}(s)}_{\text{unary (key-only) terms}}
+\text{ mask}(i,s),
\]
where we use $K$ to represent the real activations (output of paths) to distinguish from the $k$ used in \ref{eq:eq-behind-select}, and likewise for $Q$, $\theta$ is the mask, and $\tilde{b}$ is the learned bias term for each key-only term. Other symbols retain the same meaning as in equation \ref{eq:eq-behind-select}. The logits are then fed into softmax function to compute the attention weights.
Note that like learning optimal ablation, the bias terms $\title{b}$ are only trained on samples where the mask $\theta$ is 1.

\subsection{Automatic Pruning Coefficients Searching}
\label{ap:coef-search}
During pruning, a coefficient $\lambda$ is used to balance between  faithfulness and sparsity, as described in Eq. \ref{eq:pruning-objective}. In this section, we describe our solution for automatically adjusting the coefficients in different stages, in order to cover as many different sparsity levels as we can and obtain Pareto frontiers. 

Instead of using the same $\lambda$ through out all stages, we allow different coefficients for each stages, we denote them as $\lambda_1, \lambda_2, \lambda_3$. We begin by establishing a baseline accuracy, which is the accuracy of a model without any attention and MLP layers (a bigram model). For the stage 1, we partition the accuracy range between baseline and perfect performance into 10 evenly spaced intervals. The algorithm adaptively searches for $\lambda_1$ values based on previous runs, which are (accuracy, coefficient) points. When both higher and lower accuracy points exist for a target interval, we select the geometric mean of their coefficients; when data exists on only one side, we explore new range by scaling the known coefficient by a factor of 5. This process continues until each accuracy interval contains at least one data point or a maximum number of experiments is reached (almost always latter is the case). For each experiments, we stop after stage 1 is finished.

Once this initial curve is established, we identify the Pareto-efficient runs (i.e., no other run has both fewer edges and higher match accuracy). From this frontier, we select a handful of representative runs that are evenly spaced along the curve. If even the highest accuracy achieved in stage 1 runs is below 0.7 (even with negative $\lambda_1$), we stop and do not continue to later stages.

The stage 2 searching is then built directly upon these selected runs. We resume the pruning of the selected stage 1 runs. For each stage 1 run, we run stage 2 with different $\lambda_2$ to explore the accuracy range near its original accuracy obtained in stage 1. Then similarly we select efficient and representative runs whose resulting (accuracy, number of edges) points are on Pareto frontier. We then resume these runs with different $\lambda_3$ to do stage 3 searching, in a similar way as before. Hence, the most promising (Pareto-optimal) models from stage 1 are refined in the next, systematically tracing out the full trade-off between sparsity and faithfulness across coefficient configurations.

In addition, to see the Pareto frontiers when original layer normalization is enabled, we also run similar coefficient searching before stage 1, which is referred as stage 0. In stage 0 and stage 1, the computational graphs are defined in the same way, the only difference is whether to linearize layer normalization. Importantly, results from stage 0 are not used as the base checkpoints to continue pruning. The coefficient selection in stage 1 is also not informed by stage 0. In other words, stage 0 is separate from other stages, and it is also not a necessary part of the pipeline; its purpose is just to evaluate the Pareto frontier in the presence of full layer normalization.

\section{Details of Explaining Elementwise Operations}\label{app:explaining-mlps}

This section describes the details of Step 2.2 in Sec. \ref{sec:methodology}. After the 3-stage pruning, we obtain a sparse (if possible) computational graph for the model we are interested in, corresponding to a pruned D-RASP program. We then try to automatically replace the components in the graph (equivalently, the elementwise operations and the $A, b$ matrices in the D-RASP program) with pre-defined primitives from a library, allowing us to understand their functions and generate informative programs.

\subsection{Tracing activation variables through paths}
\label{ap:tracing-variables}

This section supplements Sec. \ref{sec:methodology} in the main paper. We list real activations, activation variables, etc starting from bottom layer. Initial residual stream is:
\begin{equation}
    \mE \vtok + \mP \vpos
\end{equation}
$\vtok$ as a graph vertex (see above examples of graph) sends real activation value $\mE \vtok \in \mathbb{R}^{d \times N}$ to downstream vertices, where $\vtok \in \mathbb{R}^{|\Sigma| \times N}$ is the activation variable. Similarly for pos.

In head 0 at layer 0, head output is
\begin{align}
    & \mV_{0,0}(\mE \vtok + \mP \vpos) a_{00}^T \\
    = & \mV_{0,0}\mE \vtok a_{00}^T + \mV_{0,0}\mP \vpos a_{00}^T
\end{align}
where $a_{00} \in \mathbb{R}^{N \times N}$ is the attention weights of this head (recall $a_{i,s}$ defined in Sec. \ref{sec:D-RASP-intro}, $a$ is softmax over the sum of selectors $\alpha$). In general, it is query sequence length by key sequence length. Recall that $\mV$ includes linearized layer norm, Value matrix, and Output matrix of an attention head. 
Again, $\mathrm{head}_{00}$-\texttt{token} as a graph vertex sends real activation $\mV_{0,0}\mE \vtok a_{00}^T$, and $\vtok a_{00}^T$ is the activation variable $v_{\langle tok, \langle 1,h\rangle \rangle}$ (defined in Sec. \ref{sec:methodology} as the output of \texttt{aggregate}).

Therefore, similarly $\mathrm{head}_{10}$-$\mathrm{head}_{00}$-\texttt{token} as a graph vertex sends real activation $\mV_{1,0}\mV_{0,0}\mE \vtok a_{00}^T a_{10}^T$, and $\vtok a_{00}^T a_{10}^T$ is the activation variable $v_{\langle tok, \langle 0,0\rangle, \langle 1,0\rangle \rangle}$

For MLP layers, as mentioned in Step 2.2 in Sec. \ref{sec:methodology}, we replace MLP with primitives. If the primitive is found, we can trace activation variables through the MLP. For example, $\mathrm{mlp}_0$-$\mathrm{head}_{00}$-token as a vertex sends the following activation:
\begin{align}
    & \texttt{mlp}(\mV_{00}\mE \vtok a^T_{00}) \\
    = & \mC_{\texttt{mlp}-\mathrm{head}_{00}-\texttt{token} }\texttt{element\_wise\_op}(\vtok a^T_{00}, \texttt{func=}f)
\end{align}
where $f$ is a known primitive, and $\texttt{element\_wise\_op}(\vtok a^T_{00}, \texttt{func=}f)$ is the activation variable
$v_{\langle v_{\langle tok, \langle 0,0\rangle \rangle} \rangle}$

One more example: if a pair ($\mathrm{head}_{10}$-$\mathrm{head}_{00}$-token, $\mathrm{mlp}_0$-$\mathrm{head}_{00}$-token ) is remained as input for the QK vertex of head 0 at layer 2, it means the following selector:

\begin{align}
& (\mQ_{2,0}\mV_{1,0}\mV_{0,0}\mE \vtok a_{00}^T a_{10}^T)^T
    (\mK_{2,0}\mC_{\texttt{mlp}-\mathrm{head}_{00}-\texttt{token} } \texttt{element\_wise\_op}( \vtok a^T_{00}, \texttt{func=}f)) \\
    = & \underbrace{a_{10}a_{00} \vtok^T }_{\texttt{q=}v_{\langle tok, \langle 0,0\rangle, \langle 1,0\rangle \rangle}}\underbrace{\mE^T\mV_{0,0}^T\mV_{1,0}^T\mQ_{2,0}^T\mK_{2,0}\mC_{\texttt{mlp}-\mathrm{head}_{00}-\texttt{token} } }_{\texttt{op=}\mA}\underbrace{\texttt{element\_wise\_op}(\vtok a^T_{00}, \texttt{func=}f))}_{\texttt{k=}v_{\langle v_{\langle tok, \langle 0,0\rangle \rangle} \rangle}}
\end{align}

In summary, activation variables can be captured by tracing from initial $\vtok$ or $\vpos$, and iteratively consider each unit in the path from bottom to top. If it is attention head, then multiply the attention weights. If it is MLP layer, evaluate the \texttt{element\_wise\_op} with the known primitive. If the MLP does not have a matched primitive, we lose track of the activation variables and use backup approach of inspecting downstream effect, which will be described later.

\subsection{Primitive Matching for Single-input MLP}

We first describe the primitive matching for per-position operations in the D-RASP program -- equivalently, for MLPs in the pruned transformer.

 In most cases, MLPs can be split into single-input MLPs (see Appendix \ref{ap:pruning-stage-2}) with little effect on match accuracy. In these cases, for each single-input MLP remaining in the pruned graph, we search among a list of predefined single-input primitives to replace it, starting from the first layer. For each MLP, the replacement is done as follows:
\begin{enumerate}   
    \item Collect output activations $v\in \mathbb{R}^{d \times N}$ (see method in Appendix \ref{ap:pruning-stage-2}) of the target MLP (which is also activation variables before finding a primitive), and activation variables $x\in \mathbb{R}^{d(x) \times N}$(see method in Appendix \ref{ap:tracing-variables}) of its inputs. Columns of $v$ is the result of per-column operation of $x$, i.e., $v(i)$ only depends on $x(i)$. We collect $N=20000$ such pairs after filtering out $v(i)$ whose gradient is zero. 
    Note that capturing $x$ requires successful primitive matching of previous MLP in the path, if the previous MLP does not have a matched primitive, we stop the primitive matching process for the current MLP as we do not know the input activation variables. So the current MLP does not have a matched primitive either. 
    \item Test primitives. For each primitive $f_j$ in the single input primitive list, do the following:
    \begin{enumerate}
        \item Obtain the transformed activation variables $w \in \mathbb{R}^{d(w) \times N}$ by applying $f_j$ to $x$ (i.e., $w(i) = \texttt{element\_wise\_op}(x(i), \texttt{func=}f_s)$). $d(w)$ depends on the output dimension of $f_j$.
        \item Obtain a linear mapping $\mC$ by solving the problem $\min_{\mC} \| v -\mC w||^2_F$, with solution $\mC=vw^{+}$, where $w^+$ is the pseudo-inverse of $w$.
        \item Evaluate the replaced MLP with match accuracy. In each forward pass, replace MLP output $v$ with $\mC\texttt{element\_wise\_op}(x, \texttt{func=}f_j)$ and compute match accuracy. 
    \end{enumerate}    
    \item Select the best primitive. Importantly, to simplify the generated program, we favor replacing MLP with linear operation, i.e., $f_{no-op}$. We first try $f_{no-op}$, if match accuracy is above a threshold then we match $f_{no-op}$ with the MLP and skip all other primitives. In our experiments we use 0.92 as the threshold. Otherwise, we test all primitives and match the one with the highest accuracy\footnote{we favor $f_{no-op}$ again by increasing its accuracy by 0.01 when comparing with others} with the MLP.  Meanwhile, save the $\mC$ corresponding to the selected primitive. Afterwards for downstream layers, $w$ computed from the selected primitive will be the activation variable for the MLP, and $\mC$ will be used to compose the \texttt{op=} matrix in \texttt{select} and \texttt{project}

\item    However, if all primitives fail to match original model's output --- that is, the match accuracy of all primitives is lower than a threshold (we use 0.9) --- we conclude that no primitive in our library matches the MLP. In this case, we store the variables and visualize them so that one can still interpret the MLP manually (Section~\ref{sec:mlp-backup}). One may also add new primitive once they get insights from manual inspection, so that the MLP is matched to a primitive.
\end{enumerate}

In our experiments, we search among the library below:
    \begin{mdframed}
    \label{list:single-inp-primitive}
    \textbf{Library of Single-input primitives}
    \begin{enumerate}
        \item $f_{no-op}(x):=x$
        \item $f_{sharpen}(x)_i:=\frac{x_i^n}{\sum_i x_i^n}$, where hyperparameter $n \in \{2, 3, 5\}$.
        \item $f_{harden}(x):=e_{\arg\max_j}x$, i.e., $n\rightarrow\infty$ version of $f_{sharpen}$
        \item $f_{01-balanced} := \bigl[\max(x_1-x_0, 0)^n, \max(x_0-x_1, 0)^n, 1-\max(x_1-x_0, 0)^n-\max(x_0-x_1, 0)^n\bigr] $, where hyperparameter $n\in \{0.5, 0.05, 0.01\}$. Only test this when input activation variable is $\vtok$ and the alphabet $\Sigma$ only contains ``0'' ``1'' besides special tokens.
        \item $f_{is-pure} := 
\bigl[ \mathbb{I}(x_1 > \tau), \cdots, \mathbb{I}(x_{d(x)} > \tau), \; 1 - \sum_{i=1}^{d} \mathbb{I}(x_i > \tau)\bigr]
$, hyperparameter $\tau \in \{0.95, 0.9, 0.85, 0.8, 0.75, 0.7\}$.
    \end{enumerate}
    where we abuse notation a bit, use $x$ as a vector in this list (so is the $x(i)$ outside of the list), $x_i$ is an entry of it, $x_a$ denotes the entry correspond to token ``a'', and similarly $x_b, x_0, x_i$.
\end{mdframed}

\subsection{Primitive Matching for Multi-input MLP}
For some models, we observe that if we split MLPs, the resulting model often do not make the same prediction as the original model, no matter how many edges are enabled. Therefore, we do not split MLP for these models. Our idea of replacing MLP with primitives can still work in this case, though the search space of possible primitives becomes much larger. 

We do the same as we do for single-input MLPs, with the following small changes: 
\begin{enumerate}
    \item There are multiple input activation variables, $x_1, \dots, x_s$, we collect them all.
    \item The tested $f_i$ operates over multiple inputs $w = \texttt{element\_wise\_op}(x_1, \dots, x_s, \texttt{func=}f_i)$, and is from a different list (see below).
\end{enumerate}

\begin{mdframed}
    \textbf{Library of Multiple-input primitives}
    \begin{enumerate}
        \item $f_{keep-i}(x^{(1)}, \dots, x^{(i)}, \dots, x^{(s)}) := x^{(i)}$, where hyperparameter $i\in \{1, \dots, s\}$, $x^{(i)}$ is a vector corresponding to the i-th operand. 
        \item $f_{cartesian}(x^{(1)}, \dots, x^{(s)})_{j_1, \dots, j_s} := \frac{\min\{ x^{(1)}_{j_1}, \dots, x^{(s)}_{j_s}\}}{\sum_{j_1, \dots, j_s} \min\{ x^{(1)}_{j_1}, \dots, x^{(s)}_{j_s}\}} $, where output $f_{cartesian}(x^{(1)}, \dots, x^{(s)}) \in \mathbb{R}^{d(x^{(1)}) \times \dots \times d(x^{(s)})}$, and $j_1, \dots, j_s$ are indices for each variable. The output is then flatten to a vector $\in \mathbb{R}^{d(x^{(1)}) \dots d(x^{(s)})}$.
    \end{enumerate}
\end{mdframed}

\subsection{Backup Approach (No Matching Primitive): Inspecting Downstream Effect}\label{sec:mlp-backup}

In case no primitive is found for the MLP, one can still interpret it manually by observing the downstream effects caused by the MLP receiving different inputs. There are two types of downstream effects: one is how MLPs affect the QK product and thus attention weights (or information flow between residual streams); the other is, given fixed attention weights, how MLPs affect model output. The former corresponds to MLPs in the paths ends with QK product in the final pruned graph (i.e., a \texttt{select} operator), the latter corresponds to MLPs in the paths ends with unembedding (i.e., a \texttt{project} operator). We discuss these two cases separately. 

\paragraph{Paths ending with unembedding (i.e., \texttt{project} operator)} Our method for collecting output effect is as follows: 

\begin{enumerate}
    \item Given a path containing unexplained MLP (i.e., MLP not replaced with a primitive from the library) and ending with unembedding in the pruned graph, we first identify the first unexplained MLP in the path (in earliest layer). Collect its input activation variable samples $v_{inp}$ and corresponding output of this MLP $v_{out}$. We aim to make $v_{out}$ interpretable by tracing downstream effect until output vocabulary space.
    \item 
    We absorb all matrices acting on the MLP output throughout the downstream path into the MLP output.
Formally, we iterate over downstream components afterwards on the path. For each attention head in the path, we update it by multiplying with OV matrix $v_{out} \leftarrow \mV v_{out}$. If it is an MLP layer, we update it by applying the MLP layer at a black-box function $v_{out} \leftarrow \texttt{mlp}(v_{out})$. If it is the final logit, we update it by multiplying with unembedding matrix $v_{out} \leftarrow \mU v_{out}$. So finally $v_{out}$ is projected to vocabulary space. Therefore, we obtain pairs of interpretable vectors, which are columns of $(v_{inp}, v_{out})$. 
\item In terms of the D-RASP program, we add a per-position operation mapping $v_{inp}$ to an output of dimension $d(v_{out}) = |\Sigma|$. We replace the original \texttt{project} operation by $\texttt{project}(v_{out}, I_{|\Sigma|\times|\Sigma|})$, that is, the \texttt{project} operation uses the identity matrix.
\item To aid interpretation, we subtract each column of $v_{out}$ with its second largest value, since this is equivalent under softmax. We emphasize the token promoted most. When using BCE loss, we do not do this.
\end{enumerate}

\paragraph{Example} For example, in the pruned model, if an unexplained MLP is in a path of the form:

\begin{center}
$\texttt{unembedding}: \mathrm{head}_{10}$-$\mathrm{mlp}$-$\mathrm{head}_{01}$-token
\end{center} 
then we inspect the following pairs: instances of its input activation variable $v_{\langle tok, \langle (0,1) \rangle \rangle}$, and their corresponding MLP output $v_{mlp}$ after multiplied with $\mV_{1, 0}$ and unembedding matrix $\mU$. Each pair is two vectors $\in \mathbb{R}^{|\Sigma|}$, one shows an aggregation of input tokens from the context, another shows which tokens would be promoted or suppressed if the MLP's output for such aggregation is attended or routed to the prediction by downstream heads.

For another example, the path can be: 
\begin{center}
$\texttt{unembedding}: \mathrm{head}_{20}$-$\mathrm{mlp^\prime}$-$\mathrm{head}_{10}$-$\mathrm{mlp}$-$\mathrm{head}_{01}$-token
\end{center}
Assuming $\mathrm{mlp}$ can't find a matched primitive, we cannot apply primitive matching to $\mathrm{mlp^\prime}$ as well because we cannot get the input activation variable. Our backup approach explain the two MLPs together, by treating the part $\mathrm{mlp^\prime}$-$\mathrm{head}_{10}$-$\mathrm{mlp}$ as a black-box and inspecting its input and output.

However, if there are attention layers between the first unexplained MLP to the last MLP on the path, and any of them produces non-one-hot attention weights, then the final transformed $v_{out}$ is not correct. Because when tracing through MLPs, we actually assume each representation vector is fully attended, instead of being mixed with others. 

In the last example, the final $v_{out}$ represents what output token would be promoted/suppressed when the output of $\mathrm{mlp}$ is fully routed to the prediction. Because the MLP output on a weighted sum of representations is not equal to the weighted sum of MLP output on the same representations, so we do not know the true effect of each individual $\mathrm{mlp}$ output vector when they are mixed with others by the $\mathrm{head}_{10}$.

Therefore, we rigorously identify the condition when the backup approach faithfully represents the downstream effect and when it does not. We raise a warning in our pipeline if such condition is not satisfied.

\paragraph{Paths ending with QK product} 
Similar to the previous approach, we do the following:
\begin{enumerate}
    \item Given a QK product whose query or key contains at least one unexplained MLP in the pruned graph, we first identify the first unexplained MLP in the path (in earliest layer) for both query and key. Collect its input activation variable samples $v_{inp}^{(q)}$ and corresponding output of this MLP $v_{out}^{(q)}$. Similarly for $v_{inp}^{(k)}$ and $v_{out}^{(k)}$.
    \item Similar to previous method, we iterate over downstream components afterwards on the path to get final $v_{out}^{(q)}$ and $v_{out}^{(k)}$.  When only one of query and key contains unexplained MLP, we apply the method in Appendix \ref{ap:tracing-variables} for the other to get these two variables easily.
    \item We do $(v_{out}^{(q)})^T \cdot \mQ^T_{l,h} \cdot \mK_{l,h} \cdot v_{out}^{(k)}$. Each entry in this matrix describes how each pair of columns in $v_{inp}^{(k)}$ and $v_{inp}^{(q)}$ is associated. So We make $v_{out}$ interpretable by tracing downstream effect on attention. In other words, we inspect what are the input variable pairs that will be associated in the QK product. The unexplained MLP plays a role in this association, and it is understood together with other components.
    \item In terms of the D-RASP program, we add a per-position operation mapping to this output, and replace the \texttt{select} operation by one where the $\mA$ matrix is the identity matrix.
    \item To aid interpretation, we center the products of the same query, since this is equivalent under softmax. Moreover, we apply K-means clustering to query's activation variables, and show what are the keys associated with each ``type'' (cluster) of query.
\end{enumerate}

\subsection{Remark on Logit Lens}\label{sec:logit-lens}
Our backup approach uses an idea somewhat similar to logit lens \cite{logitlens} or Direct Logit Attribution, but our method allows us to determine the \textbf{correct} way to transform the MLP output. For example, when explaining the MLP in this path: $\texttt{unembedding}: \mathrm{head}_{10}$-$\mathrm{mlp}$-$\mathrm{head}_{01}$-token,
 the activation should first be multiplied by OV matrix of head 0 at layer 1 before multiplying by the unembedding matrix, because this is how it contributes to the model prediction according to the pruned graph. This is a very important distinction from naively applying the unembedding matrix, which assumes that a components always contributes mainly via a direct connection to the output. Thus, for components which contribute to the output mainly by indirect connection (there are other components along the path between it and the final logit), such naive projecting to vocabulary space is reading the ``side-effect'' in the direct path and causing \textbf{illusory} interpretation. Unfortunately, many studies have done (and are still doing) this with justification only based on plausibility of the results. Moreover, many components contribute by forming the correct QK circuits (i.e., which tokens should be attended) instead of via OV circuits (what's the effect on output if attended). Our backup approach successfully accounts for this (i.e., paths ending with QK product).

\section{Replacing attention and unembedding matrices with primitives}\label{sec:selector-primitives}
\begin{table}[h!]
    \centering

        \begin{tabular}{l||c|c||c|c}
        \textbf{Primitive Name} &
        \begin{tabular}{@{}c@{}}
            \textbf{Matrix Definition} \\
            \textbf{($A = \{a_{ij}\}$)}
        \end{tabular} &
        \textbf{Matrix example} &
        \begin{tabular}{@{}c@{}}
            \textbf{Vector Definition} \\
            \textbf{($a = \{a_{i}\}$)}
        \end{tabular}&
        \textbf{Vector example}\\
        \hline \hline
        
        \begin{tabular}{@{}c@{}}
            Diagonal \\
            \texttt{\scriptsize op=(k==q)} \\
            \texttt{\scriptsize op=(inp==out)}
        \end{tabular}    & $a_{ij} = \delta_{i = j}$ & \includegraphics[width=2cm]{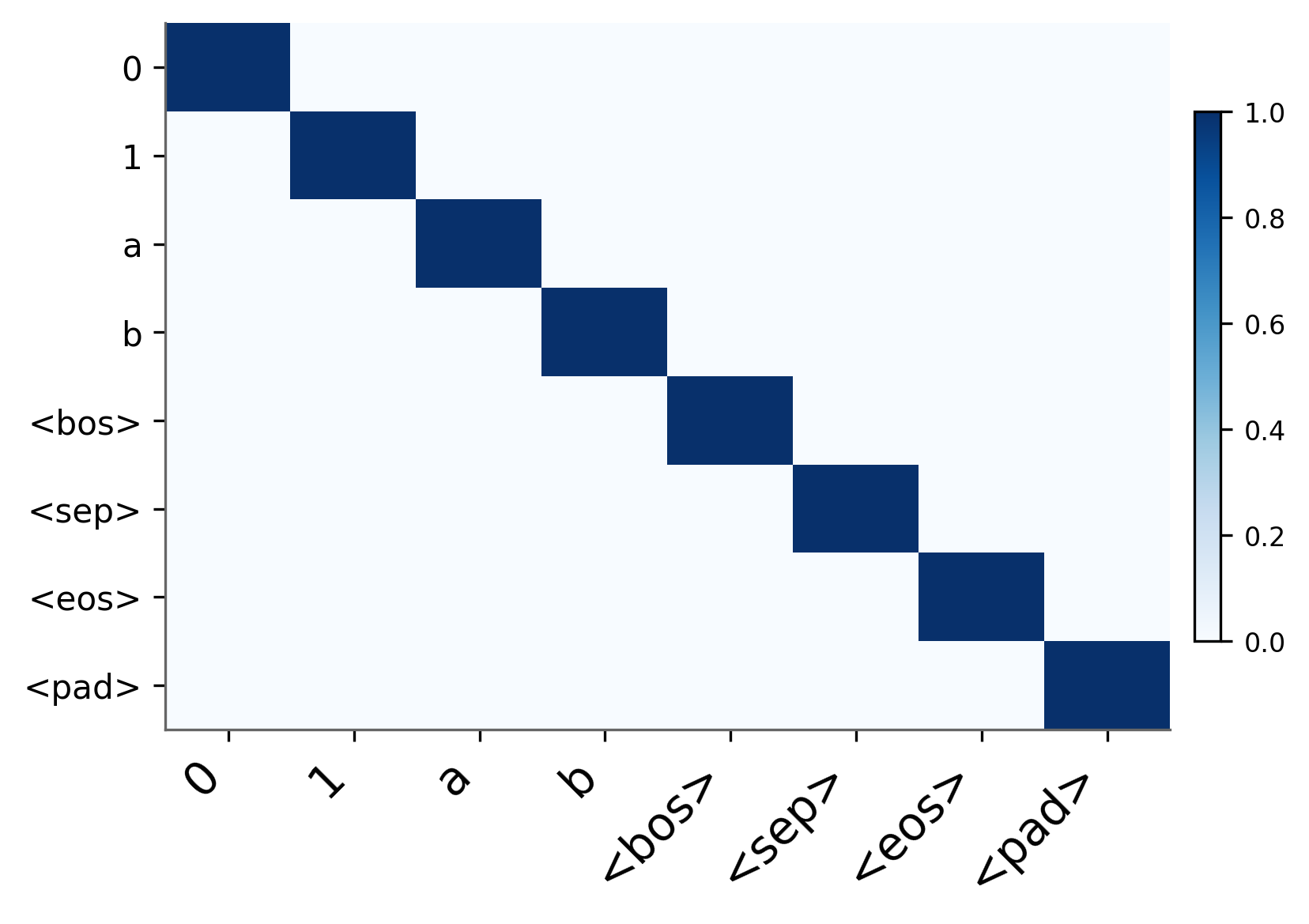} & --- & --- \\
        \begin{tabular}{@{}c@{}}
            $k$th Diagonal \\
            \texttt{\scriptsize op=(k==q-k)}
        \end{tabular}   & $a_{ij} = \delta_{i = j + k}$ & \includegraphics[width=2cm]{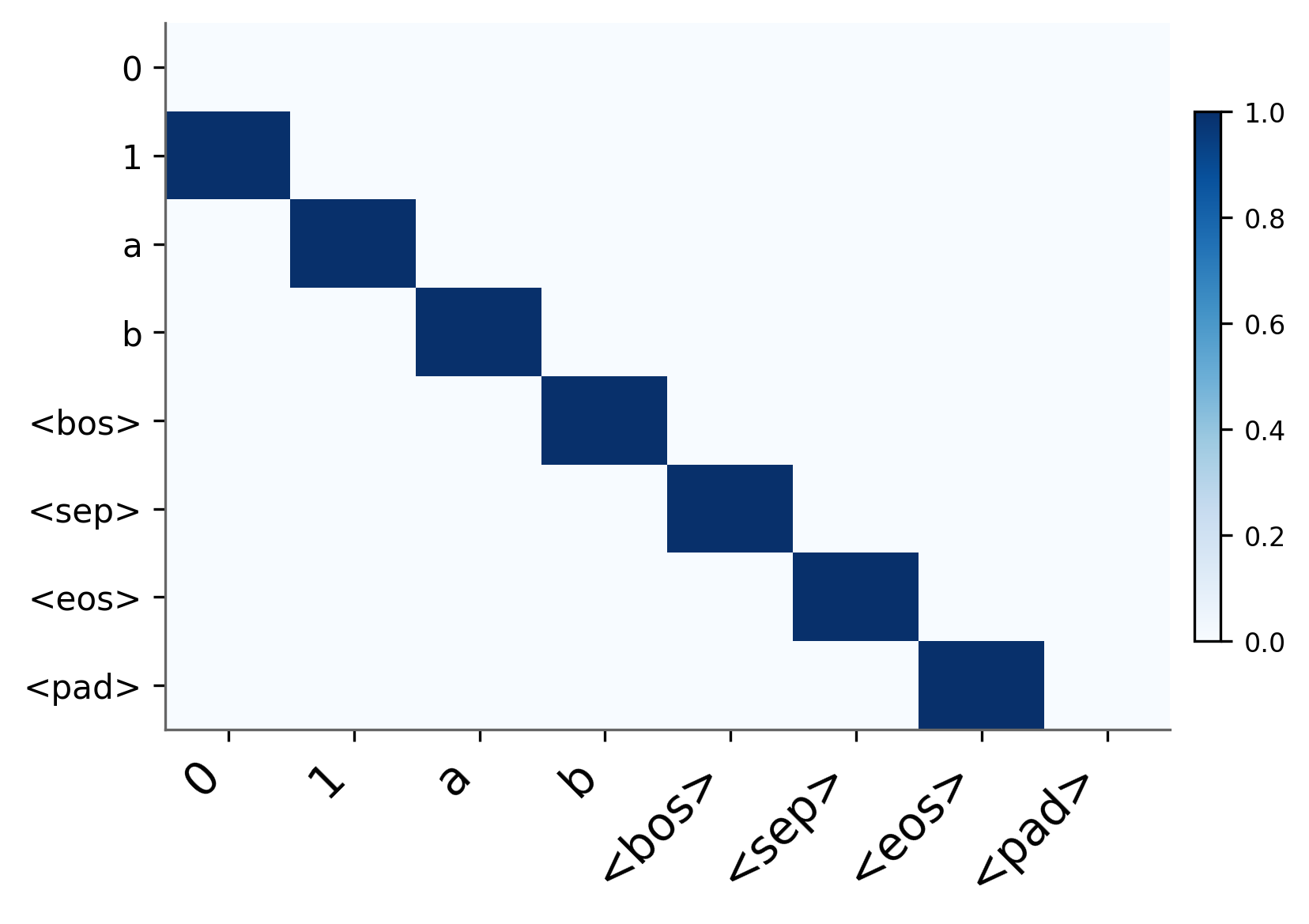}& --- & --- \\
        \begin{tabular}{@{}c@{}}
            Every $k$th \\
            \texttt{\scriptsize op=(k\%$k$==q\%$k$==0)}
        \end{tabular}      & $a_{ij} = \delta_{i \bmod k = 0 \text{ \& } j \bmod k = 0}$ & \includegraphics[width=2cm]{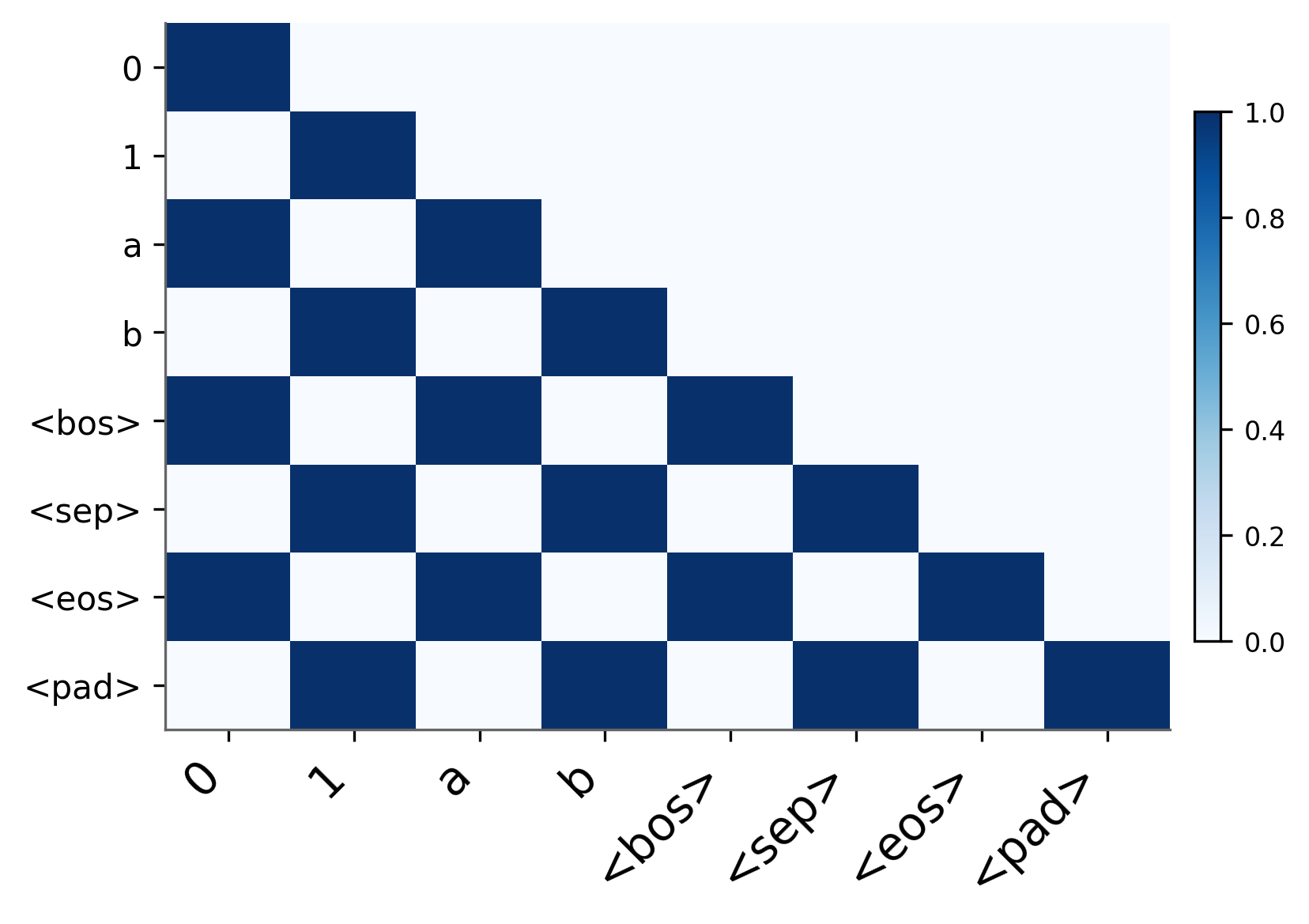}& --- & --- \\

        \begin{tabular}{@{}c@{}}
            to BOS/SEP/EOS \\
            \texttt{\scriptsize op=(k==BOS/SEP/EOS)} \\
            \texttt{\scriptsize op=(out==BOS/SEP/EOS)}
        \end{tabular}   & $a_{ij} = \delta_{j \text{ is BOS/SEP/EOS}}$ & \includegraphics[width=2cm]{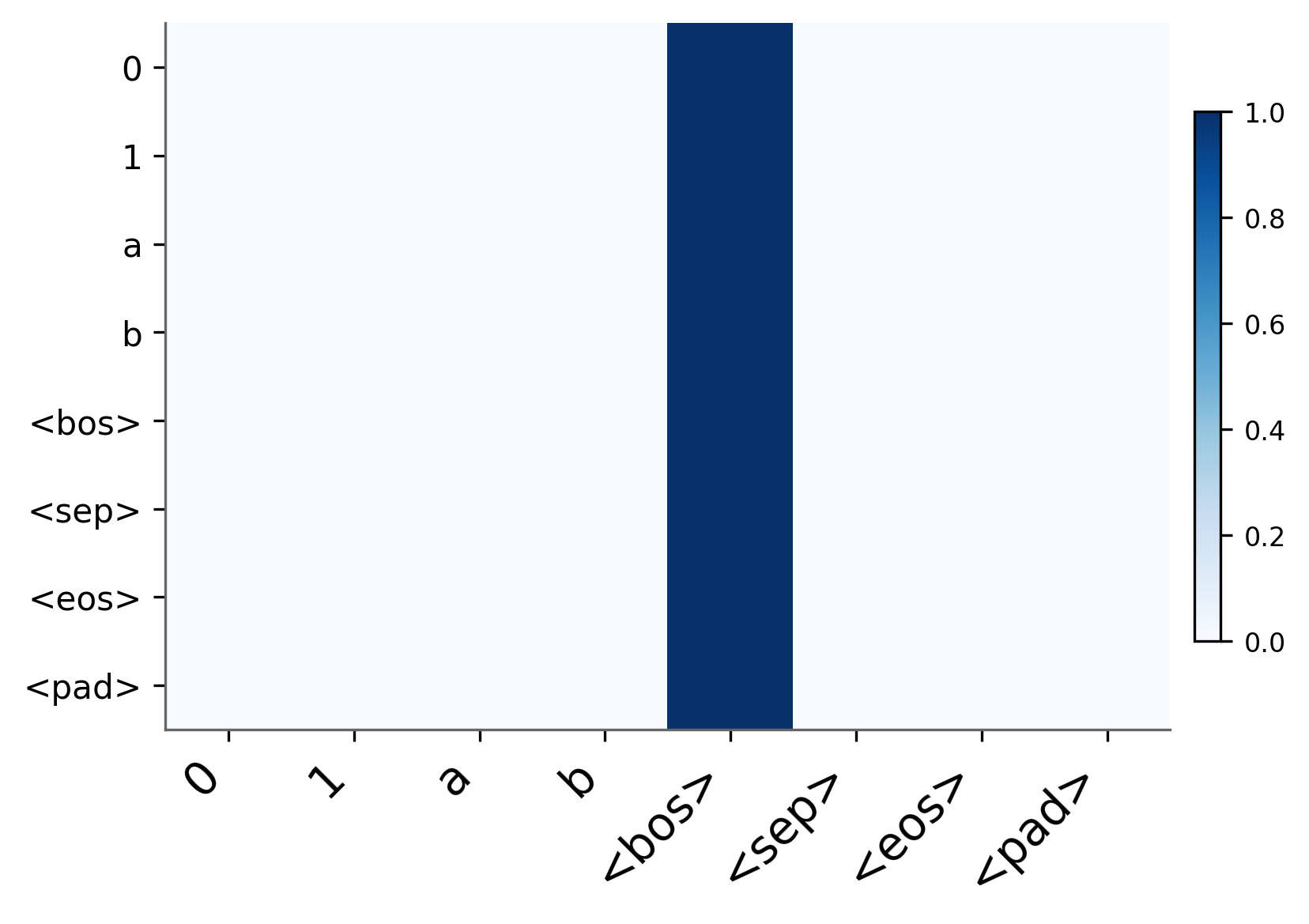} & $a_{i} = \delta_{i \text{ is BOS/SEP/EOS}}$ & \includegraphics[width=2cm]{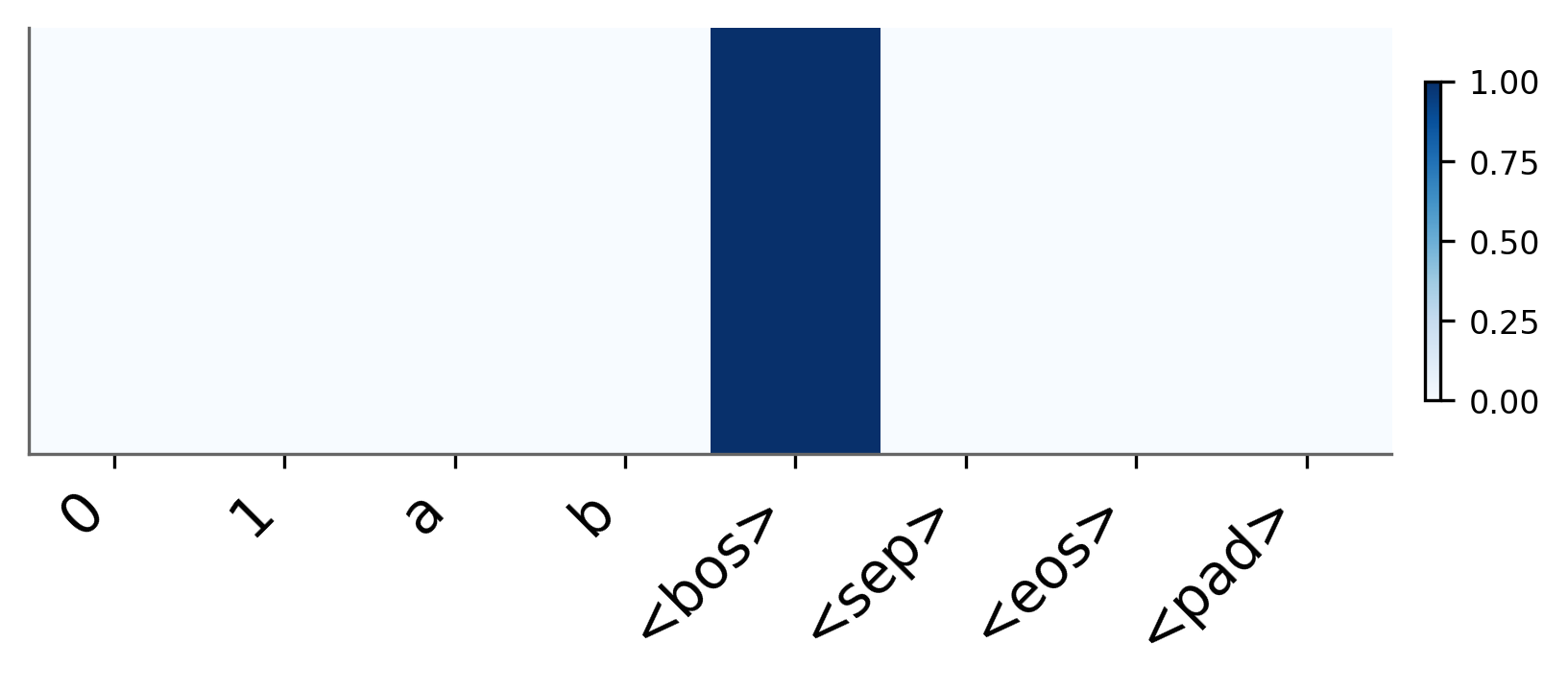} \\
        \begin{tabular}{@{}c@{}}
            Decreasing \\
            \texttt{\scriptsize op=(k is first)}
        \end{tabular}       & $a_{ij} = \frac{j}{m}$, \quad ($A \in \mathbb{R}^{n \times m}$) & \includegraphics[width=2cm]{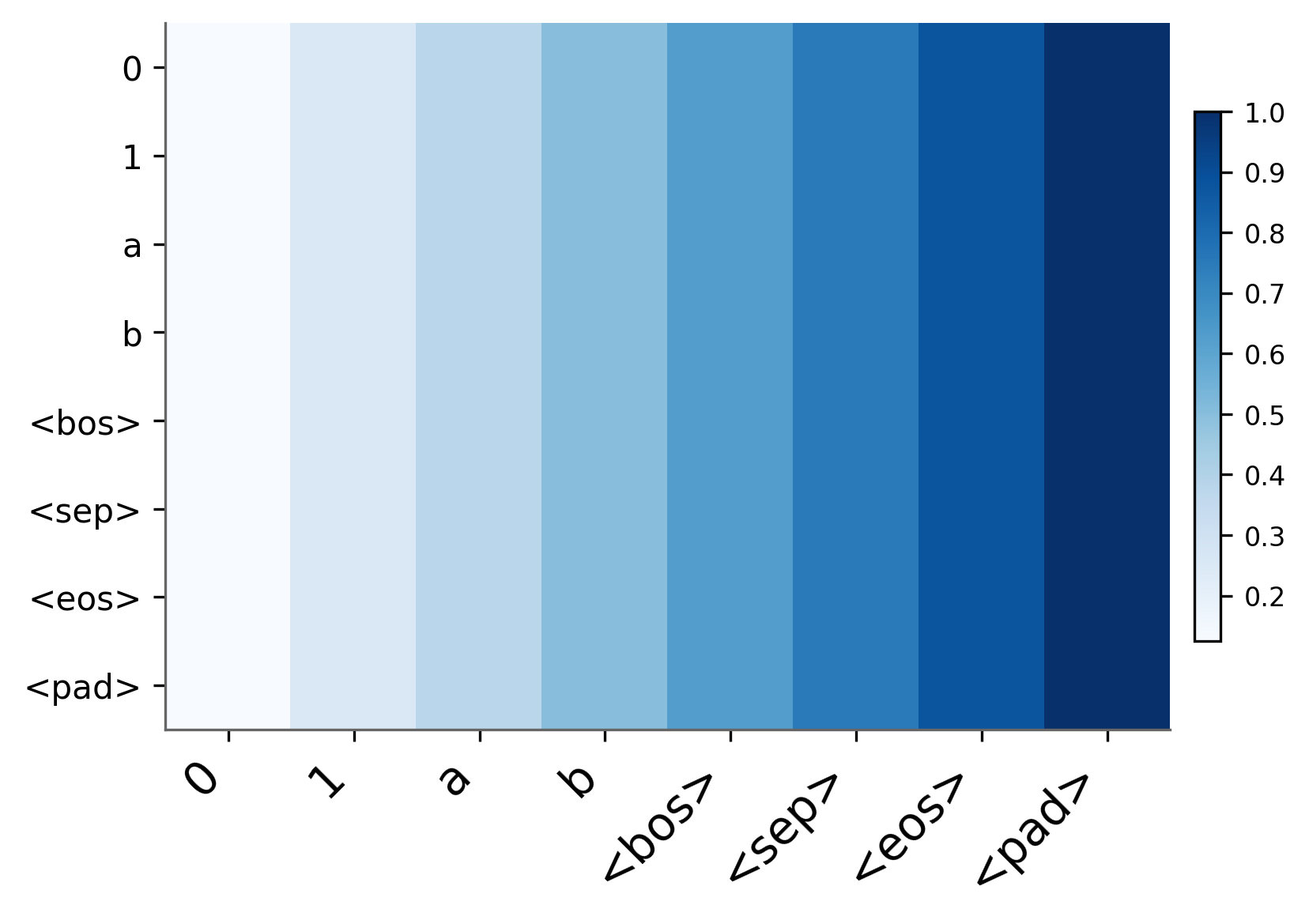} & $a_{i} = \frac{j}{n}$, \quad ($a \in \mathbb{R}^{n}$) & \includegraphics[width=2cm]{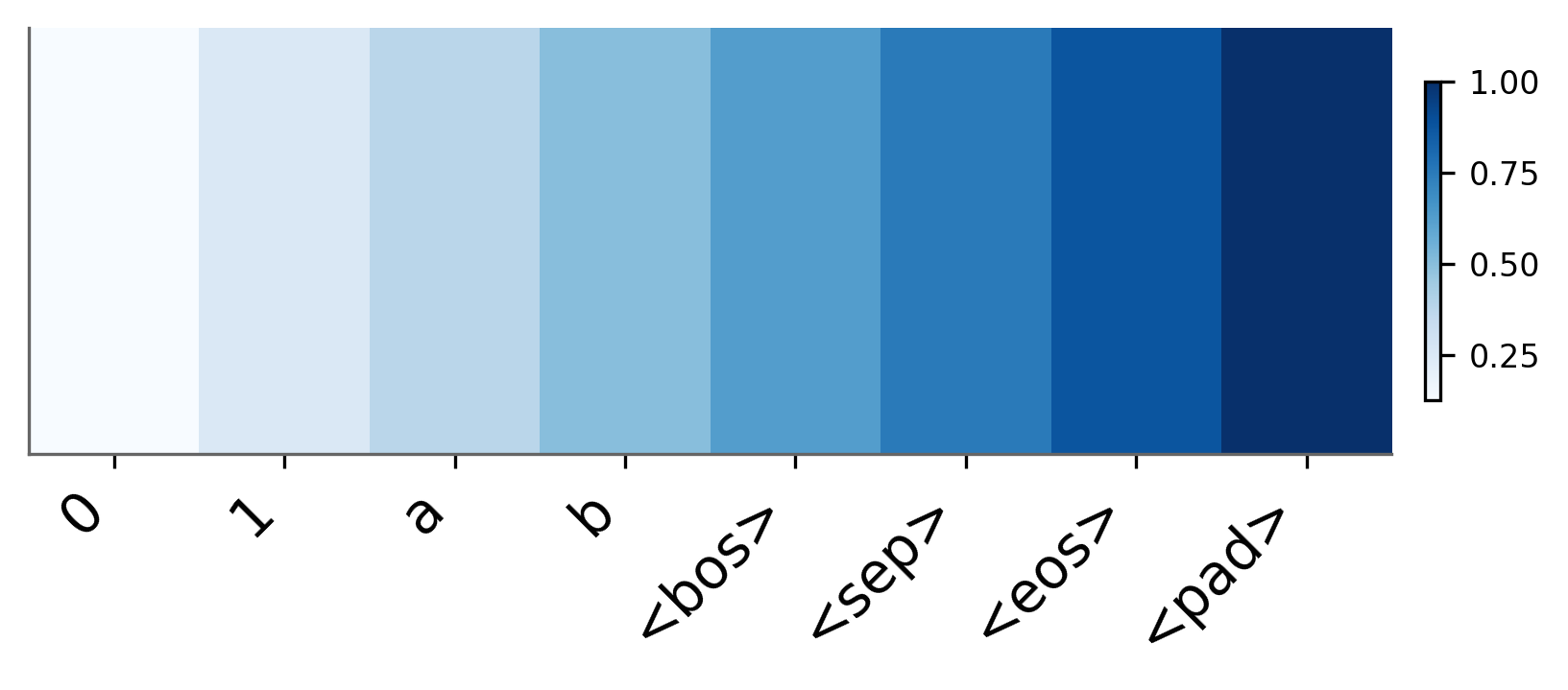}\\
        \begin{tabular}{@{}c@{}}
            Increasing \\
            \texttt{\scriptsize op=(k is last)}
        \end{tabular}       & $a_{ij} = \frac{m - j + 1}{m}$, \quad ($A \in \mathbb{R}^{n \times m}$) & \includegraphics[width=2cm]{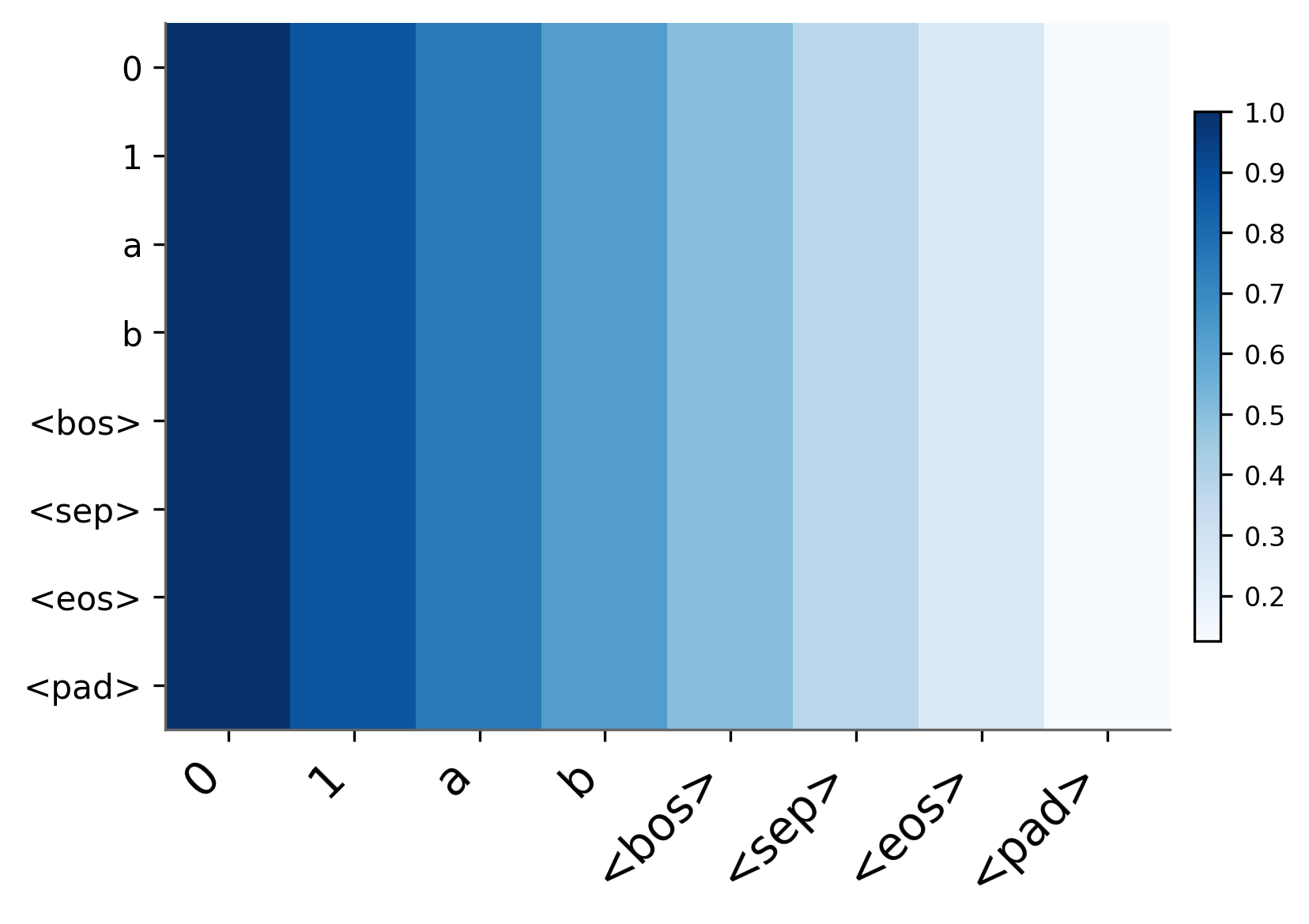} & $\frac{n - j + 1}{n}$, \quad ($a \in \mathbb{R}^{n}$) & \includegraphics[width=2cm]{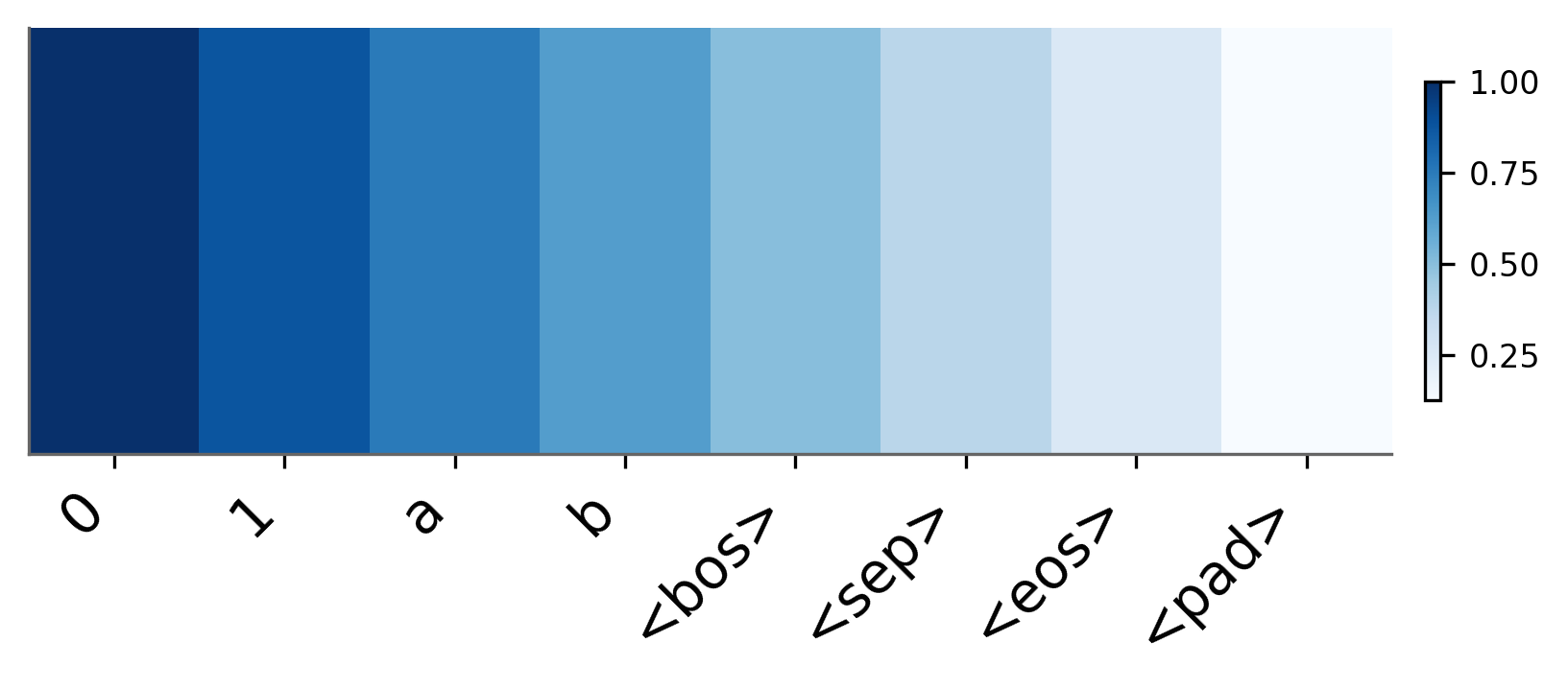}\\
        \begin{tabular}{@{}c@{}}
            Zeros \\
            \texttt{\scriptsize op=(uniform selection)}
        \end{tabular}            & $a_{ij} = 0$ & \includegraphics[width=2cm]{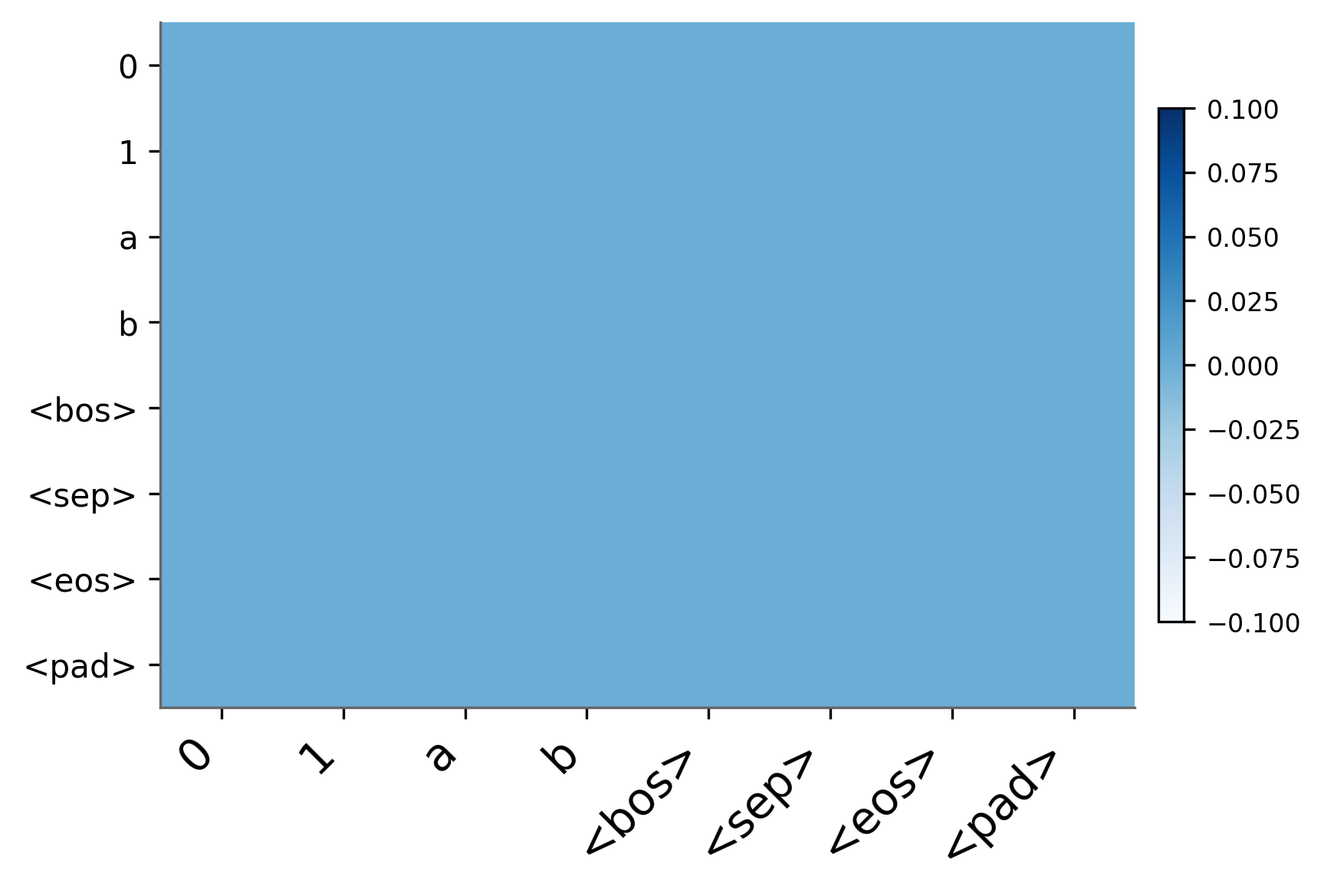}& $a_{i} = 0$ & \includegraphics[width=2cm]{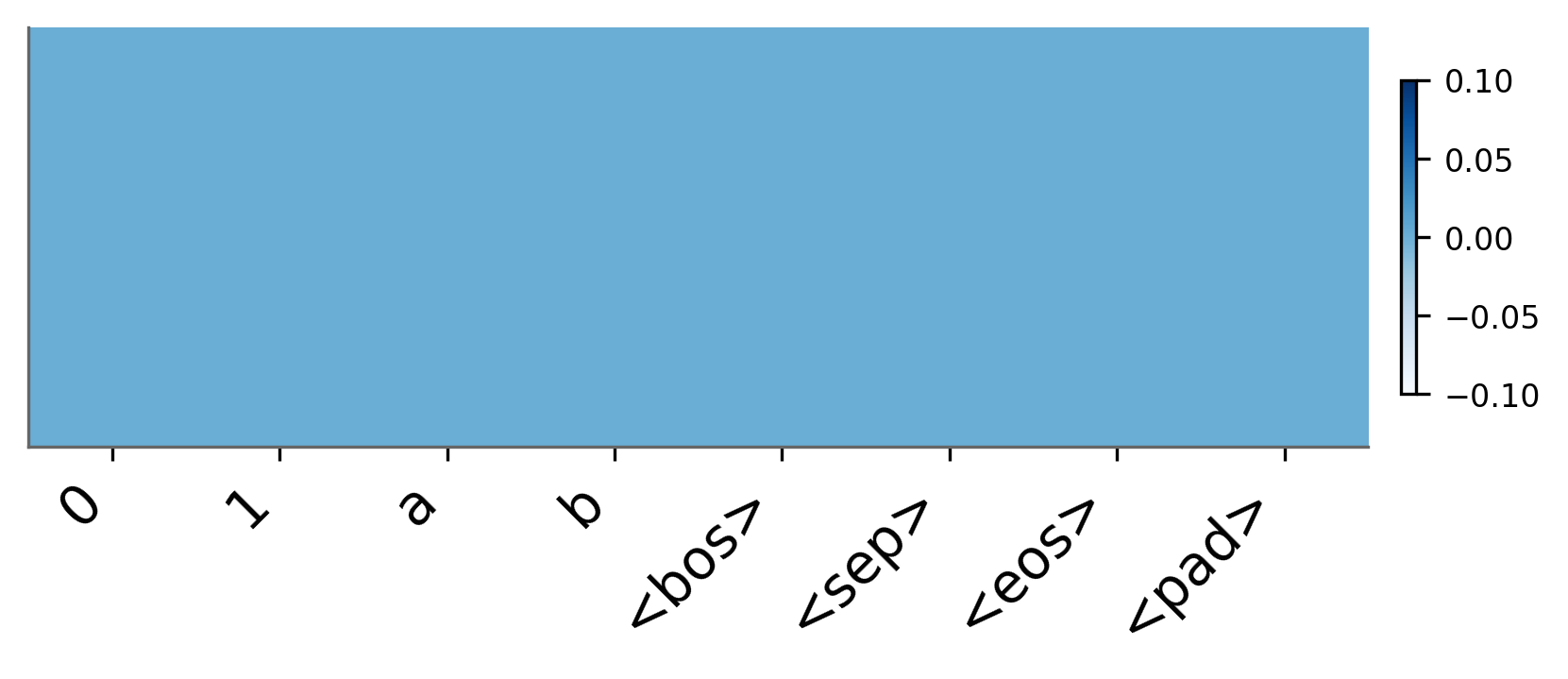} \\
        
    \end{tabular}
    \caption{Our library of primitives for $\mA, \vb$.
    We scale all matrices with a large number (we take $10,000$). This allows us to simulate near-hard outputs after softmax. This is both in line with the learned solutions, which often hold large magnitudes (cf. Figure~\ref{fig:uniquecopyat2l1h64d3lr01dropcomparison-main-paper-appendix}) and with theoretical arguments showing that large attention logits are needed to produce discrete outputs \citep[e.g.][]{chiang2022overcoming, zhou2023algorithms}. %
    In the case of attention, we note that rows denote the query dimensions and columns denote key dimensions.
    In the case of prediction logits, we note that rows denote the input dimensions and columns denote output dimensions.
    In the case of selector matrices $A$, rows denote queries; columns denote keys.
    In the case of \texttt{project} matrices $A$, rows denote inputs; columns denote outputs.
    When row and column counts differ, matrices are not square; in this case, we start from the top left corner and truncate the matrix accordingly.
    }\label{tab:primitives}
\end{table}

We try to replace each parameter matrix in \texttt{select} and \texttt{project} with a subset of primitives from Table~\ref{tab:primitives}. We use only a subset of primitives for each type of parameter matrices, since some of the primitives are not applicable to some of the types of parameter matrices. Specifically, ``Diagonal'', ``$k$th Diagonal'' and ``Every $k$th'' primitives are designed for replacing matrices and are not applicable to replacing vectors. Apart from that, unembedding matrices always project the activation to the vocabulary, which makes ``Decreasing'', ``Increasing'', ``$k$th Diagonal'' and ``Every $k$th'' primitives not interpretable when applied to unembedding projections, since vocabulary tokens do not have intrinsic ordering.

We then notice that in many cases the dimension of the matrix that needs to be replaced corresponds to vocabulary size (the variable which is being multiplied by the primitive corresponds to a path $\langle tok, x_1 \dots x_k \rangle$). In these cases, it is beneficial to separate the entries in the primitives corresponding to special tokens and non-special tokens. We then separate the primitive matrix into two pieces: when query is a special token and when it is a non-special token, and select a separate primitive for each.
In this case, we use separate arguments \texttt{op=} and \texttt{special\_op=} for those two primitives.

\paragraph{Selecting Primitives}
When selecting the primitives, we replace primitives one by one, starting from the lowest layers. When replacing the primitive, we try all possible combinations of tuples of primitives (attend-to-special, attend-to-non-special) in the order:
\begin{itemize}\label{list:primitives}
    \item For attention bias (vector): ``Zeros'', ``to BOS'', ``to EOS'', ``to SEP'', ``Decreasing'', ``Increasing''.
    \item For attention matrix: ``Zeros'', ``to BOS'', ``to SEP'', ``Diagonal'', ``Second Diagonal'', ``Third Diagonal'', ``Every Second'', ``Every Third'', ``Decreasing'', ``Increasing''.
    \item For logits bias (vector): ``Zeros'', ``to EOS''.
    \item For logit matrix: ``Zeros'', ``to EOS'', ``Diagonal''.
\end{itemize}
When the accuracy of replacement is above the threshold ($0.95 \times$ match accuracy of the model after pruning), we accept it and do not try other primitives.

\paragraph{Simplifying Remaining Matrices}
We then simplify the parameter matrices that were not replaced with predefined primitives. The goal here is especially to remove information that is not causally relevant to the model's predictions, to aid in interpretation. We initialize each of the parameter matrices not replaced with predefined primitives $A_i$ as the original parameter matrices, and optimize them all together in two steps. We denote all the optimized parameters as $A = [A_1, A_2, \dots]$. First, we incentivize the parameters to be close to zero (penalty $p := \|\mA\|_1$), and then we incentivize them to be integers (penalty $p := \|\mA - round(\mA)\|_1$). We train the matrix for 2,000 steps with the loss $l + \lambda p$, where $p$ is the penalty and $l$ is the original task loss. We stop training if average match accuracy with the original model drops below the threshold (defined as 0.91 * match accuracy of the model after pruning) over the interval of 10 steps. If this happens, we revert to the checkpoint when the match accuracy is still above the threshold. We use batch size 120, lr $10^{-4}$, and sweep $\lambda$ over $10^{-1}, 10^{-2}, 10^{-4}$.

We then forcibly round each primitive matrix if the match accuracy of the model stays above the threshold, to account for potential imperfections in rounding after the optimization procedure .

\section{Example of Primitive Replacement}

Here, we show examples of the matrices in Unique Copy (Figure~\ref{fig:unique-copy}), after replacement with primitives (Figure~\ref{fig:uniquecopyat2l1h64d3lr01drop-main-paper-appendix}) and for comparison when performing regularization towards zero and integers, but no primitive replacement (Figure~\ref{fig:uniquecopyat2l1h64d3lr01dropcomparison-main-paper-appendix}).
Comparison shows that the basic patterns are similar (e.g., diagonal and off-diagonal matrices, with special behavior for special tokens), but the primitives bring the behavior out more clearly.

\begin{figure}[H]
    \centering
    \begin{subfigure}[b]{0.32\textwidth}
        \centering
        \includegraphics[width=\linewidth]{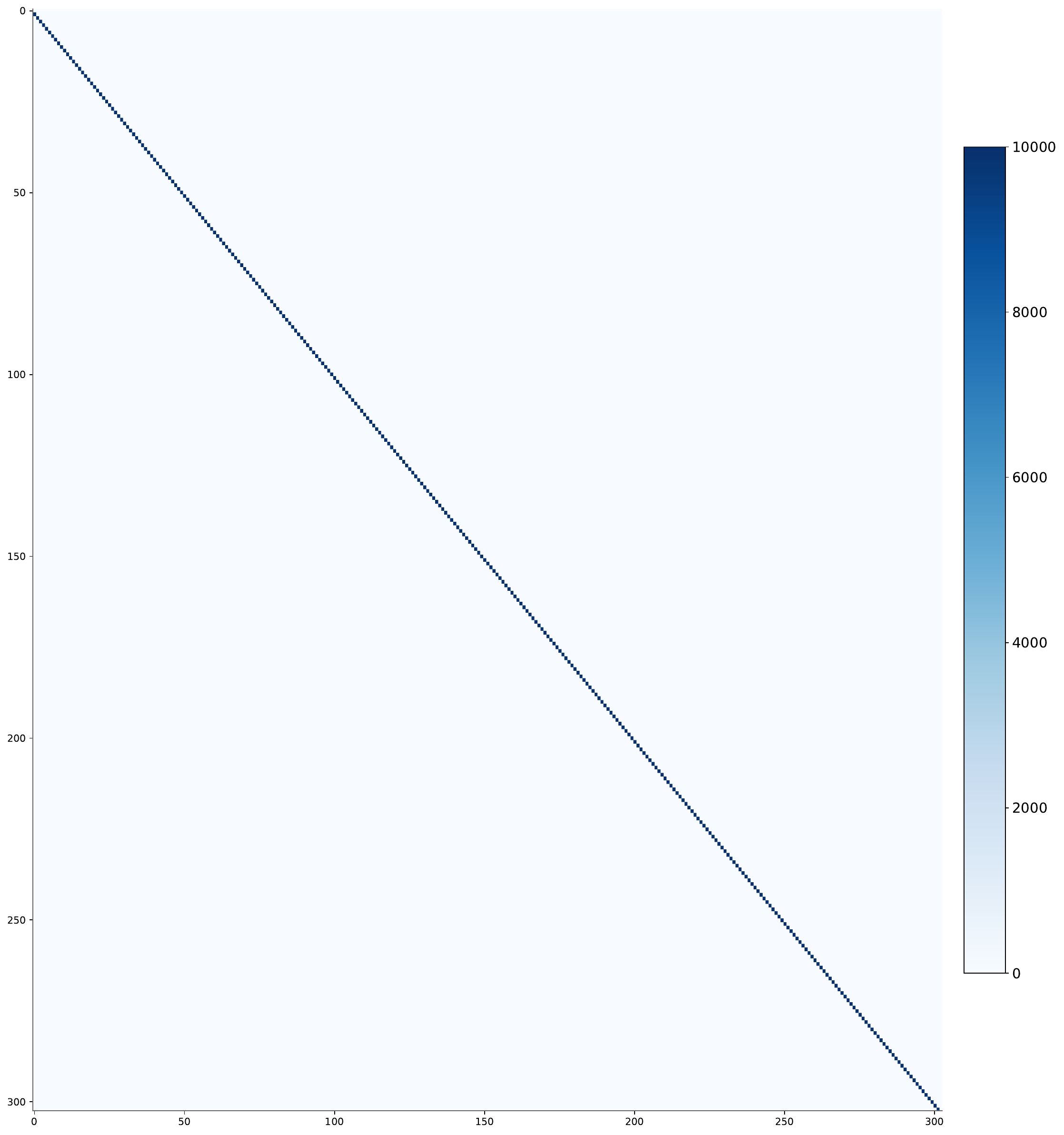}
        \caption{{Line 1: \textcolor{varcoloruniquecopyat2l1h64d3lr01drops1}{op=(k==q-1)}}}
    \end{subfigure}
    \hfill
    \begin{subfigure}[b]{0.32\textwidth}
        \centering
        \includegraphics[width=\linewidth]{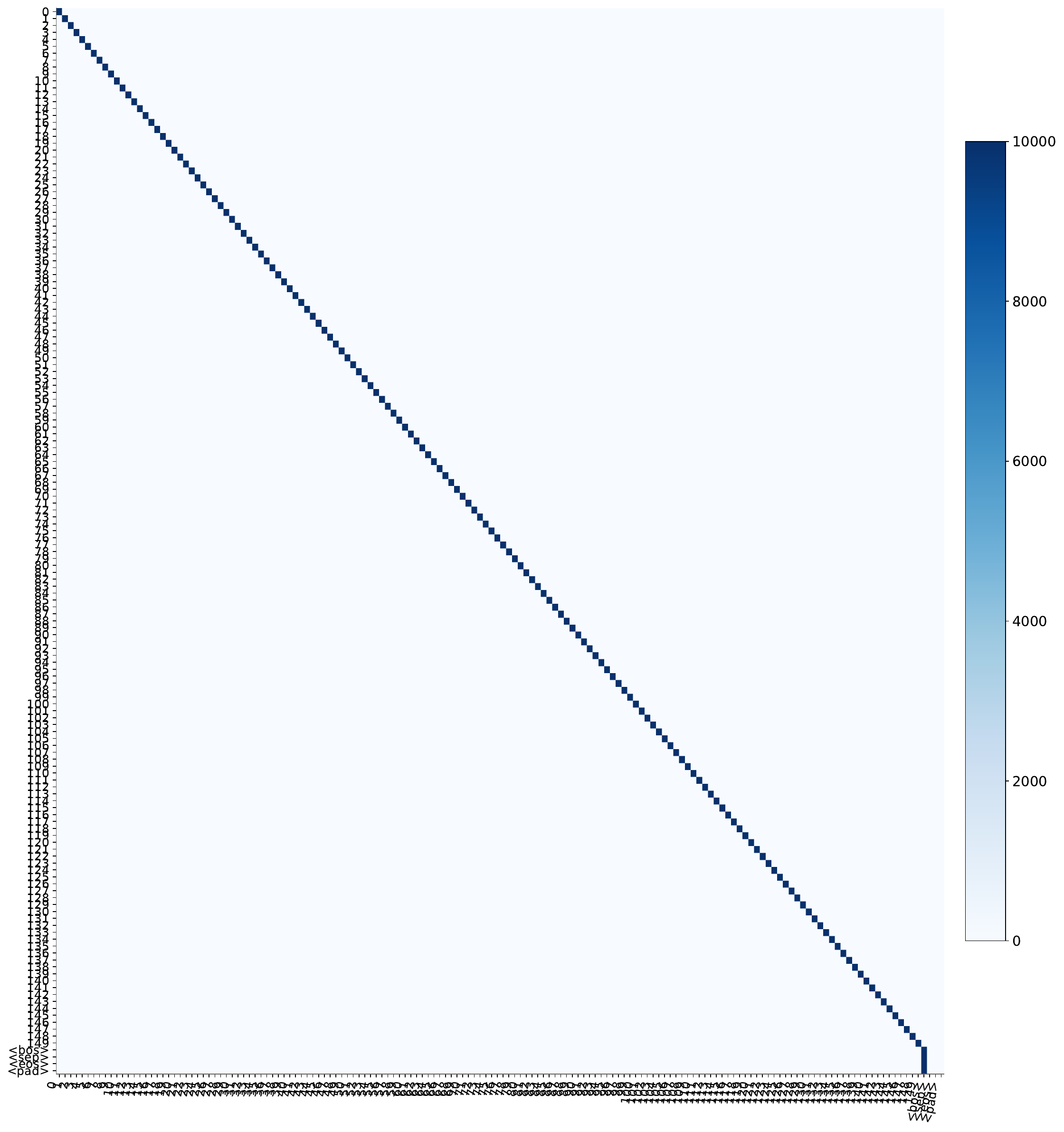}
        \caption{{Line 3: \textcolor{varcoloruniquecopyat2l1h64d3lr01drops2}{op=(k==q), special\_op=(k==BOS)}}}
    \end{subfigure}
    \\[1em]
    \begin{subfigure}[b]{0.47\textwidth}
        \centering
        \includegraphics[width=\linewidth]{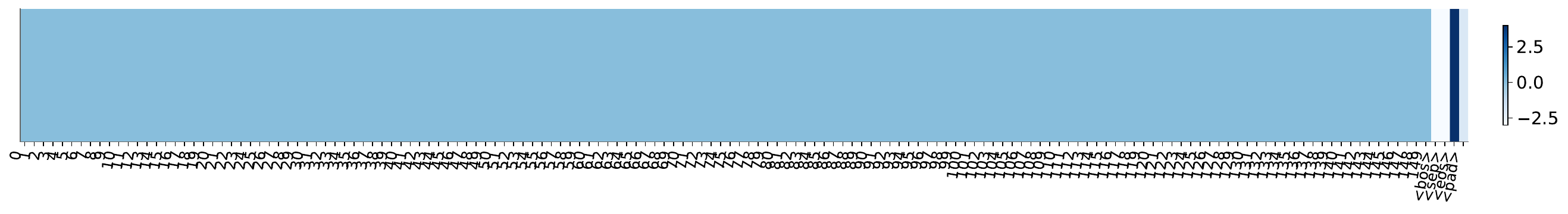}
        \caption{{Line 6: \textcolor{varcoloruniquecopyat2l1h64d3lr01droplogits2}{op=\circled{c}} in \textcolor{varcoloruniquecopyat2l1h64d3lr01droplogits2}{logits2}}}
    \end{subfigure}
    \\[1em]
    \begin{subfigure}[b]{0.24\textwidth}
        \centering
        \includegraphics[width=\linewidth]{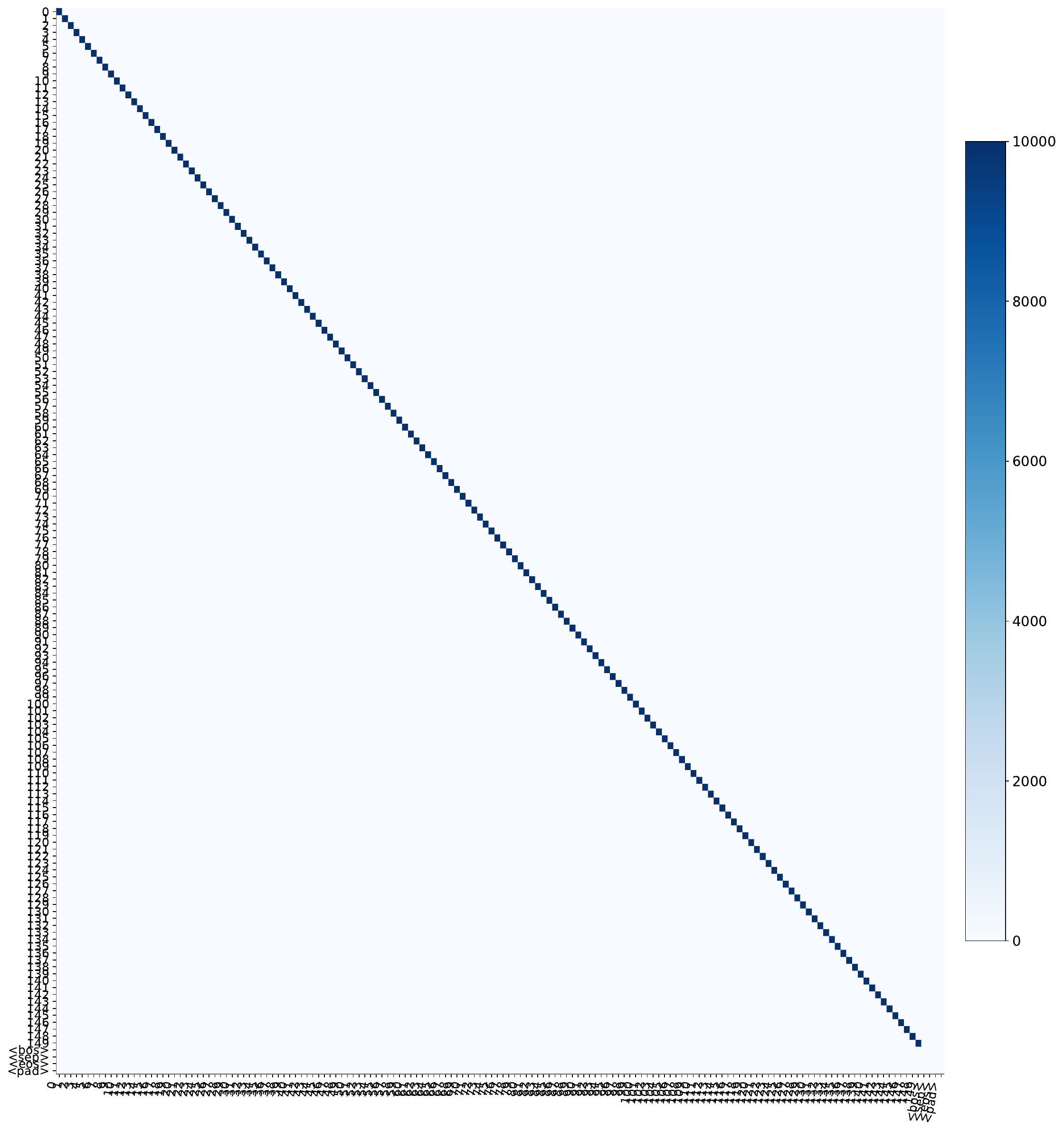}
        \caption{{Line 5: \textcolor{varcoloruniquecopyat2l1h64d3lr01droplogits1}{op=(inp==out), special\_op=(uniform selection)}}}
    \end{subfigure}
    \caption{Heatmaps supporting the program for unique\_copy model. This is discussed in Main paper, Figure~\ref{fig:unique-copy}.}
    \label{fig:uniquecopyat2l1h64d3lr01drop-main-paper-appendix}
\end{figure}

\begin{figure}[H]
    \centering
    \begin{subfigure}[b]{0.32\textwidth}
        \centering
        \includegraphics[width=\linewidth]{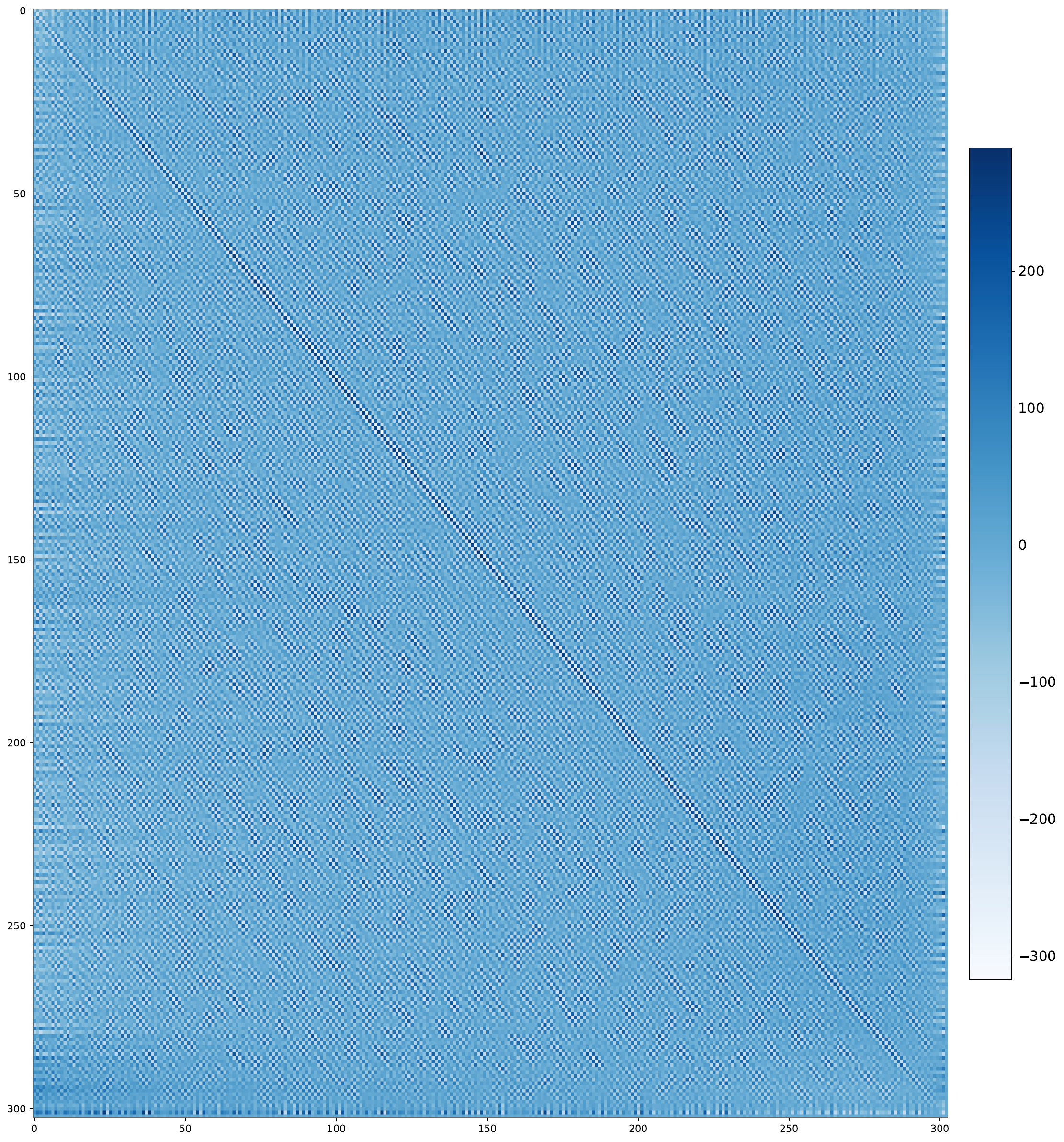}
        \caption{{Line 1: {}{op=(k==q-1)}}}
    \end{subfigure}
    \hfill
    \begin{subfigure}[b]{0.32\textwidth}
        \centering
        \includegraphics[width=\linewidth]{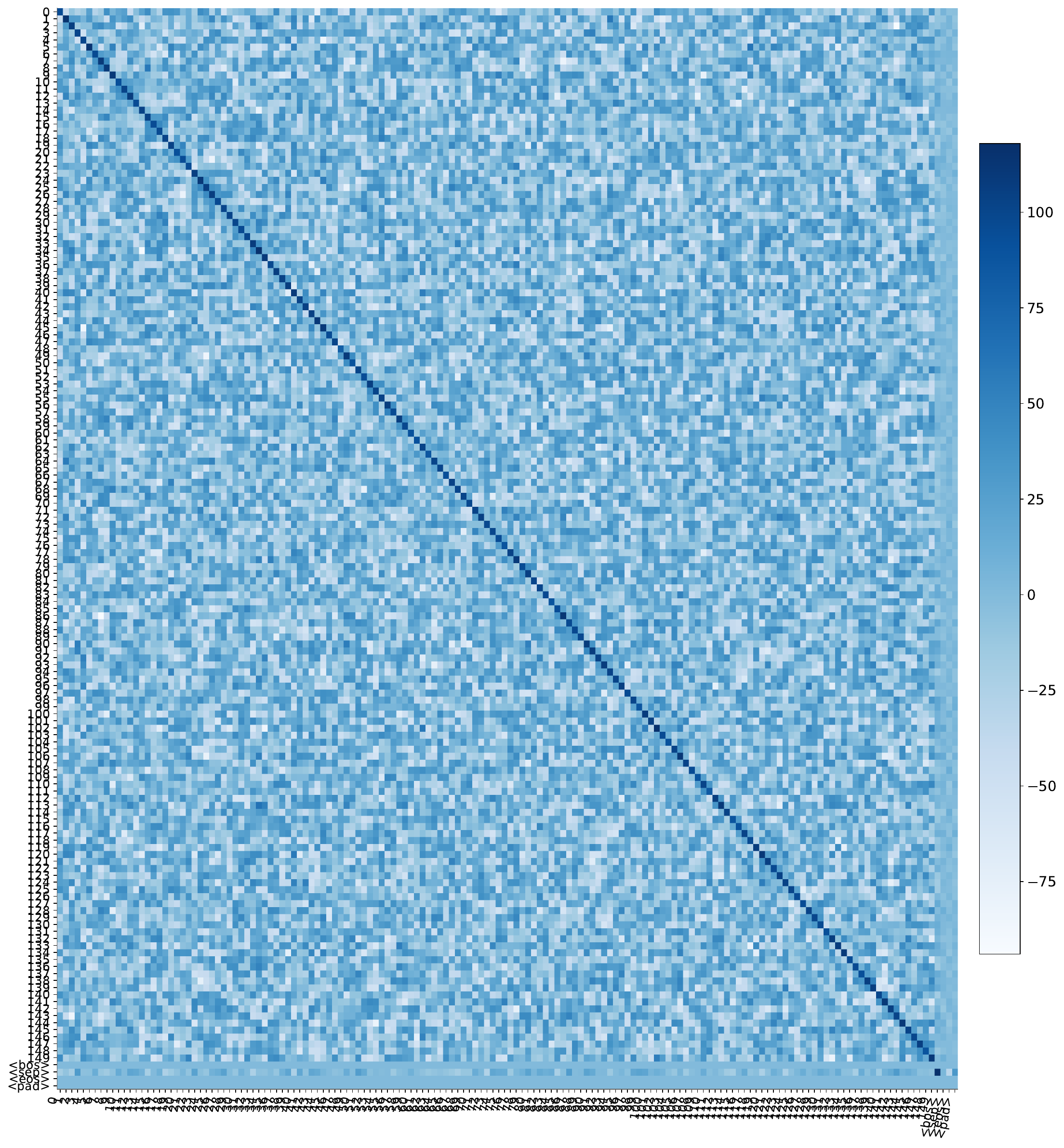}
        \caption{{Line 3: {}{op=(k==q), special\_op=(k==BOS)}}}
    \end{subfigure}
    \\[1em]
    \begin{subfigure}[b]{0.47\textwidth}
        \centering
        \includegraphics[width=\linewidth]{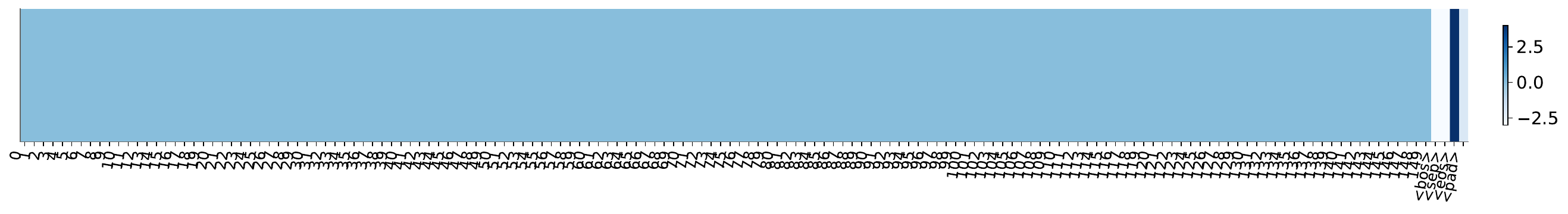}
        \caption{{Line 6: {}{op=\circled{c}} in {}{logits2}}}
    \end{subfigure}
    \\[1em]
    \begin{subfigure}[b]{0.24\textwidth}
        \centering
        \includegraphics[width=\linewidth]{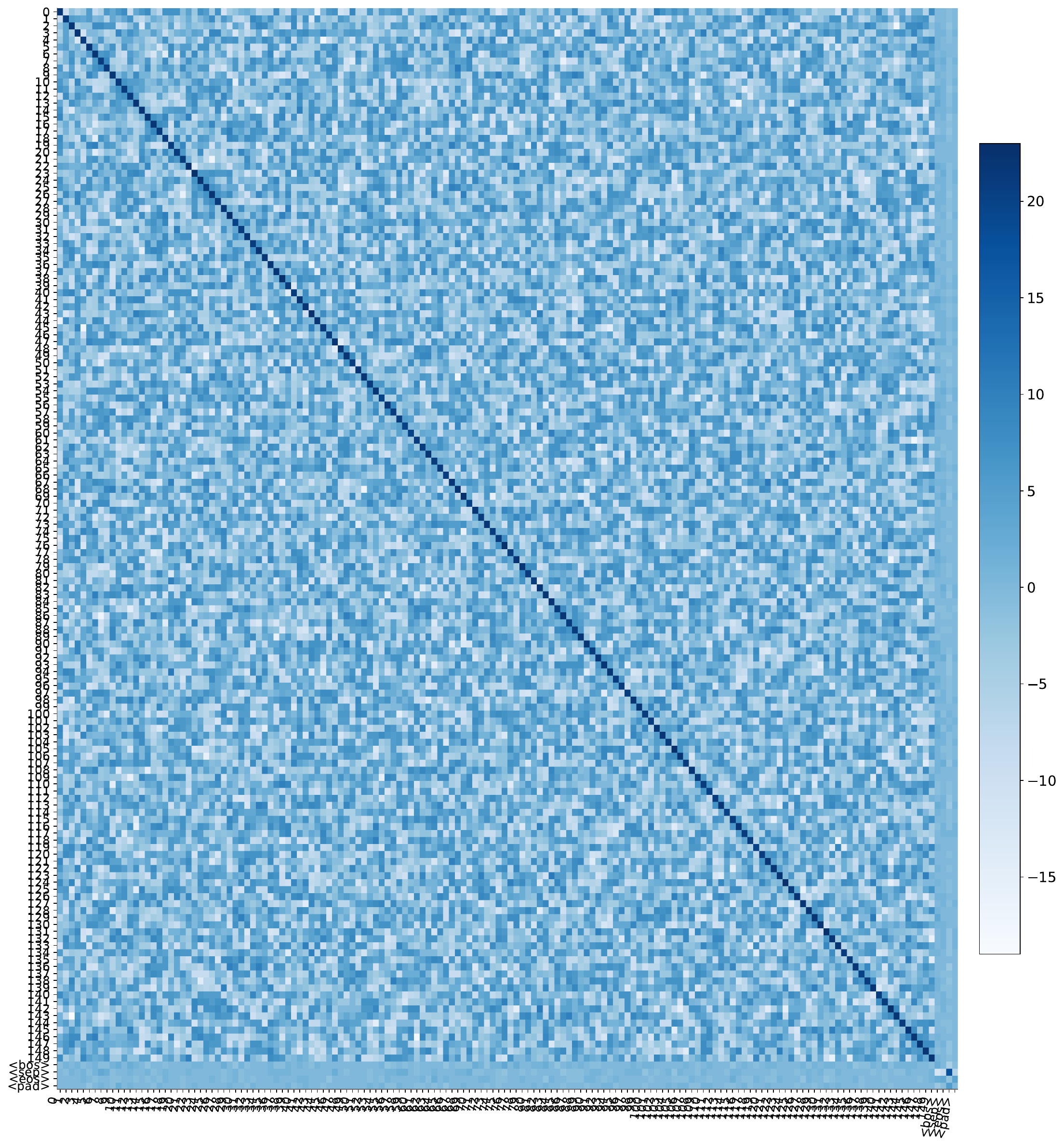}
        \caption{{Line 5: {}{op=(inp==out), special\_op=(uniform selection)}}}
    \end{subfigure}
    \caption{Heatmaps supporting the program for unique\_copy model. Heatmaps are only rounded, not replaced with primitives. }
    \label{fig:uniquecopyat2l1h64d3lr01dropcomparison-main-paper-appendix}
\end{figure}

\raggedbottom

\section{Decompiled Programs on Algorithmic Tasks}\label{app:decompiled-algorithmic}

\begin{table}[h]
\centering
\begin{tabular}{@{}ll@{}}
\centering
\textbf{Task} & \textbf{Program} \\ \hline
Binary Majority & Sec.~\ref{binmajorityat1l1h16d3lr01dropsection}\\
Binary Majority Interleave & Sec.~\ref{binmajorityinterleaveat2l2h16d3lr00dropsection} \; \ref{binmajorityinterleavepct4l1h64d3lr01dropsection}\\
Count & Sec.~\ref{countat1l4h256d4lr01dropsection} \; \ref{countpct2l4h256d4lr01dropsection} \\
Most Frequent & Sec.~\ref{majorityat1l4h256d3lr01dropsection} \; \ref{majorityat4l4h256d3lr00dropsection} \\
Sort & Sec.~\ref{sortat1l1h256d3lr01dropsection} \\
Unique Bigram Copy & Sec.~\ref{uniquebigramcopyat2l4h256d3lr01dropsection} \; \ref{uniquebigramcopycaret2l4h256d3lr01dropsection} \\
Unique Copy & Sec.~\ref{uniquecopyat2l1h64d3lr01dropsection} \\
Unique Reverse & Sec.~\ref{uniquereverseat2l1h64d3lr01dropsection} \; \ref{uniquereversepct4l1h256d4lr01dropsection} \\ \hline
$D_{12}$ & Sec.~\ref{bceD12at4l2h256d3lr00dropsection} \\
$D_2$ & Sec.~\ref{bceD2at1l1h16d3lr00dropsection}
\\
$D_4$ & Sec.~\ref{bceD4at1l2h256d4lr00dropsection} \\
$(aa)^*$  & Sec.~\ref{bceaastarat1l2h16d3lr01dropsection} \\
$aa^*bb^*cc^*dd^*ee^*$ & Sec.~\ref{bceabcdeat1l1h16d3lr00dropsection} \\
$\{a,b\}^*d\{b,c\}^*$ & Sec.~\ref{bceabdbcat1l1h16d3lr01dropsection} \\
Tomita1 & Sec.~\ref{bcetomita1at1l1h16d3lr00dropsection} \\
Tomita2 & Sec.~\ref{bcetomita2at1l1h16d3lr00dropsection} \\
Tomita7 & Sec.~\ref{bcetomita7at2l1h16d3lr01dropsection} \\ \hline

\end{tabular}
\end{table}

\paragraph{How to read this section}

Each subsection below shows a D-RASP decompilation of a trained transformer. We report the target task, description of the model's architecture, and two performance metrics: task accuracy and match accuracy with the original model. We report accuracies after reparametrization and simplification (pruning) steps, but before replacement of attention, unembedding projections, or MLPs with primitives, as well as accuracies after all steps of the pipeline.

D-RASP code for each model consists of code and supporting heatmaps that specify operations. For ease of understanding, we also report visualizations of variables computed on example input.

Throughout, for selector matrices $\mA$, rows denote query dimensions and columns denote key dimensions.
For projection matrices $\mA$, rows denote input dimensions and columns denote output dimensions.
For \emph{variables heatmaps}, in the case of activation variables, the x-axis denotes the positions in a sample input string; the y-axis denotes the dimensions in the variable. In the case of selector variables, the rows denote query positions, the columns denote key positions. We show causal masking by graying out of cells. %

In the programs where MLP operations cannot be replaced with primitives, we show visualizations inspecting their downstream affect, computed as described in Section~\ref{sec:mlp-backup}. When interpreting the effect on unembedding projection, we visualize inputs to MLP that give rise to high and low logits of a few representative tokens in separate plots. When interpreting the effect on attention logits, we visualize pairs of keys and queries together that give rise to high and low attention logits.

We note that, when interpreting these operations, we use the terms ``elementwise operation'' (which is the formal term in D-RASP as in the original RASP, \citet{weiss2021thinking}), ``per-position operation'', and ``MLP'' interchangeably.

\definecolor{varcolorbinmajorityat1l1h16d3lr01droplogits1}{rgb}{0.12,0.47,0.71}
\subsection{Majority}
\label{binmajorityat1l1h16d3lr01dropsection}

    \par\vspace{0.5em}\noindent
    \textbf{Task Description:} $$
    \langle\text{bos}\rangle\ s\ \langle\text{sep}\rangle\ \text{Maj}(s)\ \text{where } s \in \{0, 1\}^*
    $$ \\
    \textbf{Architecture:} Layers: 1 \quad Heads: 1 \quad Hidden Dim: 16 LR: 0.001 \quad Dropout: 0.1 \\
    \textbf{Performance (w/Pruning $\to$ w/Primitives):} Task Accuracy: $1.00 \to 1.00$; Match Accuracy: $1.00 \to 1.00$
    \vspace{0.5em}\par

\paragraph{Code}
\begin{Verbatim}[commandchars=\\\{\}]
1. a1 = aggregate(s=[], v=token) 	  # layer 0 head 0
2. \textcolor{varcolorbinmajorityat1l1h16d3lr01droplogits1}{logits1} = project(inp=a1, \textcolor{varcolorbinmajorityat1l1h16d3lr01droplogits1}{op=(inp==out),}
		\textcolor{varcolorbinmajorityat1l1h16d3lr01droplogits1}{special\_op=(uniform selection)})
3. prediction = softmax(logits1)
\end{Verbatim}

\paragraph{Interpretation}
The program is essentially equivalent to the program shown for the non-binary version in Figure 1 of the main paper. \texttt{a1} holds counts of the relative frequencies of 0, 1, BOS, SEP (Figure~\ref{fig:binmajorityat1l1h16d3lr01dropactivation}a). Line 2 then projects the counts of 0, 1 onto output logits, disregarding the special tokens (Figure~\ref{fig:binmajorityat1l1h16d3lr01dropactivation}b). As a result, at SEP (where the prediction relevant to solving the task is made), the more common symbol receives the higher logit.

\begin{figure}[H]
    \centering
\includegraphics[width=\textwidth]{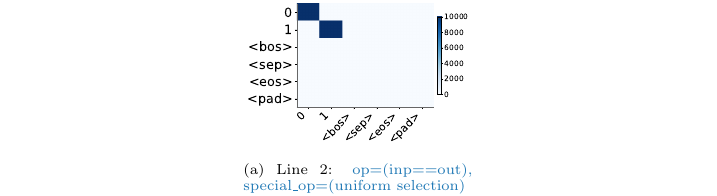}
    \caption{Heatmaps supporting the program for Majority model. (a) The identity matrix (temperature-scaled to create effectively hard attention) for normal tokens, uniform on special tokens.}
    \label{fig:binmajorityat1l1h16d3lr01drop}
\end{figure}

\begin{figure}[H]
    \centering
\includegraphics[width=\textwidth]{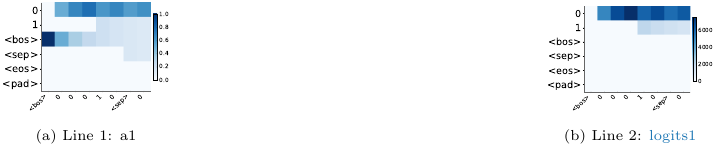}
    \caption{Variables Heatmaps for Majority model on an example input. (a) Aggregation computes a histogram of symbols seen so far (x-axis is the input, y-axis are the dimensions of the activation). (b) Output logits are directly obtained from (a) in the case of normal tokens, and erased for special tokens, reflecting the matrix in Figure~\ref{fig:binmajorityat1l1h16d3lr01drop}.}
    \label{fig:binmajorityat1l1h16d3lr01dropactivation}
\end{figure}

\definecolor{varcolorbinmajorityinterleaveat2l2h16d3lr00drops1}{rgb}{0.12,0.47,0.71}
\definecolor{varcolorbinmajorityinterleaveat2l2h16d3lr00drops2}{rgb}{1.00,0.50,0.05}
\definecolor{varcolorbinmajorityinterleaveat2l2h16d3lr00drops3}{rgb}{0.17,0.63,0.17}
\definecolor{varcolorbinmajorityinterleaveat2l2h16d3lr00drops4}{rgb}{0.84,0.15,0.16}
\definecolor{varcolorbinmajorityinterleaveat2l2h16d3lr00drops5}{rgb}{0.58,0.40,0.74}
\definecolor{varcolorbinmajorityinterleaveat2l2h16d3lr00droplogits1}{rgb}{0.55,0.34,0.29}
\definecolor{varcolorbinmajorityinterleaveat2l2h16d3lr00droplogits2}{rgb}{0.89,0.47,0.76}
\definecolor{varcolorbinmajorityinterleaveat2l2h16d3lr00droplogits3}{rgb}{0.50,0.50,0.50}
\definecolor{varcolorbinmajorityinterleaveat2l2h16d3lr00droplogits4}{rgb}{0.74,0.74,0.13}
\definecolor{varcolorbinmajorityinterleaveat2l2h16d3lr00droplogits5}{rgb}{0.09,0.75,0.81}
\definecolor{varcolorbinmajorityinterleaveat2l2h16d3lr00droplogits6}{rgb}{0.00,0.50,0.50}
\definecolor{varcolorbinmajorityinterleaveat2l2h16d3lr00droplogits7}{rgb}{0.90,0.60,0.00}
\definecolor{varcolorbinmajorityinterleaveat2l2h16d3lr00droplogits8}{rgb}{0.60,0.20,0.80}
\subsection{Majority Interleave}
\label{binmajorityinterleaveat2l2h16d3lr00dropsection}

    \par\vspace{0.5em}\noindent
    \textbf{Task Description:} $$
    \langle\text{bos}\rangle\ s\ \langle\text{sep}\rangle\ \text{Maj}(\{s_i| i \% 3\ =\ 0\})\text{Maj}(\{s_i| i \% 3\ =\ 1\})\text{Maj}(\{s_i| i \% 3\ =\ 2\})\ \langle\text{eos}\rangle \text{where } s \in \{0, 1\}^*
    $$ \\
    \textbf{Architecture:} Layers: 2 \quad Heads: 2 \quad Hidden Dim: 16 LR: 0.001 \quad Dropout: 0 \\
    \textbf{Performance (w/Pruning $\to$ w/Primitives):} Task Accuracy: $0.89 \to 0.77$; Match Accuracy: $0.92 \to 0.80$
    \vspace{0.5em}\par

\paragraph{Code}
\begin{Verbatim}[commandchars=\\\{\}]
1. \textcolor{varcolorbinmajorityinterleaveat2l2h16d3lr00drops1}{s1} = select(q=token, k=token, \textcolor{varcolorbinmajorityinterleaveat2l2h16d3lr00drops1}{op=\circled{a}})	# layer 0 head 0
2. \textcolor{varcolorbinmajorityinterleaveat2l2h16d3lr00drops2}{s2} = select(q=pos, k=pos, \textcolor{varcolorbinmajorityinterleaveat2l2h16d3lr00drops2}{op=\circled{b}})	# layer 0 head 0
3. a1 = aggregate(s=s1+s2, v=token) 	  # layer 0 head 0
4. \textcolor{varcolorbinmajorityinterleaveat2l2h16d3lr00drops3}{s3} = select(q=pos, k=pos, \textcolor{varcolorbinmajorityinterleaveat2l2h16d3lr00drops3}{op=\circled{c}})	# layer 0 head 1
5. a2 = aggregate(s=s3, v=token) 	  # layer 0 head 1
6. \textcolor{varcolorbinmajorityinterleaveat2l2h16d3lr00drops4}{s4} = select(q=pos, k=token, \textcolor{varcolorbinmajorityinterleaveat2l2h16d3lr00drops4}{op=\circled{j}})	# layer 1 head 0
7. \textcolor{varcolorbinmajorityinterleaveat2l2h16d3lr00drops5}{s5} = select(q=pos, k=pos, \textcolor{varcolorbinmajorityinterleaveat2l2h16d3lr00drops5}{op=(k==q-2)})	# layer 1 head 0
8. a3 = aggregate(s=s4+s5, v=token) 	  # layer 1 head 0
9. a4 = aggregate(s=s4+s5, v=pos) 	  # layer 1 head 0
10. is_pure_token = is_pure(token) 	  # layer 1 mlp
11. is_01_balance_a3 = is_01_balance(a3) 	  # layer 1 mlp
12. \textcolor{varcolorbinmajorityinterleaveat2l2h16d3lr00droplogits1}{logits1} = project(inp=a4, \textcolor{varcolorbinmajorityinterleaveat2l2h16d3lr00droplogits1}{op=\circled{k}})
13. \textcolor{varcolorbinmajorityinterleaveat2l2h16d3lr00droplogits2}{logits2} = project(inp=token, \textcolor{varcolorbinmajorityinterleaveat2l2h16d3lr00droplogits2}{op=\circled{e}})
14. \textcolor{varcolorbinmajorityinterleaveat2l2h16d3lr00droplogits3}{logits3} = project(inp=pos, \textcolor{varcolorbinmajorityinterleaveat2l2h16d3lr00droplogits3}{op=\circled{l}})
15. \textcolor{varcolorbinmajorityinterleaveat2l2h16d3lr00droplogits4}{logits4} = project(inp=is_pure_token, \textcolor{varcolorbinmajorityinterleaveat2l2h16d3lr00droplogits4}{op=\circled{f}})
16. \textcolor{varcolorbinmajorityinterleaveat2l2h16d3lr00droplogits5}{logits5} = project(inp=a1, \textcolor{varcolorbinmajorityinterleaveat2l2h16d3lr00droplogits5}{op=\circled{g}})
17. \textcolor{varcolorbinmajorityinterleaveat2l2h16d3lr00droplogits6}{logits6} = project(inp=a2, \textcolor{varcolorbinmajorityinterleaveat2l2h16d3lr00droplogits6}{op=\circled{h}})
18. \textcolor{varcolorbinmajorityinterleaveat2l2h16d3lr00droplogits7}{logits7} = project(inp=is_01_balance_a3, \textcolor{varcolorbinmajorityinterleaveat2l2h16d3lr00droplogits7}{op=\circled{i}})
19. \textcolor{varcolorbinmajorityinterleaveat2l2h16d3lr00droplogits8}{logits8} = project(inp=pos, \textcolor{varcolorbinmajorityinterleaveat2l2h16d3lr00droplogits8}{op=\circled{m}})
20. prediction = softmax(logits1+
			logits2+
			logits3+
			logits4+
			logits5+
			logits6+
			logits7+
			logits8)
\end{Verbatim}

\paragraph{Interpretation}
In this task, the model is tasked with solving three binary majority tasks in succession, with the three input strings presented in interleaved order. The main algorithm is similar to the one of the majority model (Figure~\ref{fig:majority}): \texttt{a1} roughly holds counts of frequencies of tokens at every third position (Figure~\ref{fig:binmajorityat1l1h16d3lr01dropactivation}c), which then get projected with the diagonal projection matrix in Line 16 (Figure~\ref{fig:binmajorityinterleaveat2l2h16d3lr00drop}g).
\texttt{a2} also shows a variation of position-driven aggregation (Figure~\ref{fig:binmajorityinterleaveat2l2h16d3lr00drop}c), however, it aggregates the values in all positions, except for current  one (Figure~\ref{fig:binmajorityat1l1h16d3lr01dropactivation}e). It then gets projected in Line 17 (Figure~\ref{fig:binmajorityinterleaveat2l2h16d3lr00drop}h) by a anti-diagonal matrix, with amplitude of weights twice as low as of the projection matrix on Line 16, which suggests that this might act as a kind of correction mechanism.
Logit projections in Lines 13, 15, 18 implement special behavior to predict EOS. Lines 14 and 19 implement position-based logic that promotes generation of $\langle eos \rangle$ and cancels each other's effect in 0 or 1 tokens (Figure~\ref{fig:binmajorityinterleaveat2l2h16d3lr00dropactivation}k,p). Interestingly, a simpler and more accurate program is extracted from a 4-layer 1-head model (App.~\ref{binmajorityinterleavepct4l1h64d3lr01dropsection}).

\clearpage

\begin{figure}[H]
    \centering
\includegraphics[width=0.8\textwidth]{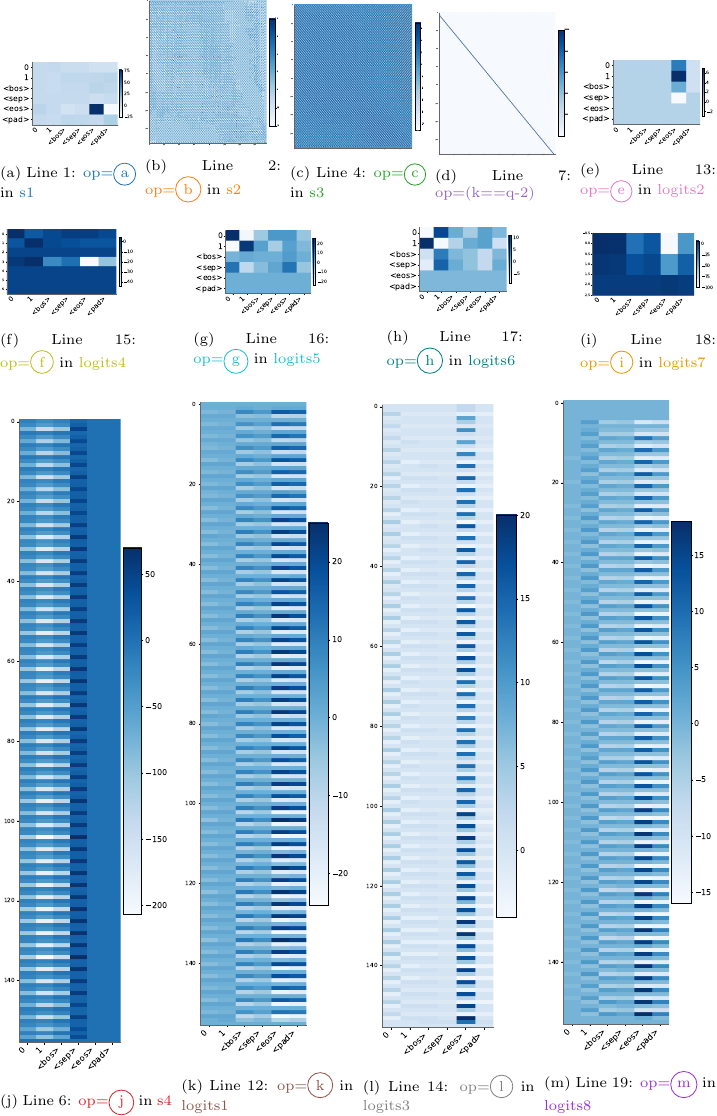}
    \caption{Heatmaps supporting the program for Majority Interleave model.}
    \label{fig:binmajorityinterleaveat2l2h16d3lr00drop}
\end{figure}

\begin{figure}[H]
    \centering
\includegraphics[width=0.8\textwidth]{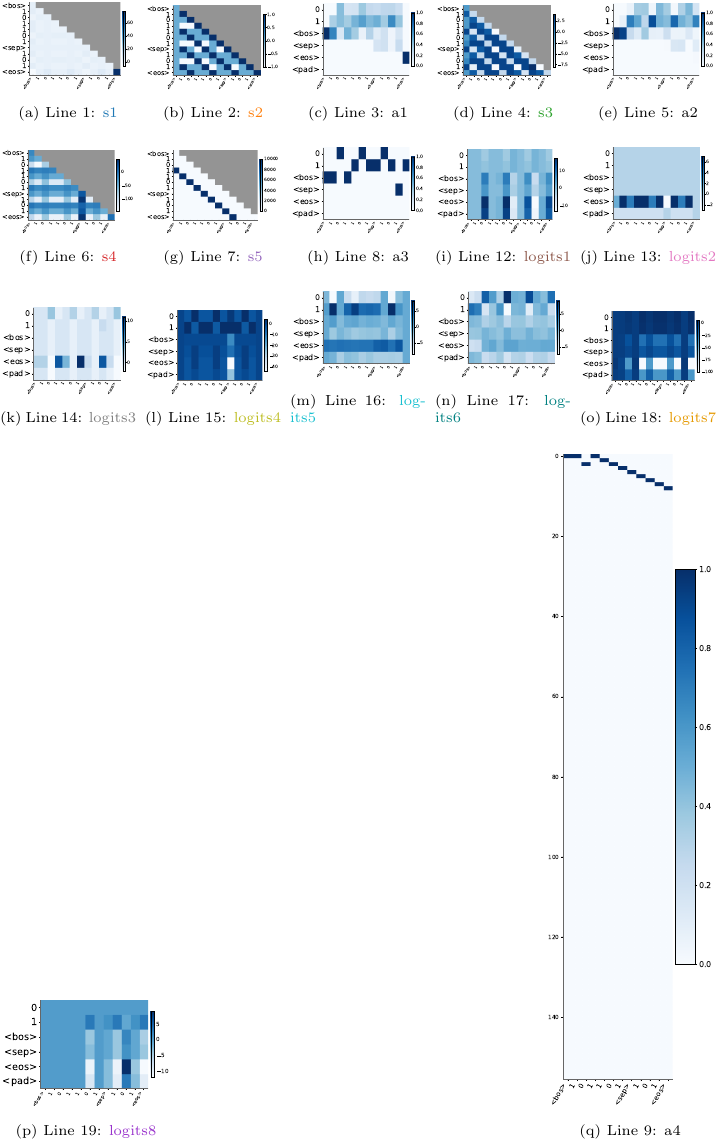}
    \caption{Variables Heatmaps for Majority Interleave model on an example input.}
    \label{fig:binmajorityinterleaveat2l2h16d3lr00dropactivation}
\end{figure}

\definecolor{varcolorbinmajorityinterleavepct4l1h64d3lr01drops1}{rgb}{0.12,0.47,0.71}
\definecolor{varcolorbinmajorityinterleavepct4l1h64d3lr01drops2}{rgb}{1.00,0.50,0.05}
\definecolor{varcolorbinmajorityinterleavepct4l1h64d3lr01droplogits2}{rgb}{0.17,0.63,0.17}
\subsection{Majority Interleave : different arch with same length generalization performance}
\label{binmajorityinterleavepct4l1h64d3lr01dropsection}

    \par\vspace{0.5em}\noindent
    \textbf{Task Description:} $$
    \langle\text{bos}\rangle\ s\ \langle\text{sep}\rangle\ \text{Maj}(\{s_i| i \% 3\ =\ 0\})\text{Maj}(\{s_i| i \% 3\ =\ 1\})\text{Maj}(\{s_i| i \% 3\ =\ 2\})\ \langle\text{eos}\rangle \text{where } s \in \{0, 1\}^*
    $$ \\
    \textbf{Performance (w/Pruning $\to$ w/Primitives):} Task Accuracy: $0.97 \to 0.97$; Match Accuracy: $0.97 \to 0.97$
    \vspace{0.5em}\par

\paragraph{Code}
\begin{Verbatim}[commandchars=\\\{\}]
1. \textcolor{varcolorbinmajorityinterleavepct4l1h64d3lr01drops1}{s1} = select(q=token, k=token, \textcolor{varcolorbinmajorityinterleavepct4l1h64d3lr01drops1}{op=\circled{a}})	# layer 0 head 0
2. \textcolor{varcolorbinmajorityinterleavepct4l1h64d3lr01drops2}{s2} = select(q=pos, k=pos, \textcolor{varcolorbinmajorityinterleavepct4l1h64d3lr01drops2}{op=(k%3==q%3==0)})	# layer 0 head 0
3. a1 = aggregate(s=s1+s2, v=token) 	  # layer 0 head 0
4. new_a1 = element_wise_op(a1) 	  # layer 0 mlp
5. logits1 = project(inp=new_a1, op=(inp==out))
6. \textcolor{varcolorbinmajorityinterleavepct4l1h64d3lr01droplogits2}{logits2} = project(\textcolor{varcolorbinmajorityinterleavepct4l1h64d3lr01droplogits2}{op=\circled{c}})
7. prediction = softmax(logits1+
			logits2)
\end{Verbatim}

\paragraph{Interpretation}
In this task, the model is tasked with solving three binary majority tasks in succession, with the three input strinfs presented in interleaved order. Line 1 defines a selector ensuring that, at SEP, attention specifically goes to the non-special tokens (Figure~\ref{fig:binmajorityinterleavepct4l1h64d3lr01dropactivation}a). Line 2 defines a selector putting attention on the basis of positions modulo 3 (Figure~\ref{fig:binmajorityinterleavepct4l1h64d3lr01dropactivation}b). For instance, on SEP, this leads to attention to the positions 1, 4, 7, $\dots$ of the input string -- which consists to the first of the three interleaved inputs. Line 3 then aggregates across these positions. 
Similarly, after outputting the first and second result, attention will go onto the second and third of the three interleaved input string (Figure~\ref{fig:binmajorityinterleavepct4l1h64d3lr01dropactivation}c) to obtain relative counts of 0, 1 throughout the relevant slice of the input. Line 4 then performs an element-wise operation. Inspecting it shows that it provides a high logit for 1 if 1 is more common than 0 (Figure~\ref{fig:mlp:binmajorityinterleavepct4l1h64d3lr01drop:1}), a high logit for 0 if 0 is more common (Figure~\ref{fig:mlp:binmajorityinterleavepct4l1h64d3lr01drop:0}), and EOS if SEP occupies a substantial amount of activation (Figure~\ref{fig:mlp:binmajorityinterleavepct4l1h64d3lr01drop:EOS}). The last case happens when all three results have been output, causing the model to output EOS and stop. Line 6 boosts EOS further. Overall, the prediction is first three times a 0/1 label, one for each of the three interleaved strings, and then EOS.

\begin{figure}[H]
    \centering
\includegraphics[width=\textwidth]{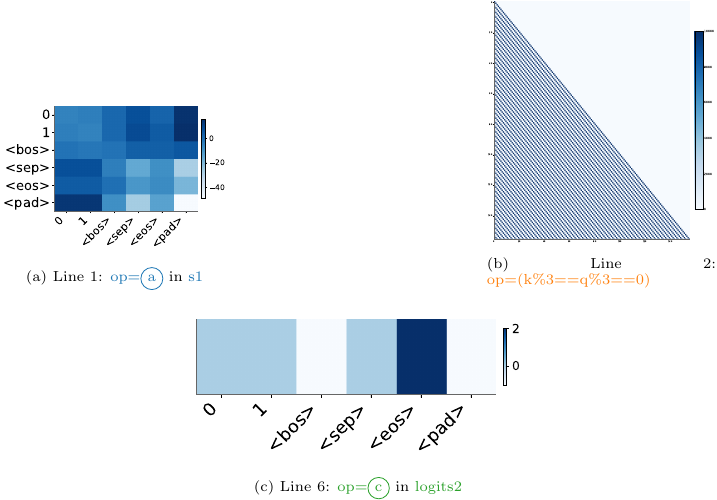}
    \caption{Heatmaps supporting the program for Majority Interleave : different arch with same length generalization performance model.}
    \label{fig:binmajorityinterleavepct4l1h64d3lr01drop}
\end{figure}

\begin{figure}[H]
    \centering
\includegraphics[width=\textwidth]{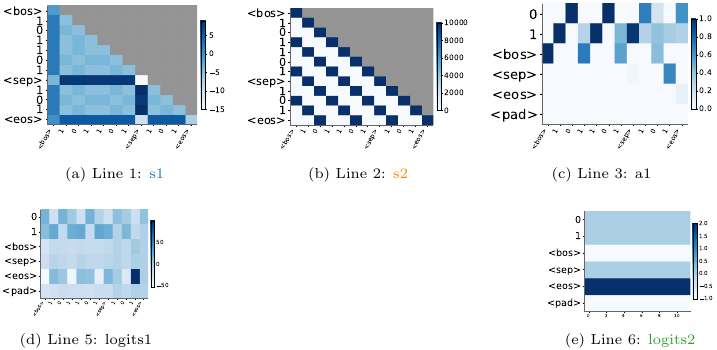}
    \caption{Variables Heatmaps for Majority Interleave : different arch with same length generalization performance model on an example input.}
    \label{fig:binmajorityinterleavepct4l1h64d3lr01dropactivation}
\end{figure}

\paragraph{MLP Input-Output Distributions}

Explaining per-position operation in Line 4 via its effect on Output Logits in Line 5

\textbf{Output Token: 0}

\begin{figure}[H]
    \centering
    \includegraphics[width=0.95\linewidth]{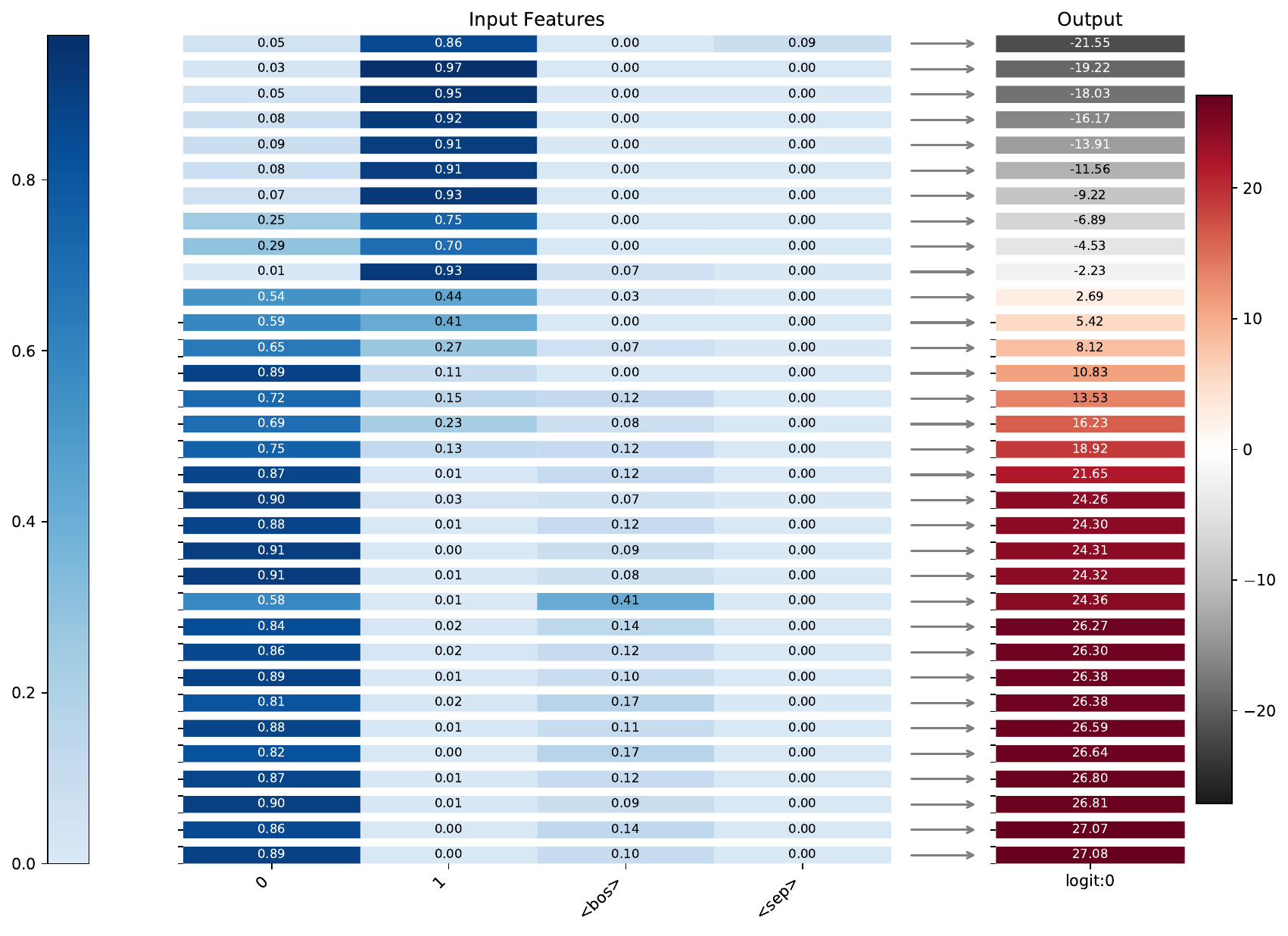}
    \caption{MLP Input-Output for token: 0 (sorted by logits)}\label{fig:mlp:binmajorityinterleavepct4l1h64d3lr01drop:0}
\end{figure}

\textbf{Output Token: 1}

\begin{figure}[H]
    \centering
    \includegraphics[width=0.95\linewidth]{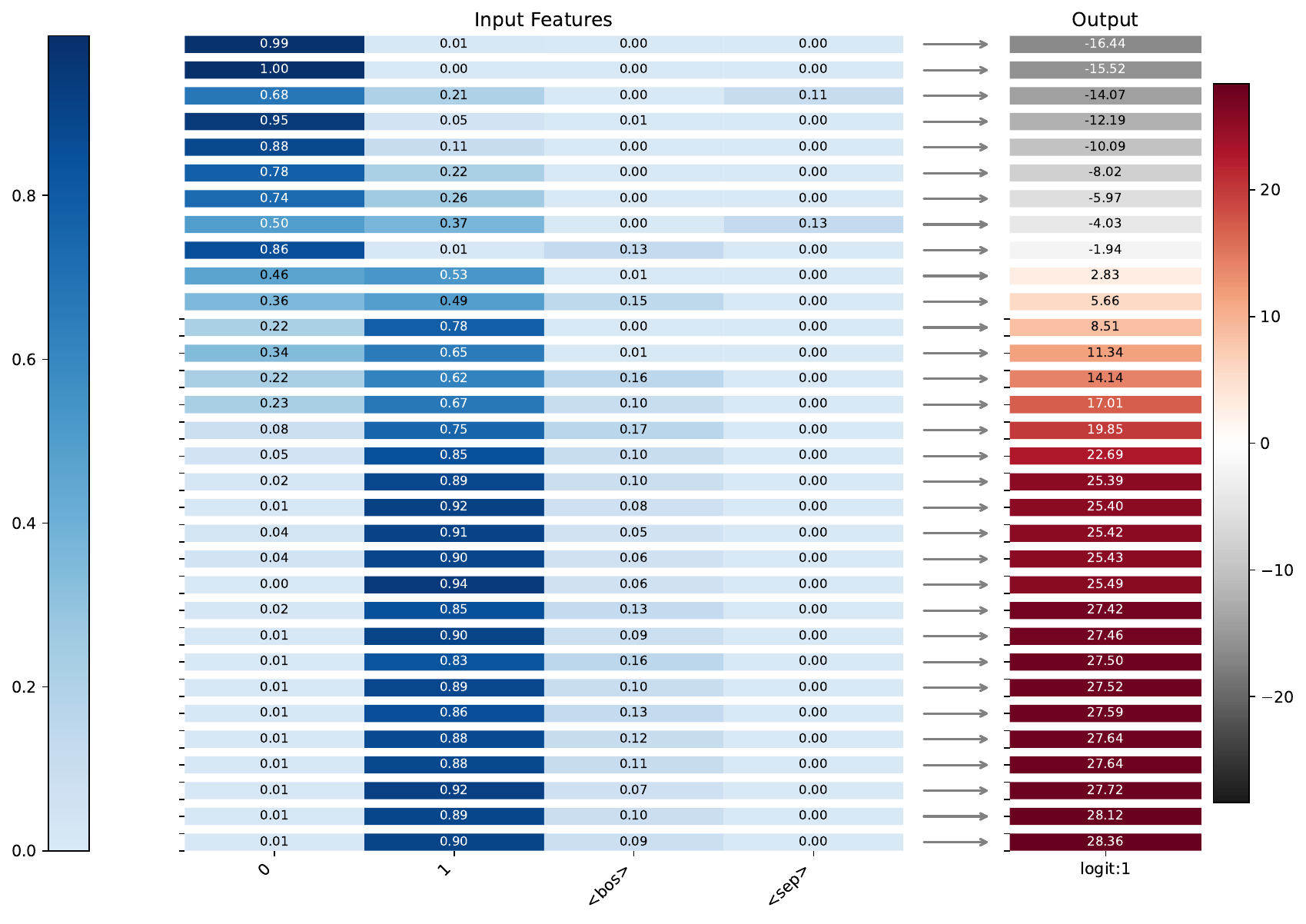}
    \caption{MLP Input-Output for token: 1 (sorted by logits)}\label{fig:mlp:binmajorityinterleavepct4l1h64d3lr01drop:1}
\end{figure}

\textbf{Output Token: EOS}

\begin{figure}[H]
    \centering
    \includegraphics[width=0.95\linewidth]{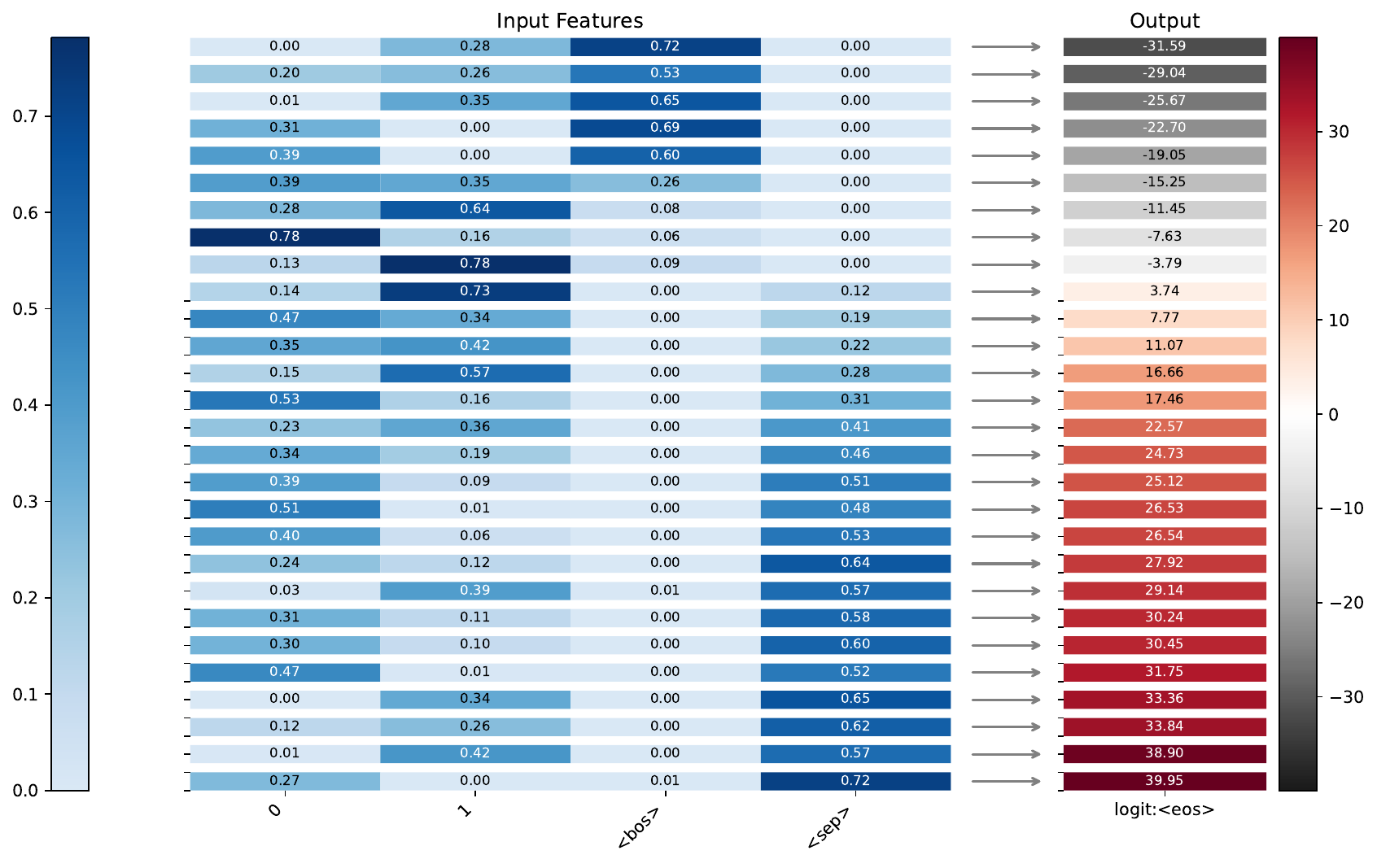}
    \caption{MLP Input-Output for token: EOS (sorted by logits)}\label{fig:mlp:binmajorityinterleavepct4l1h64d3lr01drop:EOS}
\end{figure}

\definecolor{varcolorcountat1l4h256d4lr01drops1}{rgb}{0.12,0.47,0.71}
\definecolor{varcolorcountat1l4h256d4lr01drops2}{rgb}{1.00,0.50,0.05}
\definecolor{varcolorcountat1l4h256d4lr01drops3}{rgb}{0.17,0.63,0.17}
\definecolor{varcolorcountat1l4h256d4lr01droplogits1}{rgb}{0.84,0.15,0.16}
\definecolor{varcolorcountat1l4h256d4lr01droplogits2}{rgb}{0.58,0.40,0.74}
\subsection{Count}
\label{countat1l4h256d4lr01dropsection}

    \par\vspace{0.5em}\noindent
    \textbf{Task Description:} $$
    \langle\text{bos}\rangle\ s_0, s_n\ \langle\text{sep}\rangle\ s_0 s_1 \dots s_n \ \langle\text{eos}\rangle \text{where } s_0, s_n \in \{0, 1, \dots, 150\}, s_n > s_0, s_{i + 1} = s_i + 1
    $$ \\
    \textbf{Architecture:} Layers: 1 \quad Heads: 4 \quad Hidden Dim: 256 LR: 0.0001 \quad Dropout: 0.1 \\
    \textbf{Performance (w/Pruning $\to$ w/Primitives):} Task Accuracy: $0.92 \to 0.92$; Match Accuracy: $0.90 \to 0.92$
    \vspace{0.5em}\par

\paragraph{Code}
\begin{Verbatim}[commandchars=\\\{\}]
1. \textcolor{varcolorcountat1l4h256d4lr01drops1}{s1} = select(q=token, k=token, \textcolor{varcolorcountat1l4h256d4lr01drops1}{op=\circled{a}})	# layer 0 head 0
2. a1 = aggregate(s=s1, v=token) 	  # layer 0 head 0
3. \textcolor{varcolorcountat1l4h256d4lr01drops2}{s2} = select(q=token, k=token, \textcolor{varcolorcountat1l4h256d4lr01drops2}{op=\circled{b}})	# layer 0 head 1
4. \textcolor{varcolorcountat1l4h256d4lr01drops3}{s3} = select(k=token, \textcolor{varcolorcountat1l4h256d4lr01drops3}{op=\circled{c}})
5. a2 = aggregate(s=s2+s3, v=token) 	  # layer 0 head 1
6. new_a2 = element_wise_op(a2) 	  # layer 0 mlp
7. \textcolor{varcolorcountat1l4h256d4lr01droplogits1}{logits1} = project(inp=token, \textcolor{varcolorcountat1l4h256d4lr01droplogits1}{op=\circled{d}})
8. \textcolor{varcolorcountat1l4h256d4lr01droplogits2}{logits2} = project(inp=a1, \textcolor{varcolorcountat1l4h256d4lr01droplogits2}{op=\circled{e}})
9. logits3 = project(inp=new_a2, op=(inp==out))
10. prediction = softmax(logits1+
			logits2+
			logits3)
\end{Verbatim}

\paragraph{Interpretation}
Line 1 defines a selector favoring weight to numbers larger than the present one. So each number would attend to the previous largest number, which the terminating number. In contrast, we also notice this pattern is reversed for SEP (please zoom in on the op matrix), so SEP always attends to the smaller number, which is the starting number. Line 2 moves the attended token, so at this point the model already get the needed information. 
Lines 3--5 perform a very similar operation. Interestingly, whereas Lines 1--2 used only a single $\mA$ matrix, these lines 3--5 achieve this effect using both a $\mA$ matrix and a $\vb$ vector giving higher weight to large numbers. We can also see the patterns in $\mA$ for both heads are not perfect, so they probably supplementing each other.
\texttt{a1} and \texttt{a2} contain similar information, but feed into different downstream paths in the program: \texttt{a1} feeds into \texttt{logits2} (discussed below), whereas \texttt{a2} does so only after going through a per-position operation (line 6). Inspecting the operation shows that it performs a hardening operation (strongly visible at 1 and 10, Figures~\ref{fig: mlp:countat1l4h256d4lr01drop:1}, \ref{fig: mlp:countat1l4h256d4lr01drop:10}), but does so mostly only for smaller numbers (no clear effect at 100, Figure~\ref{fig: mlp:countat1l4h256d4lr01drop:100}). We attribute this to the fact that the input is heavily skewed towards larger numbers due to the $\vb$ vector.
We are now ready to discuss how the output logits are generated.
First, as also discussed in the main paper, line 7 directly maps \texttt{token} to the increment of the current number via an off-diagonal matrix (Figure~\ref{fig:countat1l4h256d4lr01drop}d); this accounts for the bulk of the counting behavior, but is not sufficient for starting counting after SEP and ending it with EOS when reaching the upper limit.
First, line 8 is responsible for starting counting after SEP, outputting the value retrieved from \texttt{a1} at SEP (faintly visible in Figure~\ref{fig:countat1l4h256d4lr01dropactivation}g), but this effect is not as strong as line 7, so except at SEP, model would still predict the increment-by-one token.
Line 9 forwards the output from line 6.
At the end, no larger number is found in the context, triggering EOS.

\begin{figure}[H]
    \centering
\includegraphics[width=\textwidth]{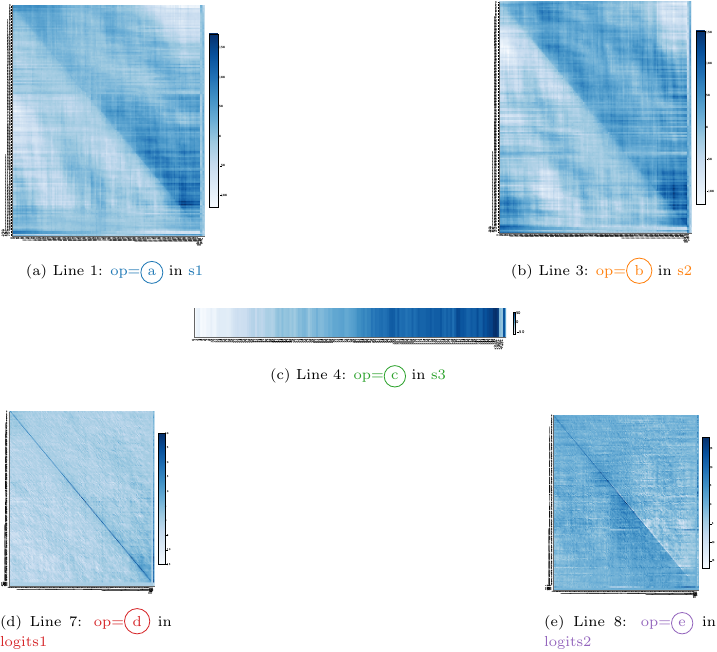}
    \caption{Heatmaps supporting the program for Count model.}
    \label{fig:countat1l4h256d4lr01drop}
\end{figure}

\begin{figure}[H]
    \centering
\includegraphics[width=\textwidth]{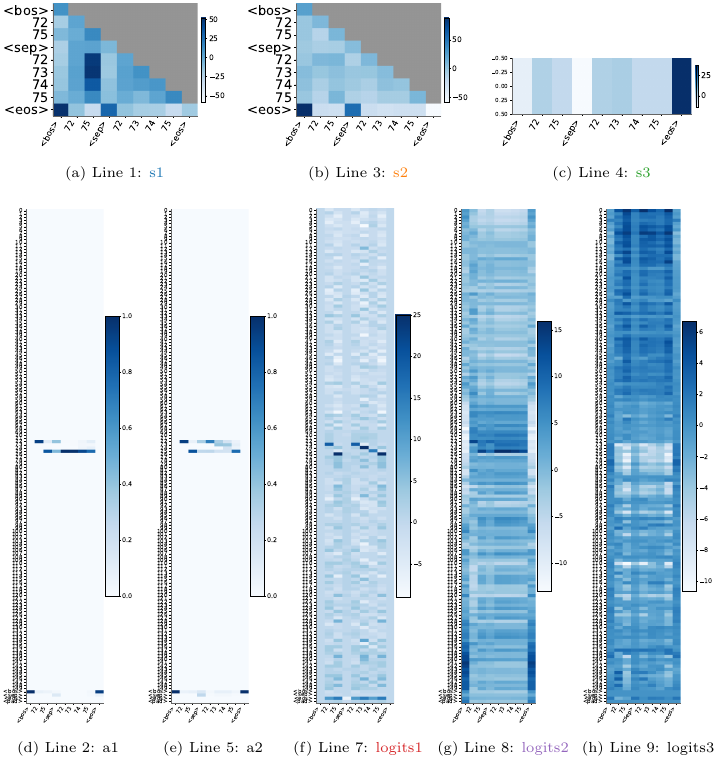}
    \caption{Variables Heatmaps for Count model on an example input.}
    \label{fig:countat1l4h256d4lr01dropactivation}
\end{figure}

\paragraph{MLP Input-Output Distributions}

Explaining per-position operation in Line 6 via its effect on Output Logits in Line 9

\textbf{Output Token: 1}

\begin{figure}[H]
    \centering
    \includegraphics[width=0.95\linewidth]{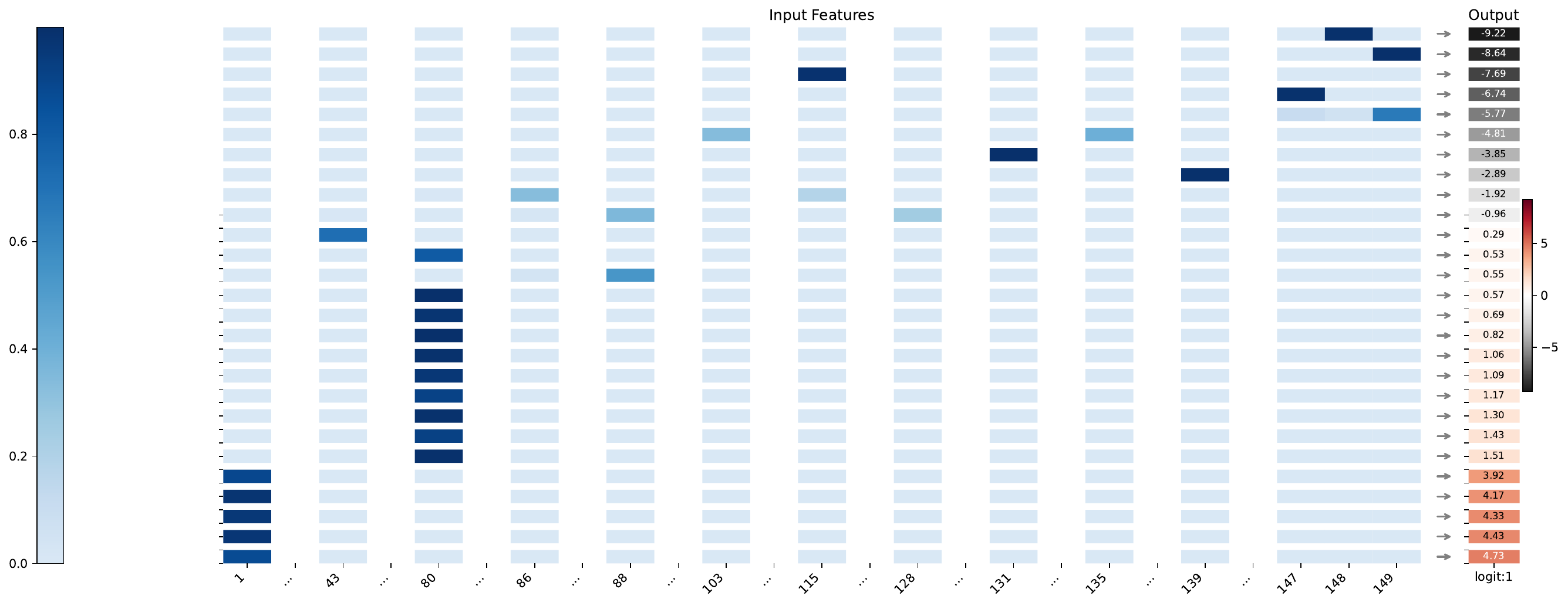}
    \caption{MLP Input-Output for token: 1 (sorted by logits)}\label{fig: mlp:countat1l4h256d4lr01drop:1}
\end{figure}

\textbf{Output Token: 10}

\begin{figure}[H]
    \centering
    \includegraphics[width=0.95\linewidth]{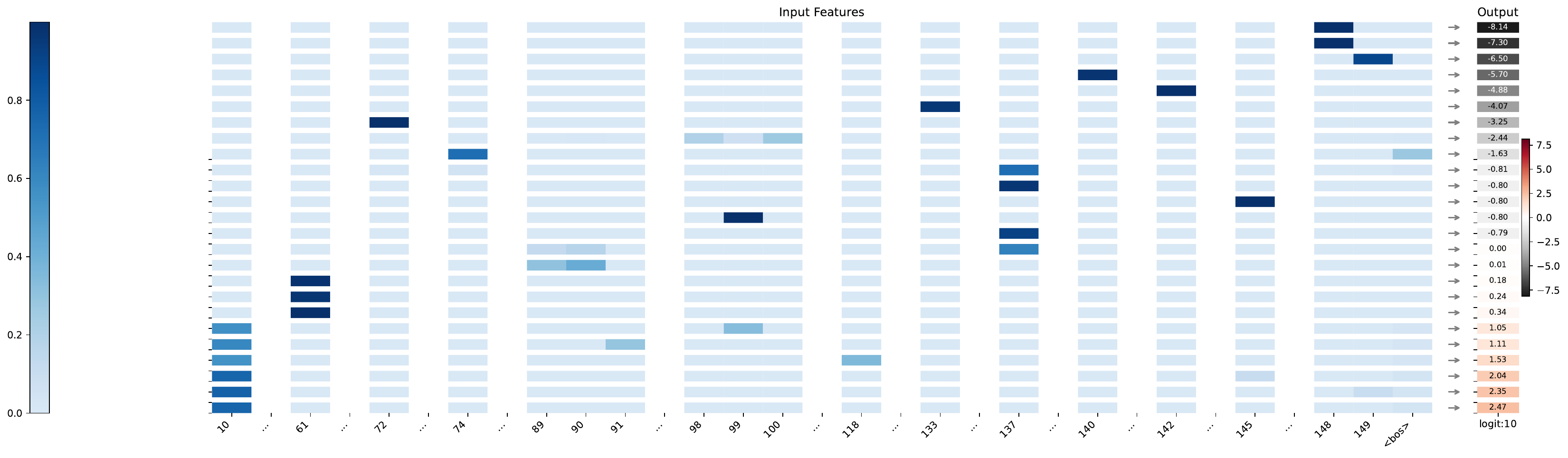}
    \caption{MLP Input-Output for token: 10 (sorted by logits)}\label{fig: mlp:countat1l4h256d4lr01drop:10}
\end{figure}

\textbf{Output Token: 100}

\begin{figure}[H]
    \centering
    \includegraphics[width=0.95\linewidth]{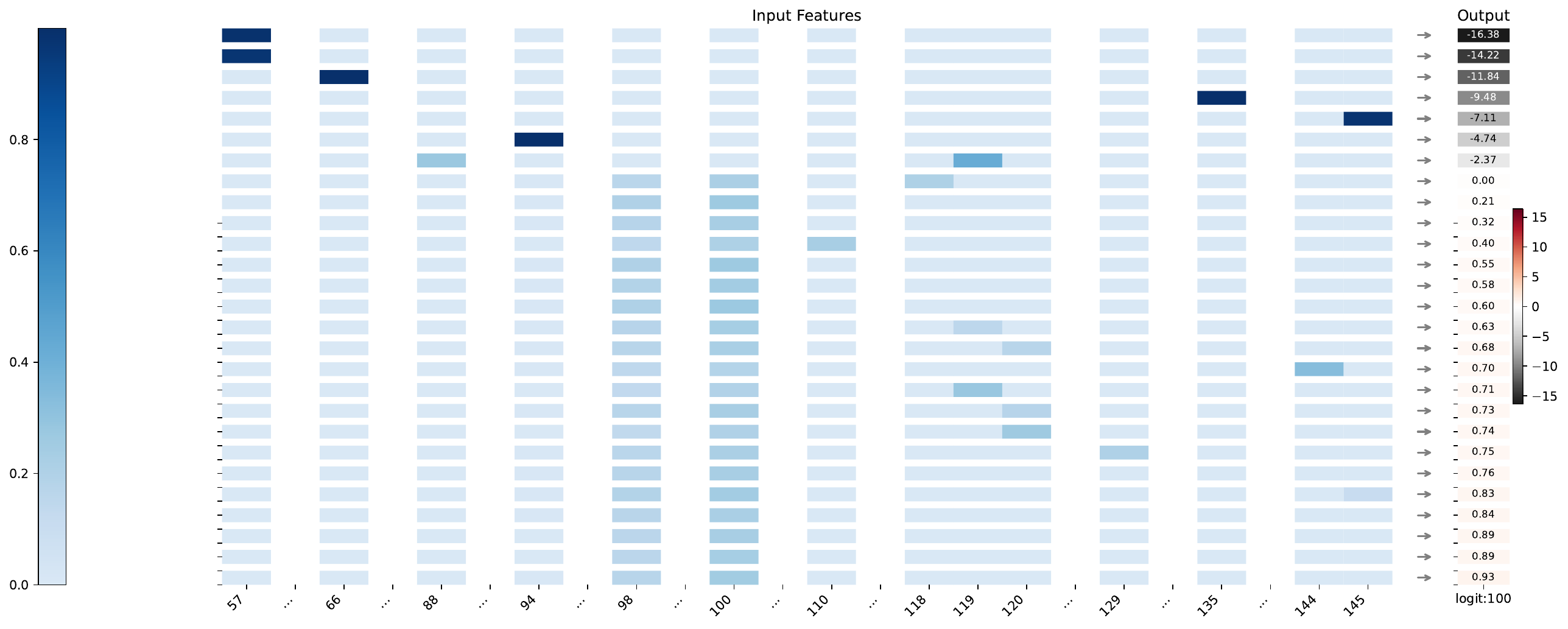}
    \caption{MLP Input-Output for token: 100 (sorted by logits)}\label{fig: mlp:countat1l4h256d4lr01drop:100}
\end{figure}

\textbf{Output Token: EOS}

\begin{figure}[H]
    \centering
    \includegraphics[width=0.95\linewidth]{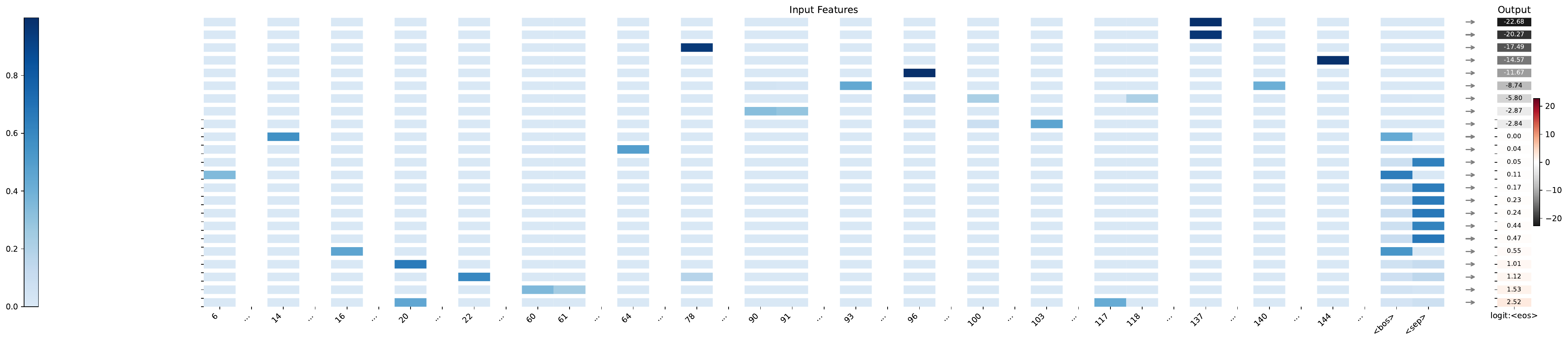}
    \caption{MLP Input-Output for token: EOS (sorted by logits)}\label{fig: mlp:countat1l4h256d4lr01drop:EOS}
\end{figure}

\definecolor{varcolorcountpct2l4h256d4lr01drops1}{rgb}{0.12,0.47,0.71}
\definecolor{varcolorcountpct2l4h256d4lr01drops2}{rgb}{1.00,0.50,0.05}
\definecolor{varcolorcountpct2l4h256d4lr01droplogits2}{rgb}{0.17,0.63,0.17}
\definecolor{varcolorcountpct2l4h256d4lr01droplogits4}{rgb}{0.84,0.15,0.16}
\subsection{Count : different arch with same length generalization performance}
\label{countpct2l4h256d4lr01dropsection}

    \par\vspace{0.5em}\noindent
    \textbf{Task Description:} $$
    \langle\text{bos}\rangle\ s_0, s_n\ \langle\text{sep}\rangle\ s_0 s_1 \dots s_n \ \langle\text{eos}\rangle \text{where } s_0, s_n \in \{0, 1, \dots, 150\}, s_n > s_0, s_{i + 1} = s_i + 1
    $$ \\
    \textbf{Performance (w/Pruning $\to$ w/Primitives):} Task Accuracy: $0.94 \to 0.86$; Match Accuracy: $0.94 \to 0.86$
    \vspace{0.5em}\par

\paragraph{Code}
\begin{Verbatim}[commandchars=\\\{\}]
1. \textcolor{varcolorcountpct2l4h256d4lr01drops1}{s1} = select(q=token, k=token, \textcolor{varcolorcountpct2l4h256d4lr01drops1}{op=(uniform selection),}
		\textcolor{varcolorcountpct2l4h256d4lr01drops1}{special\_op=(k is last)})	# layer 0 head 2
2. a1 = aggregate(s=s1, v=token) 	  # layer 0 head 2
3. new_a1 = element_wise_op(a1) 	  # layer 0 mlp
4. \textcolor{varcolorcountpct2l4h256d4lr01drops2}{s2} = select(q=token, k=token, \textcolor{varcolorcountpct2l4h256d4lr01drops2}{op=(k==q),}
		\textcolor{varcolorcountpct2l4h256d4lr01drops2}{special\_op=(uniform selection)})	# layer 1 head 3
5. s3 = select(q=token, k=new_a1, op=(q==k))	# layer 1 head 3
6. a2 = aggregate(s=s2+s3, v=new_a1) 	  # layer 1 head 3
7. logits1 = project(inp=a2, op=(inp==out))
8. \textcolor{varcolorcountpct2l4h256d4lr01droplogits2}{logits2} = project(inp=token, \textcolor{varcolorcountpct2l4h256d4lr01droplogits2}{op=\circled{c}})
9. logits3 = project(inp=new_a1, op=(inp==out))
10. \textcolor{varcolorcountpct2l4h256d4lr01droplogits4}{logits4} = project(inp=token, \textcolor{varcolorcountpct2l4h256d4lr01droplogits4}{op=\circled{d}})
11. prediction = softmax(logits1+
			logits2+
			logits3+
			logits4)
\end{Verbatim}

\paragraph{Interpretation}
This program uses a similar strategy to the other version, but with a few differences.
In line 1, SEP attends to the smallest number, which is the starting number. Line 1, 2, 3, and 9 forms the mechanism to predict starting number (e.g., Figure \ref{countpct2l4h256d4lr01drop-1} shows the obtained number is predicted). 
Meanwhile, line 8 and 10 form the mechanism for incrementing the number by 1. This mechanism is activated on normal tokens but is not activated on special tokens (magnitude is nearly 0 on rows correspond to special tokens in $\mA$). The mechanism for predicting EOS token is a bit more complex. Figure \ref{fig:countpct2l4h256d4lr01drop-eos} shows that line 3-7 is responsible for regulating when EOS is output, as \texttt{logits1} specifically holds predictions for EOS.

\begin{figure}[H]
    \centering
\includegraphics[width=\textwidth]{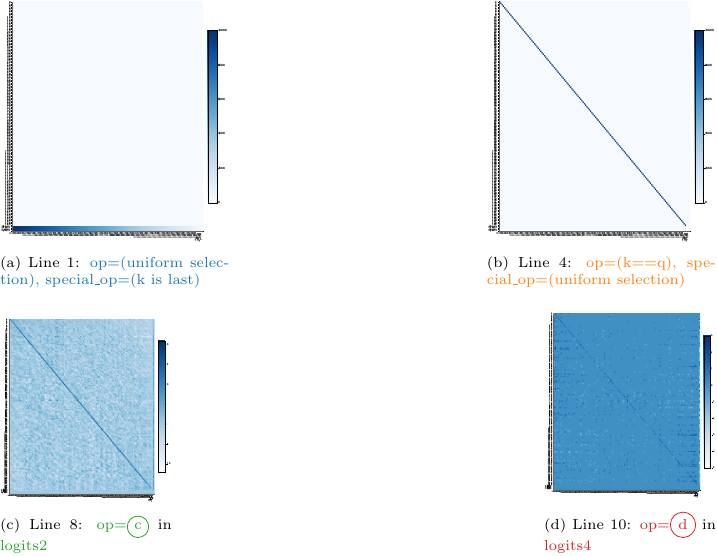}
    \caption{Heatmaps supporting the program for Count : different arch with same length generalization performance model.}
    \label{fig:countpct2l4h256d4lr01drop}
\end{figure}

\begin{figure}[H]
    \centering
\includegraphics[width=0.8\textwidth]{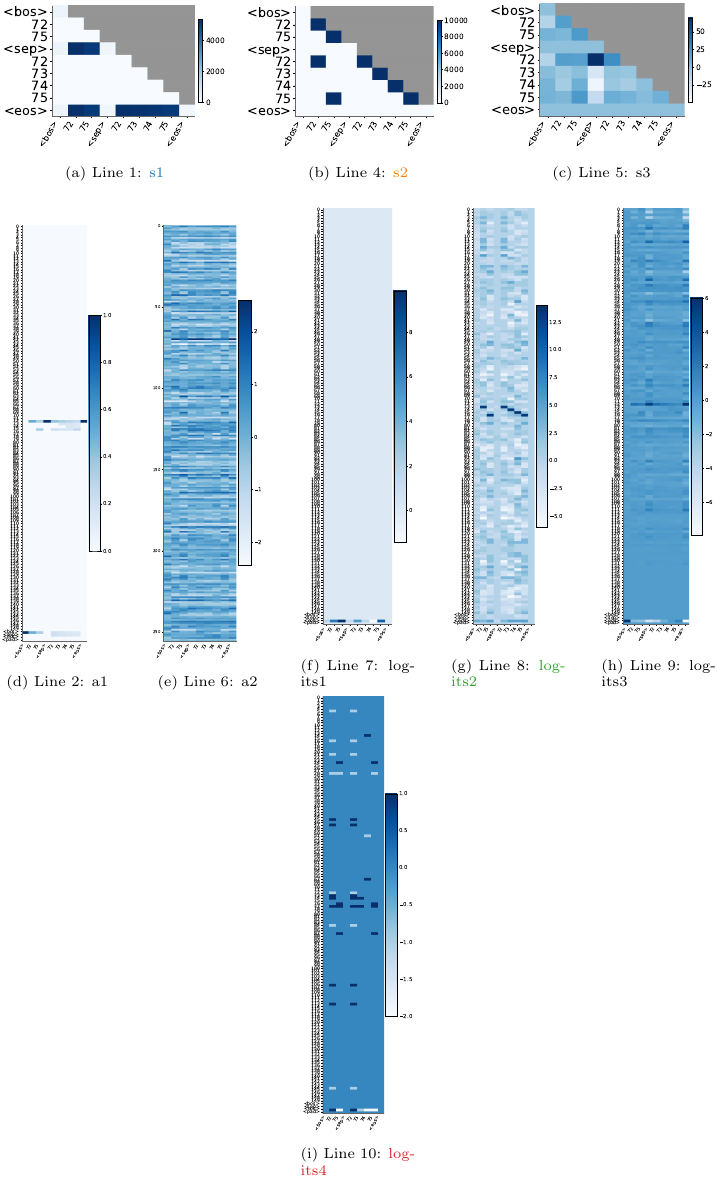}
    \caption{Variables Heatmaps for Count : different arch with same length generalization performance model on an example input.}
    \label{fig:countpct2l4h256d4lr01dropactivation}
\end{figure}

\paragraph{MLP Input-Output Distributions}

Explaining per-position operation in Line 3 via its effect on Output Logits in Line 7

\textbf{Output Token: 1}

\begin{figure}[H]
    \centering
    \includegraphics[width=0.95\linewidth]{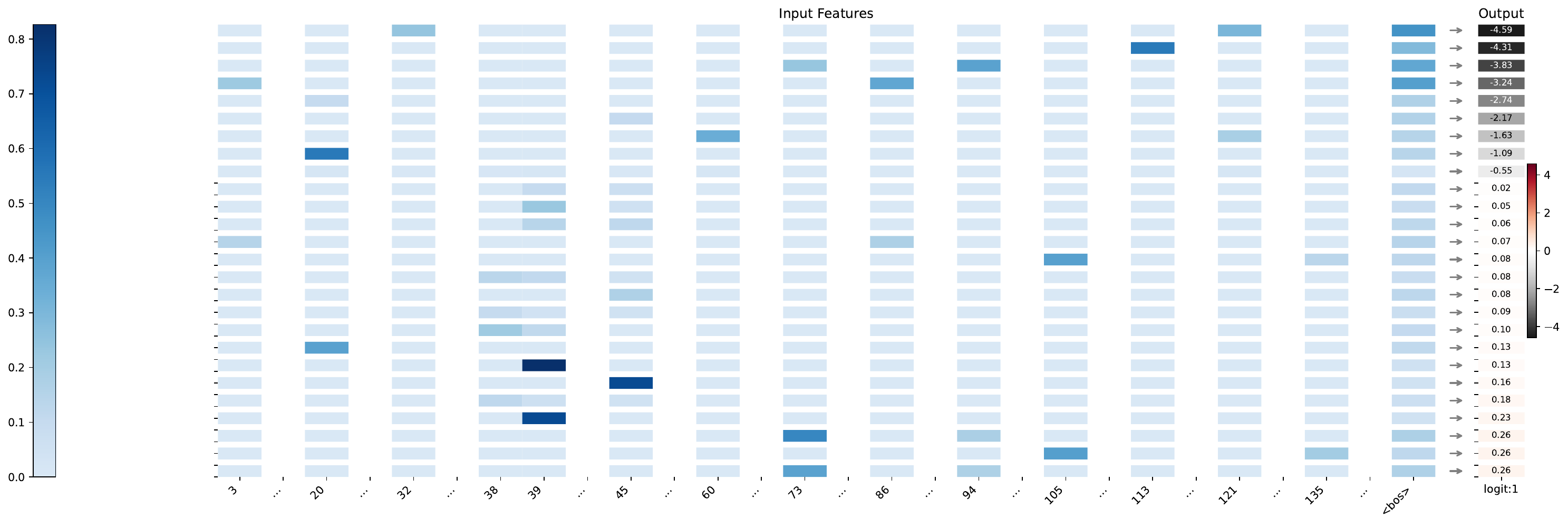}
    \caption{MLP Input-Output for token: 1 (sorted by logits)}
\end{figure}

\textbf{Output Token: 10}

\begin{figure}[H]
    \centering
    \includegraphics[width=0.95\linewidth]{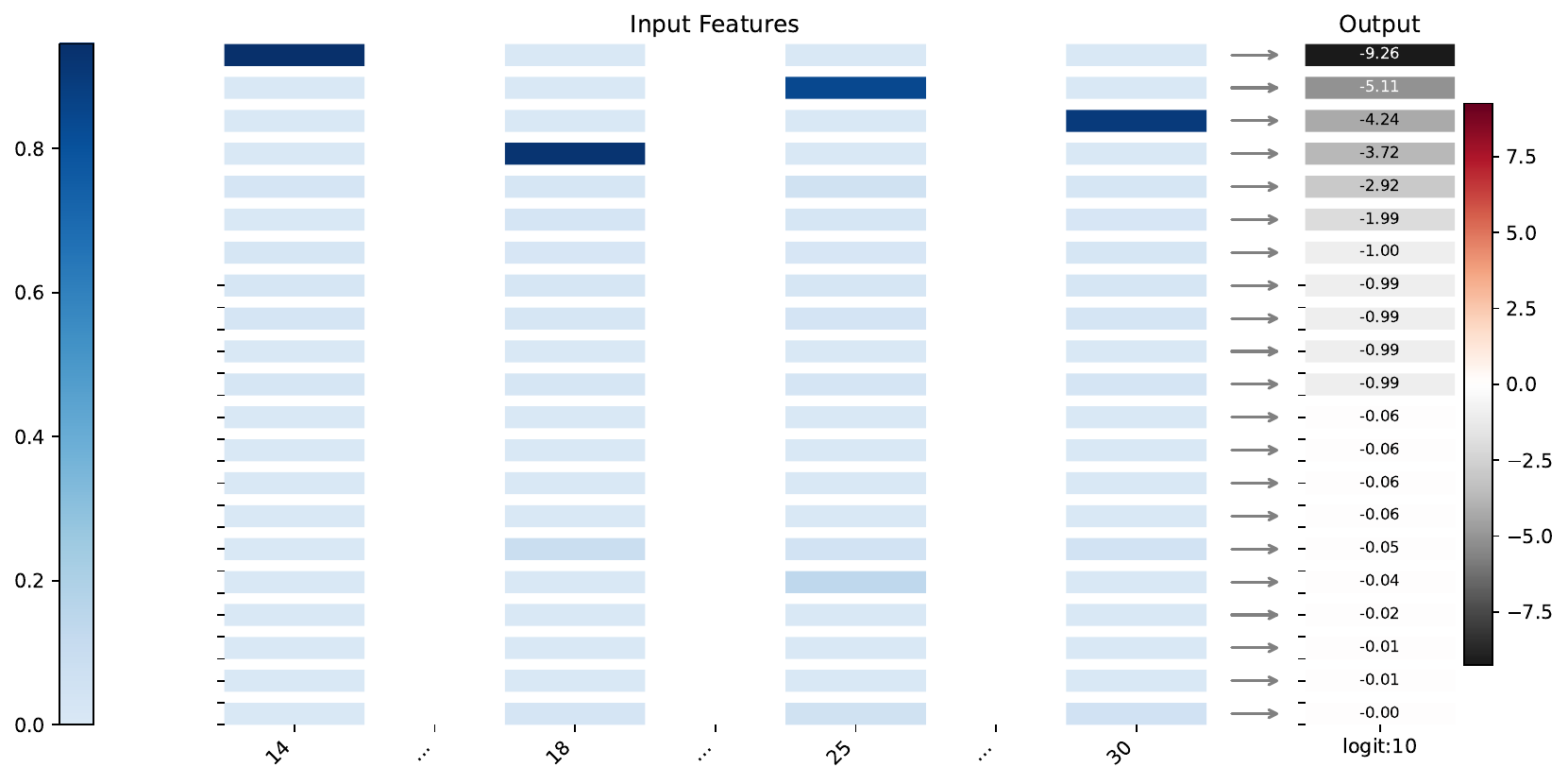}
    \caption{MLP Input-Output for token: 10 (sorted by logits)}
\end{figure}

\textbf{Output Token: 100}

\begin{figure}[H]
    \centering
    \includegraphics[width=0.95\linewidth]{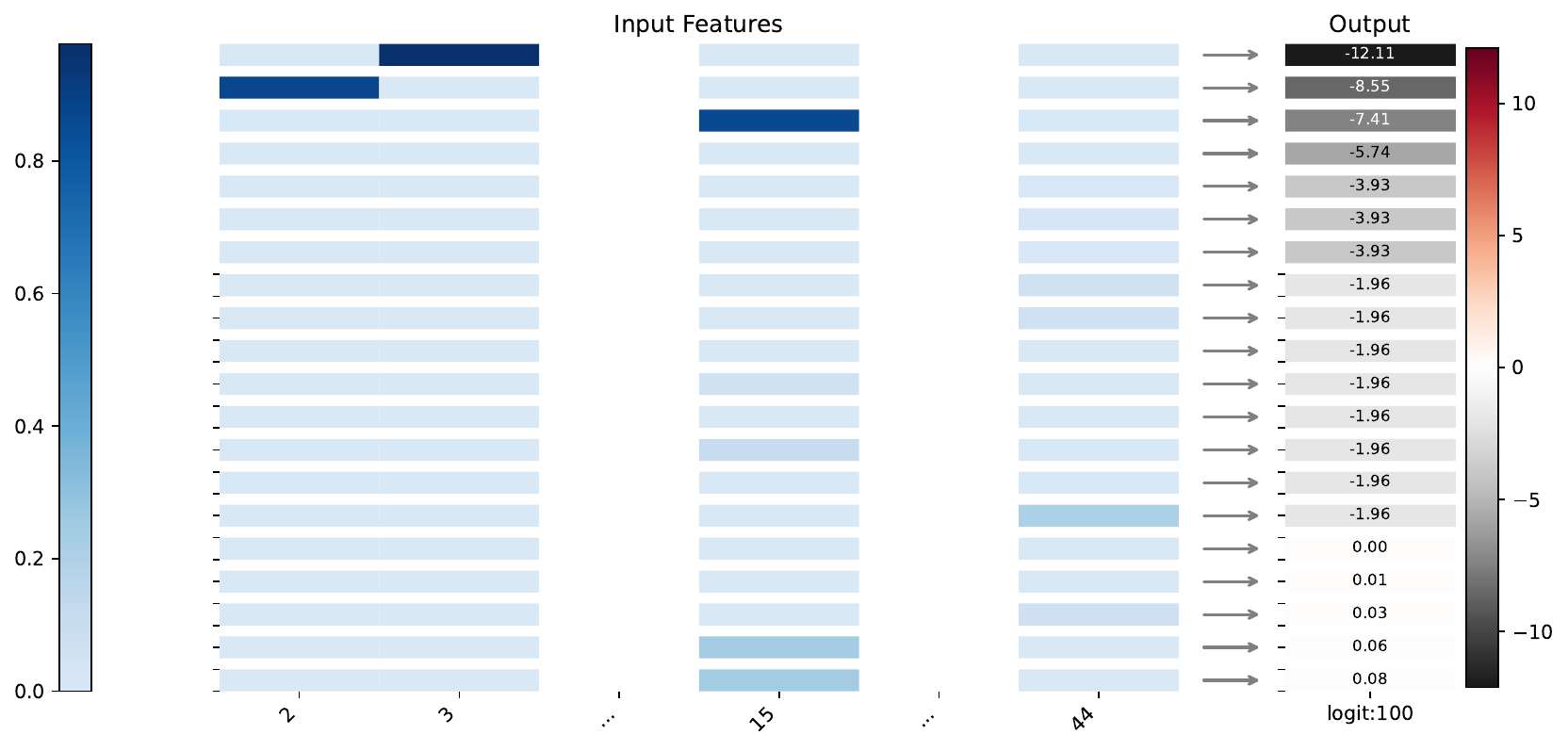}
    \caption{MLP Input-Output for token: 100 (sorted by logits)}
\end{figure}

\textbf{Output Token: EOS}

\begin{figure}[H]
    \centering
    \includegraphics[width=0.95\linewidth]{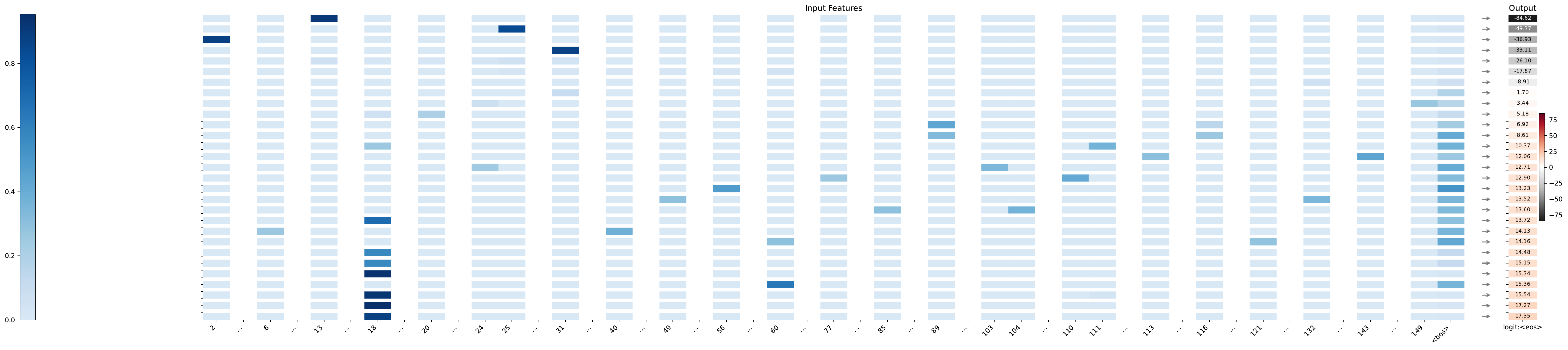}
    \caption{MLP Input-Output for token: EOS (sorted by logits)\label{fig:countpct2l4h256d4lr01drop-eos}}
\end{figure}

Explaining per-position operation in Line 3 via its effect on Output Logits in Line 9

\textbf{Output Token: 1}

\begin{figure}[H]
    \centering
    \includegraphics[width=0.95\linewidth]{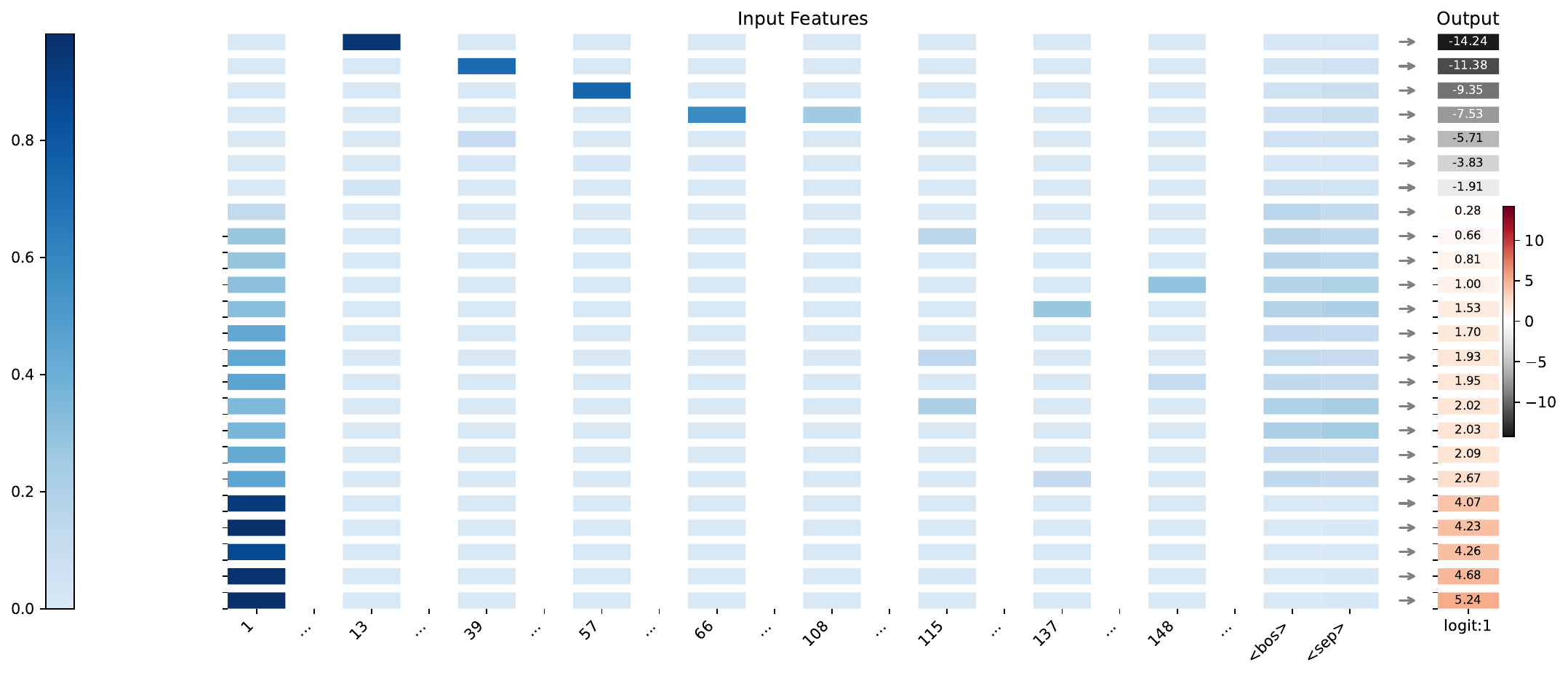}
    \caption{MLP Input-Output for token: 1 (sorted by logits)\label{countpct2l4h256d4lr01drop-1}}
\end{figure}

\textbf{Output Token: 10}

\begin{figure}[H]
    \centering
    \includegraphics[width=0.95\linewidth]{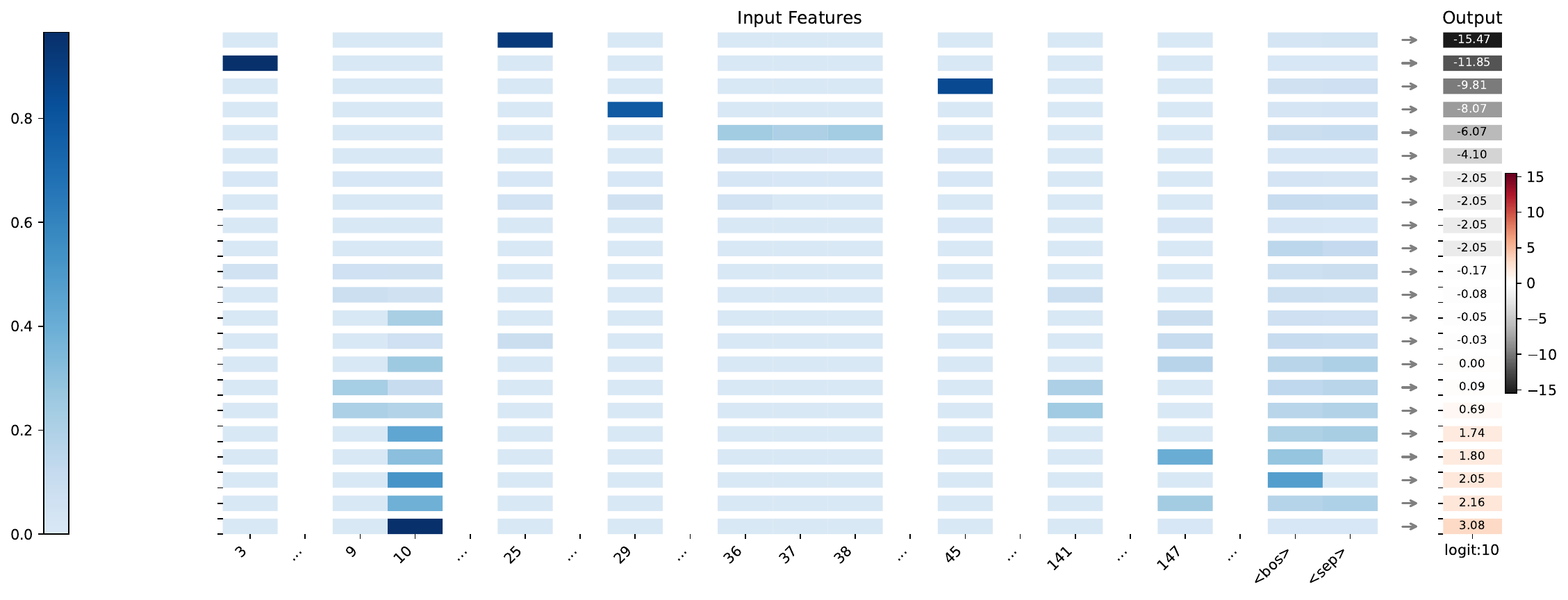}
    \caption{MLP Input-Output for token: 10 (sorted by logits)}
\end{figure}

\textbf{Output Token: 100}

\begin{figure}[H]
    \centering
    \includegraphics[width=0.95\linewidth]{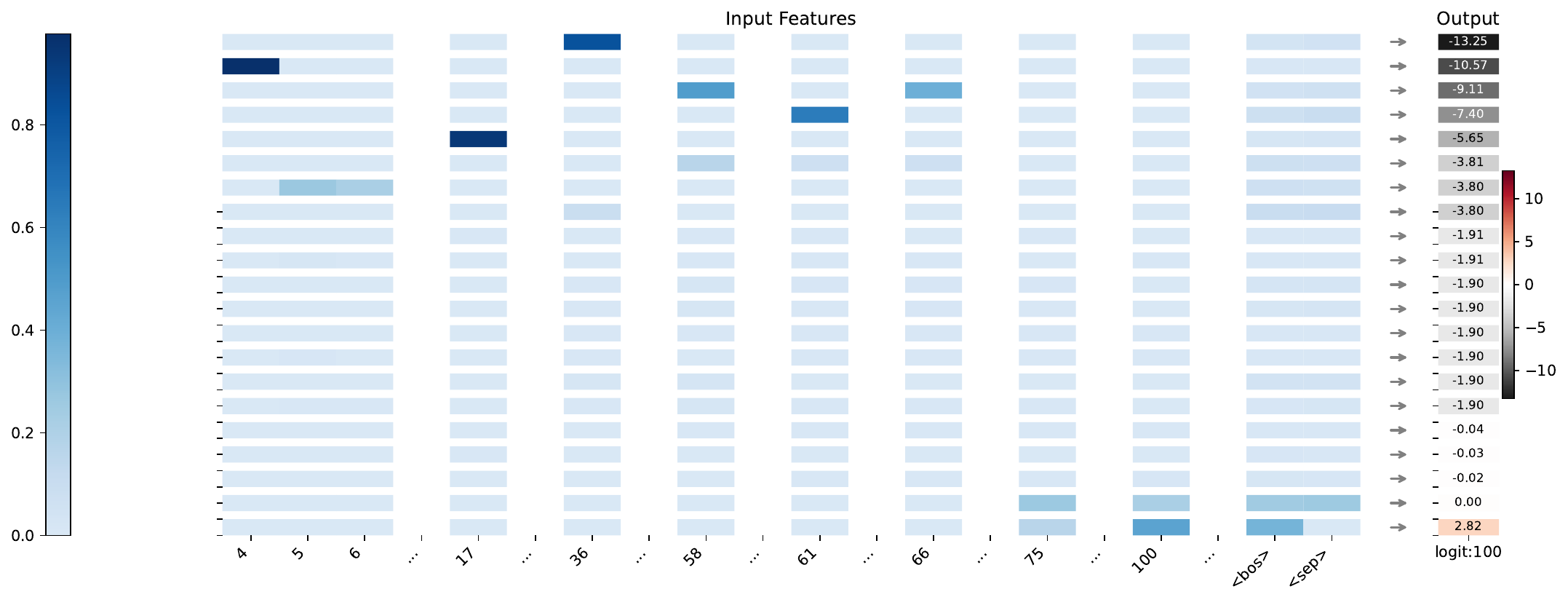}
    \caption{MLP Input-Output for token: 100 (sorted by logits)}
\end{figure}

\textbf{Output Token: EOS}

\begin{figure}[H]
    \centering
    \includegraphics[width=0.95\linewidth]{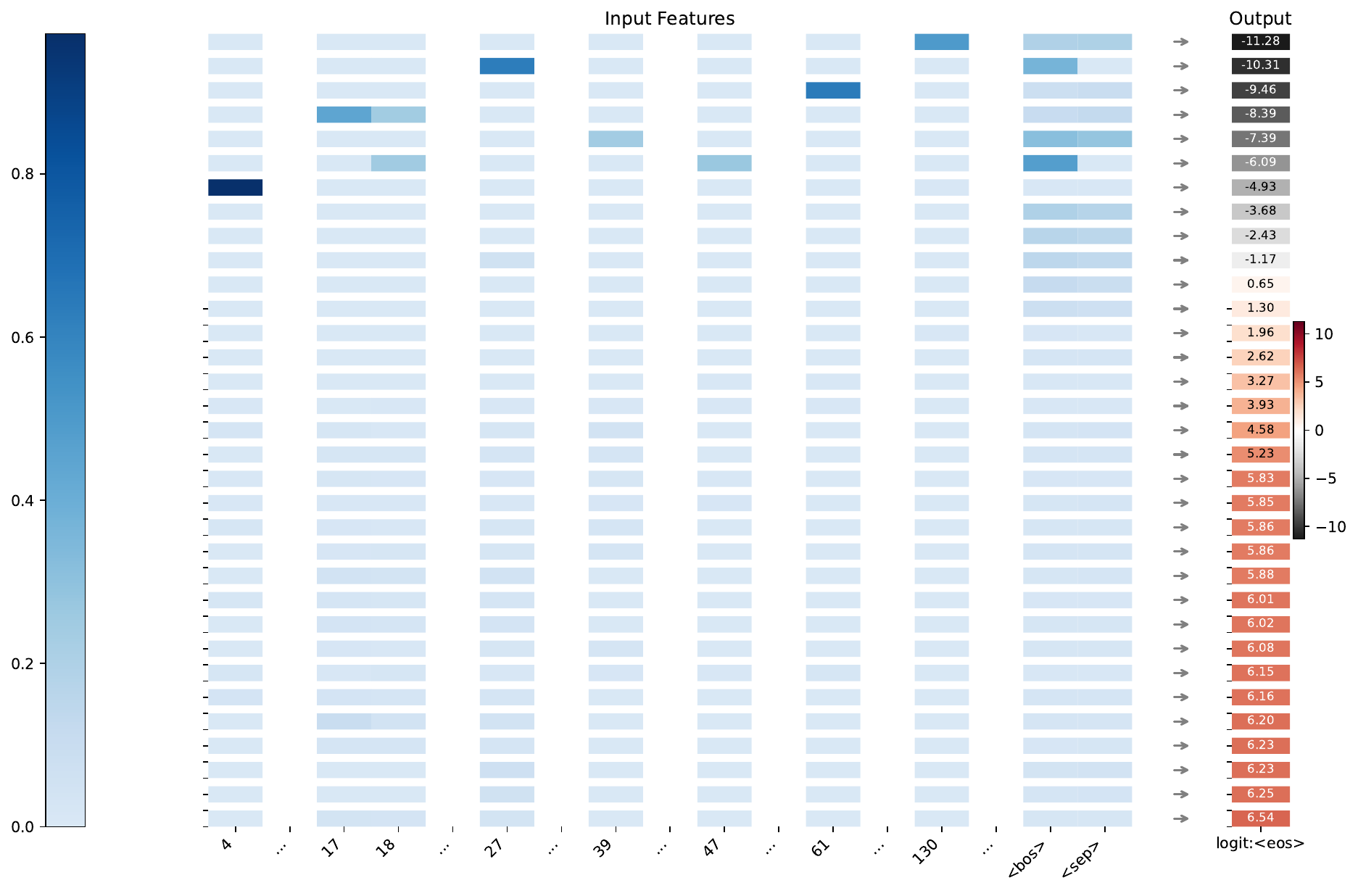}
    \caption{MLP Input-Output for token: EOS (sorted by logits)}
\end{figure}

\subparagraph{Explaining per-position operation in Line 3 via its effect on Output Key in Line 5}
See Figure~\ref{fig:countpct2l4h256d4lr01drop:mlp_attn_1_3_qmlp-0-wte_kmlp-0-attn_output-0-2-wte_cluster0}.

\begin{figure}[H]
    \centering
    \includegraphics[width=0.95\linewidth]{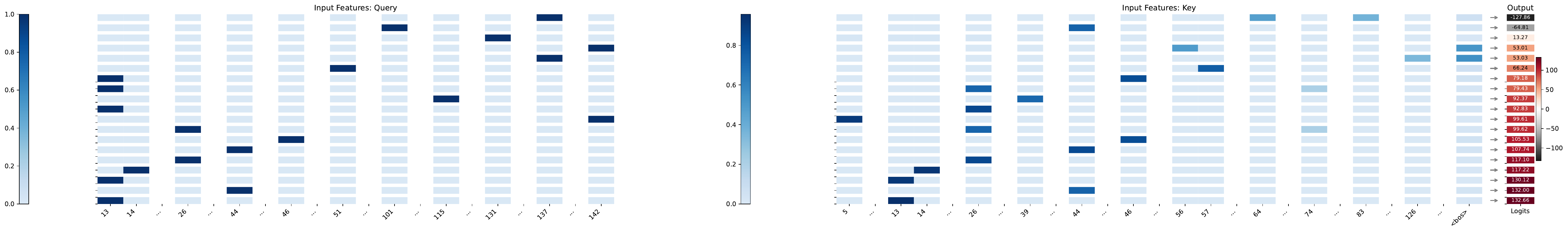}
    \caption{MLP Input-Output (sorted by logits)}\label{fig:countpct2l4h256d4lr01drop:mlp_attn_1_3_qmlp-0-wte_kmlp-0-attn_output-0-2-wte_cluster0}
\end{figure}

\definecolor{varcolormajorityat1l4h256d3lr01droplogits1}{rgb}{0.12,0.47,0.71}
\subsection{Most Frequent}
\label{majorityat1l4h256d3lr01dropsection}

    \par\vspace{0.5em}\noindent
    \textbf{Task Description:} $$
    \langle\text{bos}\rangle\ s\ \langle\text{sep}\rangle\ \text{Maj}(s)\ \text{where } s \in \{a, b, c, \dots , y, z\}^*
    $$ \\
    \textbf{Architecture:} Layers: 1 \quad Heads: 4 \quad Hidden Dim: 256 LR: 0.001 \quad Dropout: 0.1 \\
    \textbf{Performance (w/Pruning $\to$ w/Primitives):} Task Accuracy: $0.99 \to 1.00$; Match Accuracy: $0.98 \to 0.99$
    \vspace{0.5em}\par

\paragraph{Code}
\begin{Verbatim}[commandchars=\\\{\}]
1. a1 = aggregate(s=[], v=token) 	  # layer 0 head 1
2. \textcolor{varcolormajorityat1l4h256d3lr01droplogits1}{logits1} = project(inp=a1, \textcolor{varcolormajorityat1l4h256d3lr01droplogits1}{op=(inp==out),}
		\textcolor{varcolormajorityat1l4h256d3lr01droplogits1}{special\_op=(uniform selection)})
3. prediction = softmax(logits1)
\end{Verbatim}

\paragraph{Interpretation}
This program is interpreted in Main Paper, Figure 1.

\begin{figure}[H]
    \centering
\includegraphics[width=\textwidth]{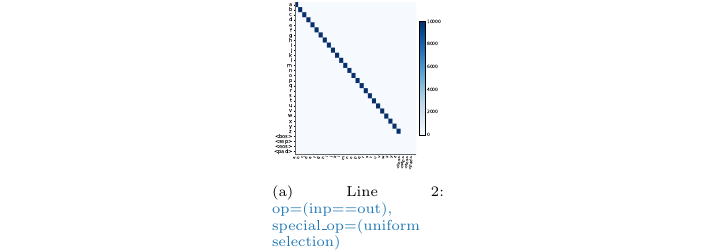}
    \caption{Heatmaps supporting the program for Most Frequent model. The identity matrix (temperature-scaled to create effectively hard attention) for normal tokens, uniform on special tokens.}
    \label{fig:majorityat1l4h256d3lr01drop}
\end{figure}

\begin{figure}[H]
    \centering
\includegraphics[width=\textwidth]{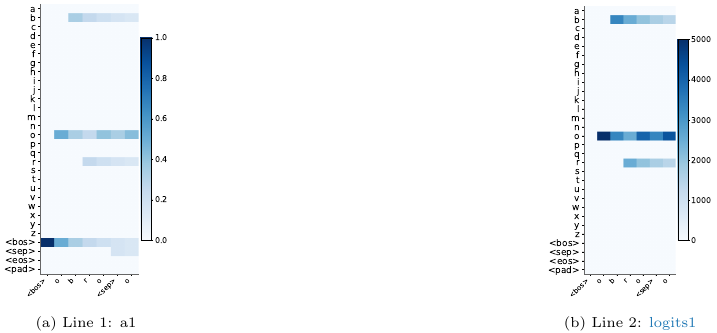}
    \caption{Variables Heatmaps for Most Frequent model on an example input. (a) Aggregation computes a histogram of symbols seen so far (x-axis is the input, y-axis are the dimensions of the activation). (b) Output logits are directly obtained from (a) in the case of normal tokens, and erased for special tokens, reflecting the matrix in Figure~\ref{fig:majorityat1l4h256d3lr01drop}a.}
    \label{fig:majorityat1l4h256d3lr01dropactivation}
\end{figure}

\definecolor{varcolormajorityat4l4h256d3lr00droplogits1}{rgb}{0.12,0.47,0.71}
\subsection{Most Frequent : different architecture}
\label{majorityat4l4h256d3lr00dropsection}

    \par\vspace{0.5em}\noindent
    \textbf{Task Description:} $$
    \langle\text{bos}\rangle\ s\ \langle\text{sep}\rangle\ \text{Maj}(s)\ \text{where } s \in \{a, b, c, \dots , y, z\}^*
    $$ \\
    \textbf{Architecture:} Layers: 4 \quad Heads: 4 \quad Hidden Dim: 256 LR: 0.001 \quad Dropout: 0 \\
    \textbf{Performance (w/Pruning $\to$ w/Primitives):} Task Accuracy: $0.98 \to 1.00$; Match Accuracy: $0.98 \to 1.00$
    \vspace{0.5em}\par

\paragraph{Code}
\begin{Verbatim}[commandchars=\\\{\}]
1. a1 = aggregate(s=[], v=token) 	  # layer 0 head 0
2. \textcolor{varcolormajorityat4l4h256d3lr00droplogits1}{logits1} = project(inp=a1, \textcolor{varcolormajorityat4l4h256d3lr00droplogits1}{op=(inp==out),}
		\textcolor{varcolormajorityat4l4h256d3lr00droplogits1}{special\_op=(uniform selection)})
3. prediction = softmax(logits1)
\end{Verbatim}

\paragraph{Interpretation}
The program is essentially the same as in App.~\ref{majorityat1l4h256d3lr01dropsection}.

\begin{figure}[H]
    \centering
\includegraphics[width=\textwidth]{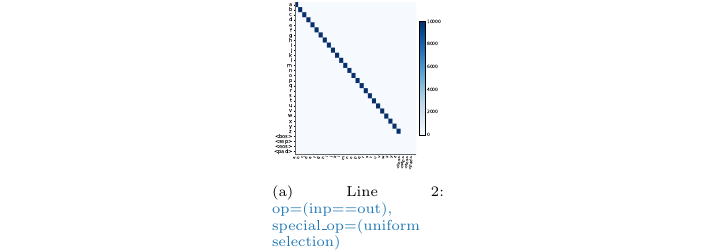}
    \caption{Heatmaps supporting the program for Most Frequent : different architecture model.}
    \label{fig:majorityat4l4h256d3lr00drop}
\end{figure}

\begin{figure}[H]
    \centering
\includegraphics[width=\textwidth]{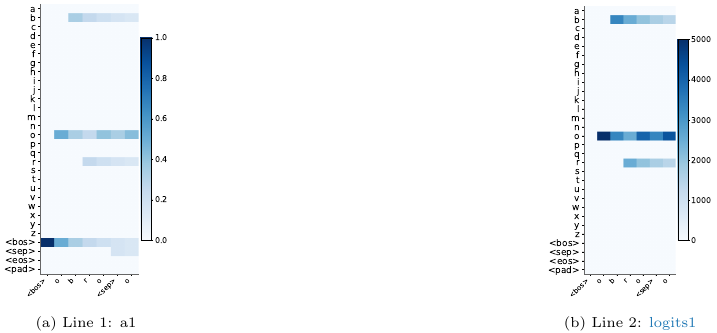}
    \caption{Variables Heatmaps for Most Frequent : different architecture model on an example input.}
    \label{fig:majorityat4l4h256d3lr00dropactivation}
\end{figure}

\definecolor{varcolorsortat1l1h256d3lr01drops1}{rgb}{0.12,0.47,0.71}
\subsection{Sort}
\label{sortat1l1h256d3lr01dropsection}

    \par\vspace{0.5em}\noindent
    \textbf{Task Description:} $$
    \langle\text{bos}\rangle\ s\ \langle\text{sep}\rangle\ s_{\sigma(0)}, s_{\sigma(0)} \dots s_{\sigma(n)}\ \langle\text{eos}\rangle \text{where } s_i \in \{0, 1, \dots , 150\}, \sigma \text{ sorts } s
    $$ \\
    \textbf{Architecture:} Layers: 1 \quad Heads: 1 \quad Hidden Dim: 256 LR: 0.001 \quad Dropout: 0.1 \\
    \textbf{Performance (w/Pruning $\to$ w/Primitives):} Task Accuracy: $0.95 \to 0.90$; Match Accuracy: $0.95 \to 0.90$
    \vspace{0.5em}\par

\paragraph{Code}
\begin{Verbatim}[commandchars=\\\{\}]
1. \textcolor{varcolorsortat1l1h256d3lr01drops1}{s1} = select(q=token, k=token, \textcolor{varcolorsortat1l1h256d3lr01drops1}{op=\circled{a}})	# layer 0 head 0
2. a1 = aggregate(s=s1, v=token) 	  # layer 0 head 0
3. new_a1 = element_wise_op(a1) 	  # layer 0 mlp
4. logits1 = project(inp=new_a1, op=(inp==out))
5. prediction = softmax(logits1)
\end{Verbatim}

\paragraph{Interpretation}
This is discussed in Main paper, Figure~\ref{fig:sorting}. Line 1 assigns weight to input numbers that are a little larger than the current token. Line 2 then creates a histogram of larger numbers, with the biggest weight given to the smallest one. The operation in line 3 essentially performs a hardening operation and produces the output logits.

\begin{figure}[H]
    \centering
\includegraphics[width=\textwidth]{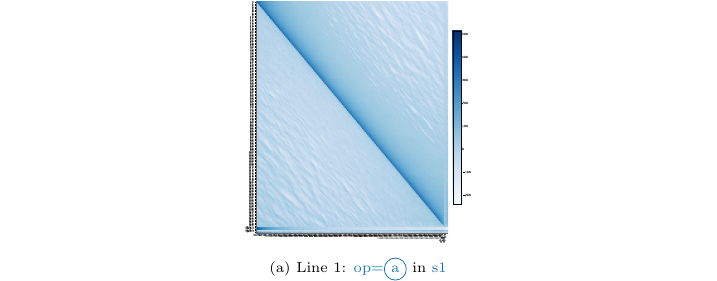}
    \caption{Heatmaps supporting the program for Sort model. Part of this matrix is shown in Figure~\ref{fig:sorting}.}
    \label{fig:sortat1l1h256d3lr01drop}
\end{figure}

\begin{figure}[H]
    \centering
\includegraphics[width=\textwidth]{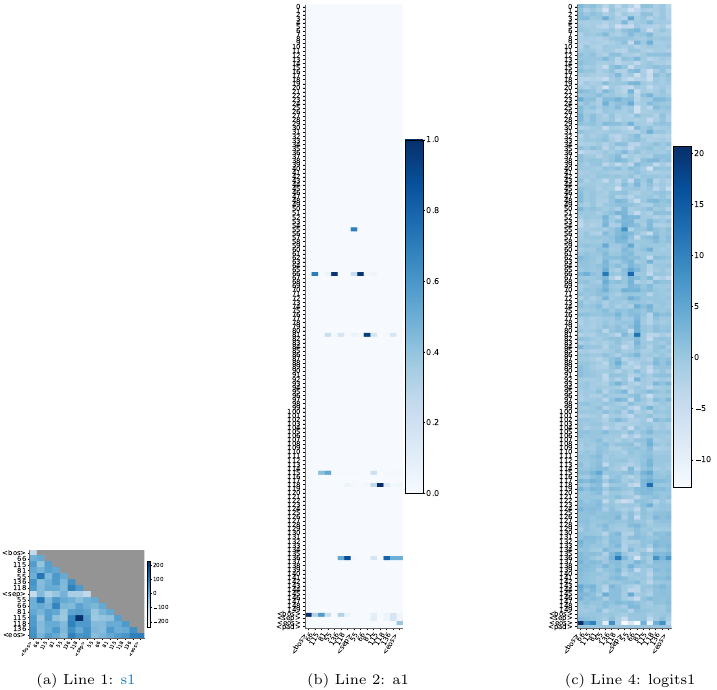}
    \caption{Variables Heatmaps for Sort model on an example input. (a) The selector (rows index query positions, columns index key positions) favors numbers slightly larger than the current one. (b) The resulting weighted histogram. (c) Next-token predictions result from applying the elementwise operation on \texttt{a1} and projecting via the identity matrix. At each generation step (after SEP), logit is highest on the next symbol; and on EOS at the end.}
    \label{fig:sortat1l1h256d3lr01dropactivation}
\end{figure}

\paragraph{MLP Input-Output Distributions}

Explaining per-position operation in Line 3 via its effect on Output Logits in Line 4

\textbf{Output Token: 0}

\begin{figure}[H]
    \centering
    \includegraphics[width=0.95\linewidth]{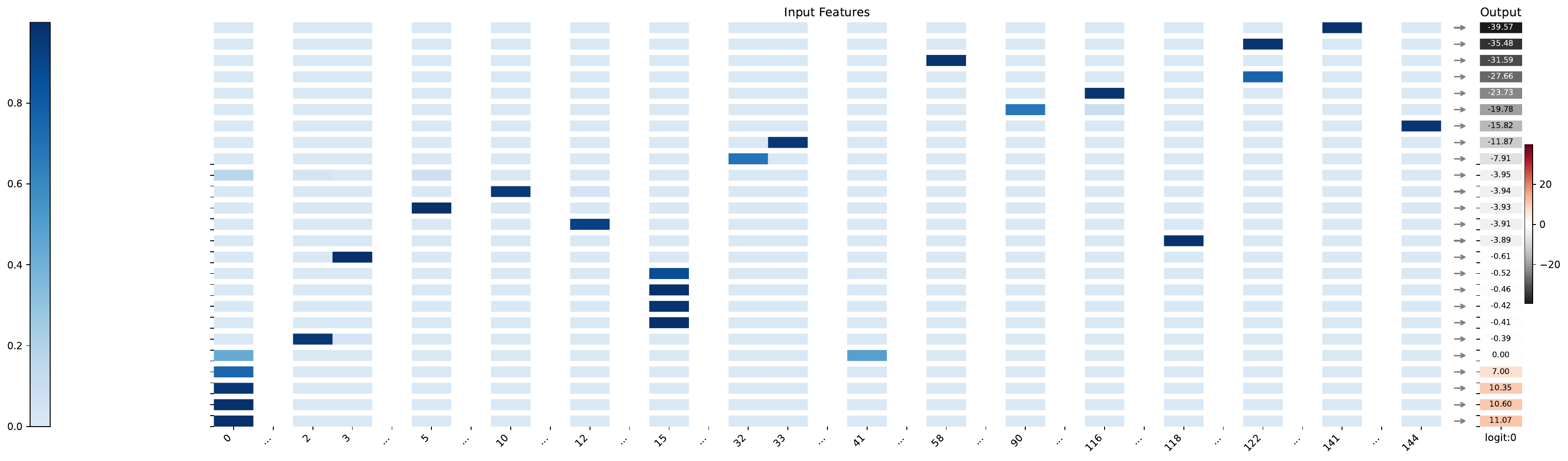}
    \caption{MLP Input-Output for token: 0 (sorted by logits). The operation hardens the input by promoting the output dimension for ``0'' when the input has a high entry in this dimension; similarly for the other dimensions.}
\end{figure}

\textbf{Output Token: 10}

\begin{figure}[H]
    \centering
    \includegraphics[width=0.95\linewidth]{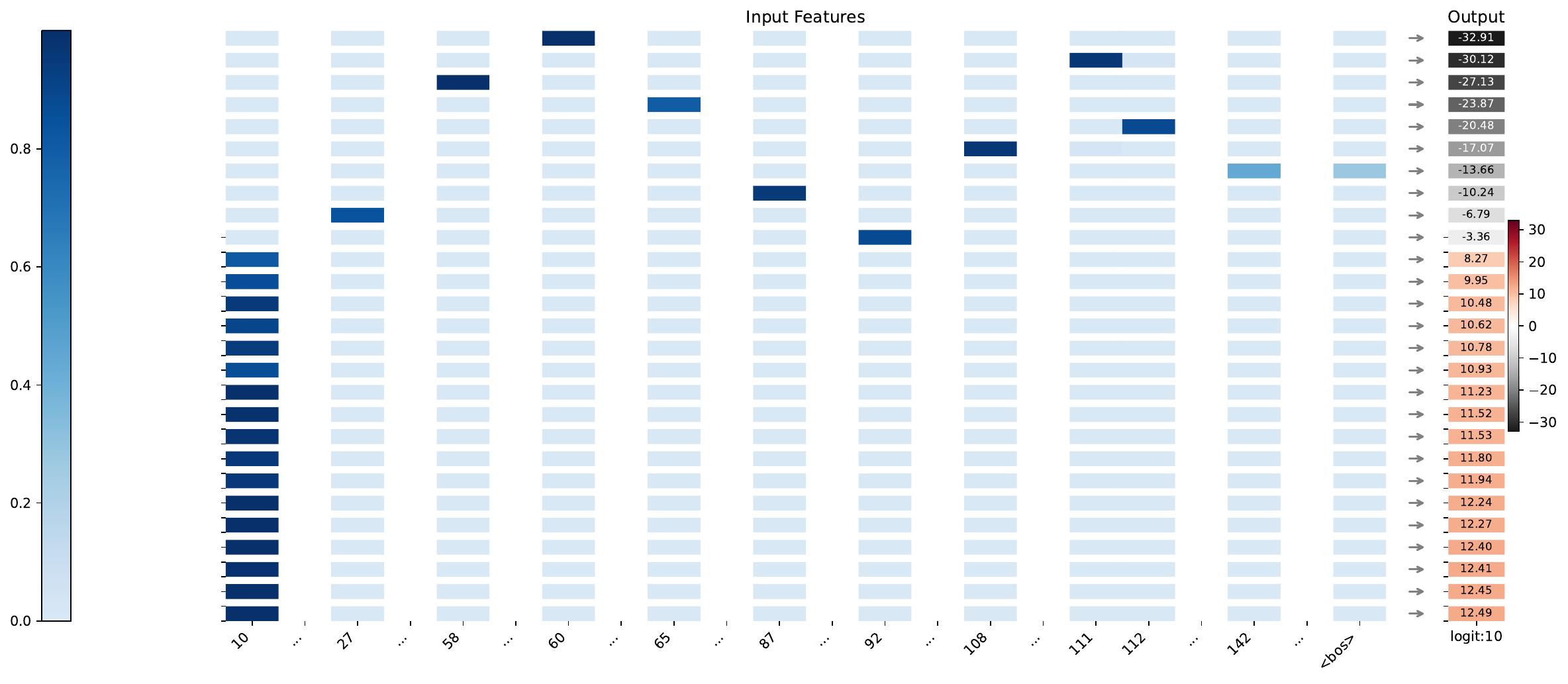}
    \caption{MLP Input-Output for token: 10 (sorted by logits)}
\end{figure}

\textbf{Output Token: 100}

\begin{figure}[H]
    \centering
    \includegraphics[width=0.95\linewidth]{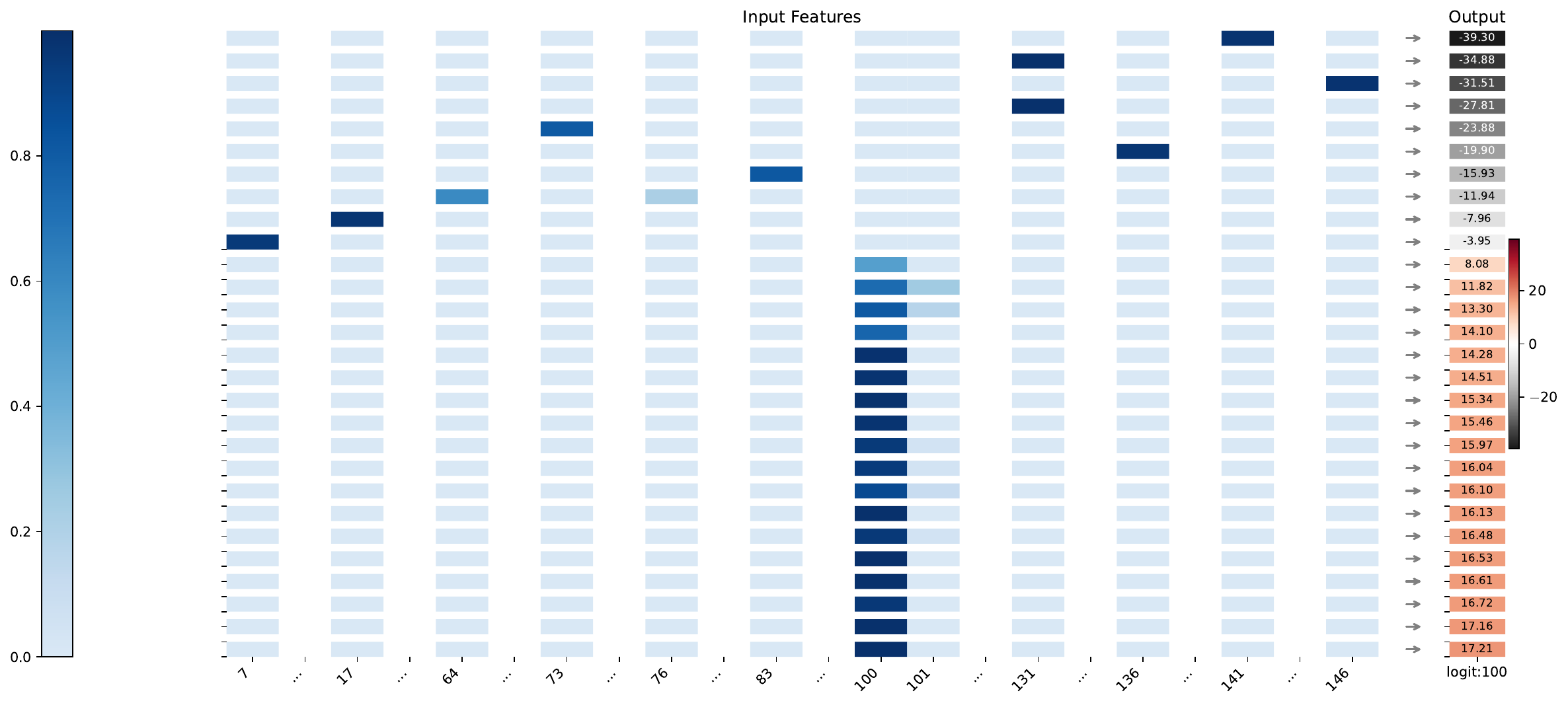}
    \caption{MLP Input-Output for token: 100 (sorted by logits)}
\end{figure}

\textbf{Output Token: EOS}

\begin{figure}[H]
    \centering
    \includegraphics[width=0.95\linewidth]{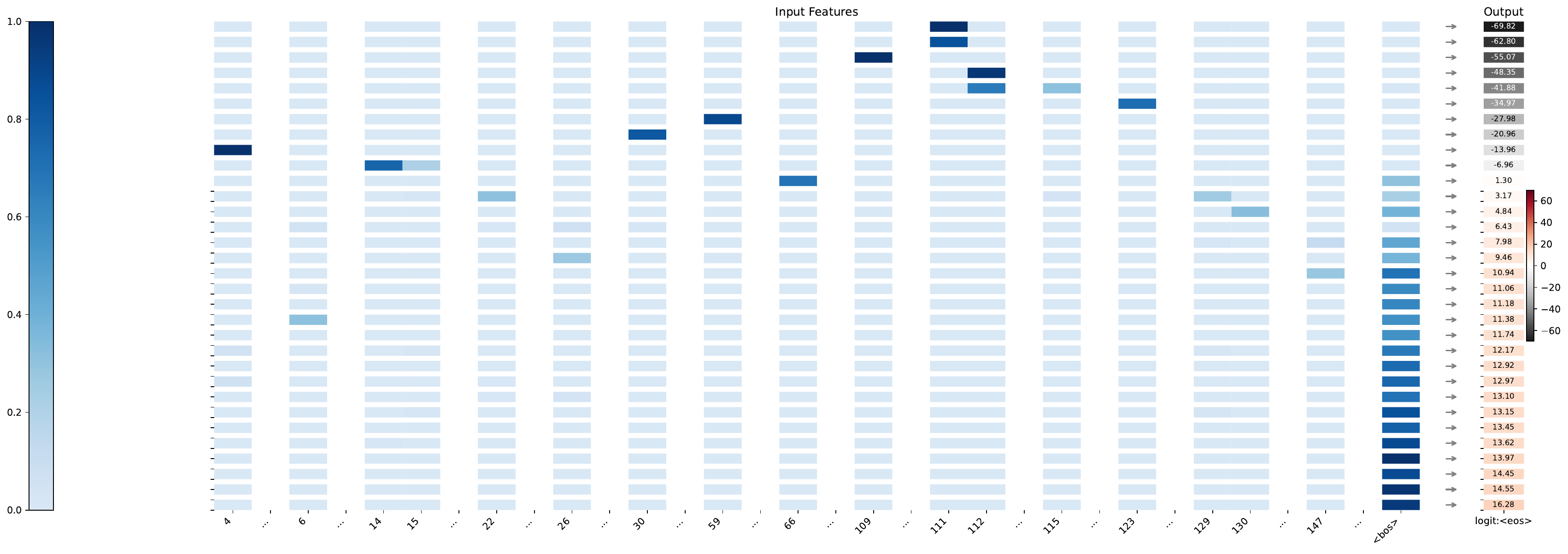}
    \caption{MLP Input-Output for token: EOS (sorted by logits). The operation generates a high logit for EOS when the input activation has a high entry for BOS.}
    \label{fig:sortat1l1h256d3lr01drop-EOS}
\end{figure}

\definecolor{varcoloruniquebigramcopyat2l4h256d3lr01drops1}{rgb}{0.12,0.47,0.71}
\definecolor{varcoloruniquebigramcopyat2l4h256d3lr01drops2}{rgb}{1.00,0.50,0.05}
\definecolor{varcoloruniquebigramcopyat2l4h256d3lr01drops3}{rgb}{0.17,0.63,0.17}
\definecolor{varcoloruniquebigramcopyat2l4h256d3lr01droplogits1}{rgb}{0.84,0.15,0.16}
\subsection{Unique Bigram Copy}
\label{uniquebigramcopyat2l4h256d3lr01dropsection}

    \par\vspace{0.5em}\noindent
    \textbf{Task Description:} $$
    \langle\text{bos}\rangle\ s_0, s_1, \dots, s_n\ \langle\text{sep}\rangle\ s_0, s_1, \dots, s_n\ \langle\text{eos}\rangle \text{where } s_i \in \{0, 1, \dots , 15\}, (s_i, s_{i+1}) \neq (s_j, s_{j+1}) \text{ iff } i \neq j
    $$ \\
    \textbf{Architecture:} Layers: 2 \quad Heads: 4 \quad Hidden Dim: 256 LR: 0.001 \quad Dropout: 0.1 \\
    \textbf{Performance (w/Pruning $\to$ w/Primitives):} Task Accuracy: $0.95 \to 0.92$; Match Accuracy: $0.94 \to 0.91$
    \vspace{0.5em}\par

\paragraph{Code}
\begin{Verbatim}[commandchars=\\\{\}]
1. \textcolor{varcoloruniquebigramcopyat2l4h256d3lr01drops1}{s1} = select(q=pos, k=pos, \textcolor{varcoloruniquebigramcopyat2l4h256d3lr01drops1}{op=\circled{a}})	# layer 0 head 0
2. a1 = aggregate(s=s1, v=token) 	  # layer 0 head 0
3. \textcolor{varcoloruniquebigramcopyat2l4h256d3lr01drops2}{s2} = select(q=pos, k=pos, \textcolor{varcoloruniquebigramcopyat2l4h256d3lr01drops2}{op=(k==q-1)})	# layer 0 head 2
4. a2 = aggregate(s=s2, v=token) 	  # layer 0 head 2
5. token_x_a1_x_a2 = Cartesian_product(token, a1, a2) 	  # layer 0 mlp
6. \textcolor{varcoloruniquebigramcopyat2l4h256d3lr01drops3}{s3} = select(q=token_x_a1_x_a2, k=token_x_a1_x_a2, \textcolor{varcoloruniquebigramcopyat2l4h256d3lr01drops3}{op=\circled{c}})	# layer 1 head 2
7. a3 = aggregate(s=s3, v=token_x_a1_x_a2) 	  # layer 1 head 2
8. \textcolor{varcoloruniquebigramcopyat2l4h256d3lr01droplogits1}{logits1} = project(inp=a3, \textcolor{varcoloruniquebigramcopyat2l4h256d3lr01droplogits1}{op=\circled{d}})
9. prediction = softmax(logits1)
\end{Verbatim}

\paragraph{Interpretation}
This is an extension of the induction head program discussed in Main Paper Figure~\ref{fig:unique-copy}; it crucially involves a joint nonlinear representation of trigrams. Lines 1--4 retrieve the two preceding symbols; Line 5 creates a joint (nonlinear) representation of the trigram ending with the current symbol. In the original transformer, this had been provided by the MLP in the first layer. Each dimension in variable \texttt{token\_x\_a1\_x\_a2} corresponds to a combination of token $t_0$-$t_{-2}$-$t_{-1}$. Line 6 then defines a selector where a query representing a trigram X-...-Y matches a key representing any trigram  ...-Y-X (Figure~\ref{fig:uniquebigramcopyat2l4h256d3lr01drop}c, e.g., (0-0/1/2-2, 0-2-0), (0-0/1/2-1, 0-1-0) have high values.). That is, matching is done based only on two tokens $t_0^q$ $t_{-1}^q$ of the query and $t^k_{-1}$ $t^k_{-2}$ of the key. 
Because bigrams are unique in the string, there will be a unique (if any) match. The resulting aggregate \texttt{a3} now holds this key  trigram. The current token of the key trigam $t^k_{0}$ is forwarded to the output logit in line 8.

\begin{figure}[H]
    \centering
\includegraphics[width=0.9\textwidth]{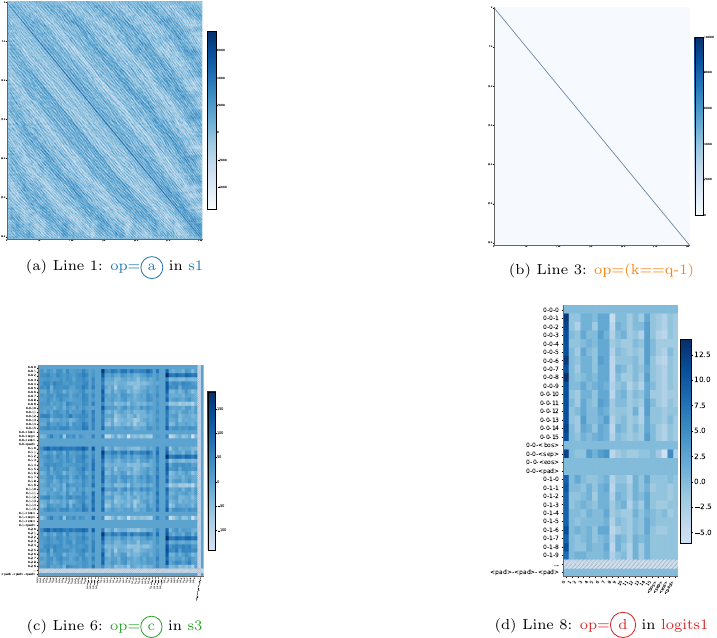}
    \caption{Heatmaps supporting the program for Unique Bigram Copy model. In (c) and (d), the input to the operation is a result of Cartesian product of three variables, all of which have dimension $|\Sigma|$, resulting in a total input size of $|\Sigma| \times |\Sigma| \times |\Sigma|$. We visualize only the top left part of the full heatmap for presentation.}
    \label{fig:uniquebigramcopyat2l4h256d3lr01drop}
\end{figure}

\begin{figure}[H]
    \centering
\includegraphics[width=\textwidth]{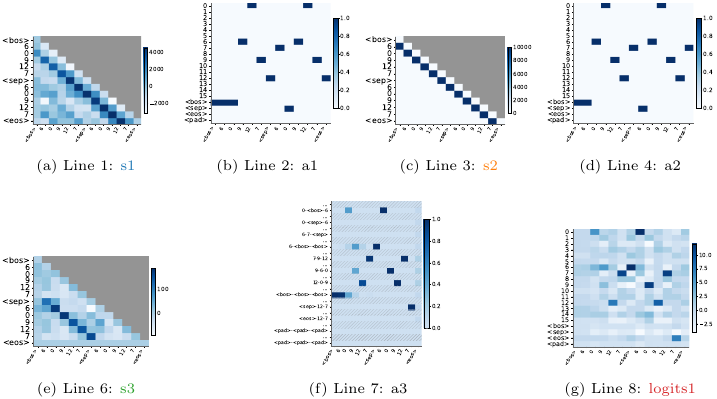}
    \caption{Variables Heatmaps for Unique Bigram Copy model on an example input.  In (f), similarly to Figure~\ref{fig:uniquebigramcopyat2l4h256d3lr01drop}, as a result of Cartesian product, the variables have dimension $|\Sigma| \times |\Sigma| \times |\Sigma|$. We hide the rows which have only zero values.}
    \label{fig:uniquebigramcopyat2l4h256d3lr01dropactivation}
\end{figure}

\definecolor{varcoloruniquebigramcopycaret2l4h256d3lr01drops1}{rgb}{0.12,0.47,0.71}
\definecolor{varcoloruniquebigramcopycaret2l4h256d3lr01drops2}{rgb}{1.00,0.50,0.05}
\definecolor{varcoloruniquebigramcopycaret2l4h256d3lr01drops3}{rgb}{0.17,0.63,0.17}
\definecolor{varcoloruniquebigramcopycaret2l4h256d3lr01drops4}{rgb}{0.84,0.15,0.16}
\definecolor{varcoloruniquebigramcopycaret2l4h256d3lr01droplogits1}{rgb}{0.58,0.40,0.74}
\subsection{Unique Bigram Copy : checkpoint at step 3300}
\label{uniquebigramcopycaret2l4h256d3lr01dropsection}

    \par\vspace{0.5em}\noindent
    \textbf{Task Description:} $$
    \langle\text{bos}\rangle\ s_0, s_1, \dots, s_n\ \langle\text{sep}\rangle\ s_0, s_1, \dots, s_n\ \langle\text{eos}\rangle \text{where } s_i \in \{0, 1, \dots , 15\}, (s_i, s_{i+1}) \neq (s_j, s_{j+1}) \text{ iff } i \neq j
    $$ \\
    \textbf{Architecture:} Layers: 2 \quad Heads: 4 \quad Hidden Dim: 256 LR: 0.001 \quad Dropout: 0.1 \\
    \textbf{Performance (w/Pruning $\to$ w/Primitives):} Task Accuracy: $0.99 \to 0.99$; Match Accuracy: $0.99 \to 0.99$
    \vspace{0.5em}\par

\paragraph{Code}
\begin{Verbatim}[commandchars=\\\{\}]
1. \textcolor{varcoloruniquebigramcopycaret2l4h256d3lr01drops1}{s1} = select(q=pos, k=pos, \textcolor{varcoloruniquebigramcopycaret2l4h256d3lr01drops1}{op=\circled{a}})	# layer 0 head 1
2. a1 = aggregate(s=s1, v=token) 	  # layer 0 head 1
3. \textcolor{varcoloruniquebigramcopycaret2l4h256d3lr01drops2}{s2} = select(q=pos, k=pos, \textcolor{varcoloruniquebigramcopycaret2l4h256d3lr01drops2}{op=(k==q-1)})	# layer 0 head 2
4. a2 = aggregate(s=s2, v=token) 	  # layer 0 head 2
5. \textcolor{varcoloruniquebigramcopycaret2l4h256d3lr01drops3}{s3} = select(q=token, k=a2, \textcolor{varcoloruniquebigramcopycaret2l4h256d3lr01drops3}{op=(k==q),}
		\textcolor{varcoloruniquebigramcopycaret2l4h256d3lr01drops3}{special\_op=(k==BOS)})	# layer 1 head 1
6. \textcolor{varcoloruniquebigramcopycaret2l4h256d3lr01drops4}{s4} = select(q=a2, k=a1, \textcolor{varcoloruniquebigramcopycaret2l4h256d3lr01drops4}{op=\circled{d}})	# layer 1 head 1
7. a3 = aggregate(s=s3+s4, v=token) 	  # layer 1 head 1
8. \textcolor{varcoloruniquebigramcopycaret2l4h256d3lr01droplogits1}{logits1} = project(inp=a3, \textcolor{varcoloruniquebigramcopycaret2l4h256d3lr01droplogits1}{op=\circled{e}})
9. prediction = softmax(logits1)
\end{Verbatim}

\paragraph{Interpretation}
This is a different extension of the induction head program discussed in Main Paper, Figure~\ref{fig:unique-copy}; it relies only on select and aggregate operations. Similar to the previous Unique Bigram program, the model takes the previous previous token $t_{-2}$ (\texttt{s1} and \texttt{a1}), and the previous token $t_{-1}$ (\texttt{s2} and \texttt{a2}). But unlike previous preogram, they are not combined together with $t_{0}$ (\texttt{token}). They are used to form two separate selectors \texttt{s3} (match $t_{0}^q$ and $t_{-1}^k$) and \texttt{s4} (match $t_{-1}^q$ and $t_{-2}^k$)). These two selectors are added together in line 7 such that the intersection would stand out (i.e., performing bigram matching). Only \texttt{token} is aggregated (which simpler than the previous program), so the model obtains the correct continuation of the bigram $t_{0}^k$, which is \texttt{a3}.
Line 8 outputs \texttt{a3}, with special behavior to correctly predict EOS. 

In sum, we see a simpler program performed by this earlier checkpoint, compared to the previous Unique Bigram Copy model. This model has separate matching mechanisms and results are simply added. So we can see that in this case, continuous training after this checkpoint makes the inner mechanism more nonlinear and strengthens the role played by MLP layers. Even though the performance stay the same, the inner mechanism does not stop evolving.

\begin{figure}[H]
    \centering
\includegraphics[width=\textwidth]{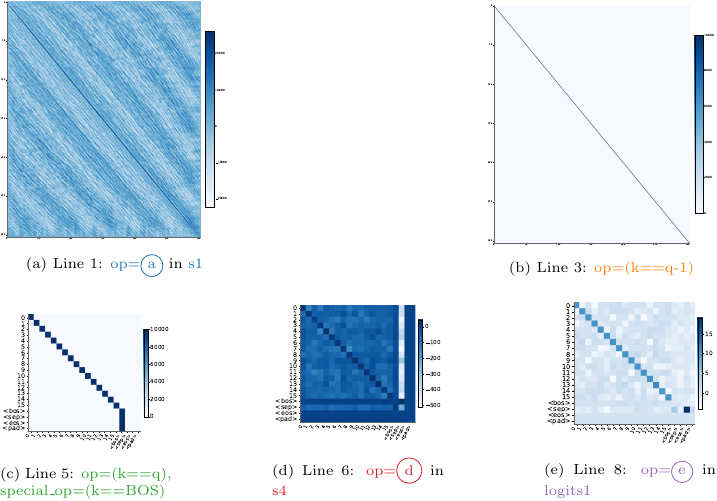}
    \caption{Heatmaps supporting the program for Unique Bigram Copy : different checkpoints model. We note that replacement with a primitive was successful in (b,c), and not in (a,d,e).}
    \label{fig:uniquebigramcopycaret2l4h256d3lr01drop}
\end{figure}

\begin{figure}[H]
    \centering
\includegraphics[width=\textwidth]{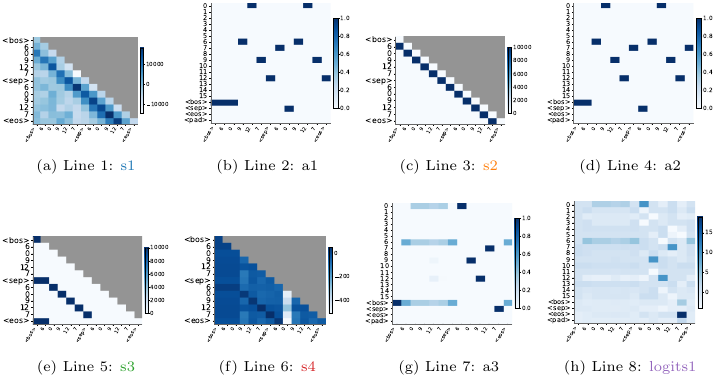}
    \caption{Variables Heatmaps for Unique Bigram Copy : different checkpoints model on an example input.}
    \label{fig:uniquebigramcopycaret2l4h256d3lr01dropactivation}
\end{figure}

\definecolor{varcoloruniquecopyat2l1h64d3lr01drops1}{rgb}{0.12,0.47,0.71}
\definecolor{varcoloruniquecopyat2l1h64d3lr01drops2}{rgb}{1.00,0.50,0.05}
\definecolor{varcoloruniquecopyat2l1h64d3lr01droplogits1}{rgb}{0.17,0.63,0.17}
\definecolor{varcoloruniquecopyat2l1h64d3lr01droplogits2}{rgb}{0.84,0.15,0.16}
\subsection{Unique Copy}
\label{uniquecopyat2l1h64d3lr01dropsection}

    \par\vspace{0.5em}\noindent
    \textbf{Task Description:} $$
    \langle\text{bos}\rangle\ s_0, s_1, \dots, s_n\ \langle\text{sep}\rangle\ s_0, s_1, \dots, s_n\ \langle\text{eos}\rangle \text{where } s_i \in \{0, 1, \dots , 150\}, s_i \neq s_j \text{ iff } i \neq j
    $$ \\
    \textbf{Architecture:} Layers: 2 \quad Heads: 1 \quad Hidden Dim: 64 LR: 0.001 \quad Dropout: 0.1 \\
    \textbf{Performance (w/Pruning $\to$ w/Primitives):} Task Accuracy: $0.93 \to 1.00$; Match Accuracy: $0.92 \to 0.94$
    \vspace{0.5em}\par

\paragraph{Code}
\begin{Verbatim}[commandchars=\\\{\}]
1. \textcolor{varcoloruniquecopyat2l1h64d3lr01drops1}{s1} = select(q=pos, k=pos, \textcolor{varcoloruniquecopyat2l1h64d3lr01drops1}{op=(k==q-1)})	# layer 0 head 0
2. a1 = aggregate(s=s1, v=token) 	  # layer 0 head 0
3. \textcolor{varcoloruniquecopyat2l1h64d3lr01drops2}{s2} = select(q=token, k=a1, \textcolor{varcoloruniquecopyat2l1h64d3lr01drops2}{op=(k==q),}
		\textcolor{varcoloruniquecopyat2l1h64d3lr01drops2}{special\_op=(k==BOS)})	# layer 1 head 0
4. a2 = aggregate(s=s2, v=token) 	  # layer 1 head 0
5. \textcolor{varcoloruniquecopyat2l1h64d3lr01droplogits1}{logits1} = project(inp=a2, \textcolor{varcoloruniquecopyat2l1h64d3lr01droplogits1}{op=(inp==out),}
		\textcolor{varcoloruniquecopyat2l1h64d3lr01droplogits1}{special\_op=(uniform selection)})
6. \textcolor{varcoloruniquecopyat2l1h64d3lr01droplogits2}{logits2} = project(\textcolor{varcoloruniquecopyat2l1h64d3lr01droplogits2}{op=\circled{c}})
7. prediction = softmax(logits1+
			logits2)
\end{Verbatim}

\paragraph{Interpretation}
This is discussed in Main paper, Figure~\ref{fig:unique-copy}.

\begin{figure}[H]
    \centering
\includegraphics[width=\textwidth]{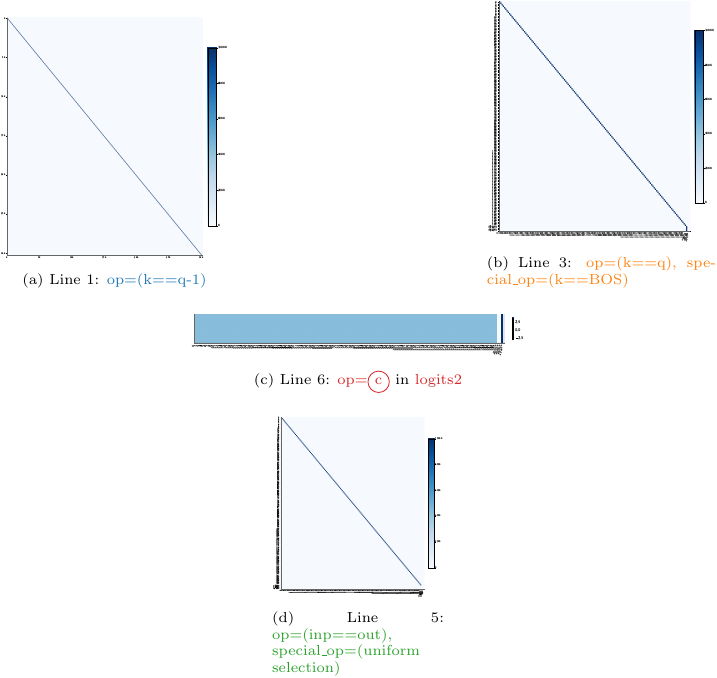}
    \caption{Heatmaps supporting the program for Unique Copy model.}
    \label{fig:uniquecopyat2l1h64d3lr01drop}
\end{figure}

\begin{figure}[H]
    \centering
\includegraphics[width=\textwidth]{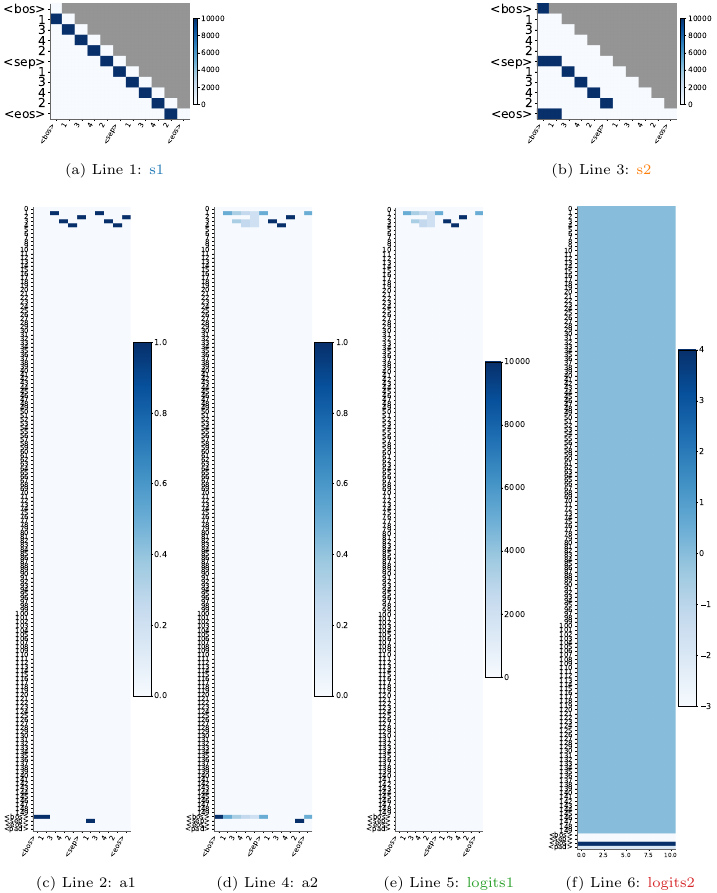}
    \caption{Variables Heatmaps for Unique Copy model on an example input.}
    \label{fig:uniquecopyat2l1h64d3lr01dropactivation}
\end{figure}

\definecolor{varcoloruniquereverseat2l1h64d3lr01drops1}{rgb}{0.12,0.47,0.71}
\definecolor{varcoloruniquereverseat2l1h64d3lr01drops2}{rgb}{1.00,0.50,0.05}
\definecolor{varcoloruniquereverseat2l1h64d3lr01drops3}{rgb}{0.17,0.63,0.17}
\definecolor{varcoloruniquereverseat2l1h64d3lr01drops4}{rgb}{0.84,0.15,0.16}
\definecolor{varcoloruniquereverseat2l1h64d3lr01drops5}{rgb}{0.58,0.40,0.74}
\definecolor{varcoloruniquereverseat2l1h64d3lr01droplogits1}{rgb}{0.55,0.34,0.29}
\subsection{Unique Reverse}
\label{uniquereverseat2l1h64d3lr01dropsection}

    \par\vspace{0.5em}\noindent
    \textbf{Task Description:} $$
    \langle\text{bos}\rangle\ s_0, s_1, \dots, s_n\ \langle\text{sep}\rangle\ s_n, s_{n-1}, \dots, s_0\ \langle\text{eos}\rangle \text{where } s_i \in \{0, 1, \dots , 150\}
    , s_i \neq s_j \text{ iff } i \neq j$$ \\
    \textbf{Architecture:} Layers: 2 \quad Heads: 1 \quad Hidden Dim: 64 LR: 0.001 \quad Dropout: 0.1 \\
    \textbf{Performance (w/Pruning $\to$ w/Primitives):} Task Accuracy: $0.98 \to 0.97$; Match Accuracy: $0.96 \to 0.95$
    \vspace{0.5em}\par

\paragraph{Code}
\begin{Verbatim}[commandchars=\\\{\}]
1. \textcolor{varcoloruniquereverseat2l1h64d3lr01drops1}{s1} = select(q=token, k=token, \textcolor{varcoloruniquereverseat2l1h64d3lr01drops1}{op=\circled{a}})	# layer 0 head 0
2. \textcolor{varcoloruniquereverseat2l1h64d3lr01drops2}{s2} = select(q=pos, k=pos, \textcolor{varcoloruniquereverseat2l1h64d3lr01drops2}{op=\circled{b}})	# layer 0 head 0
3. \textcolor{varcoloruniquereverseat2l1h64d3lr01drops3}{s3} = select(k=token, \textcolor{varcoloruniquereverseat2l1h64d3lr01drops3}{op=\circled{d}})
4. \textcolor{varcoloruniquereverseat2l1h64d3lr01drops4}{s4} = select(k=pos, \textcolor{varcoloruniquereverseat2l1h64d3lr01drops4}{op=\circled{e}})
5. a1 = aggregate(s=s1+s2+s3+s4, v=token) 	  # layer 0 head 0
6. \textcolor{varcoloruniquereverseat2l1h64d3lr01drops5}{s5} = select(q=token, k=token, \textcolor{varcoloruniquereverseat2l1h64d3lr01drops5}{op=(k==q),}
		\textcolor{varcoloruniquereverseat2l1h64d3lr01drops5}{special\_op=(k==SEP)})	# layer 1 head 0
7. a2 = aggregate(s=s5, v=a1) 	  # layer 1 head 0
8. \textcolor{varcoloruniquereverseat2l1h64d3lr01droplogits1}{logits1} = project(inp=a2, \textcolor{varcoloruniquereverseat2l1h64d3lr01droplogits1}{op=\circled{f}})
9. prediction = softmax(logits1)
\end{Verbatim}

\paragraph{Interpretation}
Line 1 defines a selector that assigns increased weight to SEP, with strength varying with the query. (Figure~\ref{fig:uniquereverseat2l1h64d3lr01drop}a). Line 2 defines a selector assigning increased weight to the immediately preceding position (Figure~\ref{fig:uniquereverseat2l1h64d3lr01drop}b). Line 3 defines a selector assigning high weight to SEP. Line 4 defines a selector that tends to assign increased weight to later positions. These four selectors jointly come together in line 5, producing \texttt{a1}. Inspecting \texttt{a1} (Figure~\ref{fig:uniquereverseat2l1h64d3lr01dropactivation}f) shows that (i) up until SEP, it collects the immediately preceding symbol; (ii) after SEP, it just holds SEP. Line 6 defines a selector assigning weight to occurrences of the current token, or (in the case of a special token), SEP. The resulting variable \texttt{a2} holds the token that immediately precedes prior (pre-SEP) occurrences of the current token. Here, we understand why \texttt{a1} just holds SEP in the post-SEP tokens: because the strings before and after SEP have opposite orders, this is needed to find the predecessor of the pre-SEP occurrence of the current token. This is additional complexity introduced by the fact that the model here copies \emph{backwards}. This resulting token is then forwarded to output logits. By and large, the output logits are a ``noisy'' version of \texttt{a2} (compare Figures~\ref{fig:uniquereverseat2l1h64d3lr01dropactivation} g and h), with one exception: at the last token, 75, \texttt{a2} has retrieved BOS and SEP, which indicates that generation should stop, i.e., the next token should actually be EOS. The \texttt{project} operation converts the BOS/SEP entries into EOS entries. We note that the matrices in Figure~\ref{fig:uniquereverseat2l1h64d3lr01drop} are similar to various primitives (e.g., (b) is similar to the identity matrix), but only (e) was successfully replaced. One possible reason is that the program has relatively specific behavior on special tokens, which was not captured by any of the primitives.

\begin{figure}[H]
    \centering
\includegraphics[width=\textwidth]{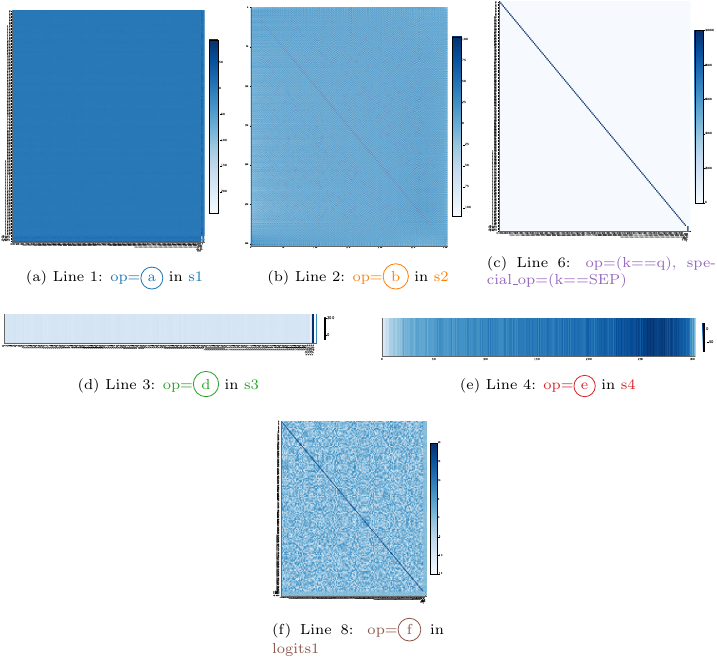}
    \caption{Heatmaps supporting the program for Unique Reverse model.}
    \label{fig:uniquereverseat2l1h64d3lr01drop}
\end{figure}

\begin{figure}[H]
    \centering
\includegraphics[width=\textwidth]{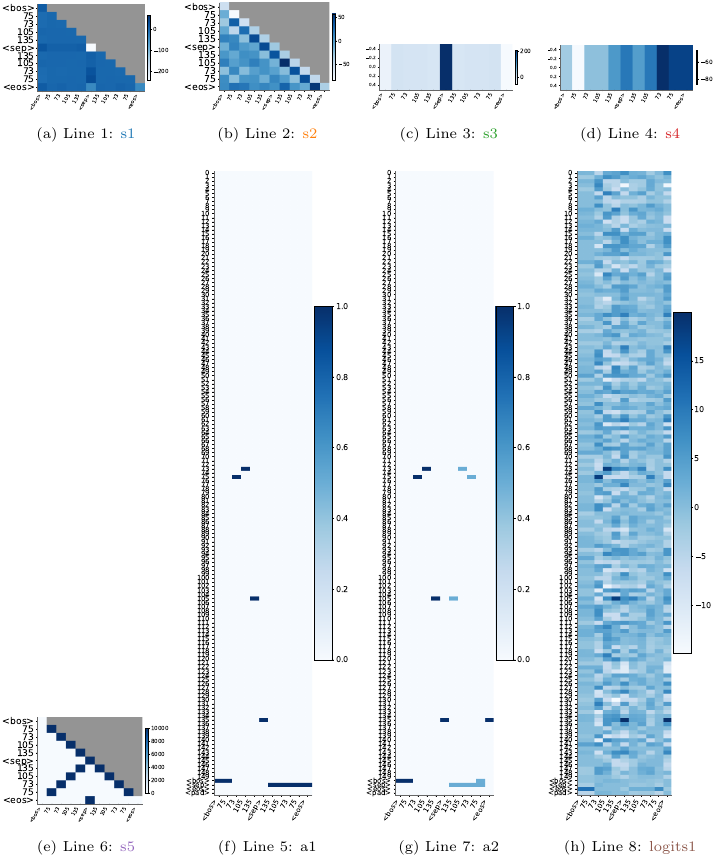}
    \caption{Variables Heatmaps for Unique Reverse model on an example input.}
    \label{fig:uniquereverseat2l1h64d3lr01dropactivation}
\end{figure}

\definecolor{varcoloruniquereversepct4l1h256d4lr01drops1}{rgb}{0.12,0.47,0.71}
\definecolor{varcoloruniquereversepct4l1h256d4lr01drops2}{rgb}{1.00,0.50,0.05}
\definecolor{varcoloruniquereversepct4l1h256d4lr01drops3}{rgb}{0.17,0.63,0.17}
\definecolor{varcoloruniquereversepct4l1h256d4lr01drops4}{rgb}{0.84,0.15,0.16}
\subsection{Unique Reverse : different architecture with same length generalization performance}
\label{uniquereversepct4l1h256d4lr01dropsection}

    \par\vspace{0.5em}\noindent
    \textbf{Task Description:} $$
    \langle\text{bos}\rangle\ s_0, s_1, \dots, s_n\ \langle\text{sep}\rangle\ s_n, s_{n-1}, \dots, s_0\ \langle\text{eos}\rangle \text{where } s_i \in \{0, 1, \dots , 150\}, s_i \neq s_j \text{ iff } i \neq j
    $$ \\
    \textbf{Architecture:} Layers: 4 \quad Heads: 1 \quad Hidden Dim: 256 LR: 0.0001 \quad Dropout: 0.1 \\
    \textbf{Performance (w/Pruning $\to$ w/Primitives):} Task Accuracy: $0.98 \to 0.99$; Match Accuracy: $0.97 \to 0.98$
    \vspace{0.5em}\par

\paragraph{Code}
\begin{Verbatim}[commandchars=\\\{\}]
1. \textcolor{varcoloruniquereversepct4l1h256d4lr01drops1}{s1} = select(q=pos, k=pos, \textcolor{varcoloruniquereversepct4l1h256d4lr01drops1}{op=\circled{a}})	# layer 0 head 0
2. \textcolor{varcoloruniquereversepct4l1h256d4lr01drops2}{s2} = select(k=token, \textcolor{varcoloruniquereversepct4l1h256d4lr01drops2}{op=\circled{d}})
3. a1 = aggregate(s=s1+s2, v=token) 	  # layer 0 head 0
4. \textcolor{varcoloruniquereversepct4l1h256d4lr01drops3}{s3} = select(q=token, k=token, \textcolor{varcoloruniquereversepct4l1h256d4lr01drops3}{op=(k==q),}
		\textcolor{varcoloruniquereversepct4l1h256d4lr01drops3}{special\_op=(uniform selection)})	# layer 2 head 0
5. \textcolor{varcoloruniquereversepct4l1h256d4lr01drops4}{s4} = select(q=token, k=a1, \textcolor{varcoloruniquereversepct4l1h256d4lr01drops4}{op=\circled{c}})	# layer 2 head 0
6. a2 = aggregate(s=s3+s4, v=a1) 	  # layer 2 head 0
7. new_a2 = element_wise_op(a2) 	  # layer 3 mlp
8. logits1 = project(inp=new_a2, op=(inp==out))
9. prediction = softmax(logits1)
\end{Verbatim}

\paragraph{Interpretation}
Comparing the activations in Figure~\ref{fig:uniquereversepct4l1h256d4lr01dropactivation} with those of the other program on the same task (Figure~\ref{fig:uniquereverseat2l1h64d3lr01dropactivation}) shows some commonalities, and indeed the algorithm is similar. \texttt{s1} defines a selector that looks at the previous token in the input string, and \texttt{s2} puts increased attention on SEP (Figure~\ref{fig:uniquereversepct4l1h256d4lr01drop}a,d). Aggregating information based on both these selectors, \texttt{a1} then for each token holds the identity of the previous token, and for the tokens after SEP it also holds SEP. \texttt{a2} copies the information from \texttt{a1} into the residual stream of the same token later in the input, essentially retrieving the output. MLP in Line 7 sharpens \texttt{a2}  (Figures~\ref{fig:unique-reverse-arch-mlp-0},~\ref{fig:unique-reverse-arch-mlp-10},~\ref{fig:unique-reverse-arch-mlp-100}), and forces the model to generate EOS instead of BOS (Figure~\ref{fig:unique-reverse-arch-mlp-eos}).

\begin{figure}[H]
    \centering
\includegraphics[width=\textwidth]{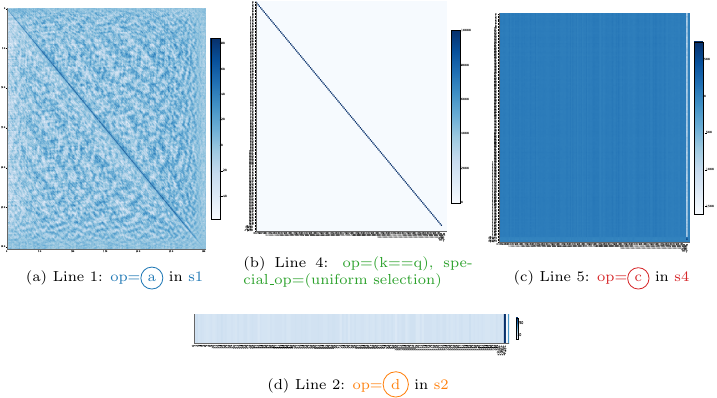}
    \caption{Heatmaps supporting the program for Unique Reverse : different architecture with same length generalization performance model.}
    \label{fig:uniquereversepct4l1h256d4lr01drop}
\end{figure}

\begin{figure}[H]
    \centering
\includegraphics[width=\textwidth]{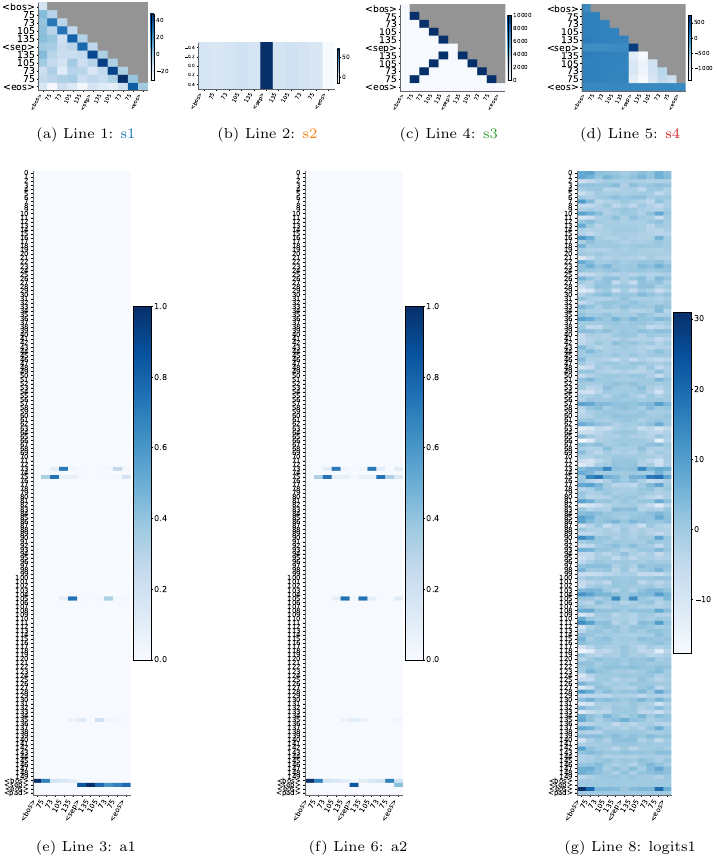}
    \caption{Variables Heatmaps for Unique Reverse : different architecture with same length generalization performance model on an example input.}
    \label{fig:uniquereversepct4l1h256d4lr01dropactivation}
\end{figure}

\paragraph{MLP Input-Output Distributions}

Explaining per-position operation in Line 7 via its effect on Output Logits in Line 8

\textbf{Output Token: 0}

\begin{figure}[H]
    \centering
    \includegraphics[width=0.95\linewidth]{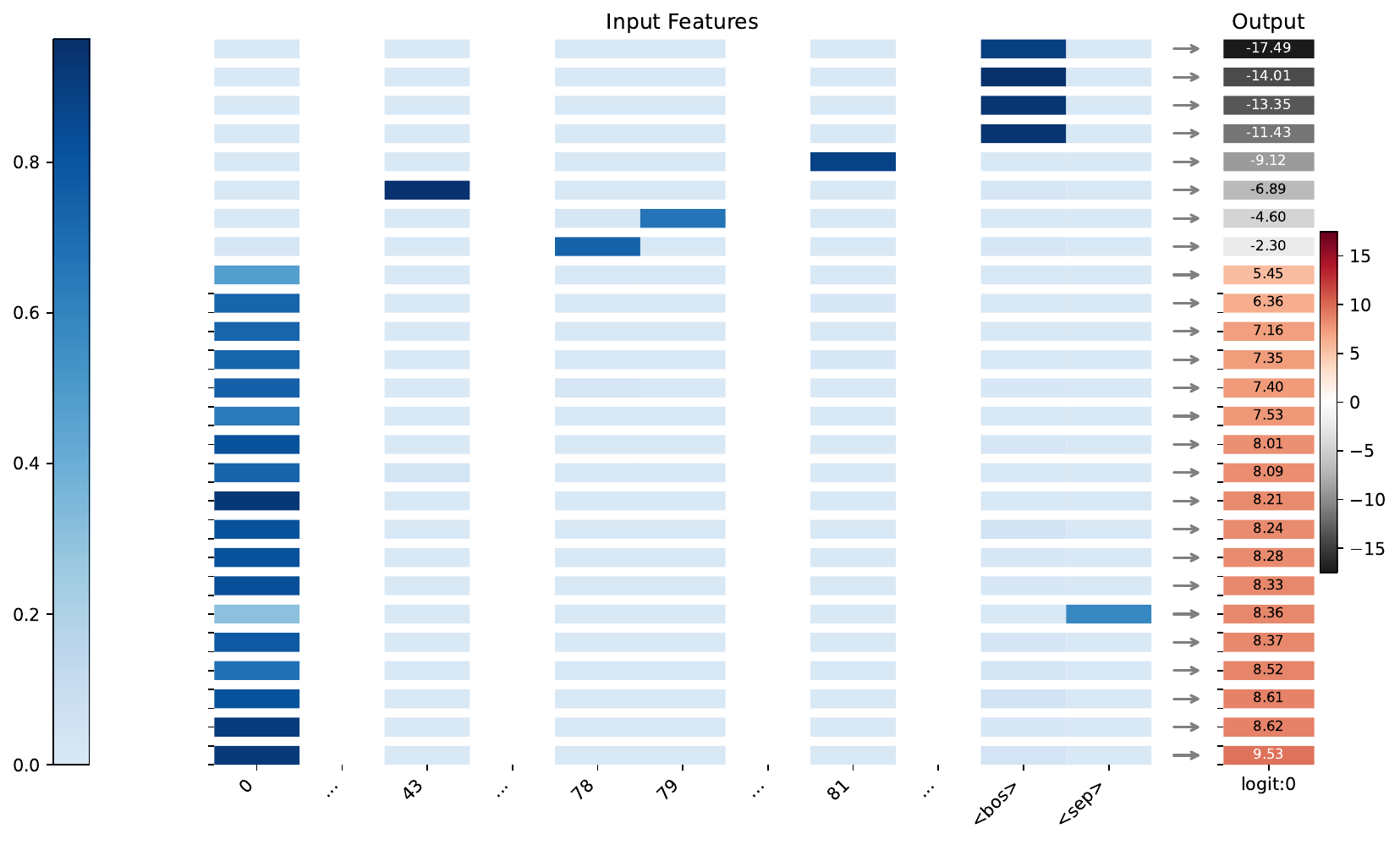}
    \caption{MLP Input-Output for token: 0 (sorted by logits)}
    \label{fig:unique-reverse-arch-mlp-0}
\end{figure}

\textbf{Output Token: 10}

\begin{figure}[H]
    \centering
    \includegraphics[width=0.95\linewidth]{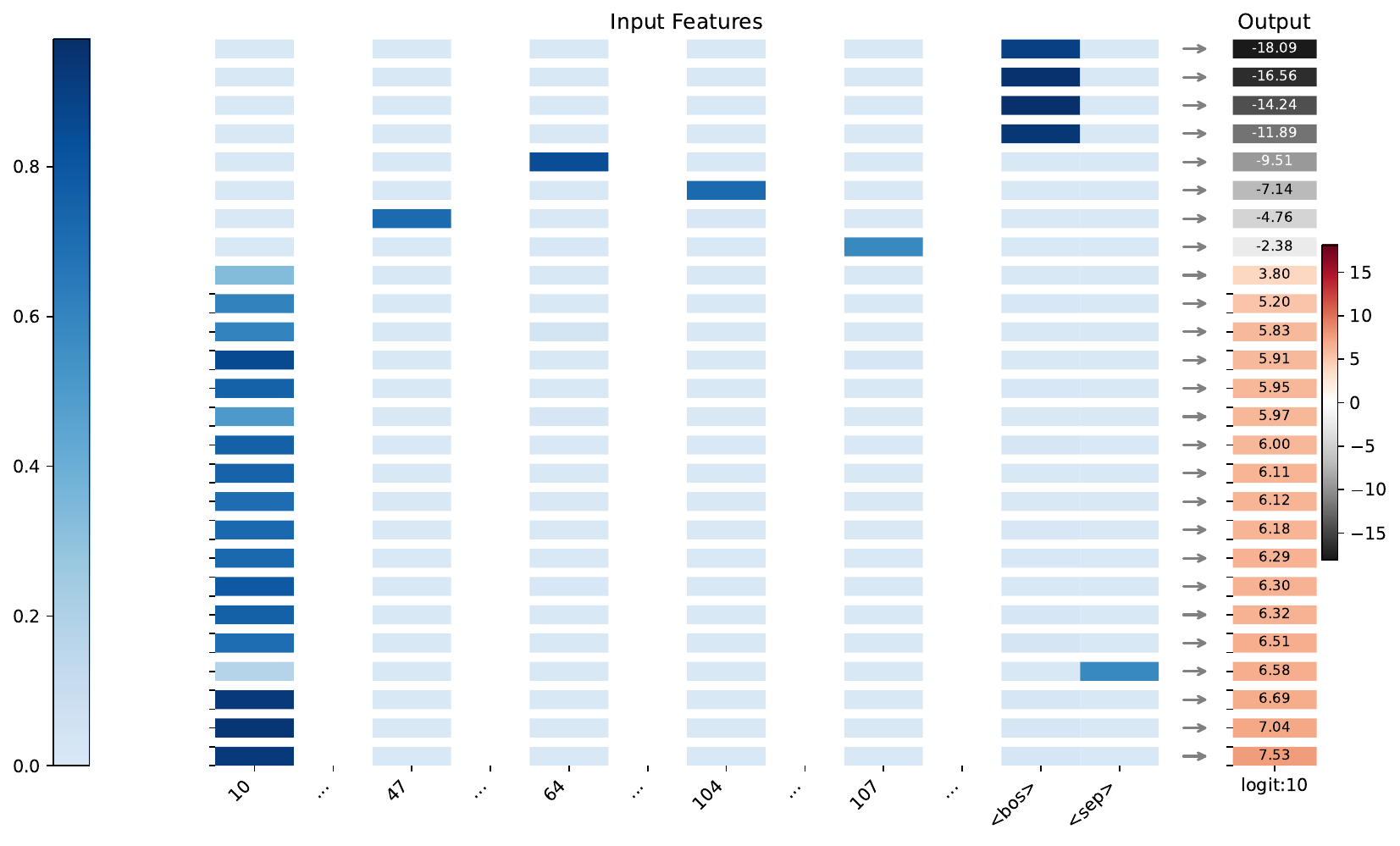}
    \caption{MLP Input-Output for token: 10 (sorted by logits)}
    \label{fig:unique-reverse-arch-mlp-10}
\end{figure}

\textbf{Output Token: 100}

\begin{figure}[H]
    \centering
    \includegraphics[width=0.95\linewidth]{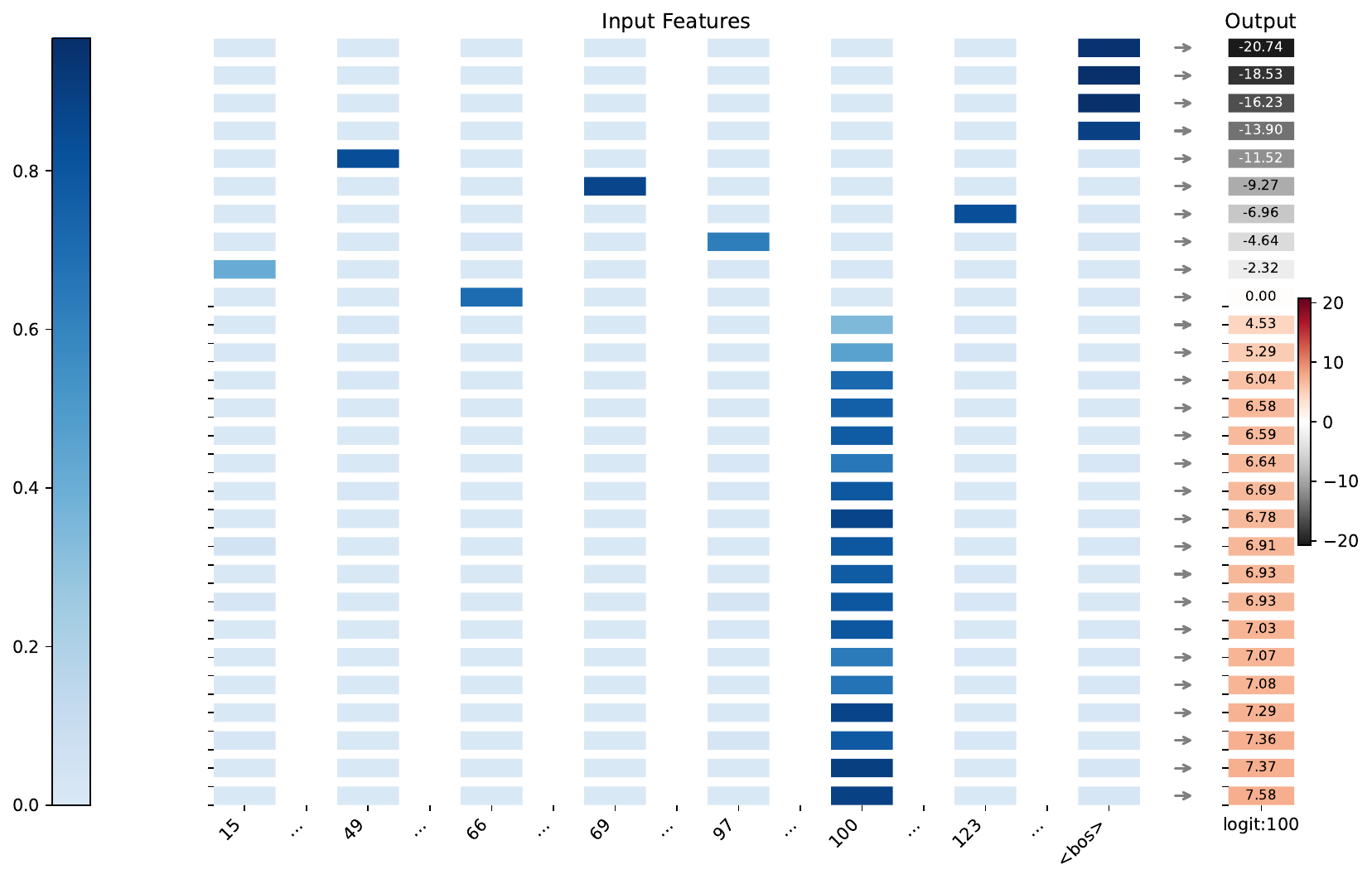}
    \caption{MLP Input-Output for token: 100 (sorted by logits)}
     \label{fig:unique-reverse-arch-mlp-100}
\end{figure}

\textbf{Output Token: EOS}

\begin{figure}[H]
    \centering
    \includegraphics[width=0.95\linewidth]{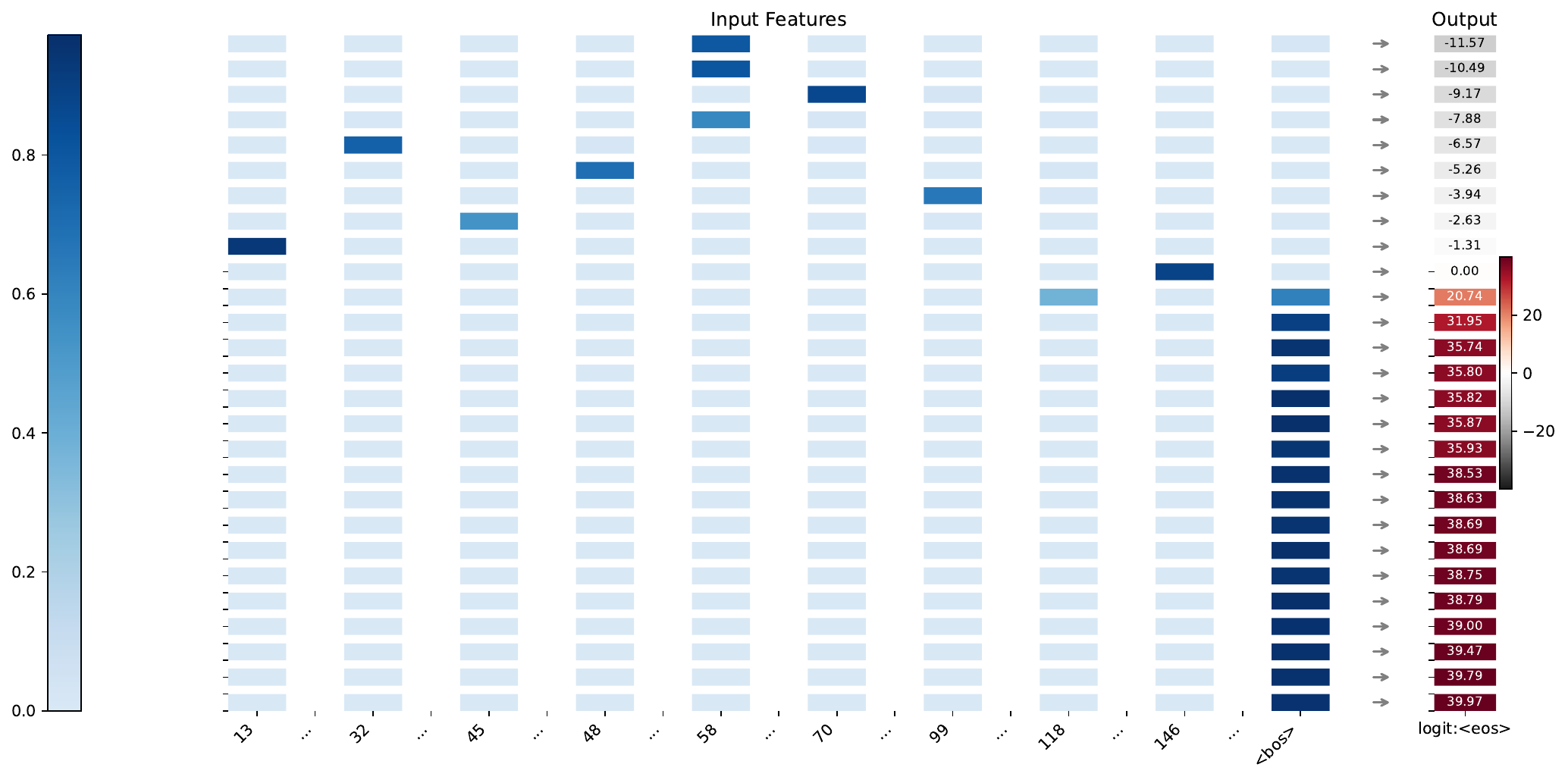}
    \caption{MLP Input-Output for token: EOS (sorted by logits)}
    \label{fig:unique-reverse-arch-mlp-eos}
\end{figure}

\section{Decompiled Programs on Formal Languages}\label{app:decompiled-formal-language}

\definecolor{varcolorbceD12at4l2h256d3lr00drops1}{rgb}{0.12,0.47,0.71}
\definecolor{varcolorbceD12at4l2h256d3lr00drops2}{rgb}{1.00,0.50,0.05}
\definecolor{varcolorbceD12at4l2h256d3lr00drops3}{rgb}{0.17,0.63,0.17}
\definecolor{varcolorbceD12at4l2h256d3lr00droplogits2}{rgb}{0.84,0.15,0.16}
\definecolor{varcolorbceD12at4l2h256d3lr00droplogits4}{rgb}{0.58,0.40,0.74}
\subsection{Dyck-12}
\label{bceD12at4l2h256d3lr00dropsection}

    \par\vspace{0.5em}\noindent
    \textbf{Task Description:} $$
    \langle\text{bos}\rangle\ \underbrace{(a\dots (a(a}_{12 times}\underbrace{b)^*b)^*\dots b)^*}_{12 times}\ \langle\text{eos}\rangle
    $$ \\
    \textbf{Architecture:} Layers: 4 \quad Heads: 2 \quad Hidden Dim: 256 LR: 0.001 \quad Dropout: 0 \\
    \textbf{Performance (w/Pruning $\to$ w/Primitives):} Task Accuracy: $0.92 \to 1.00$; Match Accuracy: $0.92 \to 0.99$
    \vspace{0.5em}\par

\paragraph{Code}
\begin{Verbatim}[commandchars=\\\{\}]
1. a1 = aggregate(s=[], v=token) 	  # layer 0 head 0
2. a2 = aggregate(s=[], v=pos) 	  # layer 0 head 0
3. \textcolor{varcolorbceD12at4l2h256d3lr00drops1}{s1} = select(q=token, k=token, \textcolor{varcolorbceD12at4l2h256d3lr00drops1}{op=\circled{a}})	# layer 0 head 1
4. \textcolor{varcolorbceD12at4l2h256d3lr00drops2}{s2} = select(q=pos, k=token, \textcolor{varcolorbceD12at4l2h256d3lr00drops2}{op=\circled{c}})	# layer 0 head 1
5. \textcolor{varcolorbceD12at4l2h256d3lr00drops3}{s3} = select(k=token, \textcolor{varcolorbceD12at4l2h256d3lr00drops3}{op=\circled{b}})
6. a3 = aggregate(s=s1+s2+s3, v=token) 	  # layer 0 head 1
7. a4 = aggregate(s=s1+s2+s3, v=pos) 	  # layer 0 head 1
8. new_a1 = element_wise_op(a1) 	  # layer 0 mlp
9. new_a3 = element_wise_op(a3) 	  # layer 0 mlp
10. logits1 = project(inp=new_a1, op=(inp==out))
11. \textcolor{varcolorbceD12at4l2h256d3lr00droplogits2}{logits2} = project(inp=a2, \textcolor{varcolorbceD12at4l2h256d3lr00droplogits2}{op=\circled{d}})
12. logits3 = project(inp=new_a3, op=(inp==out))
13. \textcolor{varcolorbceD12at4l2h256d3lr00droplogits4}{logits4} = project(inp=a4, \textcolor{varcolorbceD12at4l2h256d3lr00droplogits4}{op=\circled{e}})
14. prediction = sigmoid(logits1+
			logits2+
			logits3+
			logits4)
\end{Verbatim}

\paragraph{Interpretation}
This program is a more complex version of the D4 program discussed in the main paper Figure~\ref{fig:dyck4}.  Similar to Figure~\ref{fig:dyck4}, \texttt{a1} aggregates all the tokens in the input string, and then feeds into the MLP in Line 8, which roughly checks the balance between '$a$' and '$b$' symbols. It promotes the '$b$' only when the string is imbalanced (Figure~\ref{fig:mlp_line8_dyck12_logitb}), and EOS only when it is balanced (Figure~\ref{fig:mlp_line8_dyck12_logiteos}). Some ``imperfections'' in this MLP are counteracted by the MLP in Line 9. For example, when on the input '$\langle bos\rangle a$' the MLP on Line 8 downweights the logit of '$a$' (Figure~\ref{fig:mlp_line8_dyck12_logita}), while the MLP on Line 9 gives it a much higher logit (Figure~\ref{fig:dyck4_mlp_line9_logita}), mitigating the effect of MLP on Line 8. \texttt{s1}, \texttt{s2}, \texttt{s3} act together to put attention on BOS token, while also aggregating information about '$a$''s and '$b$''s (Figure~\ref{fig:bceD12at4l2h256d3lr00drop}a,b,c). The MLP on Line 9 then encourages the model to output EOS if the value of BOS token after aggregation is low (Figure~\ref{fig:dyck4_mlp_line9_logiteos}), which happens when the string is long. It additionally checks the balance of '$a$''s and '$b$''s, similarly to the MLP on Line 8. \texttt{a4} and \texttt{a2} perform aggregation of positional information, which then gets projected in Lines 11 and 13, acting similar to a bias term, prohibiting the model from promoting SEP, PAD or BOS (Figure~\ref{fig:bceD12at4l2h256d3lr00drop}d,e).

\begin{figure}[H]
    \centering
\includegraphics[width=0.8\textwidth]{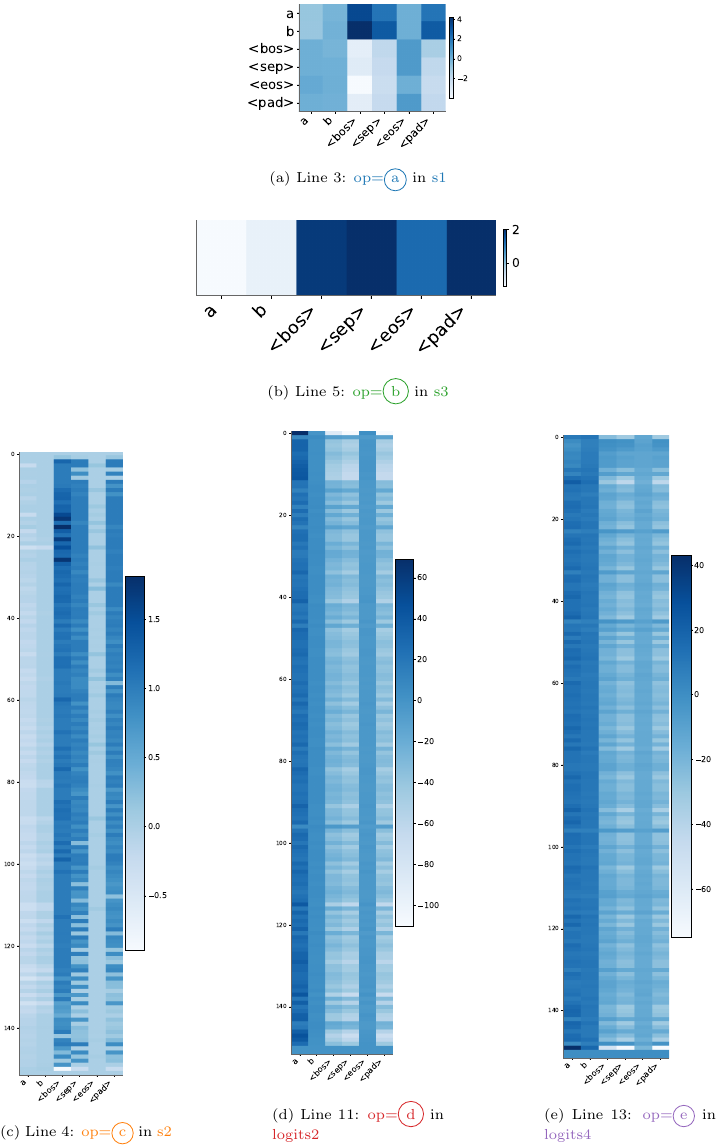}
    \caption{Heatmaps supporting the program for Dyck-12 model.}
    \label{fig:bceD12at4l2h256d3lr00drop}
\end{figure}

\begin{figure}[H]
    \centering
\includegraphics[width=0.9\textwidth]{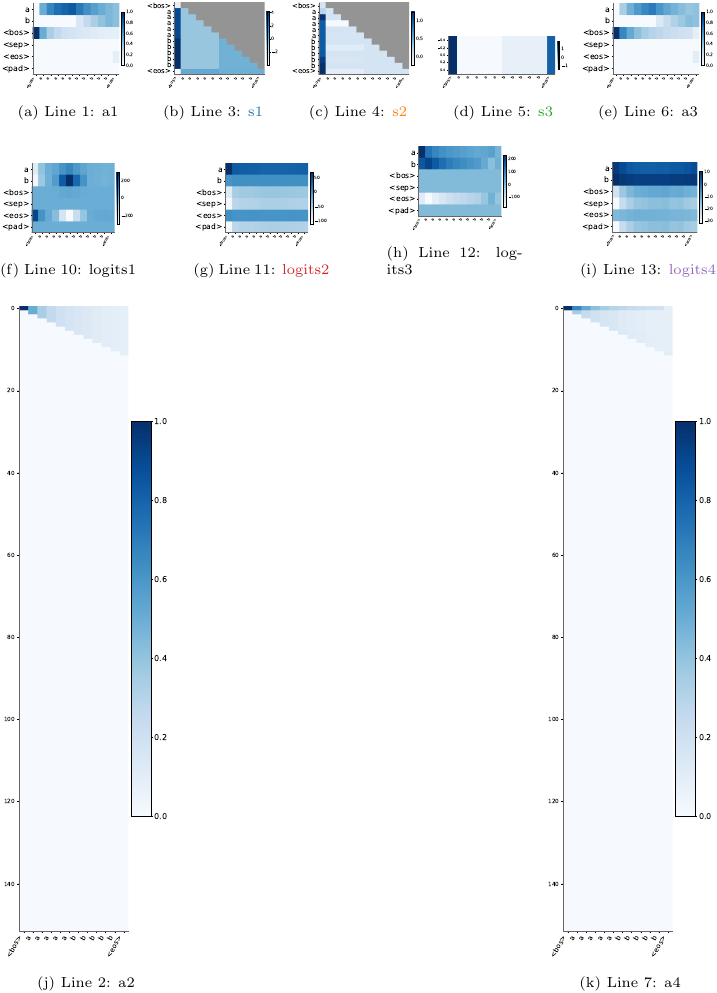}
    \caption{Variables Heatmaps for Dyck-12 model on an example input.}
    \label{fig:bceD12at4l2h256d3lr00dropactivation}
\end{figure}

\paragraph{MLP Input-Output Distributions}

Explaining per-position operation in Line 8 via its effect on Output Logits in Line 10

\textbf{Output Token: a}

\begin{figure}[H]
    \centering
    \includegraphics[width=0.95\linewidth]{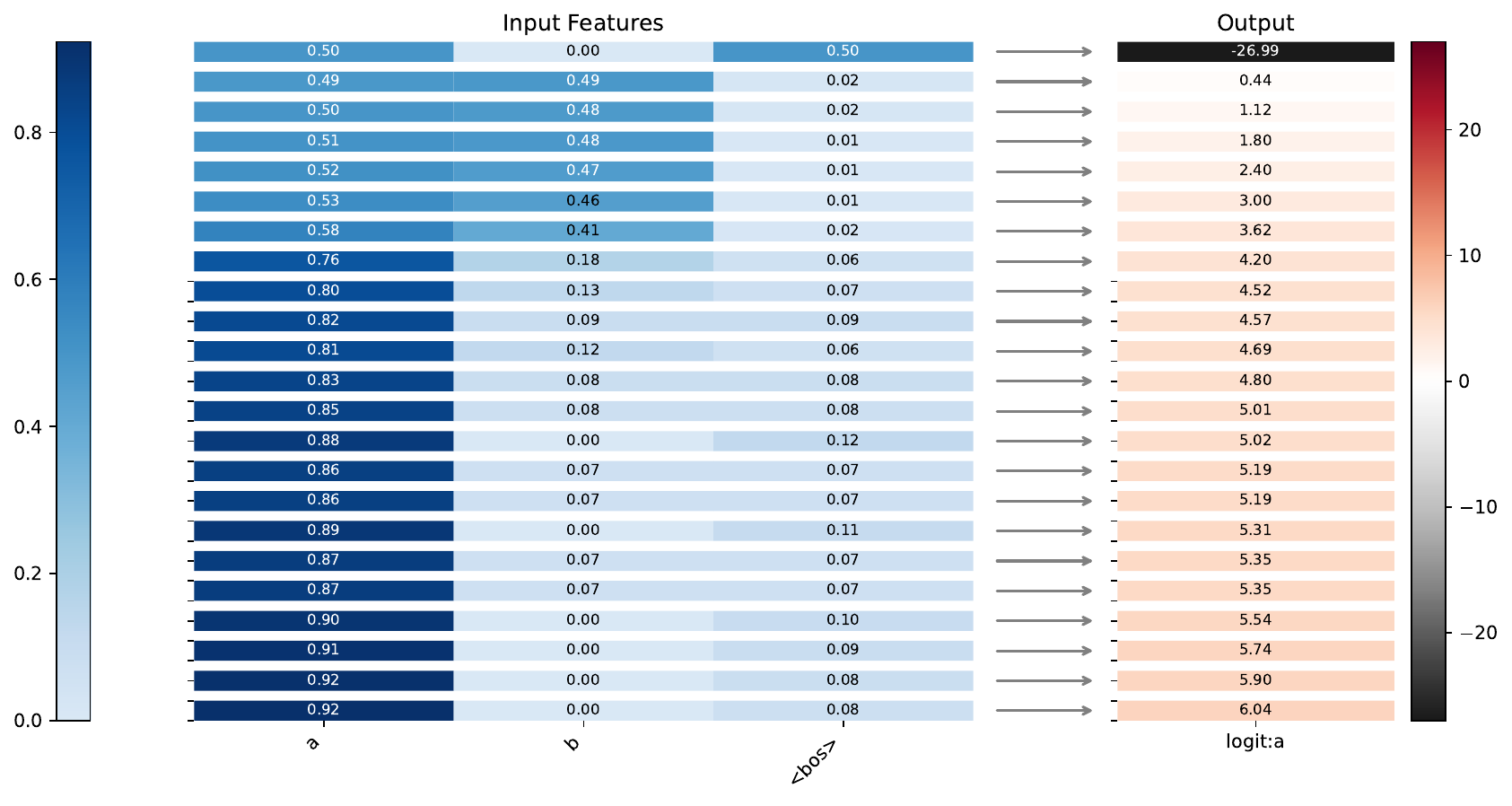}
    \caption{MLP Input-Output for token: a (sorted by logits)}
    \label{fig:mlp_line8_dyck12_logita}
\end{figure}

\textbf{Output Token: b}

\begin{figure}[H]
    \centering
    \includegraphics[width=0.95\linewidth]{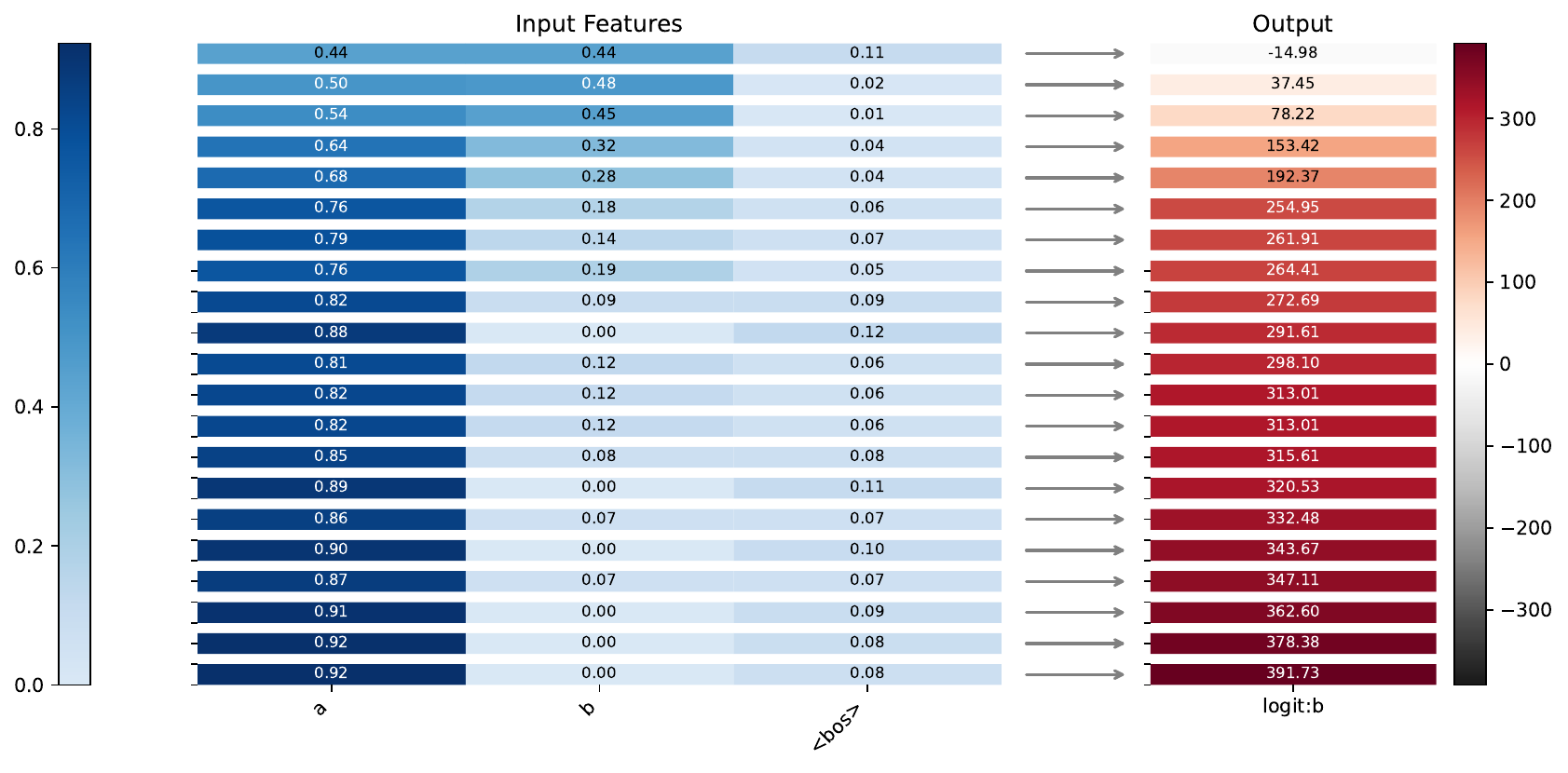}
    \caption{MLP Input-Output for token: b (sorted by logits)}
    \label{fig:mlp_line8_dyck12_logitb}
\end{figure}

\textbf{Output Token: EOS}

\begin{figure}[H]
    \centering
    \includegraphics[width=0.95\linewidth]{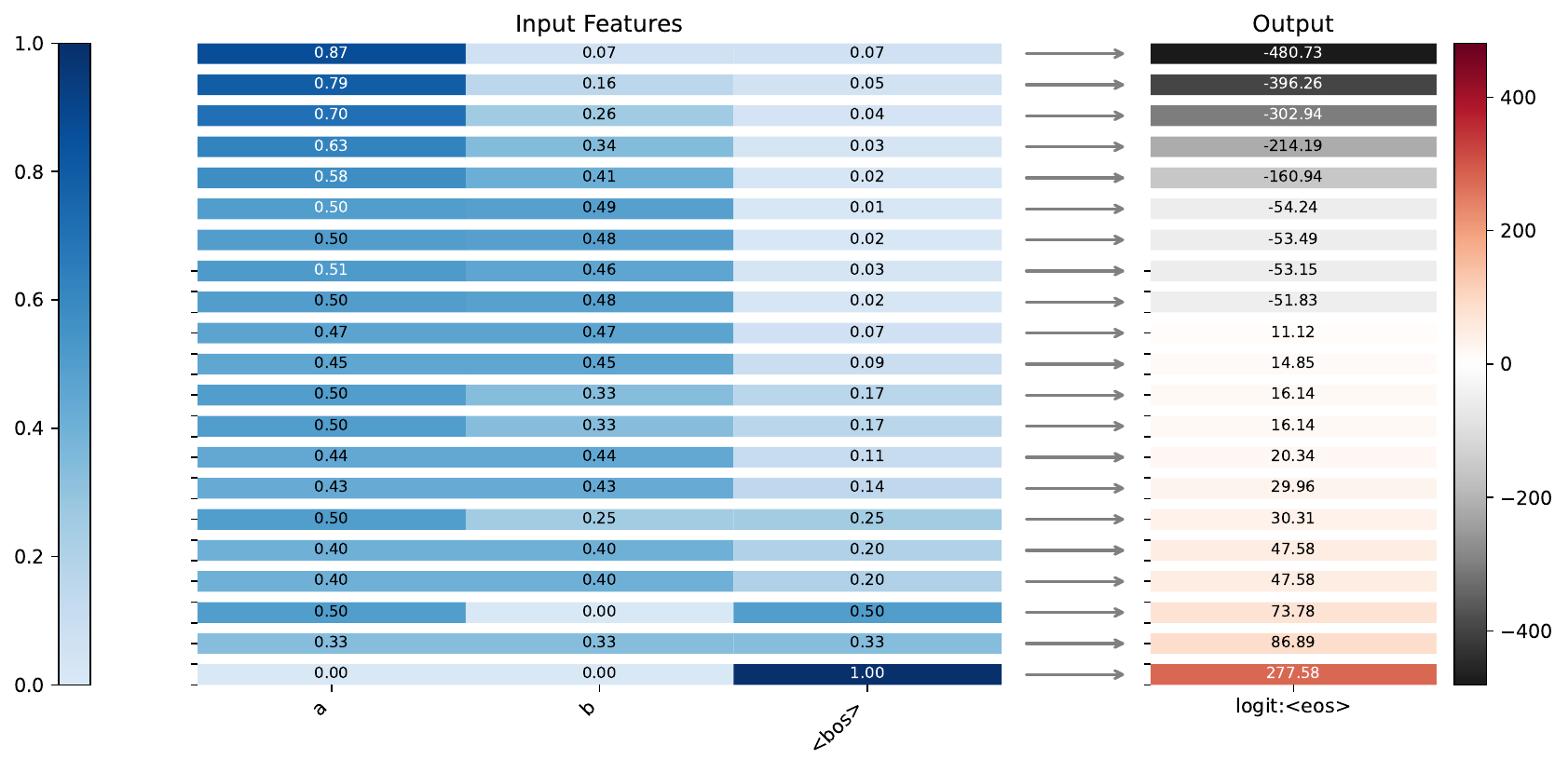}
    \caption{MLP Input-Output for token: EOS (sorted by logits)}
    \label{fig:mlp_line8_dyck12_logiteos}
\end{figure}

Explaining per-position operation in Line 9 via its effect on Output Logits in Line 12

\textbf{Output Token: a}

\begin{figure}[H]
    \centering
    \includegraphics[width=0.95\linewidth]{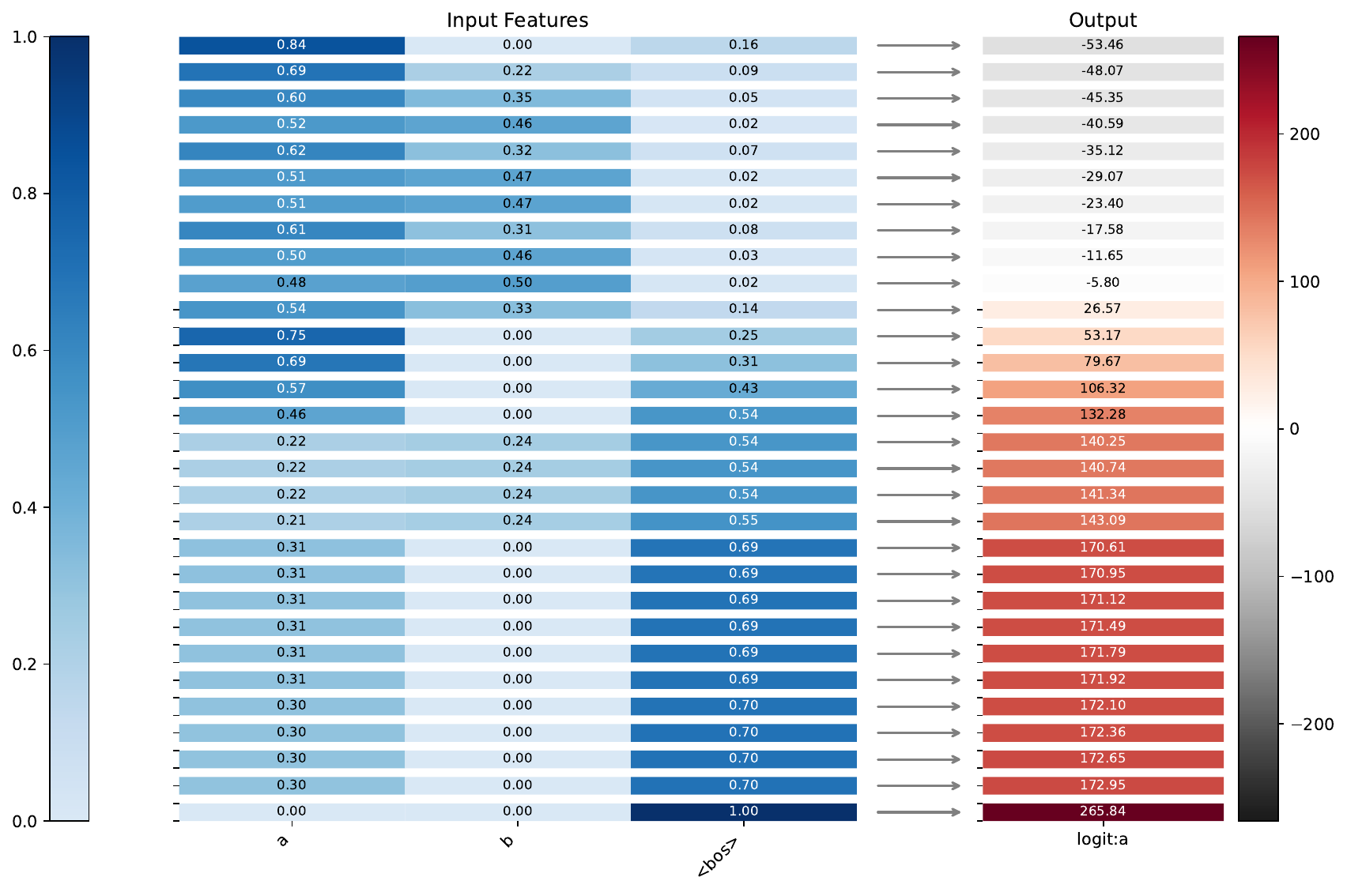}
    \caption{MLP Input-Output for token: a (sorted by logits)}
    \label{fig:dyck4_mlp_line9_logita}
\end{figure}

\textbf{Output Token: b}

\begin{figure}[H]
    \centering
    \includegraphics[width=0.95\linewidth]{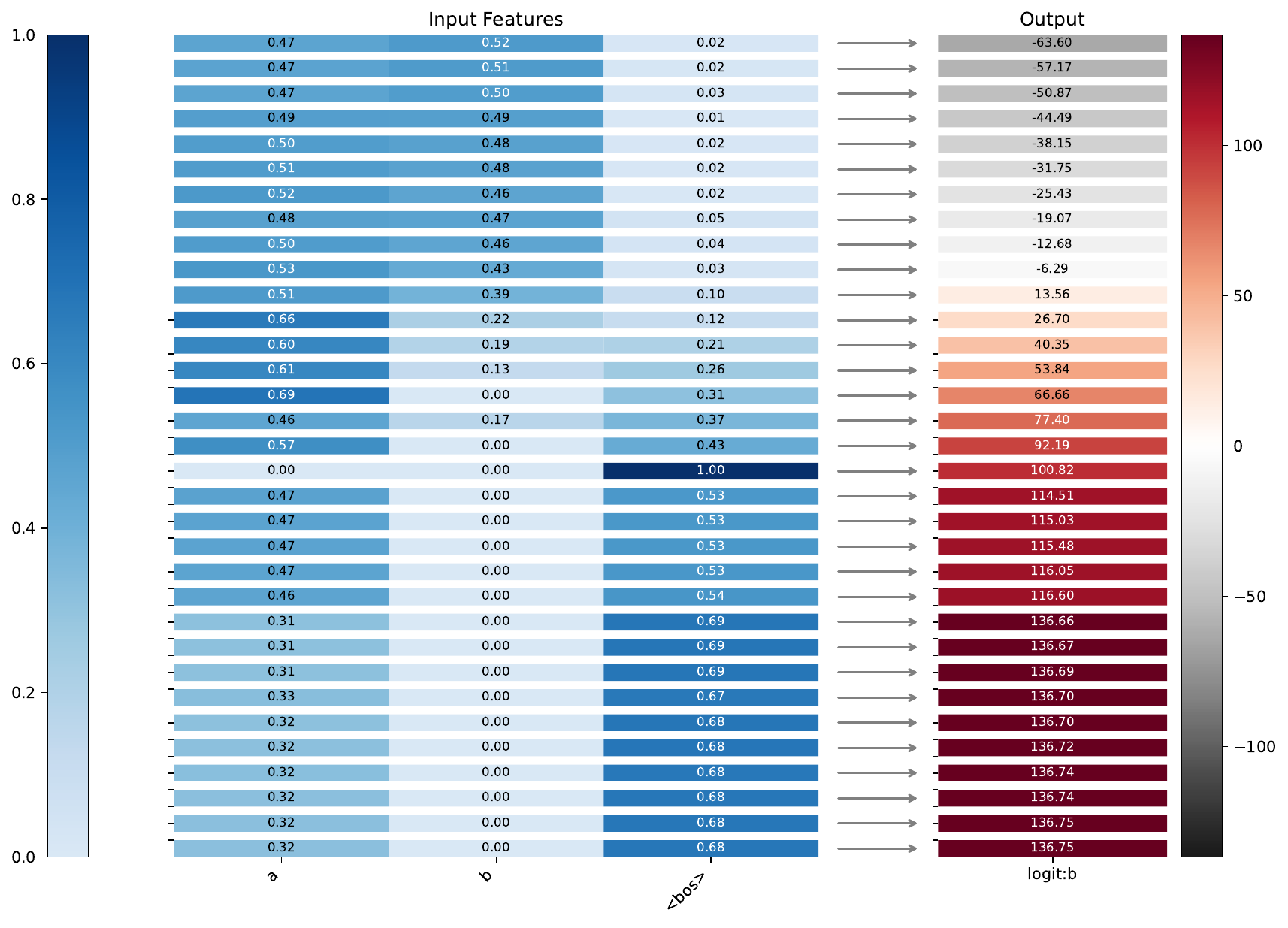}
    \caption{MLP Input-Output for token: b (sorted by logits)}
    \label{fig:dyck4_mlp_line9_logitb}
\end{figure}

\textbf{Output Token: EOS}

\begin{figure}[H]
    \centering
    \includegraphics[width=0.95\linewidth]{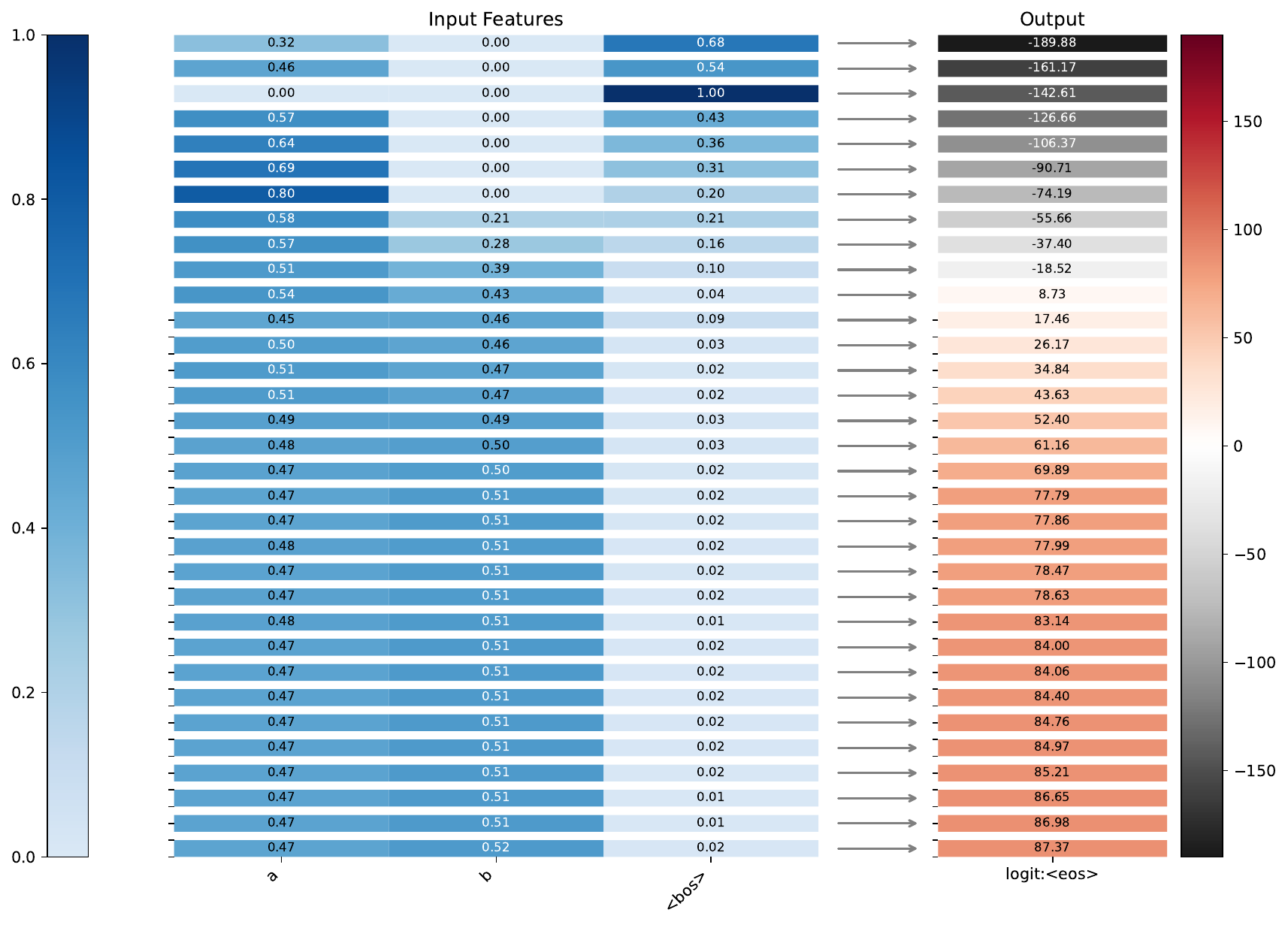}
    \caption{MLP Input-Output for token: EOS (sorted by logits)}
    \label{fig:dyck4_mlp_line9_logiteos}
\end{figure}

\definecolor{varcolorbceD2at1l1h16d3lr00drops1}{rgb}{0.12,0.47,0.71}
\definecolor{varcolorbceD2at1l1h16d3lr00droplogits1}{rgb}{1.00,0.50,0.05}
\definecolor{varcolorbceD2at1l1h16d3lr00droplogits2}{rgb}{0.17,0.63,0.17}
\definecolor{varcolorbceD2at1l1h16d3lr00droplogits3}{rgb}{0.84,0.15,0.16}
\definecolor{varcolorbceD2at1l1h16d3lr00droplogits5}{rgb}{0.58,0.40,0.74}
\subsection{Dyck-2}
\label{bceD2at1l1h16d3lr00dropsection}

    \par\vspace{0.5em}\noindent
    \textbf{Task Description:} $$
    \langle\text{bos}\rangle\ (a(ab)^*b)^*\ \langle\text{eos}\rangle
    $$ \\
    \textbf{Architecture:} Layers: 1 \quad Heads: 1 \quad Hidden Dim: 16 LR: 0.001 \quad Dropout: 0 \\
    \textbf{Performance (w/Pruning $\to$ w/Primitives):} Task Accuracy: $1.00 \to 1.00$; Match Accuracy: $1.00 \to 1.00$
    \vspace{0.5em}\par

\paragraph{Code}
\begin{Verbatim}[commandchars=\\\{\}]
1. \textcolor{varcolorbceD2at1l1h16d3lr00drops1}{s1} = select(q=token, k=token, \textcolor{varcolorbceD2at1l1h16d3lr00drops1}{op=\circled{a}})	# layer 0 head 0
2. a1 = aggregate(s=s1, v=token) 	  # layer 0 head 0
3. new_a1 = element_wise_op(a1) 	  # layer 0 mlp
4. \textcolor{varcolorbceD2at1l1h16d3lr00droplogits1}{logits1} = project(inp=token, \textcolor{varcolorbceD2at1l1h16d3lr00droplogits1}{op=\circled{b}})
5. \textcolor{varcolorbceD2at1l1h16d3lr00droplogits2}{logits2} = project(inp=a1, \textcolor{varcolorbceD2at1l1h16d3lr00droplogits2}{op=\circled{c}})
6. \textcolor{varcolorbceD2at1l1h16d3lr00droplogits3}{logits3} = project(inp=token, \textcolor{varcolorbceD2at1l1h16d3lr00droplogits3}{op=\circled{d}})
7. logits4 = project(inp=new_a1, op=(inp==out))
8. \textcolor{varcolorbceD2at1l1h16d3lr00droplogits5}{logits5} = project(\textcolor{varcolorbceD2at1l1h16d3lr00droplogits5}{op=\circled{e}})
9. prediction = sigmoid(logits1+
			logits2+
			logits3+
			logits4+
			logits5)
\end{Verbatim}

\paragraph{Interpretation}
This program is a more complex version of the D4 program discussed in Figure~\ref{fig:dyck4} the main paper. Of note, the selector in Line 1 assigns unequal weights to ``a'' and ``b'' (Figures~\ref{fig:bceD2at1l1h16d3lr00drop}a and \ref{fig:bceD2at1l1h16d3lr00dropactivation}a). As a result, the output \texttt{a1} does not simply record the relative counts of ``a'', ``b'', and ``BOS', but rather records \emph{differently weighted} counts.
The element-wise operation in line 3 performs a computation similar to that in D4, as shown by the samples below.
The output logits are then composed by complementing the output of the elementwise operation (line 7) with information from the current token (lines 4 and 6), the weighted relative counts (line 5), and a bias favoring ``a'' over ``b''/``EOS'' (line 8).

We note that the ``unbalanced'' weighting of ``a'' and ``b'' closely relates to findings  from \citet{wen2023transformers}, who found that transformers can recognize bounded-depth Dyck languages with ``unbalanced'' attention patterns, though such unbalancedness may eventually hurt length generalization on \emph{unboundedly} long inputs. That said, our results here show that, at lengths $\leq 150$, the program manages to deal with the unbalanced counts at perfect accuracy. 

\begin{figure}[H]
    \centering
\includegraphics[width=\textwidth]{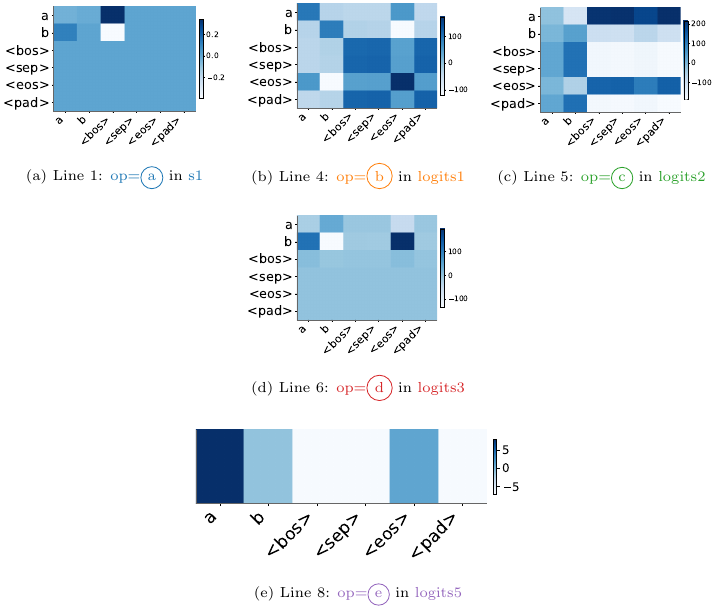}
    \caption{Heatmaps supporting the program for Dyck-2 model.}
    \label{fig:bceD2at1l1h16d3lr00drop}
\end{figure}

\begin{figure}[H]
    \centering
\includegraphics[width=\textwidth]{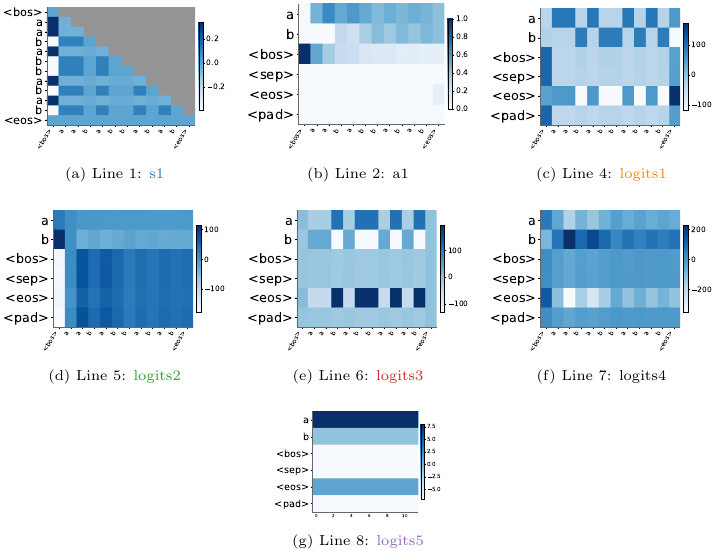}
    \caption{Variables Heatmaps for Dyck-2 model on an example input.}
    \label{fig:bceD2at1l1h16d3lr00dropactivation}
\end{figure}

\paragraph{MLP Input-Output Distributions}

Explaining per-position operation in Line 3 via its effect on Output Logits in Line 7

\textbf{Output Token: a}

\begin{figure}[H]
    \centering
    \includegraphics[width=0.95\linewidth]{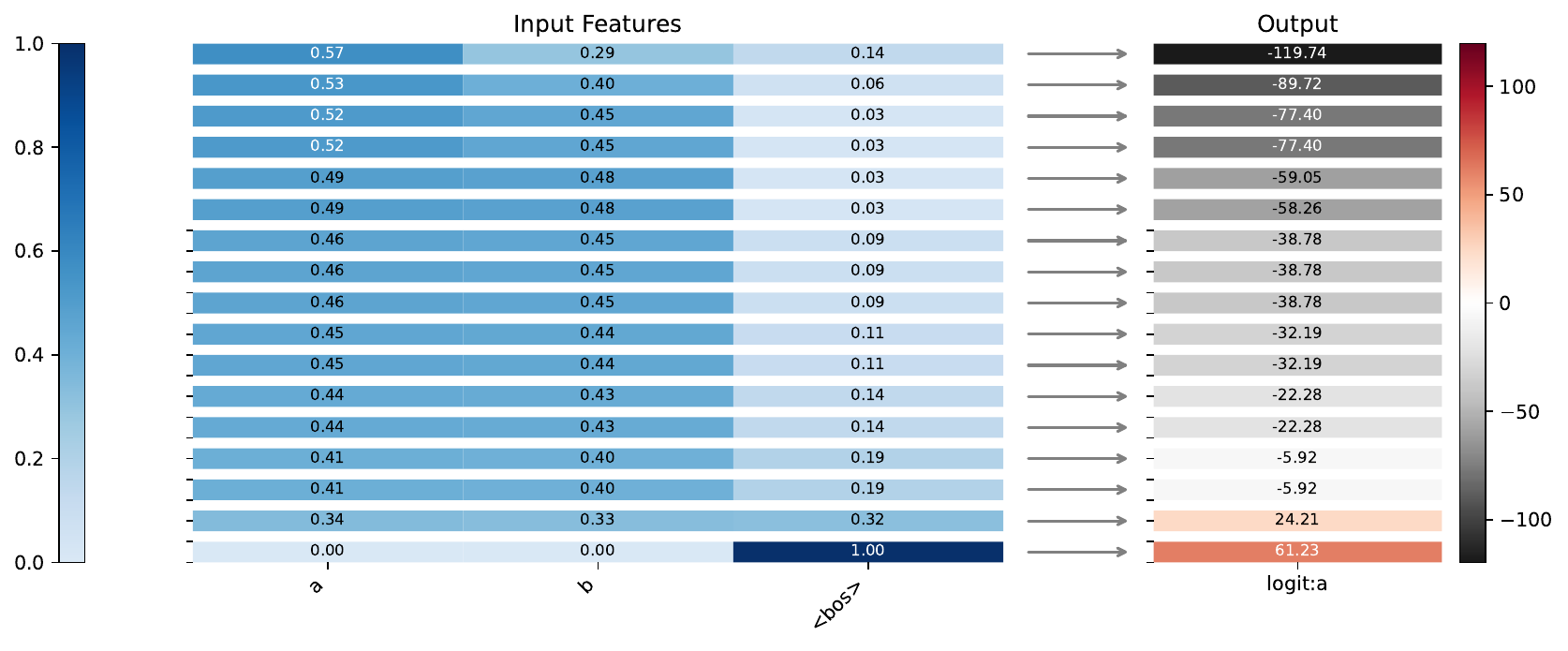}
    \caption{MLP Input-Output for token: a (sorted by logits)}
\end{figure}

\textbf{Output Token: b}

\begin{figure}[H]
    \centering
    \includegraphics[width=0.95\linewidth]{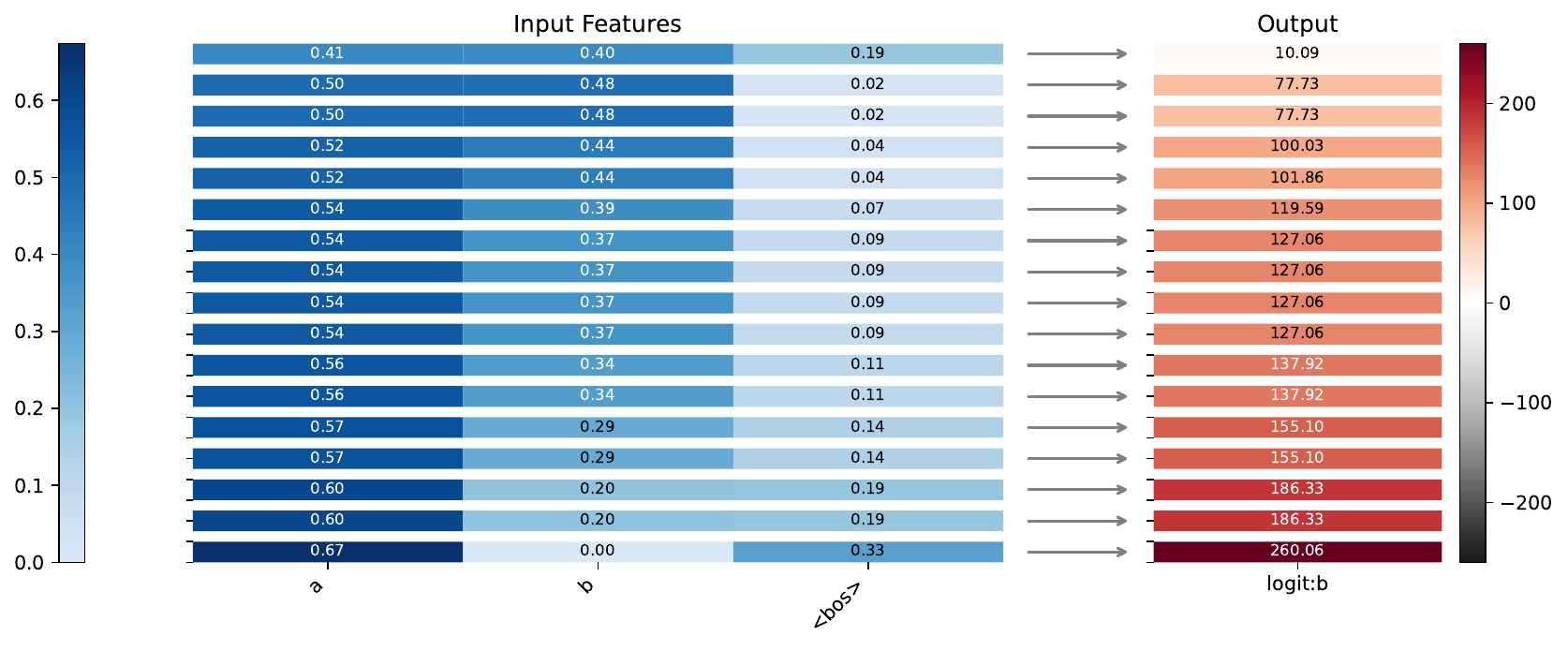}
    \caption{MLP Input-Output for token: b (sorted by logits)}
\end{figure}

\textbf{Output Token: EOS }

\begin{figure}[H]
    \centering
    \includegraphics[width=0.95\linewidth]{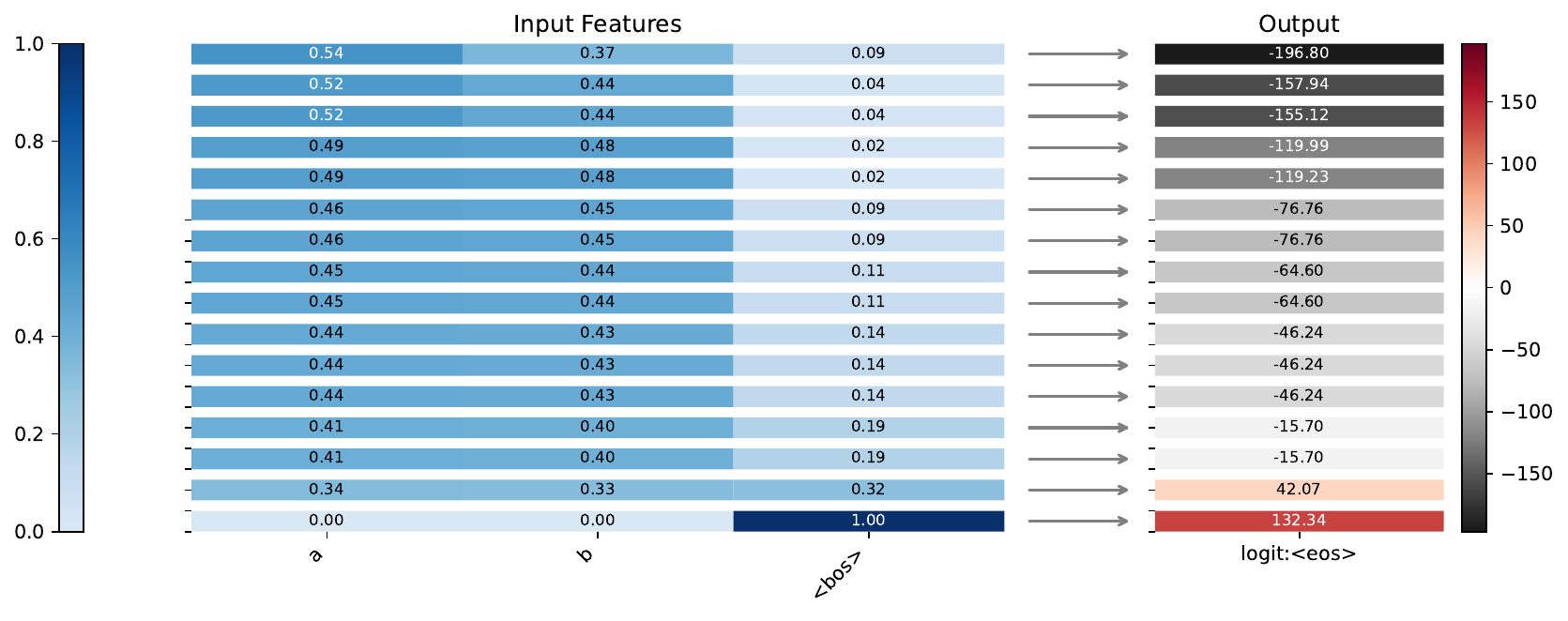}
    \caption{MLP Input-Output for token: EOS (sorted by logits)}
\end{figure}

\subsection{Dyck-4}
\label{bceD4at1l2h256d4lr00dropsection}

    \par\vspace{0.5em}\noindent
    \textbf{Task Description:} $$
    \langle\text{bos}\rangle\ (a(a(a(ab)^*b)^*b)^*b)^*\ \langle\text{eos}\rangle
    $$ \\
    \textbf{Architecture:} Layers: 1 \quad Heads: 2 \quad Hidden Dim: 256 LR: 0.0001 \quad Dropout: 0 \\
    \textbf{Performance (w/Pruning $\to$ w/Primitives):} Task Accuracy: $1.00 \to 1.00$; Match Accuracy: $1.00 \to 0.99$
    \vspace{0.5em}\par

\paragraph{Code}
\begin{Verbatim}[commandchars=\\\{\}]
1. a1 = aggregate(s=[], v=token) 	  # layer 0 head 0
2. new_a1 = element_wise_op(a1) 	  # layer 0 mlp
3. logits1 = project(inp=new_a1, op=(inp==out))
4. prediction = sigmoid(logits1)
\end{Verbatim}

\paragraph{Interpretation}
This program is discussed in the main paper, Figure~\ref{fig:dyck4}. We provide further examples of the elementwise operation below.

\begin{figure}[H]
    \centering
\includegraphics[width=\textwidth]{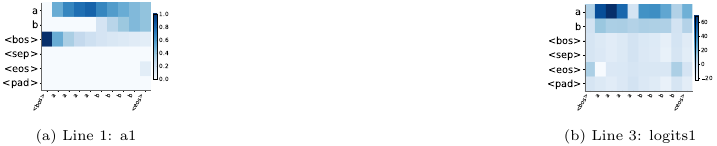}
    \caption{Variables Heatmaps for Dyck-4 model on an example input. (b) shows the \texttt{logits1} generated by the elementwise operation: ``a'' is allowed as long as the depth doesn't exceed 4; ``EOS'' is allowed at the beginning and end (as the string is balanced there); ``b'' is allowed except at the beginning and end (it is only possible when the string is unbalanced).}
    \label{fig:bceD4at1l2h256d4lr00dropactivation}
\end{figure}

\paragraph{MLP Input-Output Distributions}

Explaining per-position operation in Line 2 via its effect on Output Logits in Line 3

\textbf{Output Token: a}

\begin{figure}[H]
    \centering
    \includegraphics[width=0.95\linewidth]{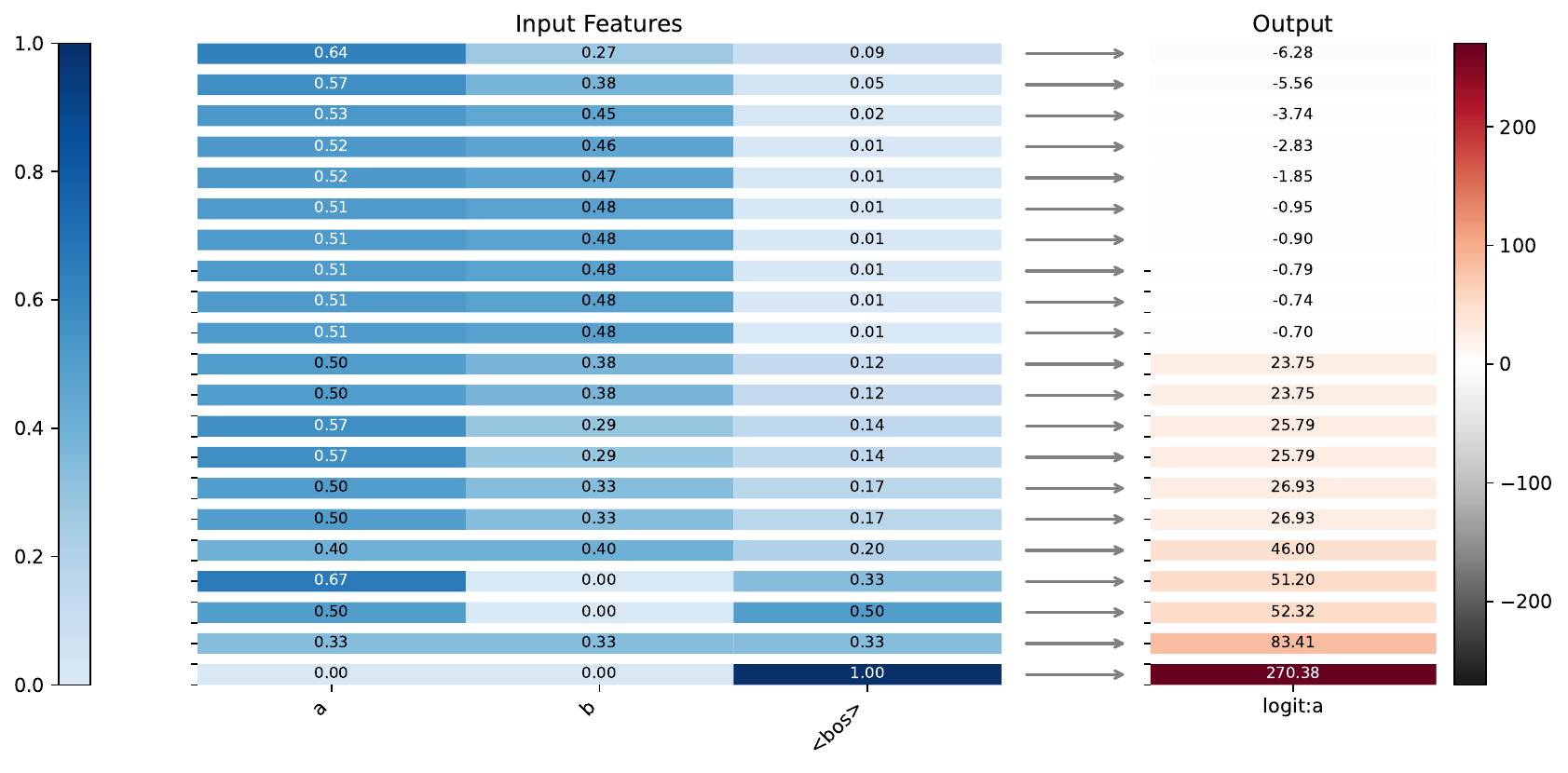}
    \caption{MLP Input-Output for token: a (sorted by logits). A condensed version (with just the output signs, and all three output dimensions together) is shown in Figure~\ref{fig:dyck4}. We see that ``a'' receives a positive output logit if and only if \#a - \#b $<$ 4 $\cdot$ \#BOS, noting that \#BOS = 1 by definition. The operation thus correctly enforces the bounded-depth constraint of D4.}
\end{figure}

\textbf{Output Token: b}

\begin{figure}[H]
    \centering
    \includegraphics[width=0.95\linewidth]{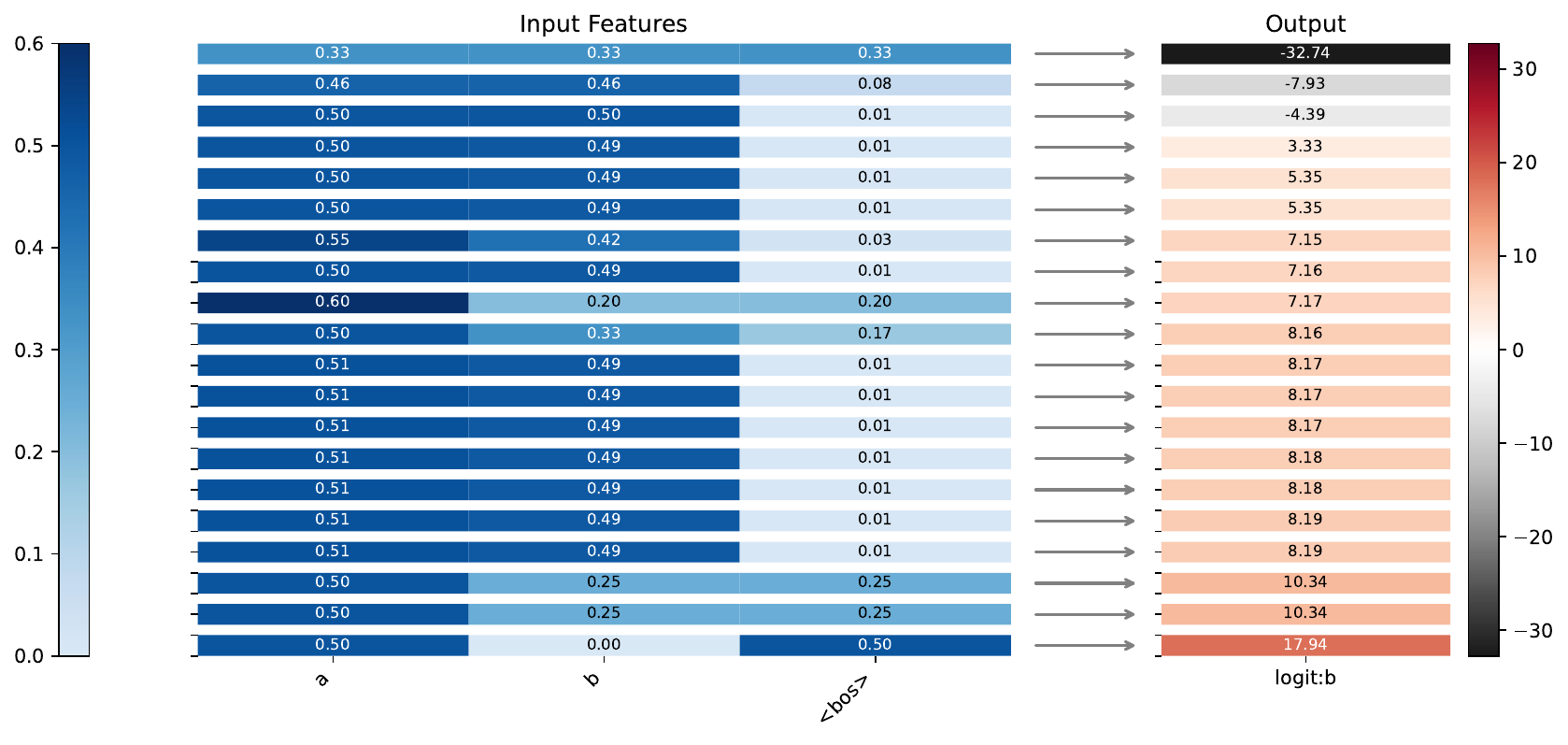}
    \caption{MLP Input-Output for token: b (sorted by logits). A condensed version (with just the output signs, and all three output dimensions together) is shown in Figure~\ref{fig:dyck4}. We see that ``b'' receives a positive output logit if and only if \#a $>$ \#b.}
\end{figure}

\textbf{Output Token: EOS}

\begin{figure}[H]
    \centering
    \includegraphics[width=0.95\linewidth]{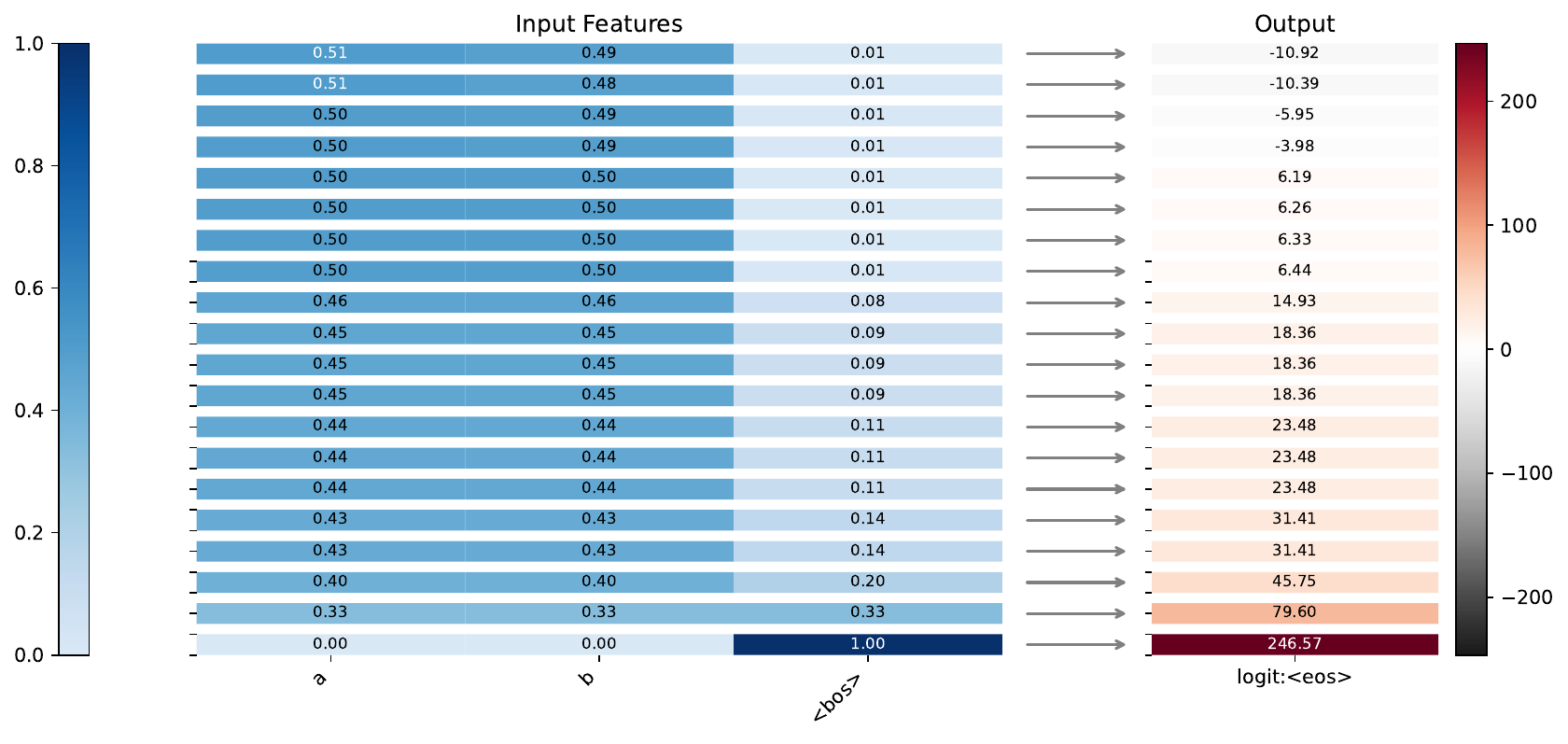}
    \caption{MLP Input-Output for token: EOS (sorted by logits). A condensed version (with just the output signs, and all three output dimensions together) is shown in Figure~\ref{fig:dyck4}. We see that ``EOS'' receives a positive output logit if and only if \#a $=$ \#b, i.e., the operation enforces that any string in the language must be balanced.}
\end{figure}

\definecolor{varcolorbceaastarat1l2h16d3lr01drops1}{rgb}{0.12,0.47,0.71}
\definecolor{varcolorbceaastarat1l2h16d3lr01droplogits1}{rgb}{1.00,0.50,0.05}
\subsection{aastar}
\label{bceaastarat1l2h16d3lr01dropsection}

    \par\vspace{0.5em}\noindent
    \textbf{Task Description:} $$
    \langle\text{bos}\rangle\ (aa)^*\ \langle\text{eos}\rangle
    $$ \\
    \textbf{Architecture:} Layers: 1 \quad Heads: 2 \quad Hidden Dim: 16 LR: 0.001 \quad Dropout: 0.1 \\
    \textbf{Performance (w/Pruning $\to$ w/Primitives):} Task Accuracy: $1.00 \to 1.00$; Match Accuracy: $1.00 \to 1.00$
    \vspace{0.5em}\par

\paragraph{Code}
\begin{Verbatim}[commandchars=\\\{\}]
1. \textcolor{varcolorbceaastarat1l2h16d3lr01drops1}{s1} = select(k=token, \textcolor{varcolorbceaastarat1l2h16d3lr01drops1}{op=(k==BOS)})
2. a1 = aggregate(s=s1, v=pos) 	  # layer 0 head 1
3. m1 = element_wise_op(pos+a1) 	  # layer 0 mlp
4. logits1 = project(inp=m1, op=(inp==out))
5. prediction = sigmoid(logits1)
\end{Verbatim}

\paragraph{Interpretation}
Lines 1--2 retrieve the position of BOS (note that, while the position is 0 in this example, this does not generally hold due to the use of random offsets in positional encoding during training). Line 3 now jointly processes the current position with the position of BOS in an elementwise operation. In this case, the elementwise operation was not replaced with a primitive. It directly feeds into the next-token prediction \texttt{logits1}; inspecting its values on a sample input (Figure~\ref{fig:bceaastarat1l2h16d3lr01dropactivation}b) show that ``a'' always receives a positive logit, whereas the logit for EOS alternates -- which is correct behavior as the number of a's in the input string in the language has to be even. This shows that the elementwise operation in Line 3 effectively checks the parity between the current position of BOS, and outputs a logit for EOS on this basis.

\begin{figure}[H]
    \centering
    \captionsetup{type=figure}
\begin{subfigure}[b]{0.47\textwidth}
        \centering
        \includegraphics[width=\linewidth]{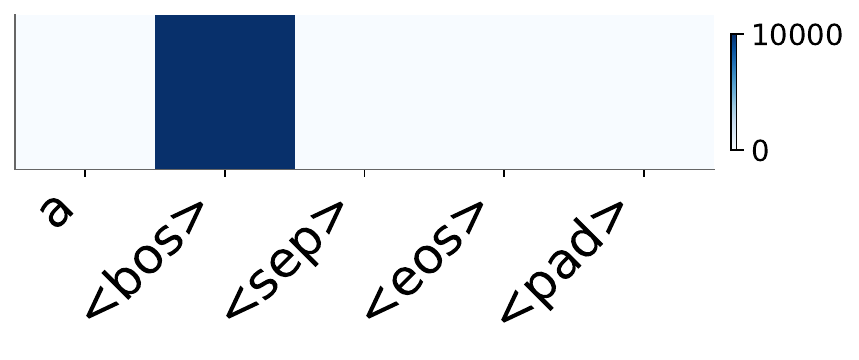}
        \caption{{Line 1: \textcolor{varcolorbceaastarat1l2h16d3lr01drops1}{op=(k==BOS)}}}
    \end{subfigure}
    \caption{Heatmaps supporting the program for aastar model.}
    \label{fig:bceaastarat1l2h16d3lr01drop}
\end{figure}

\begin{figure}[H]
    \centering
\includegraphics[width=\textwidth]{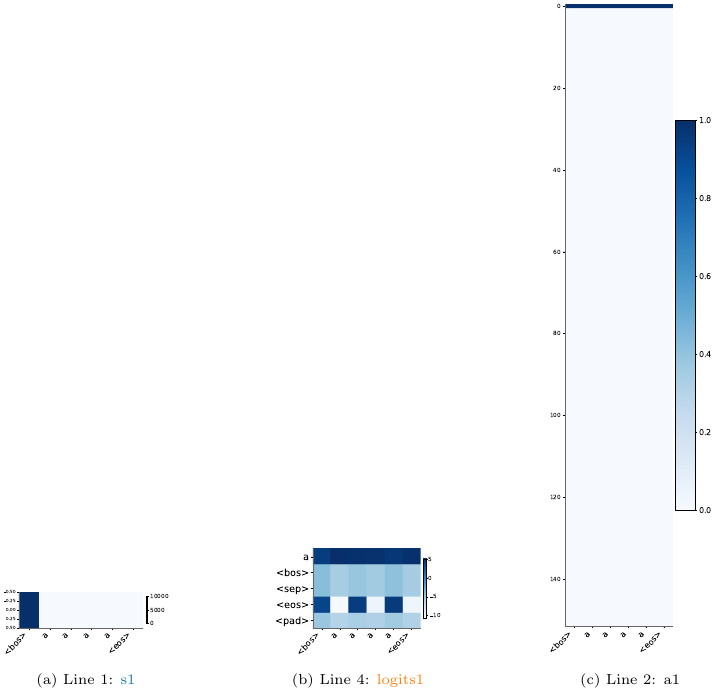}
    \caption{Variables Heatmaps for aastar model on an example input.}
    \label{fig:bceaastarat1l2h16d3lr01dropactivation}
\end{figure}

\definecolor{varcolorbceabcdeat1l1h16d3lr00droplogits1}{rgb}{0.12,0.47,0.71}
\subsection{abcde}
\label{bceabcdeat1l1h16d3lr00dropsection}

    \par\vspace{0.5em}\noindent
    \textbf{Task Description:} $$
    \langle\text{bos}\rangle\ aa^*bb^*cc^*dd^*ee^*\ \langle\text{eos}\rangle
    $$ \\
    \textbf{Architecture:} Layers: 1 \quad Heads: 1 \quad Hidden Dim: 16 LR: 0.001 \quad Dropout: 0 \\
    \textbf{Performance (w/Pruning $\to$ w/Primitives):} Task Accuracy: $1.00 \to 1.00$; Match Accuracy: $1.00 \to 1.00$
    \vspace{0.5em}\par

\paragraph{Code}
\begin{Verbatim}[commandchars=\\\{\}]
1. \textcolor{varcolorbceabcdeat1l1h16d3lr00droplogits1}{logits1} = project(inp=token, \textcolor{varcolorbceabcdeat1l1h16d3lr00droplogits1}{op=\circled{a}})
2. prediction = sigmoid(logits1)
\end{Verbatim}

\paragraph{Interpretation}
This program is discussed in the main paper.

\begin{figure}[H]
    \centering
\includegraphics[width=\textwidth]{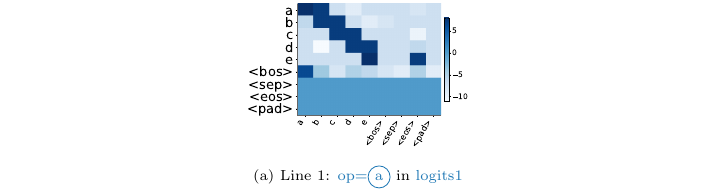}
    \caption{Heatmaps supporting the program for abcde model.}
    \label{fig:bceabcdeat1l1h16d3lr00drop}
\end{figure}

\begin{figure}[H]
    \centering
\includegraphics[width=\textwidth]{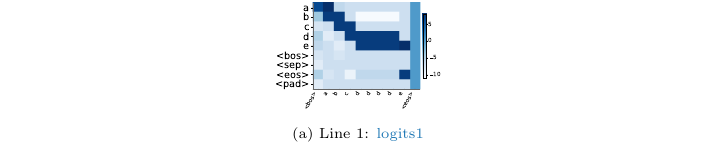}
    \caption{Variables Heatmaps for abcde model on an example input.}
    \label{fig:bceabcdeat1l1h16d3lr00dropactivation}
\end{figure}

\definecolor{varcolorbceabdbcat1l1h16d3lr01drops1}{rgb}{0.12,0.47,0.71}
\definecolor{varcolorbceabdbcat1l1h16d3lr01droplogits1}{rgb}{1.00,0.50,0.05}
\definecolor{varcolorbceabdbcat1l1h16d3lr01droplogits2}{rgb}{0.17,0.63,0.17}
\subsection{ab\_d\_bc}
\label{bceabdbcat1l1h16d3lr01dropsection}

    \par\vspace{0.5em}\noindent
    \textbf{Task Description:} $$
    \langle\text{bos}\rangle\ \{a, b\}^*d\{b, c\}^*\ \langle\text{eos}\rangle
    $$ \\
    \textbf{Architecture:} Layers: 1 \quad Heads: 1 \quad Hidden Dim: 16 LR: 0.001 \quad Dropout: 0.1 \\
    \textbf{Performance (w/Pruning $\to$ w/Primitives):} Task Accuracy: $1.00 \to 1.00$; Match Accuracy: $1.00 \to 1.00$
    \vspace{0.5em}\par

\paragraph{Code}
\begin{Verbatim}[commandchars=\\\{\}]
1. \textcolor{varcolorbceabdbcat1l1h16d3lr01drops1}{s1} = select(k=token, \textcolor{varcolorbceabdbcat1l1h16d3lr01drops1}{op=\circled{b}})
2. a1 = aggregate(s=s1, v=token) 	  # layer 0 head 0
3. \textcolor{varcolorbceabdbcat1l1h16d3lr01droplogits1}{logits1} = project(inp=a1, \textcolor{varcolorbceabdbcat1l1h16d3lr01droplogits1}{op=\circled{a}})
4. \textcolor{varcolorbceabdbcat1l1h16d3lr01droplogits2}{logits2} = project(\textcolor{varcolorbceabdbcat1l1h16d3lr01droplogits2}{op=\circled{c}})
5. prediction = sigmoid(logits1+
			logits2)
\end{Verbatim}

\paragraph{Interpretation}
Essentially, the model needs to check if ``d'' has occurred so far: if it hasn't, $\{a,b,d\}$ are valid; if it has, $\{b,c,EOS\}$ are valid. Indeed, lines 1--2 encode this information by assigning weight to ``d'' if it has occurred. Line 3 translates this information into the next-word expectations described above: e.g., the row for BOS assigns positive weight to a, b, d and negative weight to c, EOS; whereas the row for ``d'' assigns positive weight to b, c, EOS. Line 4 complements this by assigning negative weight to BOS, SEP, PAD.

\begin{figure}[H]
    \centering
\includegraphics[width=\textwidth]{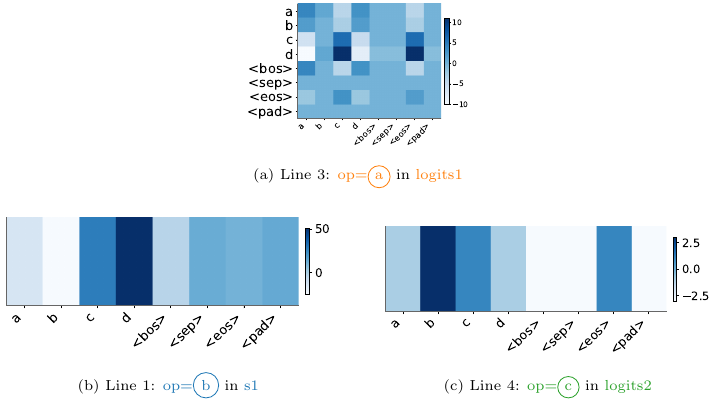}
    \caption{Heatmaps supporting the program for ab\_d\_bc model.}
    \label{fig:bceabdbcat1l1h16d3lr01drop}
\end{figure}

\begin{figure}[H]
    \centering
\includegraphics[width=\textwidth]{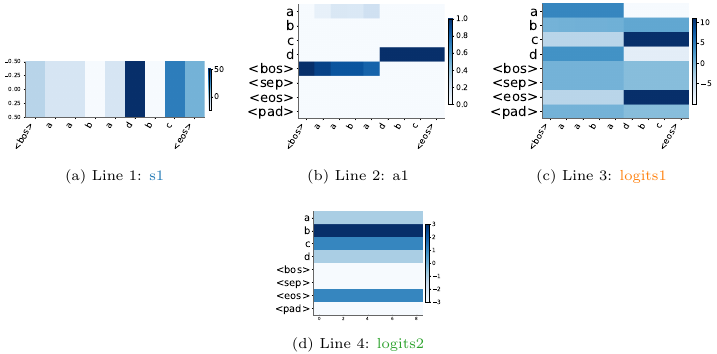}
    \caption{Variables Heatmaps for ab\_d\_bc model on an example input.}
    \label{fig:bceabdbcat1l1h16d3lr01dropactivation}
\end{figure}

\subsection{tomita1}
\label{bcetomita1at1l1h16d3lr00dropsection}

    \par\vspace{0.5em}\noindent
    \textbf{Task Description:} $$
    \langle\text{bos}\rangle\ 1^*\ \langle\text{eos}\rangle
    $$ \\
    \textbf{Architecture:} Layers: 1 \quad Heads: 1 \quad Hidden Dim: 16 LR: 0.001 \quad Dropout: 0 \\
    \textbf{Performance (w/Pruning $\to$ w/Primitives):} Task Accuracy: $1.0 \to 1.0$; Match Accuracy: $1.0 \to 1.0$
    \vspace{0.5em}\par

\paragraph{Code}
\begin{Verbatim}[commandchars=\\\{\}]
1. prediction = sigmoid(bias)
\end{Verbatim}

\definecolor{varcolorbcetomita2at1l1h16d3lr00droplogits1}{rgb}{0.12,0.47,0.71}
\subsection{tomita2}
\label{bcetomita2at1l1h16d3lr00dropsection}

    \par\vspace{0.5em}\noindent
    \textbf{Task Description:} $$
    \langle\text{bos}\rangle\ (10)^*\ \langle\text{eos}\rangle
    $$ \\
    \textbf{Architecture:} Layers: 1 \quad Heads: 1 \quad Hidden Dim: 16 LR: 0.001 \quad Dropout: 0 \\
    \textbf{Performance (w/Pruning $\to$ w/Primitives):} Task Accuracy: $1.00 \to 1.00$; Match Accuracy: $1.00 \to 1.00$
    \vspace{0.5em}\par

\paragraph{Code}
\begin{Verbatim}[commandchars=\\\{\}]
1. \textcolor{varcolorbcetomita2at1l1h16d3lr00droplogits1}{logits1} = project(inp=token, \textcolor{varcolorbcetomita2at1l1h16d3lr00droplogits1}{op=\circled{a}})
2. prediction = sigmoid(logits1)
\end{Verbatim}

\paragraph{Interpretation}
This program straightforwardly describes next-token expectations in terms of the current token: BOS is followed by 1 or EOS; 1 is followed by 0; 0 is followed by 1 or EOS.

\begin{figure}[H]
    \centering
\includegraphics[width=\textwidth]{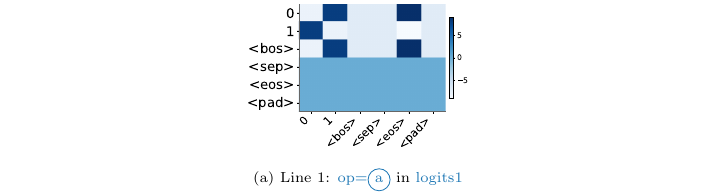}
    \caption{Heatmaps supporting the program for tomita2 model.}
    \label{fig:bcetomita2at1l1h16d3lr00drop}
\end{figure}

\begin{figure}[H]
    \centering
\includegraphics[width=\textwidth]{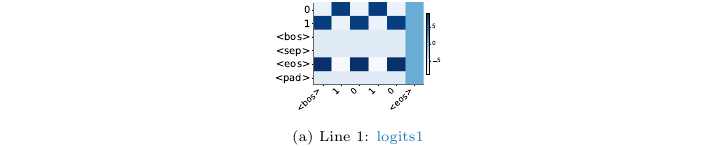}
    \caption{Variables Heatmaps for tomita2 model on an example input.}
    \label{fig:bcetomita2at1l1h16d3lr00dropactivation}
\end{figure}

%
\definecolor{varcolorbcetomita7at2l1h16d3lr01drops1}{rgb}{0.12,0.47,0.71}
\definecolor{varcolorbcetomita7at2l1h16d3lr01drops2}{rgb}{1.00,0.50,0.05}
\definecolor{varcolorbcetomita7at2l1h16d3lr01droplogits1}{rgb}{0.17,0.63,0.17}
\definecolor{varcolorbcetomita7at2l1h16d3lr01droplogits2}{rgb}{0.84,0.15,0.16}
\subsection{tomita7}
\label{bcetomita7at2l1h16d3lr01dropsection}

    \par\vspace{0.5em}\noindent
    \textbf{Task Description:} $$
    \langle\text{bos}\rangle\ 0^*1^*0^*1^*\ \langle\text{eos}\rangle
    $$ \\
    \textbf{Architecture:} Layers: 2 \quad Heads: 1 \quad Hidden Dim: 16 LR: 0.001 \quad Dropout: 0.1 \\
    \textbf{Performance (w/Pruning $\to$ w/Primitives):} Task Accuracy: $1.00 \to 0.98$; Match Accuracy: $1.00 \to 0.98$
    \vspace{0.5em}\par

\paragraph{Code}
\begin{Verbatim}[commandchars=\\\{\}]
1. \textcolor{varcolorbcetomita7at2l1h16d3lr01drops1}{s1} = select(q=token, k=token, \textcolor{varcolorbcetomita7at2l1h16d3lr01drops1}{op=\circled{a}})	# layer 0 head 0
2. a1 = aggregate(s=s1, v=token) 	  # layer 0 head 0
3. \textcolor{varcolorbcetomita7at2l1h16d3lr01drops2}{s2} = select(q=token, k=a1, \textcolor{varcolorbcetomita7at2l1h16d3lr01drops2}{op=(k==q),}
		\textcolor{varcolorbcetomita7at2l1h16d3lr01drops2}{special\_op=(uniform selection)})	# layer 1 head 0
4. a2 = aggregate(s=s2, v=a1) 	  # layer 1 head 0
5. \textcolor{varcolorbcetomita7at2l1h16d3lr01droplogits1}{logits1} = project(inp=a2, \textcolor{varcolorbcetomita7at2l1h16d3lr01droplogits1}{op=\circled{c}})
6. \textcolor{varcolorbcetomita7at2l1h16d3lr01droplogits2}{logits2} = project(\textcolor{varcolorbcetomita7at2l1h16d3lr01droplogits2}{op=\circled{d}})
7. prediction = sigmoid(logits1+
			logits2)
\end{Verbatim}

\paragraph{Interpretation}
The model essentially prohibits generating a 0 if a $1^*0^*1^*$ subsequence is detected in Lines 1-4. In \texttt{s1} (Figure~\ref{fig:bcetomita7at2l1h16d3lr01drop}a) q=0 pays attention to 1's and q=1 to 0's, and thus \texttt{a1} (Figure~\ref{fig:bcetomita7at2l1h16d3lr01dropactivation}b) holds the count of 0's (for q=1) and 1's (for q=0) before the current token. Then, in Line 4 \texttt{a1} is aggregated to form \texttt{a2} (Figure~\ref{fig:bcetomita7at2l1h16d3lr01dropactivation}d), which for q=0 holds the count of 0's preceding 1's preceding current token. Essentially, it detects whether the current symbol is one of the trailing 0's in a substring $0^*1^*0^*$. Similarly, for q=1, \texttt{a2} detects the substring $1^*0^*1^*$. In Line 5, the model severely downweights logit 1 if $1^*0^*1^*$ substring was detected (Figure~\ref{fig:bcetomita7at2l1h16d3lr01drop}c). By default, the model allows output of all the tokens among 0, 1 and $\langle eos \rangle$, as shown in high logits for bias and projection matrix in line 5 (Figure~\ref{fig:bcetomita7at2l1h16d3lr01drop}c,d).

\begin{figure}[H]
    \centering
\includegraphics[width=\textwidth]{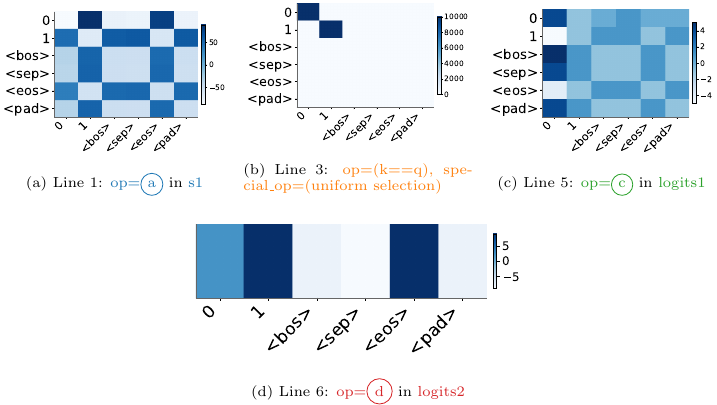}
    \caption{Heatmaps supporting the program for tomita7 model.}
    \label{fig:bcetomita7at2l1h16d3lr01drop}
\end{figure}

\begin{figure}[H]
    \centering
\includegraphics[width=\textwidth]{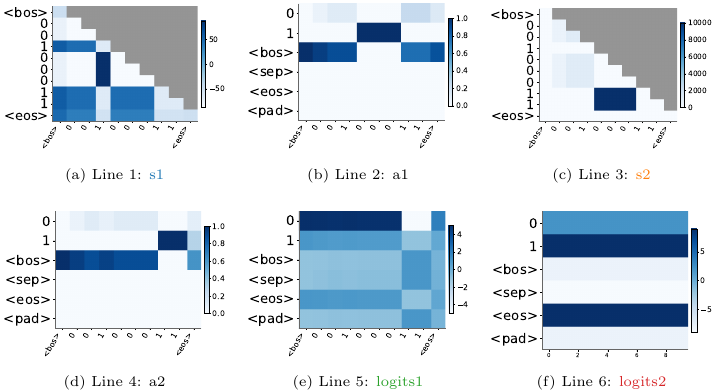}
    \caption{Variables Heatmaps for tomita7 model on an example input.}
    \label{fig:bcetomita7at2l1h16d3lr01dropactivation}
\end{figure}

\end{document}